\renewcommand{\eqref}[1]{Eq. (\ref{#1})}
\newcommand{\figref}[1]{Figure \ref{#1}}
\newcommand{\tbref}[1]{Table \ref{#1}}
\newcommand{\secref}[1]{Section \ref{#1}}
\newcommand{\appref}[1]{Appendix \ref{#1}}
\newcommand{\thref}[1]{Theorem \ref{#1}}
\newcommand{\algref}[1]{Algorithm \ref{#1}}
\newtheorem{prop}{Proposition}
\DeclareMathOperator*{\argmax}{arg\,max}
\def\bx{x}
\newcommand\reallywidehat[1]{%
	\savestack{\tmpbox}{\stretchto{%
			\scaleto{%
				\scalerel*[\widthof{\ensuremath{#1}}]{\kern-.6pt\bigwedge\kern-.6pt}%
				{\rule[-\textheight/2]{1ex}{\textheight}}%WIDTH-LIMITED BIG WEDGE
			}{\textheight}% 
		}{0.5ex}}%
	\stackon[1pt]{#1}{\tmpbox}%
}
\title{Energy-Based Models with Applications to Speech and Language Processing}
\author{Ou,Zhijian}
\affil{Tsinghua University, Beijing, China; ozj@tsinghua.edu.cn}
\begin{document}

\listofalgorithms
\newpage

\listoffigures
\newpage

\listoftables
\newpage

\makeabstracttitle

\begin{abstract}
Energy-Based Models (EBMs) are an important class of probabilistic models, also known as random fields and undirected graphical models. EBMs are un-normalized and thus radically different from other popular self-normalized probabilistic models such as hidden Markov models (HMMs), autoregressive models, generative adversarial nets (GANs) and variational auto-encoders (VAEs). During these years, EBMs have attracted increasing interests not only from core machine learning but also from application domains such as speech, vision, natural language processing (NLP) and so on, with significant theoretical and algorithmic progress. To the best of our knowledge, there are no review papers about EBMs with applications to speech and language processing. The sequential nature of speech and language also presents special challenges and needs treatment different from processing fix-dimensional data (e.g., images).

The purpose of this monograph is to present a systematic introduction to energy-based models, including both algorithmic progress and applications in speech and language processing, which is organized into four chapters. First, we will introduce basics for EBMs, including classic models, recent models parameterized by neural networks, sampling methods, and various learning methods from the classic learning algorithms to the most advanced ones. The next three chapters will present how to apply EBMs in three different scenarios, i.e., for modeling marginal, conditional and joint distributions, respectively.
1) EBMs for sequential data with applications in language modeling, where we are mainly concerned with the marginal distribution of a sequence itself;
2) EBMs for modeling conditional distributions of target sequences given observation sequences, with applications in speech recognition, sequence labeling and text generation;
3) EBMs for modeling joint distributions of both sequences of observations and targets, and their applications in semi-supervised learning and calibrated natural language understanding.
%1) EBMs for language modeling, 2) EBMs for natural language labeling and speech recognition, and 3) EBMs for semi-supervised natural language labeling. 
In addition, we will introduce some open-source toolkits to help the readers to get familiar with the techniques for developing and applying energy-based models.

\end{abstract}

\chapter{Introduction}
\label{ch:intro}

\section{The probabilistic approach}
\label{sec:probabilistic_approach}
As a community we seem to have embraced the fact that dealing with \emph{uncertainty} is crucial for machine intelligence tasks such as speech recognition and understanding, speech synthesis, natural language labeling, machine translation, text generation, computer vision, signal denoising, decision making, and so on.
Uncertainty arises because of limitations in our ability to observe the world, limitations in our ability to model it, and possibly even because of innate nondeterminism \citep{koller2009probabilistic}. 
In face of such uncertainty, we use probabilistic models to describe the random phenomena. Indeed, many tasks in intelligent signal processing and machine learning are solved in the \emph{probabilistic approach}\index{Probabilistic approach}, which generally involves probabilistic modeling, inference and learning, as shown in \figref{fig:prob_approach}. Such probabilistic approach has been introduced in textbooks with sufficient details \citep{koller2009probabilistic,murphy2012machine,bishop2006pattern,hastie2009elements}, and thus in this paper we only give a brief overview as the background material.

\begin{figure}[t]
	\centering
	\includegraphics[scale=0.5]{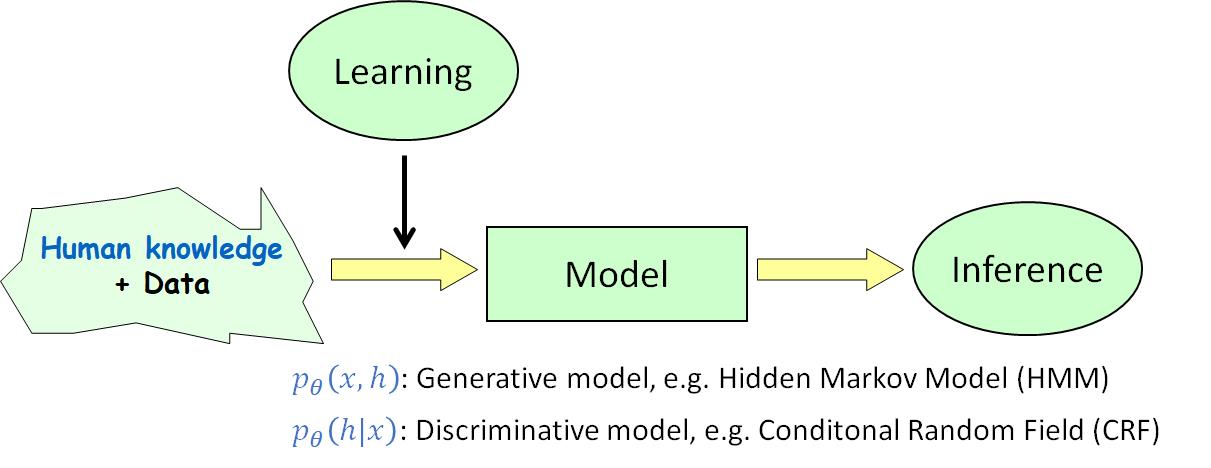}
	\caption{The probabilistic approach}
	\label{fig:prob_approach}
\end{figure}

A \emph{probabilistic model}\index{Probabilistic model} is, in mathematical terms, a distribution over a set of random variables, which are assumed to characterise the random phenomena in the specific task. The set of variables can generally be divided into observations $x$ and (optionally) hidden variables $h$, according to their roles in the task. Hidden variables, or called latent variables, are variables that are part of the model, but which we do not observe, and are therefore not part of the data. 
Remarkably, the observability of some variables may change, depending on what phase (training or testing) the model is used.
A most common example is the \emph{target variable}\index{Target variable} in prediction tasks, such as the class label in classification or the response variable in regression, which is observed in training but becomes unknown in testing.
To avoid clutter in this paper, such variable is viewed as part of the hidden variables and usually denoted by $y$. 

We will typically denote a variable by a lower case letter such as $x$, $h$ and $y$.
Whether $x$ denotes the value that the variable takes or represents the variable itself would be clear from the context.
Further, for notational simplicity, we also use lower case letter (e.g., $x$) to denote a set of random variables, i.e., flattened and concatenated such that the set is represented as a single vector.
So if $x$ is a vector or a sequence, its components can be accessed by subscripts $x_i$.
Here, we are using the terminology \emph{distribution} or \emph{density} loosely, typically denoted by $p$. % Wainwright&Jordan
Our notation $p$ should be understood as a mass function (density with respect to counting measure) in the discrete case, and a density function with respect to Lebesgue measure in the continuous case.
% A model may contain both discrete variables and continuous variables. In such cases, we will again use the notation $p$, referring in this case to numbers that are hybrids of probabilities and probability densities.
See \secref{sec:notations} for more on notations.

Given the form of the probabilistic model, namely the distribution $p_\theta(x,h)$ with parameters $\theta$, there are two crucial problems that must be solved in applying the model in real-world tasks:
\begin{itemize}
    \item Inference: how to reason in the presence of uncertainty;
    \item Learning: how to learn from experience.
\end{itemize}
% Two crucial problems in applying probabilistic models are - how to reason in the presence of uncertainty, and how to learn from experience.
The former problem is often referred to \emph{probabilistic inference}\index{Probabilistic inference} with a fully-specified model, or inference for short; and the later problem sometimes referred to \emph{statistical inference}\index{Statistical inference} (or more often to say, \emph{learning} in machine learning terminology) for model parameters \citep{neal1993probabilistic}.

Put in a more straightforward way, \emph{learning}\index{Learning} is to find the most appropriate model with parameters, using both data and human knowledge. Human knowledge is implicitly employed to specify the family of parametric distributions, and data are used to estimate the parameters.
Given a fully-specified model, i.e., fully-determined with fixed parameters, \emph{inference} is to infer the unknown from the observation $x$.
There are several typical classes of inference problems: 
\begin{itemize}
    \item Computing conditional probabilities, e.g., $p_\theta(h|x)$.
    This amounts to computing the posterior probability of some variables given the values of other variables (i.e., given evidence on others).
    \item Computing marginal probabilities, including the likelihood $p_\theta(x)$.
    \item Computing modes, e.g., $\arg\max_{h} p_\theta(h|x)$.
    \item Sampling from the model \citep{neal1993probabilistic,liu2001monte}.
\end{itemize}

We provide two more points for readers to appreciate the importance of the inference problems. First, the inference problems themselves are often taken as the means to use the model. For example, speech recognition is generally to find the mode of the posterior distribution on state sequences given observed speech.
Second, learning algorithms often make use of some inference problem as a subroutine.
For example, algorithms that maximize likelihood for learning latent variable models, e.g., the \emph{expectation-maximization} (EM) algorithm \citep{Dempster1977MaximumLF}, call the calculation of $p_\theta(h|x)$ as a subroutine.
Seeking computational efficient algorithms to solve these inference problems for increasingly complex models has been an enduring challenge for our research community.

% \cite{wainwright2008graphical}: The scope of this survey is limited in the following sense: given a distribution represented as a graphical model, we are concerned with theproblem of computing marginal probabilities (including likelihoods), as well as the problem of computing modes. We refer to such computational tasks as problems of “probabilistic inference,” or “inference” for short. 

\subsection{Generative models and discriminative models}

One major division in the probabilistic approach is generative versus discriminative modeling.
In generative modeling, one aims to learn the joint distribution $p_\theta(x,h)$ over all the variables. 
%Thus a generative model simulates how the data is generated in the real world.
In discriminative modeling, one only models the conditional distribution $p_\theta(h|x)$ over the target variable (denoted by $h$ for convenience) given the observation $x$. 
In discriminative modeling, the observation and the target variable are also called the input and output, respectively.

The generative-discriminative distinction has received much attention in machine learning \citep{ng2001discriminative,liang2008asymptotic}.
When a discriminative model follows the induced form of the conditional distribution $p_\theta(h|x)$ from a generative model $p_\theta(x,h)$, the two models are called a \emph{generative-discriminative pair}\index{Generative-discriminative pair} (i.e., under the same parametric family of models) \citep{ng2001discriminative}.
% One can think of generaitve and discriminative schemes as different parameter estimators, operating in a single model family but with different optimization criteria.
For example, naive Bayes classifier and logistic regression, \emph{hidden Markov model} (HMM) \citep{rabiner1989tutorial} and \emph{conditional random field} (CRF) \citep{lafferty2001conditional,sutton2012introduction}, form Generative-Discriminative pairs, respectively.
To compare generative and discriminative learning, it seems natural to focus on such pairs.
Basically, there are different regimes of performance as the training set size in increased.
Taking naive Bayes and logistic regression as a case study, it is shown in \citep{ng2001discriminative} that ``while discriminative learning has lower asymptotic error, a generative classifier may also approach its (higher) asymptotic error much faster''.
The comparison of HMM and CRF is further studied in \citep{liang2008asymptotic}, and it is found that generative modeling (modeling more of the data) tends to reduce asymptotic variance, but at the cost of being more sensitive to model misspecification.
These previous results, including \citep{ng2001discriminative,liang2008asymptotic}, to name a few, strengthen our basic intuitions about generative-discriminative distinction.

Given that the generative and discriminative estimators are complementary, one natural question is how to interpolate between the two to get the benefits of both. There have studies for \emph{hybrid generative-discriminative}\index{Hybrid generative-discriminative} methods (see \cite{bouchard2007bias} and the references therein). Notably, those hybrid models have been applied for \emph{semi-supervised learning} (SSL)\index{Semi-supervised learning (SSL)}, where one may have few labeled examples and many more unlabeled examples, but mostly based on traditional generative models like naive Bayes.

In recent years, generative modeling techniques have been greatly advanced by inventing new models with new learning algorithms under the umbrella of \emph{deep generative models} (DGMs)\index{Deep generative model (DGM)}, which are characterized by using \emph{multiple layers of stochastic or deterministic variables} in modeling and are much more expressive than classic generative models such as naive Bayes and HMM. See \citep{ou2018review} for a systematic introduction to DGMs from perspective of graphical modeling.
The generative-discriminative discussion continues with new points, when more types of generative models have constantly emerged and become studied.
Here we provide two examples with the new points.
\begin{itemize}
    \item A type of DGMs, \emph{variational autoencoders} (VAEs)\index{Variational autoencoder (VAE)} \citep{kingma2019introduction}, has been successfully applied in the setting of semi-supervised learning.
    \item It is concurrently shown in \citep{song2018learning,grathwohl2019your} that a standard discriminative classifier $p_\theta(y|x)$ can be used to directly define an \emph{energy-based model} (EBM)\index{Energy-based model (EBM)} for the joint distribution $p_\theta(x,y)$.
    It is shown in \citep{song2018learning} that energy-based semi-supervised training of the joint distribution produces strong classification results on par with state-of-art DGM-based semi-supervised methods.
    It is demonstrated in \citep{grathwohl2019your} that energy based training of the joint distribution improves calibration, robustness, and out-of-distribution detection while also generating samples rivaling the quality of recent \emph{generative adversarial network} (GAN)\index{Generative adversarial network (GAN)} \citep{goodfellow2014generative} approaches.
\end{itemize}

\subsection{Conditional models}
\label{sec:cond}
% FnT VAE 1.3.1 Conditional Models

% An obvious difference between generative models and discriminative models is that discriminative models are a kind of conditional models, while generative models are unconditional models.
Discriminative models are a kind of conditional models for discriminative tasks.
However, conditional modeling is a more general modeling concept than discriminative modeling.
Basically, a \emph{conditional model}\index{Conditional model} is, in probability terms, a conditional distribution of a random variable of interest, when another variable $c$ is known to take a particular value. In this case, $c$ is often called the \emph{input} of the model. The variable of interest generally can still consist of observable and (optionally) hidden components, denoted by $x$ and $h$ respectively.
Thus, a conditional model can generally be denoted by $p_\theta(x,h|c)$.

Many real-world applications are solved by conditional modeling. Some examples from discriminative tasks are as follows.
\begin{itemize}
    \item First, by abuse of notation, discriminative modeling of image classification involves the conditional model $p_\theta(y|x)$, where $x$ is the input image and $y$ is the images's class.
    \item A more complicated example is the \emph{recurrent neural network transducer} (RNN-T)\index{Recurrent neural network transducer (RNN-T)} model \citep{graves2012sequence} for speech recognition. 
     Let $x$ denote the input speech, $y$ the label sequence (e.g., word transcription), and $\pi$ the hidden state sequence (or say, a path) which realizes the alignment of $x$ and $y$. Then the RNN-T model involves the conditional model $p_\theta(y,\pi|x)$.
     See \secref{sec:rnnt} for more details on RNN-T.
\end{itemize}

Apart from discriminative tasks, conditional models can also be used for \emph{conditional generation}\index{Conditional generation} tasks.
One example is the reverse of the image classification problem: prediction of a distribution over images, conditioned on the class
label.

Importantly, one should keep in mind that the learning and inference methods introduced in unconditional modeling are in theory equally applicable to conditional models. So the basics introduced in Chapter \ref{ch:basics} lay the foundation for both (unconditional) EBMs in Chapter \ref{ch:lm} and conditional EBMs in Chapter \ref{ch:conditional}. On the other hand, the unconditional and conditional settings have their own characteristics, and thus needs different treatments, as we will detail in Chapter \ref{ch:lm} and \ref{ch:conditional} respectively.

\section{Features of EBMs}

In the probabilistic approach, the family of models chosen in real-world applications clearly plays a crucial role.
In terms of graphical modeling terminology \citep{koller2009probabilistic}, probabilistic models can be broadly classified into two classes - directed and undirected.
\begin{itemize}
    \item In \emph{directed graphical models} (DGMs)\index{Directed graphical model (DGM)}, also known as (a.k.a.)\emph{Bayesian networks} (BNs)\index{Bayesian network (BN)} or called \emph{locally-normalized models}\index{Locally-normalized model}, the distribution is factorized into a product of local conditional density functions.
    \item In contrast, in \emph{undirected graphical models} (UGMs)\index{Undirected graphical model (UGM)}, also known as \emph{Markov random fields} (MRFs)\index{Markov random field (MRF)} or \emph{energy-based models} (EBMs)\index{Energy-based model (EBM)} or called \emph{globally-normalized models}\index{Globally-normalized model}, the distribution is defined to be proportional to the product of local potential functions. The three terms, UGMs, MRFs and EBMs, are exchangeable in this monograph.
\end{itemize}
Simply speaking, an easy way to tell an undirected model from a directed model is that an undirected model is un-normalized and involves the normalizing constant (also called the partition function in physics), while the directed model is self-normalized.

In general, directed models and undirected models
make different assertions of conditional independence.
Thus, there are families of probability distributions that
are captured by a directed model and are not captured
by any undirected model, and vice versa \citep{pearl1988probabilistic}.
Therefore, undirected models, though less explored, provide an important complementary choice to directed models for various real-world applications.

During these years, EBMs have attracted increasing interests not only from core machine learning but also from application domains such as speech, vision, natural language processing and so on, with significant theoretical and algorithmic progress. There have emerged a dedicated workshop at ICLR 2021, which is a broad forum about EBM researches, and a tutorial at CVPR 2021, which focuses on computer vision tasks.
\begin{itemize}
\item ICLR2021 Workshop - Energy Based Models: Current Perspectives, Challenges, and Opportunities, \url{https://sites.google.com/view/ebm-workshop-iclr2021}
\item CVPR 2021 Tutorial: Theory and Application of Energy-Based Generative Models, \url{https://energy-based-models.github.io/}
\end{itemize}

To the best of our knowledge, there are no review papers about EBMs with applications to speech and language processing. 
The sequential nature of speech and language also presents special challenges and needs treatment different from processing fix-dimensional images that was described in the CVPR 2021 tutorial.
The aim of this monograph is to present a systematic introduction to energy-based models, including both algorithmic progress and applications in speech and language processing.
We hope it will also be of general interests to the artificial intelligence and signal processing communities.

Before delving into the specific content, we first point out \emph{five key features of EBMs}, which may motivate you to pursue the study and application of EBMs.
\begin{itemize}
    \item Flexibility in modeling. 
Compared to modeling a self-normalized density function, learning EBMs relaxes the normalization constraint and thus allows much greater flexibility in the parameterization of the energy function. % SongY how
Moreover, undirected modeling is more natural for certain domains, where fixing the directions of edges is awkward in a graphical model.
    \item Computation efficiency in likelihood evaluation, since the negative log likelihood of an EBM (by ignoring an additive constant) can be easily evaluated, without incurring any calculation for normalization.
    \item Naturally overcoming label bias and exposure bias suffered by locally-normalized models (\secref{sec:bias}). 
    \item Superiority for hybrid generative-discriminative and semi-supervised learning (\secref{ch:jem}).
    \item Challenge in model training. Both computation of the exact likelihood and exact sampling from EBMs are generally intractable, which makes training especially difficult. 
\end{itemize}

\section{Organization of this monograph}

\begin{figure}[t]
	\centering
	\includegraphics[scale=0.3]{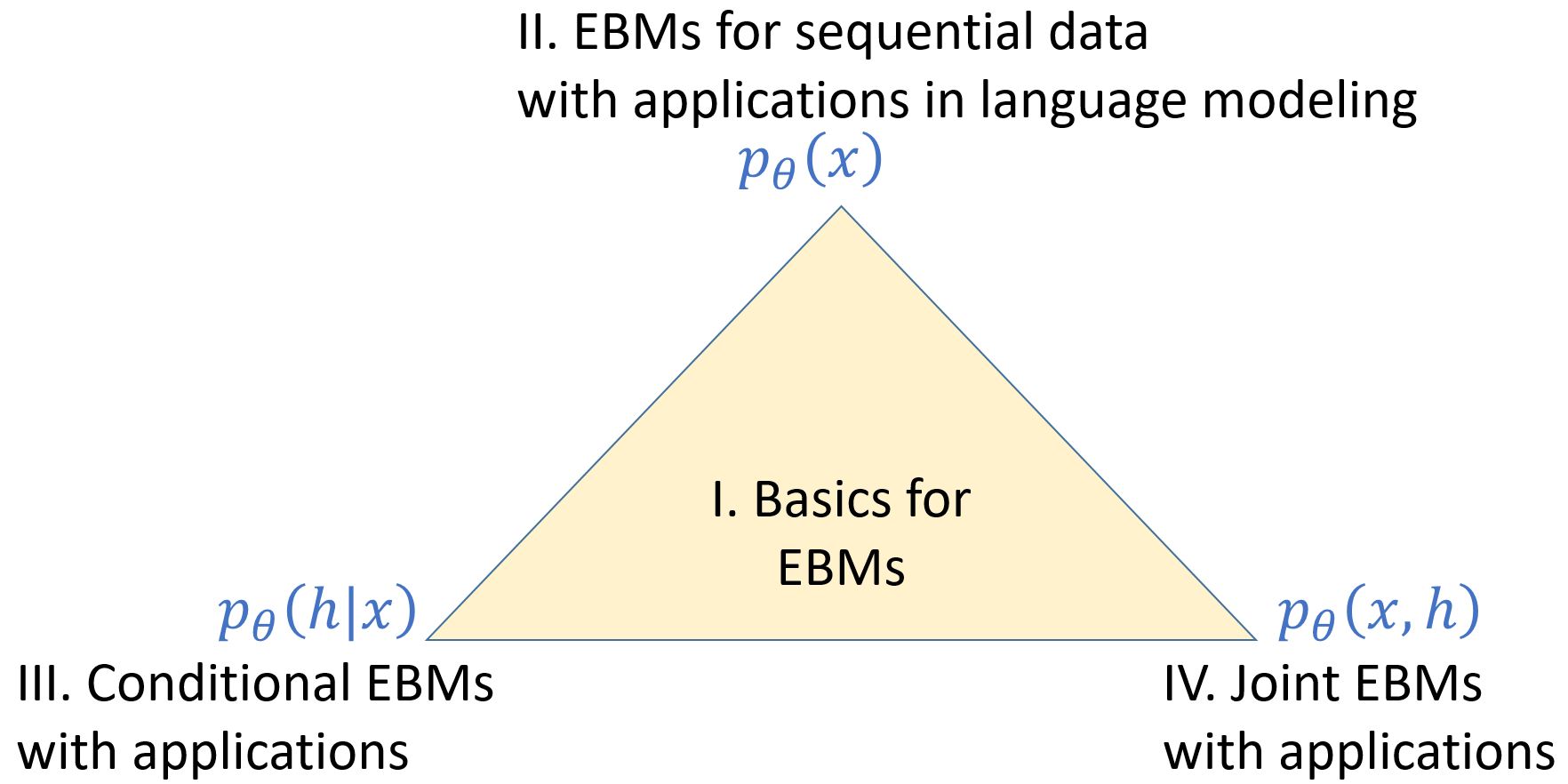}
	\caption{Outline of this monograph}
	\label{fig:outline}
\end{figure}

The rest of the monograph is organized as follows.

In Chapter \ref{ch:basics}, we present basics for EBMs. We start with a brief introduction to probabilistic graphical models (PGMs), because basically we introduce EBMs as undirected graphical models (UGMs).
Then, we present EBM model examples, including both classic ones (such as Ising model and restricted Boltzmann machines) and modern ones parameterized by neural networks.
Next, basic algorithms for learning EBMs are described, which covers the two most widely used classes of methods - Monte Carlo based maximum likelihood methods and noise-contrastive estimation (NCE) methods.
Finally, we present a dedicated section to introduce how to sample/generate from EBMs, since sampling is not only a critical step in maximum likelihood learning of EBMs, but also itself forms as an important class of applications in speech and language processing.

The basics for inference and learning with EBMs are general for both discrete and continuous data modeling.
Remarkably, most applications covered in this monograph is discrete data modeling (text in natural language processing, discrete labels in speech recognition), but in some places, we also present examples and applications in images. For example, Ising model is introduced for readers to get the abstract concepts conveyed by EBMs.
EBM based joint-training for semi-supervised image classification is a fixed-dimensional counterpart of the more complicated sequence setting, which is for semi-supervised natural language labeling.

The next three chapters are devoted to introduce how to develop EBMs in three different scenarios respectively.
\begin{itemize}
    \item Note that the sequential nature of speech and language presents special challenges and needs treatment different from processing fix-dimensional data (e.g., images).
In Chapter \ref{ch:lm}, we introduce EBMs for sequential data with applications in language modeling. In this scenario, we are mainly concerned with learning the (\emph{marginal}) distribution of an observation sequence $x$ itself, e.g., a natural language sentence as in language modeling.
\item In Chapter \ref{ch:conditional}, we introduce EBMs for modeling \emph{conditional} distributions of target sequences given observation sequences. Conditional EBMs have been successfully applied in speech recognition, sequence labeling in natural language processing (NLP)\index{Natural language processing (NLP)}, and various forms of conditional text generation (e.g., controlled text generation, factual error correction).
\item In Chapter \ref{ch:jem}, we introduce EBMs for modeling \emph{joint} distributions of both sequences of observations and targets.
%, with the motivation to pursuing semi-supervised learning. 
We first introduce the fixed-dimensional case, then move on to the sequential case, and finally present the applications in semi-supervised natural language labeling and calibrated natural language understanding.
\end{itemize}

Finally, conclusions are given in Chapter \ref{ch:conclusion} to summarize the monograph and to discuss future challenges and directions.

We visualize the content of this monograph in \figref{fig:outline}.
At the center is the basic knowledge for EBM modeling and learning. 
The basic theory can be applied to model different types of distributions – the distribution of the observation itself, the conditional distribution, and the joint distribution. In different applications or scenarios, we are concerned with different types of distributions.
In Chapter \ref{ch:lm}, \ref{ch:conditional}, and \ref{ch:jem}, we in fact show how to develop EBMs for the three different types of distributions in three different scenarios, respectively, as described above.

This monograph contains the material expanded from the tutorial that the author gave at ICASSP 2022 in May 2022. Substantial updates have been
made to incorporate more recent work and cover wider areas of research activities.

\chapter{Basics for EBMs}
\label{ch:basics}

Basically we introduce EBMs as undirected graphical models. We begin with background on probabilistic graphical models, which would be beneficial for readers to intuitively appreciate the differences between directed graphical models (DGMs) and undirected graphical models (UGMs).

\section{Probabilistic graphical models (PGMs)}
\label{sec:pgm}

Probabilistic graphical models provide a general framework for describing and applying probabilistic models in the \emph{probabilistic approach}. Many ideas developed in the probabilistic approach can be understood, unified, and generalized within the formalism of graphical models. 
\begin{quote}
``A graphical model is a family of probability distributions defined in terms of a directed or undirected graph.
The nodes in the graph are identified with random variables, and joint probability distributions are defined by taking products over functions defined on connected subsets of nodes.'' \citep{jordan2004graphical}
\end{quote}

\begin{figure}[t]
	\centering
	\includegraphics[scale=0.8]{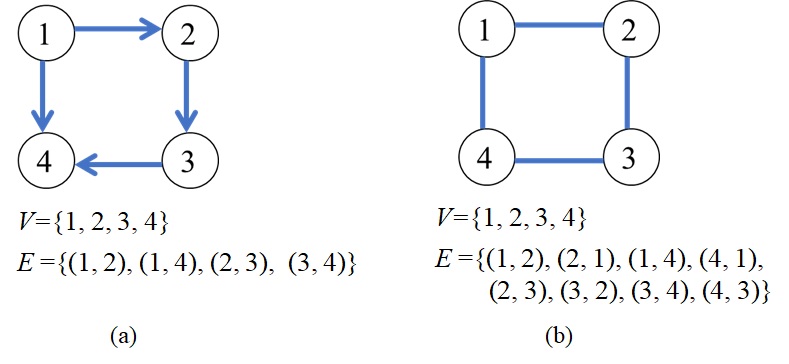}
	\caption{(a) A simple directed graphical model with four variables $(x_1, x_2, x_3, x_4)$. (b) A simple undirected graphical model with four variables $(x_1, x_2, x_3, x_4)$. For both types of graphs, $V$ denotes the set of nodes and $E$ the set of edges. If both ordered pairs $(\alpha, \beta)$ and $(\beta, \alpha)$ belong to $E$, we say that we have an undirected edge between $\alpha$ and $\beta$. A nice introduction of graph theory in the context of graphical models could be found in Chapter 4 of \cite{Cowell1999ProbabilisticNA}.
 }
\label{fig:GM_examples}
\end{figure}

Consider a graph $G = (V, E)$ where $V$ is a set of vertices (also called nodes) and the set of edges $E$ is a subset of the set $V \times V$. 
See \figref{fig:GM_examples} for an illustration of these concepts from graph theory.
Let\footnote{As described in \secref{sec:notations}, we allow sets of indices to appear wherever a single index appears.} $x_V \triangleq \{x_v : v \in V\}$ be a collection of random variables indexed by the nodes of the graph.
A \emph{graphical model}\index{Graphical model} in terms of $G$ describes a family of probability
distributions $p(x_V)$ over the variables $x_V$. \emph{A variable can either be scalar- or vector-valued}, where in the latter case the vector variable implicitly corresponds to a sub-graphical model over the elements of the vector. % Bilems JHU workshop
% The edges $E$ specifies the connections of the nodes and determines the specific factored form of the joint distribution according to the graph semantics (see below).

The edges $E$ specifies the connections between the nodes and, according to the graph semantics (see below), plays a crucial role in defining the graphical model distribution.
One view is that the edges $E$, depending on the graph semantics, determines a particular factorized form of the distribution.
Another view is that the edges $E$, of course still depending on the graph semantics, determines a particular set of conditional independence (CI) assumptions over the random variables.
In both views, the properties, either \emph{the factorized form} or \emph{the CI properties}, implied by the graphical model are true for all members of its associated distribution family.
As we will see in \secref{sec:dgm_two_views} and \secref{sec:ugm_two_views}, the two views are, in a strong sense, equivalent.

Graphical models can be defined over different types of graphs, directed, undirected or mixed, each with differing semantics. The \emph{semantics} specifies what a graphical model means and tells how the family of distributions is defined \citep{russell2010artificial,koller2009probabilistic}.
The set of CI properties specified by a particular graphical model, and therefore the family of probability distributions it represents, will be different depending on the type of graphical model currently being considered. % Bilems JHU workshop

The two most common forms of graphical model (GM) are directed graphical models (DGMs) and undirected graphical models (UGMs), based on directed acylic graphs and undirected graphs, respectively.
In general, directed graphs and undirected graphs
make different assertions of conditional independence.
Thus, there are families of probability distributions that
are captured by a directed graph and are not captured
by any undirected graph, and vice versa \citep{pearl1988probabilistic,koller2009probabilistic}. % Wainright&Jordan

\subsection{Directed graphical models} 
\label{sec:dgm}

Let us begin with the directed case. Continuing with the notations in \secref{sec:pgm}, let $G = (V, E)$ be a directed acyclic graph (DAG), and $x_V$ be a collection of random variables indexed by the nodes of $G$.
For each node $v \in V$, let $\text{pa}(v)$ denote the subset of indices of its parents; thus $x_{\text{pa}(v)}$ denotes the vector of random variables indexed by the parents of $v$. 

\begin{definition}[DGM]
A \emph{directed graphical model}\index{Directed graphical model (DGM)} in terms of $G$ consists of a family of distributions that factorize in the following way: % Wainwright&Jordan
\begin{equation} \label{eq:dgm}
p(x_V) = \prod_{v \in V} p(x_v | x_{\text{pa}(v)})
\end{equation}
We then also say that $p(x_V)$ has the directed factorization property (DF) according to $G$, or simply, $p(x_V)$ factorizes according to $G$.
\end{definition}

Remarkably, the notation in \eqref{eq:dgm} is self-consistent, because it can be verified that the joint distribution $p(x_V)$ defined by the factorization \eqref{eq:dgm} has $\{p(x_v | x_{\text{pa}(v)})\}$ as its conditionals.
For the simple directed graphical model shown in \figref{fig:GM_examples}(a), the joint distribution that it describes is:
\begin{displaymath}
p(x_1,x_2,x_3,x_4) = p(x_1) p(x_2|x_1) p(x_3|x_2) p(x_4|x_1,x_3)    
\end{displaymath}

General speaking, the nodes in a graphical model correspond to the random variables, and the edges indicate some direct probabilistic interactions between the nodes. In directed graphical models, every edges is directed and, intuitively, correspond to \emph{direct influence} of the parent node on the child node. Thus, DGMs are suitable for modeling clear influence relationships between random variables, which are expressed through conditional distributions.

\subsubsection{Factorization and Markov properties in directed graphical models}
\label{sec:dgm_two_views}
The factorization in \eqref{eq:dgm} implies a set of conditional independence statements among the variables $x_V$.
In an opposite way, we could define a set of conditional independence statements in terms of $G$, which is often referred to as a \emph{Markov property} over $G$. A range of Markov properties could be defined, relative to $G$, such as the directed global Markov property (DG), the directed local Markov property (DL), the directed ordered Markov property (DO) (See Section 5.3 of \cite{Cowell1999ProbabilisticNA}). 
It can be shown that the three Markov properties, (DG), (DL) and (DO), are equivalent, and further, they are equivalent to the (DF) property. 
\emph{These properties collectively characterize a graphical model} (i.e., a family of distributions defined in terms of a graph). 
The Markov properties of a distribution are precisely what allow it to be expressed compactly in a factorized form. Conversely, a particular factorization of a distribution guarantees that certain independencies hold. % KF

\subsubsection{DGM example - HMM}
\label{sec:hmm}

Many classic probabilistic models in speech and language processing can be easily understood in terms of graphical models. \figref{fig:hmm} shows the graphical model representation of an \emph{hidden Markov model} (HMM)\index{Hidden Markov model (HMM)} \citep{rabiner1989tutorial}, which has been widely used in speech recognition and various natural language processing tasks.

\begin{figure}[t]
	\centering
	\includegraphics[scale=0.8]{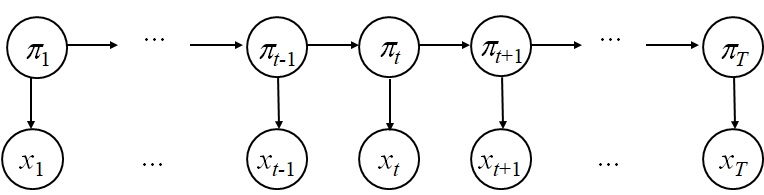}
	\caption{Graphical model representation of a hidden Markov model (HMM).}
	\label{fig:hmm}
\end{figure}

In an HMM, there is an underlying hidden Markov chain, corresponding to the state sequence $\pi_{1:T} \triangleq \pi_1 \cdots \pi_T$. At each time frame, depending on the state $\pi_t$, the model probabilistically emit an output $x_t$, which can be observed. 
The joint distribution is given by
\begin{equation} \label{eq:hmm}
p(\pi_{1:T}, x_{1:T}) = p(\pi_1) \prod_{t=1}^{T-1} p(\pi_{t+1} | \pi_{t} ) \prod_{t=1}^{T} p(x_{t} | \pi_{t} )
\end{equation}
where $p(\pi_{t+1} | \pi_{t} )$ and $p(x_{t} | \pi_{t} )$ are often called state-transition distribution and state-output distribution, respectively.
It is easy to verify that the graphical model as shown in \figref{fig:hmm} exactly describes the joint distribution \eqref{eq:hmm}, by following the DGM semantics - \emph{the joint distribution is defined as the product of local conditionals of each variable given its parents}. Through this example, readers can appreciate the naturalness of the graphical model approach in formulating probabilistic models of complex phenomena.

\subsubsection{DGM example - Neural network based classifier}
\label{sec:logistic_regression}

Traditionally, each conditional probability distribution $p(x_v | x_{\text{pa}(v)})$ is parameterized as a lookup table or a linear model \citep{koller2009probabilistic}. 
A more flexible way to parameterize such conditional distributions is with neural networks. In this case, a neural networks takes as input the parents of a variable in a directed graph, and produces the \emph{distributional parameters} over the variable. % FnT VAE
\begin{align*}
\eta &= \text{NeuralNet}(x_{\text{pa}(v)}) \\
p(x_v | x_{\text{pa}(v)}) &= \text{PDF}(x_v|\eta)
\end{align*}
where we use $\text{NeuralNet}(\cdot)$ and $\text{PDF}(\cdot|\eta)$ to generally denote a neural network (NN) function and a probability density function (PDF) parameterized by $\eta$, respectively.
For example, if $x_v$ is a continuous variable, $\eta$ could denote the mean and variance parameters; if $x_v$ is a discrete variable, $\eta$ could denote the logits (as explained below).

\begin{figure}[t]
	\centering
	\includegraphics[scale=0.5]{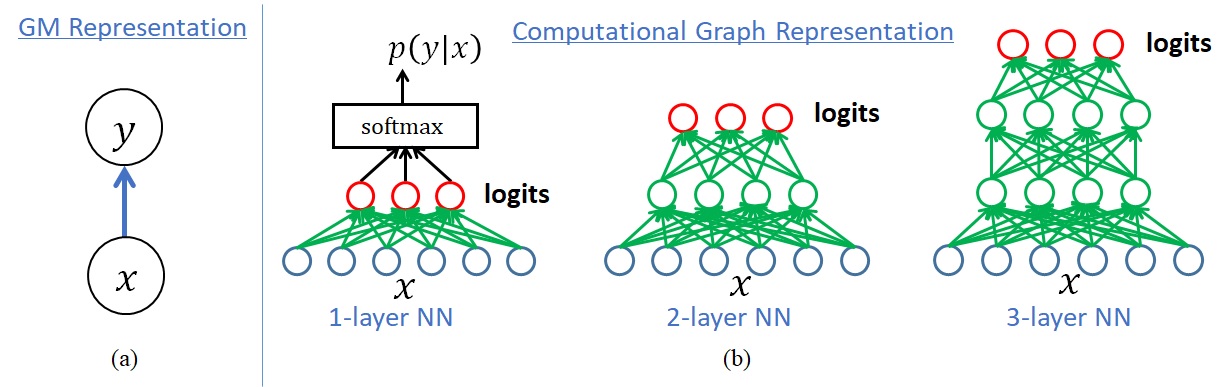}
	\caption{Neural network based classifier. (a) GM representation; (b) Computational graph representation.}
	\label{fig:nn_classifier}
\end{figure}

Basically, we can employ the above concept to build arbitrary directed graphical models, which is not the main focus of this monograph. For illustration, let us examine the widely used NN based classifier, which could be cast as a simple two-node directed model for observation $x \in \mathbb{R}^D$ and class label $y \in \{1,\cdots,K\}$, as shown in \figref{fig:nn_classifier}. It is important to differentiate the GM representation and computational graph representation.

The classic \emph{multi-class logistic regression}\index{Multi-class logistic regression} \citep{bishop2006pattern,murphy2012machine} basically is to use a single linear layer to obtain the \emph{logits} $z_k$'s, which are then fed to a \emph{softmax}\index{Softmax} layer to calculate the class posterior:
\begin{equation}
\label{eq:softmax}
    p(y=k|x) = \frac{\exp(z_k)}{\sum_{j=1}^K \exp(z_j)} \triangleq \text{softmax}(z_{1:K})_k    
\end{equation}
where 
\begin{equation}\label{eq:linear_layer}
z_k = w_k^T x + b_k, k=1,\cdots,K
\end{equation}
are often called the \emph{logits}\index{Logits}\footnote{Presumably because the argument of the sigmoid function is often called the logit, so analogously, the argument of the softmax function (as a multi-class generalization of the sigmoid) is also called the logit.}, and $w_k \in \mathbb{R}^D, b_k \in \mathbb{R}$ denote the weight vector and bias of the linear layer. A simple notation to describe the \emph{linear layer}\index{Linear layer} \eqref{eq:linear_layer} is
\begin{displaymath}
z_k = \text{Linear}(x|w_k, b_k)
\end{displaymath}
or denoted as $z_k = \text{Linear}(x)$ when the parameters are suppressed.

A recent advance in deep learning is that we can use a multi-layer neural network, often referred to as a \emph{deep neural network}\index{Deep neural network (DNN)} (DNN),  to calculate the logits, and then still use the softmax function to obtain the probability vector from the logits. In this way, the multi-layer NN could be viewed as a non-linear \emph{feature extractor}\index{Feature extractor}, which hopefully can be trained to extract features, more discriminative than the raw observation $x$.
Here in describing an NN based classifier, we show a directed model which has shallow stochastic connections but use deep deterministic layers in implementing the conditional distributions. \emph{Variational autoencoder} (VAE)\index{Variational autoencoder (VAE)} \citep{kingma2019introduction} is also such a model.
Further, directed models with deep stochastic connections have also been examined such as such as \emph{Sigmoid belief Networks} (SBNs)\index{Sigmoid belief network (SBN)} \citep{Neal1992ConnectionistLO,Saul1996}, \emph{Helmholtz machines} (HMs)\index{Helmholtz machine (HM)} \citep{Hinton1995the,dayan1995helmholtz}.

\subsection{Undirected graphical models}
\label{sec:ugm}

Let us now consider the undirected case. Given an undirected graph $G=(V,E)$, we again let $x_V$ be a collection of random variables indexed by the nodes of the graph and let $\mathcal{C}$ denote the set of  cliques\footnote{A subset of nodes $C$ is called a clique\index{Clique}, if every pair of nodes in $C$ is joined.} of the graph.
Associated with each clique $C \in \mathcal{C}$, let $\phi_C(x_C)$ denote a \emph{potential function}, which is a non-negative function of its arguments.

\begin{definition}[UGM]
With the above notation, an \emph{undirected graphical model}\index{Undirected graphical model (UGM)} in terms of $G$ consists of a family of distributions that factorize as:
\begin{equation} \label{eq:ugm}
p(x_V) = \frac{1}{Z} \prod_{C \in \mathcal{C}} \phi_C(x_C)
\end{equation}
where $Z$ is the normalizing constant (also known as the partition function) given by
\begin{equation} \label{eq:partition_func}
Z = \sum_{x_V} \prod_{C \in \mathcal{C}} \phi_C(x_C)
\end{equation}
We then also say that $p(x_V)$ has the factorization property (F) according to $G$, or simply, $p(x_V)$ factorizes according to $G$.
\end{definition}

For the simple undirected graphical model shown in \figref{fig:GM_examples}(b), the joint distribution that it describes is\footnote{Note that factorization over the set of cliques can be easily shown to be equivalent to factorization over the set of maximal cliques, i.e., the set of all cliques that are not properly contained within any other clique. Therefore, the joint distribution in this example can be written as the product of 4 potentials over the 4 maximum cliques, divided by the normalizing constant.}:
\begin{displaymath}
p(x_1,x_2,x_3,x_4) = \frac{1}{Z} \phi(x_1, x_2) \phi(x_2, x_3) \phi(x_3, x_4) \phi(x_1, x_4)    
\end{displaymath}

Remarkably, the potentials do not need to be self-normalized, only required to be non-negative functions. In contrast, the conditionals in directed models are required to be normalized. Thus, undirected models generally offer more flexibility in modeling than directed models. Moreover, undirected models do not require us to specify edge orientations, and are well suited to be used in problems in which there is little directional structure to guide the construction of a directed graph.

\subsubsection{Factorization and Markov properties in undirected graphical models}
\label{sec:ugm_two_views}

We have discussed the equivalence of factorization and conditional independence in directed models in \secref{sec:dgm_two_views}. Similarly, a range of Markov properties could be defined, relative to a undirected graph $G$, such as the global Markov property (G), the pairwise Markov property (P), the local Markov property (L) (See Section 5.2 of \cite{Cowell1999ProbabilisticNA}). However, in contrast to the discussion in the case of directed graphical models, the four properties, (F), (G), (L) and (P), are different in general. In general, we have (F)$\Rightarrow$(G)$\Rightarrow$(L)$\Rightarrow$(P).
In the case where $p(x_V)$ has a positive density (i.e., never zero or negative for any value of $x_V$), it can be shown that (P) implies (F), and thus the four properties become equivalent. This result is known as the Hammersley-Clifford theorem.

\begin{figure}[t]
\centering
\includegraphics[scale=0.8]{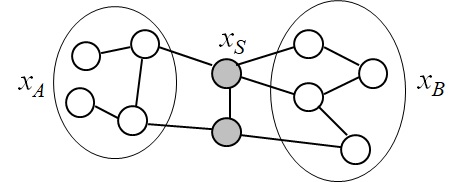}
\caption{Illustration of the global Markov property in UGMs.}
\label{fig:ugm-G}
\end{figure}

When we use the term undirected graphical model without further qualification, we shall always mean one that factorizes, hence satisfies all of the properties. % CDLS
In the following, we will detail the (G) property, which is important because it enable us to easily decide when two groups of variables $A$ and $B$ are conditionally independent give a third group of variables $S$ in an undirected model.

A probability distribution $p(x_V)$ is said to obey the global Markov property (G), relative to a undirected graph $G$ , if for any triple $(A, B, S)$ of disjoint subsets of $V$ such that $S$ separates $A$ from $B$\footnote{We say $S$ separates $A$ from $B$ if all trails from $A$ to $B$ intersect $S$.}, we have $x_A \perp x_B | x_S$.
Simply put, the (G) property means: separation between nodes in the graph implies CI between variables in the distribution, as illustrated in \figref{fig:ugm-G}.

\subsubsection{Energy-based models and Gibbs distributions}
\label{sec:ugm_ebm}

% PRML
\begin{definition}[EBM]
When we are restricted to potential functions which are strictly positive, it is convenient to express them as exponentials, so that
\begin{displaymath}
    \phi_C(x_C) = \exp[-E_C(x_C)]
\end{displaymath}
where $E_C(x_C)$ is called an \emph{energy function}\index{Energy function}. 
Hence, negative log-potential is often called energy, and high probability states correspond to low energy configurations.
Distributions of this exponential form are called \emph{energy-based models} (EBMs)\index{Energy-based model (EBM)}, also known as the Gibbs (or Boltzmann) distributions, originating from statistical physics:
\begin{equation} \label{eq:ebm}
 p(x_V) = \frac{1}{Z} \exp\left[ \sum_{C \in \mathcal{C}} -E_C(x_C) \right]
\end{equation}
\end{definition}
A benefit of the form of EBMs is that unlike the potential functions, the log-potential functions (or after negating, the energy functions) are not constrained to be non-negative and can be very flexibly parameterized.

\subsubsection{Log-linear models and maximum entropy models}
\label{sec:ugm_loglinear}
% Murphy
\begin{definition}[Log-linear model]\index{Log-linear model}
A classic approach to implement log-potentials is to define them as a linear function of the parameters:
\begin{displaymath}
   \log \phi_C(x_C) = \theta_C^T f_C(x_C) 
\end{displaymath}
where $f_C(x_C)$ is a feature vector derived from (the values of) the variable $x_C$, $\theta_C$ is the associated feature weight vector. The resulting distribution has the form
\begin{equation}
\label{eq:log-linear}
p(x_V) = \frac{1}{Z(\theta)} \exp\left[ \sum_{C} \theta_C^T f_C(x_C) \right] 
\end{equation}
where $\theta = \{\theta_C ~|~ C \in \mathcal{C}\}$ collectively denotes the model parameters and we explicit the dependence of the normalizing constant on $\theta$. This is known as a \emph{log-linear model} or an exponential-family
model \citep{wainwright2008graphical,murphy2012machine}. 
\end{definition}

\begin{example}[Word morphology] 
\label{eg:word_morph}
\index{Word morphology}
As an illustrative example, suppose we are interested in making a probabilistic model of English spelling. This is known as the \emph{word morphology} problem, which aims to model English words as letter sequences, $x_1,x_2,\cdots$, by assigning probabilities \citep{Wang2017LearningTR,inducingfeatures,murphy2012machine}.
Since certain letter combinations occur together quite frequently (e.g., ``ing''), we will need higher order cliques to capture this. Suppose we limit ourselves to letter trigrams. 
Since the variables in the sequence here are discrete, we can represent the potential functions as tables of (non-negative) numbers, which are often called \emph{tabular potentials}\index{Tabular potential}.
A tabular potential has $26^3 = 17,576$ parameters in it. However, most of these triples will never occur.
An alternative approach is to define indicator functions (as \emph{feature functions}\index{Feature function}) that look for certain special triples, such as ``ing'', ``qu-'', etc. Then we can define the potential at position $t$ as follows:
\begin{displaymath}
\phi_t(x_{t-1}, x_t, x_{t+1}) = \exp \left( \sum_k \theta_k f_k(x_{t-1}, x_t, x_{t+1}) \right)
\end{displaymath}
where $k$ indexes the different features, corresponding to ``ing'', ``qu-'', etc., and $f_k$ is the corresponding binary feature function\footnote{For example, $f_{\text{``ing''}}(x_{t-1}, x_t, x_{t+1})$ equals to 1, if $x_{t-1}=\text{``i''}, x_t=\text{``n''}, x_{t+1}=\text{``g''}$, and equals to 0, otherwise.}. By tying the parameters across positions, we can define the probability of a word of any length $T$ using
\begin{equation}
\label{eq:word_morph}
p(x_{1:T}|\theta) = \exp \left( \sum_{t=1}^T \sum_k \theta_k f_k(x_{t-1}, x_t, x_{t+1}) \right)
\end{equation}
It has been shown in \citep{Wang2017LearningTR} that a (trans-dimensional) log-linear model outperforms a traditional $n$-gram model, using the same set of features.
\end{example}

Modeling with \eqref{eq:word_morph} raises the question of where these feature functions come from. Traditionally, the features $f_C$ are mainly hand-crafted to reflect domain knowledge, and people take efforts for feature engineering. In fact, log-linear models with careful feature engineering were mainstream in NLP before the revival of neural approaches.
Recently, neural networks have been successfully used to implement more general form of EBMs, where, unlike in classic log-linear models, the log-potentials are defined by NN-based non-linear functions, as we will show later in \secref{sec:ebm_nn}. 
Notably, if NN-based log-potential uses a linear final layer, \emph{these NN-based EBMs could in some sense still be viewed as log-linear models}, but on top of the learned features extracted by the trainable neural networks.

Interestingly, log-linear models are closely connected to \emph{maximum entropy (maxent) models}\index{Maximum entropy model (maxent)}.
A classic conclusion is that the maxent distribution (i.e., the distribution with the maximum entropy subject to the constraints that empirical expectation of features equal to model expectation of features) is the same as the maximum likelihood distribution from the closure of the set of log-linear distributions \citep{mackay2003information,inducingfeatures} (see \appref{sec:maxent} for details).
Hence EBMs, which could be viewed as log-linear models as we describe above, also enjoy such a connection to maxent models. 
Remarkably, the EBM distribution $p(x)$ is only known up to a normalizing constant $Z$ and the potentials are not probabilities, it may be hard for us to understand EBM models. Such \emph{maximum entropy property of EBM models} allows us to gain intuitive understanding of EBM models.

% One way in which to motivate exponential family representations of graphical models is through the principle of maximum entropy. % Wain&Jor

\section{EBM model examples}

In this section, we first introduce classic EBM model examples including Ising model and restricted Boltzmann machine, to familiarize readers with basic concepts. Then, we focus on modern ones parameterized by neural networks.

\subsection{Ising model in statistical physics}\index{Ising model}

\begin{figure}[h]
\begin{subfigure}{0.5\textwidth}
\centering
\includegraphics[scale=0.48]{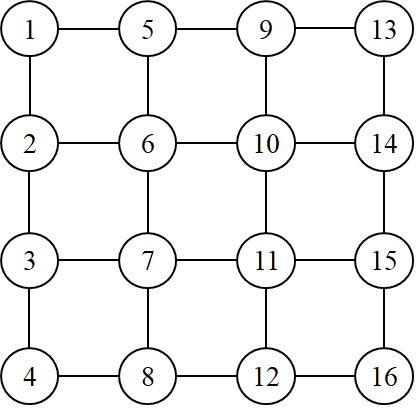}
\caption{}  
\end{subfigure}
\begin{subfigure}{0.5\textwidth}
\includegraphics[scale=0.6]{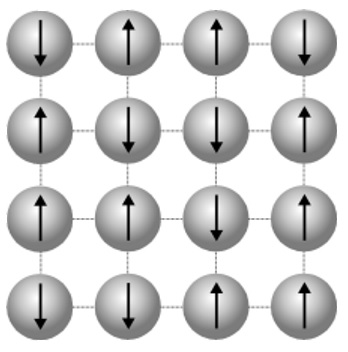}
\centering
\caption{}  
\end{subfigure}
\caption{Ising model: (a) The undirected graph representation, (b) A sample.}  
\label{fig:Ising} 
\end{figure}

A well-known example of EBMs is the Ising model, which arose from statistical physics, and has been a classic probabilistic model for binary images. It was originally used for modeling the behavior of magnets. 
In particular, the spin of an atom can either be spin down or up.
The spins are arranged in a lattice, allowing each spin to interact with its neighbors. Neighboring spins that agree have a lower energy than those that disagree; the system tends to the lowest energy but heat disturbs this tendency, thus creating the possibility of different structural phases. 
The two-dimensional square-lattice Ising model is one of the simplest statistical models to show a phase transition.

% Chapter 2 Ising Model for Ferromagnetism.pdf
Consider a lattice of variables $x_{1:N}$, each component $x_i$ taking values $-1$/$+1$ to model a spin down/up or a pixel black/white, as shown in \figref{fig:Ising}. This $\sqrt{N} \times \sqrt{N}$ lattice, as an undirected graph, defines an Ising model $p(x_{1:N}) \propto \exp[-E(x_{1:N})]$ with the energy function
\begin{equation} \label{eq:Ising}
E(x_{1:N}) = - \beta \left[ \sum_{i \sim j} J x_i x_j + \sum_i H x_i \right]
\end{equation}
where $i \sim j$ denotes that two spins $i$ and $j$ are neighbours.

The energy for an Ising model includes two contributions: the interaction between neighboring spins and the effect of an applied external magnetic field on each individual spin. Consider the case of ferromagnetism. The interaction between neighboring spins tends to induce parallel alignment of the neighbors, so it should be favorable (negative energy) when the neighbors are both $+1$ or both $-1$, and unfavorable (positive energy) when the neighbors are $+1$ next to $-1$. Hence, for each pair of neighbors $i$ and $j$, the interaction energy can be written as $-J x_i x_j$, where $J$ is a positive coefficient giving the interaction strength.
If the applied magnetic field is pointing up, it favors each spin pointing up; if the field is pointing down, it favors each spin pointing down. Hence, for each site $i$, the field energy can be written as $-H x_i$, where $H$ denotes the magnetic moment of the field.
Putting these pieces together, the total energy for the system becomes \eqref{eq:Ising}. 

It is usual to include in \eqref{eq:Ising} the \emph{inverse temperature parameter} $\beta = \frac{1}{k_B T}$, where $k_B$ is Boltzmann's constant, and $T$ the temperature.
$\beta$ measures how much neighboring spins take identical values is favored. The larger $\beta$ (equivalently the lower temperature $T$) is, the more favorable of neighboring spins to take identical values.
\figref{fig:Ising samples} shows a sequence of typical samples from
the simulation of $N = 4096$ spins at a sequence of decreasing temperatures. 
At infinite temperature ($\beta = 0$), each spin is completely independent of any other, and if typical states at infinite temperature are plotted, they look like television snow. For high, but not infinite temperature, there are small correlations between neighboring positions, the snow tends to clump a little bit, but the screen stays randomly looking. When the temperature decreases ($\beta$ increases), it is more favored for neighboring spins to take identical values, so large patches of black or white become to appear.

\begin{figure}[t]
\centering
\includegraphics[scale=0.5]{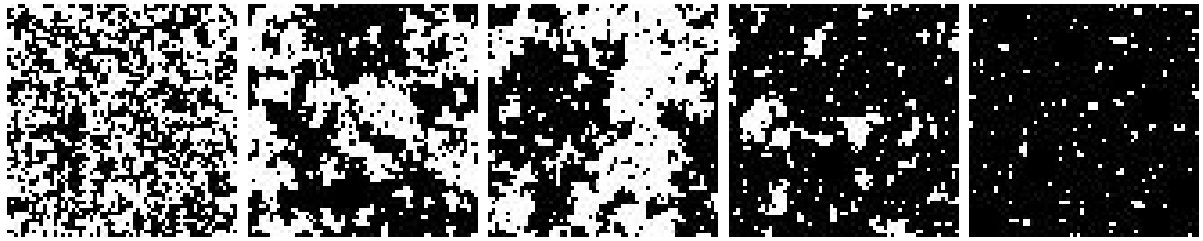}
\caption{Sample states of square Ising models with $J=1, H=0, k_B=1, N=4096$ at a sequence of temperatures $T=5, 2.5, 2.4, 2.3, 2$. \citep{mackay2003information}}
\label{fig:Ising samples}
\end{figure}

Through this example, we could get some sense of the characteristics of EBM modeling - \emph{EBMs are natural for modeling interactions (mutual influences), where the directions of edges cannot be clearly defined}.
For example, here it is hard to say one pixel determines another pixel, even in a probabilistic sense. It is better to use undirected edges to model interactions between pixels through energy functions. Particularly in this example, for each pair of nodes connected by an edge in the lattice, there is an clique potential, which is implemented as follow:
\begin{displaymath}
    \phi_{ij}(x_i,x_j) =
\left\{ \begin{array}{cc}
e^{\beta}, & x_i = x_j=\pm 1\\
e^{-\beta}, & x_i = - x_j=\pm 1
\end{array} \right.
\end{displaymath}
where we set $J=1, H=0, k_B=1$.
So when the two neighboring pixels take the same value, it will contribute $e^{\beta}$ to the un-normalized density; otherwise, contribute $e^{-\beta}$.

\subsection{Restricted Boltzmann Machines (RBMs)}
\label{sec:rbm}
\index{Restricted Boltzmann machine (RBM)}

% Salak
\emph{Restricted Boltzmann machines} (RBMs) are main building blocks of \emph{deep belief networks} (DBNs) \citep{hinton2006a}, which ignite deep learning.
A RBM is a classic undirected graphical model with hidden variables.
It is defined over a bipartite graph, in which the visible, binary stochastic variables $v \in \{0, 1\}^D$ are connected to hidden binary stochastic variables $h \in \{0, 1\}^H$, as shown in \figref{fig:rbm}. The energy of the state $\{v,h\}$ is defined over cliques\footnote{Each node is a clique, and for each edge connecting $v_i$ and $h_j$, there is a clique.}:
\begin{displaymath}
\begin{split}
   E_\theta(v,h) &= - v^T W h - b^T v - a^T h\\
   &= - \sum_{i=1}^D \sum_{j=1}^H v_i W_{ij} h_j - \sum_{i=1}^D b_i v_i  - \sum_{j=1}^H a_j h_j
\end{split}
\end{displaymath}
where $\theta = \{W,b,a\}$ are the model parameters: $W_{ij}$ represents the symmetric interaction term between visible unit $i$ and hidden unit $j$; $b_i$ and $a_j$ are bias terms. The joint distribution is:
\begin{displaymath}
\begin{split}
   p_\theta(v,h) &= \frac{1}{Z(\theta)} \exp\left[ -E_\theta(v,h) \right]\\
   Z(\theta) &= - \sum_v \sum_h \exp\left[ -E_\theta(v,h) \right]
\end{split}
\end{displaymath}
where $Z(\theta)$ is the normalizing constant.

\begin{figure}[t]
\centering
\includegraphics[scale=0.5]{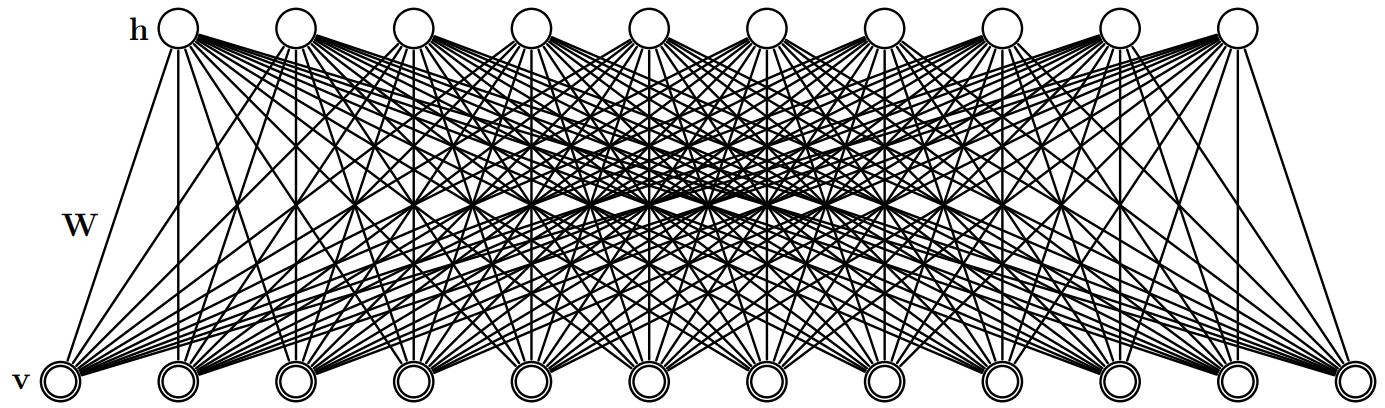}
\caption{Restricted Boltzmann Machine. The top layer represents a vector of stochastic binary hidden variables $h$ and the bottom layer represents a vector of stochastic binary visible variables $v$. \citep{Salakhutdinov2009LearningDG}}
\label{fig:rbm}
\end{figure}

Due to the special bipartite structure of RBMs, given $v$, different hidden units $h_j$'s are separated and thus are conditionally independent, according to the Markov property of UGMs. Therefore, the conditional distribution of $h$ given $v$ is factored:
\begin{displaymath}
\begin{split}
    p_\theta(h|v) &= \prod_j p_\theta(h_j|v)\\
    p_\theta(h_j|v) &\propto \exp \left(\sum_i v_i W_{ij} h_j + a_j h_j \right)
\end{split}
\end{displaymath}
from which we could easily obtain the conditional probability of a single unit $h_j$, expressed by the sigmoid function $\sigma(\cdot)$:
\begin{equation}
\label{eq:rbm_h_given_v}
    p_\theta(h_j = 1|v) = \sigma \left(\sum_i W_{ij} v_i + a_j \right)
\end{equation}

Similarly, the conditional distribution of $v$ given $h$ is also factored and given by:
\begin{align}
    p_\theta(v|h) &= \prod_i p_\theta(v_i|h) \nonumber \\
    p_\theta(v_i|h) &\propto \exp \left(\sum_j v_i W_{ij} h_j + b_i v_i \right) \nonumber \\
    p_\theta(v_i = 1|h) &= \sigma \left(\sum_j W_{ij} h_j + b_i \right) \label{eq:rbm_v_given_h}
\end{align}

Remarkably, it can be seen from \eqref{eq:rbm_h_given_v} that an RBM is related to a stochastic version of a neural network, also known as a \emph{sigmoid belief network} (SBN)\index{Sigmoid belief network (SBN)} \citep{Neal1992ConnectionistLO,Saul1996}.
To see this\footnote{Such relationship could also seen from \eqref{eq:rbm_v_given_h} from an opposite direction.}, imagine that the nodes $v_{1:D}$ and the edges of an RBM as shown in \figref{fig:rbm} are viewed as the input layer and the synaptic connections of a two-layer SBN; the output layer at $h_{1:H}$ fires, $h_j$ taking $0$ or $1$, $j=1,\cdots,H$, stochastically from a sigmoid activation function. Therefore, \emph{the conditional distribution $p_\theta(h|v)$ induced from an RBM can be viewed as implementing a two-layer SBN, and a two-layer SBN is very similar to an ordinary two-layer feedfoward neural network}, except it stochastically fires instead of outputting activations to the next layer (see \figref{fig:rbm_sbn}).
This resemblance between RBMs and NNs is the underlying intuition that a stack of RBMs can be trained as pre-training for a multi-layer neural network \citep{Salakhutdinov2009LearningDG}.

% A \emph{deep belief network} (DBN) is a probabilistic generative model that contain many layers of hidden binary variables, in which each layer captures high-order correlations between the activities of hidden features
% in the layer below. The top two layers of a DBN form an (undirected) EBM, while the lower layers forming (directed) SBNs, as shown in \figref{fig:dbn}.
% The parameters of the SBNs and the RBM in a DBN can be trained in a layer-by-layer manner via a stack of RBMs in which the samples from the lower-level RBM are used as the data for training the next RBM \citep{hinton2006a}.
% The learned parameters of the DBN can be used to initialize a feedforward multi-layer neural network, often called a deep neural network (DNN).
% \begin{figure}[t]
% \centering
% \includegraphics[scale=0.5]{}
% \caption{Deep belief network (DBN).}
% \label{fig:dbn}
% \end{figure}

\begin{figure}[h]
\begin{subfigure}{0.5\textwidth}
\centering
\includegraphics[scale=0.6]{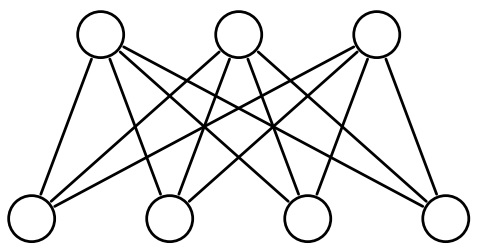}
\caption{}  
\end{subfigure}
\begin{subfigure}{0.5\textwidth}
\includegraphics[scale=0.6]{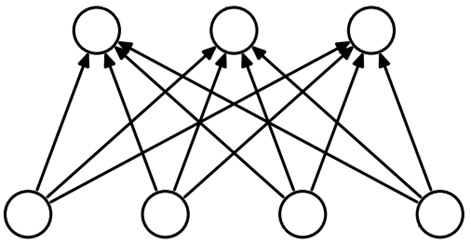}
\centering
\caption{}  
\end{subfigure}
\caption{(a) Restricted Boltzmann machine (RBM), (b) Sigmoid belief network (SBN).}  
\label{fig:rbm_sbn} 
\end{figure}

\subsection{EBMs parameterized by neural networks}
\label{sec:ebm_nn}

Classic EBM models employ simple energy functions, e.g., the energy functions in both the Ising model and the RBM model are bilinear. Recently, beyond the classic EBM models, there have emerged a bundle of \emph{deep EBM models} (deep undirected generative models), which are characterized by using multiple layers of stochastic or deterministic variables.

Those deep EBM models with multiple stochastic hidden layers such as deep belief networks (DBNs)\index{Deep belief network (DBN)} \citep{hinton2006a} and deep Boltzmann machines (DBMs)\index{Deep Boltzmann machine (DBM)} \citep{dbm} involve very difficult inference and learning, which severely limits their applications beyond of the form of pre-training.
Another type of deep EBM models, which appear to be more successfully applied, is to \emph{directly define the energy function through a multi-layer neural network}. In this case, the layers of the network do not represent latent variables but rather are deterministic transformations of input observations.

For simplicity, we will first introduce unconditional models in the following. It is relatively straightforward to extend to models with conditioning variables, which will be detailed in Chapter \ref{ch:conditional}.

% the def is from semiEBM
\begin{definition}[EBMs parameterized by neural networks]
\label{def:EBM_NN}
\index{EBMs parameterized by neural networks}
    Generally, consider an EBM to define a probability distribution for a collection of random variables $x\in \mathcal{X}$ with parameter $\theta$ in the form:
\begin{equation}
\label{eq:unsup-RF}
p_{\theta}(x)=\frac{1}{Z(\theta)} \exp\left[  U_{\theta}(x) \right] 
\end{equation}
where $\mathcal{X}$ denotes the space of all possible values of $x$, and $Z(\theta)$ denotes the normalizing constant:
\begin{equation}\label{eq:Z}
Z(\theta)=\int\exp\left[  U_{\theta}(x) \right] dx    
\end{equation}
$U_{\theta}(x) : \mathcal{X} \rightarrow \mathbb{R}$ denotes the (log) potential function\footnote{In the literature, 
log potential function is sometimes also referred to as potential function. Whether taking log or not should be clear from the context, although with abuse of nomenclature.
Moreover, reversing the potential function will obtain the energy function, and vise versa. 
\emph{So an EBM could be equivalently defined by an energy function or a potential function}.} which assigns a scalar value to each configuration of $x$ in $\mathcal{X}$ and \emph{can be very flexibly parameterized} through neural networks of different architectures.
For different applications, $\mathcal{X}$ could be discrete or continuous, and $x$ could be fix-dimensional or trans-dimensional (i.e., sequences of varying lengths). For example, images are fix-dimensional continuous data (i.e., $\mathcal{X}=\mathbb{R}^D$), and natural languages are sequences taking discrete tokens (i.e., $\mathcal{X}=\bigcup_{l}\mathbb{V}^l$ where $\mathbb{V}$ is the vocabulary of tokens).
The general idea is to parameterize $U_{\theta}(x)$ by a neural network, taking multi-variate $x$ as input and outputting scalar $U_{\theta}(x)$, so that we can take advantage of the representation power of neural networks, as shown in \figref{fig:ebm-nn}.
\end{definition}

\begin{figure}[t]
\centering
\includegraphics[scale=0.5]{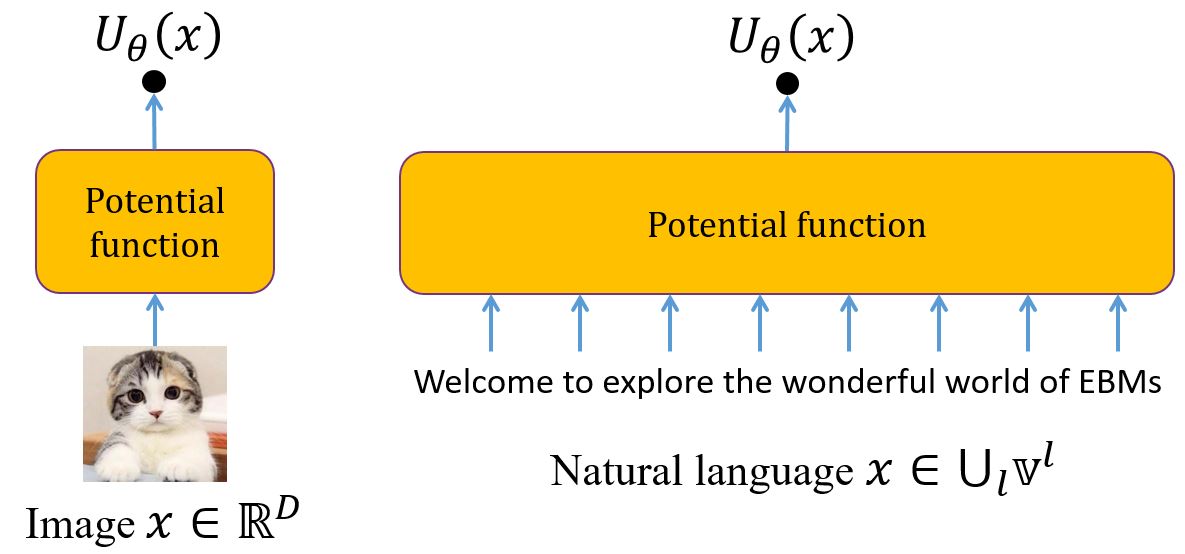}
\caption{Potential functions in EBMs can be flexibly parameterized by neural networks for images, natural languages and so on. }
\label{fig:ebm-nn}
\end{figure}

Historically, this type of EBMs has been studied several times in different contexts. They are once called deep energy models (DEMs) in \citep{ngiam2011learning,Kim2016DeepDG}, generative ConvNet in \citep{xie2016theory}, descriptive models in \citep{descriptor,xie2018cooperative}, neural trans-dimensional random field language models in \citep{wang2017language,BinICASSP2018,wang2018improved,gao2020integrating}, neural random fields (NRFs)\index{Neural random fields (NRFs)} in \citep{song2018learning}.
There are some specific differences between implementation (or say, parameterization) of the potential functions in these deep EBM models.
\begin{itemize}
    \item The potential function in \citep{ngiam2011learning,Kim2016DeepDG} uses the form of a product of experts \citep{hinton2002training} and is composed of linear and squared terms and the aggregated (i.e., the sum of) logistic regression responses of a set of weak classifier (``expert'') for images $x$:
    \begin{equation}\label{eq:dem}
        U_\theta(x) = b^T x -\frac{1}{\sigma^2} x^T x + \sum_i \log(1+e^{w_i^T f_\theta(x)+c_i})
    \end{equation}
The first two terms capture the mean and the global variance, and the last term comes from a set of experts over the feature data space $f_\theta(x)$. 
$f_\theta(x)$ is the output of a feedforward neural network. One can allow activations other than logitic sigmoid as shown in \eqref{eq:dem}.
\item In \citep{wang2017language,BinICASSP2018,wang2018improved,gao2020integrating,deng2020residual}, deep EBM models are defined over sequences (natural language sentences).
\item In \citep{xie2016theory,descriptor,xie2018cooperative,wang2017language,deng2020residual}, the EBM is defined in the form
of exponential tilting of a reference distribution $q(x)$:
    \begin{equation}\label{eq:exp_tilting}
    p_{\theta}(x)=\frac{1}{Z(\theta)} q(x) \exp\left[  U_{\theta}(x) \right]
    \end{equation}
In \citep{xie2016theory,descriptor,xie2018cooperative} for image modeling, a Gaussian white noise distribution is used as $q(x)$. In \citep{wang2017language,deng2020residual} for language modeling, autoregressive language models based on LSTM or Transformer networks are often used as $q(x)$.
\end{itemize}

Despite the different parameterizations of the potential function in various deep EBM models, these differences would not affect much when presenting learning algorithms, as we will introduce immediately in the next section.

\subsubsection{Comparison between classic undirected graphical models and modern EBM models parameterized by neural networks}

As we introduced before, classic undirected graphical models such as the Ising model and the RBM model employ simple potential functions like linear or bilinear, while modern EBM models utilize neural networks to parameterize potential functions. From this apparent difference, there is a remarkable implication, which will be described below.

In graphical modeling terminology, without loss of generality, let each component of $x$ indexed by a node in an undirected graph. The EBM distribution $p_{\theta}(x)$ is defined to be decomposed over cliques, as shown in \eqref{eq:ebm}.
% where we suppress the dependence on $\theta$ for notational simplicity:
% \begin{displaymath}
% U(x) = \sum_{C \in \mathcal{C}} U_C(x_C)
% \end{displaymath}
% where $\mathcal{C}$ denotes the collection of cliques in the graph, and $U_C(x_C)$ denotes the clique potential defined over the vector of variables indexed by the clique $C$.
Such decomposition reduces the complexity of model representation but maybe at the sacrifice of model expressive capacity.
In an EBM model parameterized by a neural network as introduced above, 
the model essentially becomes defined over a fully-connected undirected graph and captures interactions in $x$ to the largest order, since the neural potential function $U(x)$ involves all the components in $x$. 
In this manner, hopefully we can take advantage of the representation power of neural networks for modeling.
As shown above, we can define very flexible potential functions $U_{\theta}(x)$, by utilizing neural networks of various architectures to define the densities \eqref{eq:unsup-RF}.
For this reason, it is often assumed in theoretical analysis that $p_{\theta}(x)$ has infinite capacity (sometime called in the non-parametric setting). In practice, the performances of the models largely depend on how they are optimized in model learning.

\section{Learning EBMs by maximum likelihood}
\label{sec:MLE}

The de facto standard for learning probabilistic models from IID (independent and identically distributed) data is \emph{maximum likelihood estimation} (MLE) \index{Maximum likelihood estimation (MLE)}.
Let $p_{\theta}(x)$, as defined in \eqref{eq:unsup-RF}, be an EBM model parameterized by $\theta$, and $p_{\text{emp}}(x) \triangleq \frac{1}{N} \sum_{i=1}^{N} \delta(x-x_i)$ denote the empirical distribution for a training dataset consisting of $N$ IID data points $\left\lbrace x_1, \cdots, x_N \right\rbrace$.
We can fit $p_{\theta}(x)$ to data by maximizing the scaled log-likelihood of the data, defined by
\begin{equation}
\label{eq:ebm_loglik}
   L(\theta) \triangleq \frac{1}{N} \sum_{i=1}^N \log p_{\theta}(x_i)
   = \left[ \frac{1}{N} \sum_{i=1}^N U_{\theta}(x_i) \right] - \log Z_\theta
\end{equation}
as a function of $\theta$.

Maximizing likelihood is equivalent to minimizing the (inclusive) KL divergence between
$p_\text{emp}(x)$ and $p_\theta(x)$, because
\begin{align*}
KL[p_\text{emp}(x) || p_\theta(x)] &= E_{x \sim p_\text{emp}(x)}[\log p_\text{emp}(x)] - E_{x \sim p_\text{emp}(x)}[\log p_\theta(x)]\\
&= \text{constant} - L(\theta),
\end{align*}
where the second equality holds because $E_{x \sim p_\text{emp}(x)}[\log p_\text{emp}(x)]$ does not depend on $\theta$.

Taking the derivative of log-likelihood with respect to (w.r.t.) $\theta$, the first term of the gradient is a sum over data points and can be written as an expectation under the
empirical distribution:
\begin{displaymath}
\frac{1}{n} \sum_{i=1}^N \nabla_\theta U_{\theta}(x_i)
= E_{p_{\text{emp}}(x)}\left[ \nabla_\theta U_{\theta}(x) \right]
\end{displaymath}
The second term involves taking the derivative of the log normalizing constant, which, as shown below, can also be written as an expectation but under model distribution $p_{\theta}(x)$:
\begin{align}
\label{eq:grad_log_Z_equal_expect}
\nabla_\theta \log Z_{\theta} &= \frac{1}{Z_{\theta}} \nabla_\theta Z_{\theta} = \frac{1}{Z_{\theta}} \int \nabla_\theta \exp\left[  U_{\theta}(x) \right] d x \nonumber \\
&= \frac{1}{Z_{\theta}} \int \exp\left[  U_{\theta}(x) \right] \nabla_\theta U_{\theta}(x) d x \nonumber \\
&= \int \frac{\exp\left[  U_{\theta}(x) \right]}{Z_{\theta}} \nabla_\theta U_{\theta}(x) d x \nonumber \\
&= \int p_{\theta}(x) \nabla_\theta U_{\theta}(x) d x \nonumber\\
&= E_{p_{\theta}(x)} \left[ \nabla_\theta U_{\theta}(x) \right]
\end{align}
Combining the two terms, we obtain the core formula in learning EBMs:
\begin{equation} 
\label{eq:ebm_grad}
\nabla_\theta L(\theta) = E_{p_{\text{emp}}(x)}\left[ \nabla_\theta U_{\theta}(x) \right] - 
E_{p_{\theta}(x)} \left[ \nabla_\theta U_{\theta}(x) \right]
\end{equation}

The maximum likelihood estimate of $\theta$ is obtained as a solution to $ \nabla_\theta L(\theta) = 0$. 
Obviously, the challenge is in calculating the expectation under model distribution, which is often intractable to compute exactly and approximated by \emph{Monte Carlo averaging}\index{Monte Carlo averaging}.

Suppose that we can draw random samples from the EBM $p_{\theta}(x)$, denoted as $x^{(1)},\cdots,x^{(M)} \sim p_{\theta}(x)$, then we can obtain an unbiased estimate of the second term in the log-likelihood gradient\footnote{Here we suppose that the samples are direct samples from $p_{\theta}(x)$. As will be described in \secref{sec:SA_ebm}, such assumption can be relaxed in the stochastic approximation methodology, which allows us to use Markov chain samples.}: 
\begin{equation}
\label{eq:ebm_mc_averaging}
E_{p_{\theta}(x)} \left[ \nabla_\theta U_{\theta}(x) \right] \approx \frac{1}{M} \sum_{j=1}^M \nabla_\theta U_{\theta}(x^{(j)})
\end{equation}
Additionally, note that the computational cost of \eqref{eq:ebm_grad} is linear in $N$ (the number of training data points). So when $N$ is large, we often apply \emph{minibatching}\index{Minibatching} as follows. 
Through random drawing a minibatch of $\kappa_1,\cdots,\kappa_B$ from $\{1,\cdots,N\}$, we can obtain an unbiased estimate of the first term in the log-likelihood gradient: 
\begin{equation}
\label{eq:ebm_minibatching}
E_{p_{\text{emp}}(x)}\left[ \nabla_\theta U_{\theta}(x) \right] 
\approx \frac{1}{B} \sum_{j=1}^B \nabla_\theta U_{\theta}(x_{\kappa_j})
\end{equation}
Combining \eqref{eq:ebm_mc_averaging} and \eqref{eq:ebm_minibatching} allows us to optimize the parameters with stochastic gradient ascent. See further introduction in \secref{sec:SA_ebm}.

% It can be seen from the above general introduction that in estimating the log-likelihood gradient for learning EBMs, we need to draw samples from an un-normalized high-dimensional distribution $p_{\theta}(x)$.
% Directly drawing random samples from EBMs is intractable in general, and much of the literature has resorted to \emph{Markov chain Monte Carlo} (MCMC) methods \citep{neal1993probabilistic,liu2001monte} for efficient sampling from EBMs.

We have shown above the basic idea of applying Monte Carlo methods in maximum likelihood learning of EBMs. 
As long as we can draw random samples from the model $p_{\theta}(x)$, we have access to an unbiased estimate of the log-likelihood gradient, allowing us to optimize the parameters with stochastic gradient ascent. \textbf{So a critical step in learning EBMs by Monte Carlo methods} is the simulation (sampling) from the EBM distribution $p_{\theta}(x)$, as defined in \eqref{eq:unsup-RF} in general.

In some applications, directly generating independent samples from an distribution is not feasible. There are two broad classes of strategies for sampling from high-dimensional distributions.
The first is the MCMC strategy, which produces statistically dependent samples based on the theory of \emph{Markov chains}. The second is the \emph{importance sampling} (IS) strategy, in which independent samples are generated from a trial distribution (a.k.a. a proposal distribution) and then weighted according to the importance weight. The two methods will be introduced as follows in \secref{sec:MCMC} and \secref{sec:IS} respectively. Further introduction can be found in monographs \citep{neal1993probabilistic, liu2001monte} or general textbooks on machine learning \citep{koller2009probabilistic,murphy2012machine,bishop2006pattern}.

\subsection{Markov Chain Monte Carlo (MCMC)}
\label{sec:MCMC}

Let $p_\theta(x)$ be the target distribution under investigation. The basic idea of \emph{Markov Chain Monte Carlo} (MCMC)\index{Markov Chain Monte Carlo (MCMC)} is to construct a Markov chain in the state space of $x$, denoted by $\mathcal{X}$, so that the limiting (or say, stationary or equilibrium) distribution of this chain is the target distribution $p_\theta$.
Roughly speaking, this means that the fraction of time spent in each state along the chain in a long run is equal to $p_\theta$.

\subsubsection{Metropolis–Hastings algorithm}
\index{Metropolis-Hastings (MH) algorithm}
A classic MCMC method is the \emph{Metropolis-Hastings} (MH) algorithm, which is described in \algref{alg:MH}.
Starting with any configuration $x^{(0)}$, the MH algorithm proceeds by iterating ``propose'' and ``accept/reject'', as shown in Line \ref{alg:line:proposal} and Line \ref{alg:line:acc_rej} respectively. A single run of ``propose'' and ``accept/reject'' is often called a \emph{MH transition}, which defines a particular Markov chain.
\begin{itemize}
    \item First, we propose to move from the previous state $x^{(t-1)}$ to a new state $x'$ with probability $q(x'|x^{(t-1)})$, where $q$ is called the \emph{proposal distribution}\index{Proposal distribution}. 
    \item Having proposed a move to $x'$, we then decide whether to accept this proposal or not according to some formula, which ensures that the limiting distribution of this chain is the target distribution $p_\theta$. If the proposal is accepted, the new state is $x'$, otherwise the new state is the same as the previous state, $x^{(t-1)}$ (i.e., we repeat the sample).
\end{itemize}
At the end of the iterations, we obtain a realization (or say, a single run) of the Markov chain, $x^{(1)}, \cdots, x^{(T)}$.
It can be shown that this particular Markov chain leave $p_\theta$ invariant.
Note that theoretically, only the limiting distribution of the chain follows $p_\theta$. So it is necessary to discard a few initial samples until the Markov chain has \emph{burned in}\index{Burned in}, or entered its stationary distribution. 
The remained samples can then be used for Monte Carlo averaging such as in
\eqref{eq:ebm_mc_averaging} to estimate expectations. 
% The burn-in phase is usually empirically determined.

In practice, the accept/reject step is taken by drawing $U \sim \text{Uni}[0,1]$, calculate the \emph{acceptance probability}
\begin{equation}
\label{eq:MH_r}
    r = \min \left\{ 1, \frac{ p_\theta (x') q(x^{(t-1)}|x') }{ p_\theta(x^{(t-1)}) q(x'|x^{(t-1)}) } \right\}
\end{equation}
and update
\begin{displaymath}
x^{(t)} =
\left\{ \begin{array}{ll}
x', & \text{if}~U \leq r \\
x^{(t-1)}, & \text{otherwise}.
\end{array} \right.
\end{displaymath}

\begin{algorithm}[t]
	\algsetup{linenosize=\small}
	%\scriptsize
	\caption{Metropolis-Hastings Algorithm}
 	\label{alg:MH}
	\begin{algorithmic}[1]
	    \REQUIRE A target distribution $p_\theta(x)$, a proposal distribution $q(x'|x)$
		\STATE Randomly initialize $x^{(0)}$;
		\FOR{$t=1$ to $T$} 
		\STATE Generate $x'$ from the proposal $q(x'|x^{(t-1)})$; \label{alg:line:proposal}
		\STATE Accept $x^{(t)}=x'$ with probability 
  $\min \left\{ 1, \frac{ p_\theta (x') q(x^{(t-1)}|x') }{ p_\theta (x^{(t-1)}) q(x'|x^{(t-1)}) } \right\}$, otherwise set $x^{(t)} = x^{(t-1)}$; \label{alg:line:acc_rej}
		\ENDFOR
		\STATE {\bf Return:} $\{ x^{(1)}, \cdots, x^{(T)} \}$
	\end{algorithmic}
\end{algorithm}

Critically, the MH algorithm only needs to know the target distribution up to a normalization constant. In particular, consider the EBM distribution $p_{\theta}(x)=\frac{1}{Z_\theta} \exp\left[  U_{\theta}(x) \right] $, then \emph{MH ratio}\index{MH ratio}
\begin{displaymath}
\frac{ p_\theta (x') q(x^{(t-1)}|x') }{ p_\theta(x^{(t-1)}) q(x'|x^{(t-1)}) } = \frac{ \frac{1}{Z_\theta} \exp\left[  U_{\theta}(x') \right] q(x^{(t-1)}|x') }{ \frac{1}{Z_\theta} \exp\left[  U_{\theta}(x^{(t-1)}) \right] q(x'|x^{(t-1)}) }
\end{displaymath}
in which the $Z_\theta$'s cancel. Hence, we can sample from $p_{\theta}(x)$ even if the normalizing constant $Z_\theta$ is unknown.

Remarkably, the user is free to use any kind of proposal they want, subject to some theoretical conditions. This makes MH quite a flexible method. We introduce two special algorithms that are instances of the general MH algorithm. 

\paragraph{The Metropolis algorithm.}\index{Metropolis algorithm} 
If the proposal transition function is symmetric, so $q(x'|x) = q(x|x')$, the acceptance probability is given by the following formula:
\begin{displaymath}
r = \min \left\{ 1, \frac{p_\theta (x')}{p_\theta(x^{(t-1)})} \right\}
\end{displaymath}
We see that if $x'$ is more probable than $x$, we definitely move there (since $\frac{p_\theta (x')}{p_\theta(x^{(t-1)})} > 1$), but if
$x'$ is less probable, we may still move there anyway, depending on the relative probabilities. So instead of greedily moving to only more probable states, we occasionally allow ``downhill'' moves to less probable states.

\paragraph{The Metropolis independence sampler (MIS).}\index{Metropolis independence sampler (MIS)} 
Another special choice of the transition function is in the form of $q(x') = q(x'|x)$; that is, the proposed move $x'$ is generated independent of the previous state $x^{(t-1)}$. In MIS, the acceptance probability becomes:
\begin{displaymath}
r = \min \left\{ 1, \frac{w (x')}{w(x^{(t-1)})} \right\}
\end{displaymath}
where $w(x)=\frac{p_\theta(x)}{q(x)}$ is the usual \emph{importance weight}\index{Importance weight}.

\begin{algorithm}[t]
	\algsetup{linenosize=\small}
%	\scriptsize
	\caption{Gibbs sampler}
	\label{alg:Gibbs}
	\begin{algorithmic}[1]
		\REQUIRE A target distribution $p(x)$ for $x=(x_1,\cdots,x_n)$
		\STATE Randomly initialize $x^{(0)}=(x_1^{(0)},\cdots,x_n^{(0)})$;
		\FOR{$t=1$ to $T$} 
		\STATE Pick $x_1^{(t)}$ from the distribution for $x_1$ given $x_2^{(t-1)},x_3^{(t-1)},\cdots,x_n^{(t-1)}$; \label{alg:gibbs:1}
		\STATE Pick $x_2^{(t)}$ from the distribution for $x_2$ given $x_1^{(t)},x_3^{(t-1)},\cdots,x_n^{(t-1)}$;
		\STATE $\vdots$
		\STATE Pick $x_i^{(t)}$ from the distribution for $x_i$ given $x_1^{(t)},\cdots,x_{i-1}^{(t)},x_{i+1}^{(t-1)},\cdots,x_n^{(t-1)}$;
		\STATE $\vdots$
		\STATE Pick $x_n^{(t)}$ from the distribution for $x_n$ given $x_1^{(t)},x_2^{(t)},\cdots,x_{n-1}^{(t)}$; \label{alg:gibbs:n}
		\ENDFOR
		\STATE {\bf Return:} $\{ x^{(1)}, \cdots, x^{(T)} \}$
	\end{algorithmic}
\end{algorithm}

\subsubsection{Gibbs sampling}\index{Gibbs sampling}

The \emph{Gibbs sampler} is conceptually the simplest of the Markov chain sampling method, and as we introduce below, could be viewed as the MCMC analog of coordinate descent.

Suppose we wish to sample from the joint distribution for $x=(x_1,\cdots,x_n)$ given by $p(x_1,\cdots,x_n)$, where the range of the $x_i$ may be either continuous or discrete.
The Gibbs sampler does this by repeatedly replacing each component, say $x_i$, with a value picked from the \emph{full conditional}\index{Full conditional} for variable $x_i$, i.e., the distribution of $x_i$ conditional on the current values of all other components $(x_1,\cdots,x_{i-1},x_{i+1},\cdots,x_n) \triangleq x_{\setminus i} $.
This process can be seen as generating a realization of a Markov chain that is built from a set of base transition probabilities $B_i$, for $i=1,\cdots,n$. $B_i$ leaves all the components except $x_i$ unchanged, and draws a new $x_i$ from its full conditional, which is assumed to be a feasible operation.

The Gibbs sampling algorithm can be described as simulating a homogeneous Markov chain, $x^{(0)}, x^{(1)}, x^{(2)}, \cdots$, with transition matrix $P=B_1 \times B_2 \times \cdots \times B_n$, as shown in \algref{alg:Gibbs}.
Generating $x^{(t)}$ from $x^{(t-1)}$, i.e., from Line \ref{alg:gibbs:1} to Line \ref{alg:gibbs:n}, is called a \emph{sweep}.
Note that the new value for $x_{i-1}$ is used immediately when picking the new value for $x_i$.

Starting from the Gibbs sampler, we provide three useful points.
\begin{itemize}
\item \textbf{Constructing a Markov chain from base transitions.} 
In sampling application, our goal is to find an ergodic Markov chain that converges to the target invariant distribution $p(x)$, at as fast a rate as possible. The Gibbs sampling embodies a useful, general method to construct such a Markov chain, as described below. Consider to construct the transition probabilities for such a chain from a set of base transition probabilities, given by $B_1,\cdots,B_s$\footnote{Generally, the number of base transitions $s$ is not necessarily equal to $n$, the dimensionaliy of $x$.}, each of which leaves the target distribution invariant. 
It can be shown that when the base transitions are applied in sequence, if a distribution is invariant with respect to all the base transitions, then it is also invariant with respect to $P=B_1 \times B_2 \times \cdots \times B_s$ \citep{neal1993probabilistic}.
We show in the next point that each base transition in Gibbs sampler leaves the target distribution invariant, so that we can understand why the Gibbs sampler works.
\item \textbf{Gibbs sampler is a special case of MH}, and thus leaves the target distribution invariant.
Each base transition in Gibbs sampler is equivalent to using MH with a proposal of the form
\begin{displaymath}
q(x'|x) = p(x'_i | x_{\setminus i}) 1(x'_{\setminus i} = x_{\setminus i})
\end{displaymath}
That is, we move to a new state where $x_i$ is sampled from its full conditional, but $x_{\setminus i}$ is left unchanged. It turns out that the acceptance rate of such proposal is 1, because the MH ratio 
\begin{align*}
\frac{p(x')q(x|x')}{p(x)q(x'|x)} &= \frac{p(x'_i|x'_{\setminus i})p(x'_{\setminus i})p(x_i|x'_{\setminus i})}{p(x_i|x_{\setminus i})p(x_{\setminus i})p(x'_i|x_{\setminus i})}\\
&=\frac{p(x'_i|x_{\setminus i})p(x_{\setminus i})p(x_i|x_{\setminus i})}{p(x_i|x_{\setminus i})p(x_{\setminus i})p(x'_i|x_{\setminus i})}=1
\end{align*}
where we exploited that fact that $x'_{\setminus i} = x_{\setminus i}$, and that $q(x'|x)=p(x'_i|x_{\setminus i})$. So every time the Gibbs sampler draws a new value from the full conditional of a component and always accepts it.
\item \textbf{MH within Gibbs sampling.}\index{MH within Gibbs sampling} Gibbs sampling assumes that drawing from the full conditional of each component is tractable. When sampling from the full conditionals of a certain component is intractable, we can replace the exact sampling of this component by a MH sampling step, i.e., a single run of ``propose'' and ``accept/reject''. The resulting algorithm is thus called \emph{MH within Gibbs sampling}\index{MH within Gibbs sampling}.
\end{itemize}

\subsubsection{Gradient guided MCMC}
\label{sec:LD_HMC}

For continuous distribution, MCMC samplers leveraging continuous dynamics (namely continuous-time Markov processes described by stochastic differential equations), such as Langevin dynamics (LD) and Hamiltonian Monte Carlo (HMC)\index{Hamiltonian Monte Carlo (HMC)} \citep{neal2011mcmc}, are known to be efficient in exploring the continuous state space.
Simulating the continuous dynamics leads to the target distribution as the stationary distribution.
In practice, a discretization of the continuous-time system is needed necessitating a Metropolis-Hastings (MH) correction, though still with high acceptance probability.
Recently, stochastic gradient variants of continuous-dynamic samplers have emerged, showing that adding the ``right amount'' of noise to stochastic gradient
ascent iterates leads to samples from the target posterior as the step size is annealed \citep{sgld,sghmc}.
In either manners, the Markov transition kernel defined by the continuous dynamical system usually involves using the gradients of the target distribution w.r.t the data vector $x$.

Remarkably, the gradient of log-density of an EBM model w.r.t. the data vector $x$ is easy to calculate:
\begin{displaymath}
\nabla_x
\log p_\theta(x) =
\nabla_x U_\theta(x)
- \underbrace{\nabla_x \log Z(\theta)}_{=0}
= \nabla_x U_\theta(x)
\end{displaymath}
which does not require the calculation of the normalizing constant.

In the following, we mainly introduce LD and SGLD. For HMC and stochastic gradient Hamiltonian Monte Carlo (SGHMC), readers can refer to \citep{neal2011mcmc,sghmc,ma2015complete}.

\index{Langevin dynamics (LD)}
\paragraph{Langevin dynamics (LD) sampler.}
Given current sample $x_0$, a new observation is proposed as
\begin{align}\label{eq:sgld}
x_{\frac{1}{2}} =   x_0 + \frac{\sigma^2}{2} \nabla_x U_{\theta} ( x_0) + \sigma \varepsilon,
\end{align}
where $\varepsilon \sim \mathcal{N}(0, I)$ is a Gaussian noise, and $\sigma$ is a step size.
The next sample $x_1$ may directly be the proposal $ x_{\frac{1}{2}}$,
in which case the Markov transition from $ x_0$ to $ x_1$ does not strictly leave $p_{\theta}$ invariant,
but the sampling bias may usually be small for $\sigma \approx 0$.
To allow large $\sigma$, a correction can be achieved
by accepting or rejecting the proposal $ x_{\frac{1}{2}}$, i.e., setting $ x_1=  x_{\frac{1}{2}}$ or $ x_0$,
with the Metropolis-Hastings probability.
Langevin sampling with rejection is known as the Metropolis-Adjusted Langevin Algorithm (MALA) \citep{besag1994mala, roberts1996exponential}.

%\subsubsection{Hamiltonian Monte Carlo (HMC)}
%\index{Hamiltonian Monte Carlo (HMC)}

\index{Stochastic gradient Langevin dynamics (SGLD)}
\paragraph{Stochastic gradient Langevin dynamics (SGLD).}
Recently, stochastic gradient samplers have emerged in simulating posterior samples in large-scale Bayesian inference, such as SGLD (stochastic gradient Langevin dynamics) \citep{sgld} and SGHMC (Stochastic Gradient Hamiltonian Monte Carlo) \citep{sghmc}.
To illustrate, consider the posterior $p(\theta|\mathcal{D})$ of model parameters $\theta$ given the observed dataset $\mathcal{D}$, with abuse of notation. 
We have $p(\theta|\mathcal{D}) \propto \exp \left[ \sum_{x \in \mathcal{D}} \log p_\theta(x) + \log p(\theta)  \right] $, which is taken as the target distribution.
Instead of using full-data gradients  $\frac{\partial}{\partial \theta} \log p(\theta|\mathcal{D})$, which needs a sweep over the entire dataset, these stochastic gradient samplers subsample the dataset and use stochastic gradients
$
\frac{\partial}{\partial \theta} \left[ \frac{|\tilde{\mathcal{D}|}}{|\mathcal{D}|} \sum_{x \in \tilde{\mathcal{D}}} \log p_\theta(x) + \log p(\theta)  \right]
$ in the dynamic simulation, where $\tilde{\mathcal{D}} \subset \mathcal{D}$ is a subsampled data subset.
In this manner, the computation cost is significantly reduced in each iteration and such Bayesian inference methods scale to large datasets.

In practice, sampling is based on a discretization of the continuous dynamics. Despite the discretization error and the noise introduced by the stochastic gradients, it can be shown that simulating the discretized dynamics with stochastic gradients also leads to the target distribution as the stationary distribution, when the step sizes are annealed to zero at a certain rate\footnote{A Metropolis-Hastings (MH) correction can be applied, when it is hard to check the annealing condition is satisfied or not.}.
The convergence of SGLD and SGHMC can be obtained from \citep{Sato2014ApproximationAO,sghmc,ma2015complete}. We summarize in Theorem \ref{theorem:SGLD} for SGLD.

\begin{theorem} \label{theorem:SGLD}
	Denote the target density as $p(z;\lambda)$ with given $\lambda$.
	Assume that one can compute a noisy, unbiased estimate $\Delta(z;\lambda)$ (a stochastic gradient) to the gradient $\frac{\partial}{\partial z} \log p(z;\lambda)$.
	For a sequence of asymptotically vanishing time-steps $\left\lbrace \delta_l, l \ge 1 \right\rbrace$ (satisfying $\sum_{l=1}^\infty \delta_l = \infty$ and $\sum_{l=1}^\infty \delta_l^2 < \infty$), the SGLD algorithm iterates as follows, starting from $z^{(0)}$:
	\begin{equation}\label{eq:SGLD}
	\begin{aligned}
	z^{(l)} =& z^{(l-1)}
	+ \delta_l \Delta (z^{(l-1)};\lambda)
	+ \sqrt{2\delta_l} \eta^{(l)},\\
	&\quad\eta^{(l)} \sim \mathcal{N}(0,I), l=1,\cdots\,
	\end{aligned}
	\end{equation}
	%	Starting from $z^{(0)}$ and $v^{(0)}=0$, the SGHMC iterates as follows:
	%	\begin{equation}
	%	\label{eq:SGHMC}
	%	\left\{
	%	\begin{split}
	%	v^{(l)} =& (1-\beta) v^{(l-1)}
	%	+ \delta_l \Delta (z^{(l-1)};\lambda)
	%	+ \sqrt{2\beta\delta_l} \eta^{(l)},\\
	%	&\quad\eta^{(l)} \sim \mathcal{N}(0,I)\\
	%	z^{(l)} =& z^{(l-1)} + v^{(l)}, l=1,\cdots\,
	%	\end{split}
	%	\right.
	%	\end{equation}
	The iterations of Eq. (\ref{eq:SGLD}) lead to the target distribution $p(z;\lambda)$ as the stationary distribution.
\end{theorem}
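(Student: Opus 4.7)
The plan is to connect the discrete SGLD recursion to the continuous-time Langevin diffusion and then argue that the discretization and stochastic-gradient noise vanish fast enough under the Robbins–Monro step-size conditions. First I would introduce the Langevin stochastic differential equation
\begin{equation*}
    dz_t = \nabla_z \log p(z_t;\lambda)\, dt + \sqrt{2}\, dW_t,
\end{equation*}
where $W_t$ is standard Brownian motion, and verify via the associated Fokker–Planck equation that $p(z;\lambda)$ is an invariant density: writing the drift as $b(z)=\nabla_z \log p(z;\lambda)$ and noting that $p\,b = \nabla_z p$, the stationary Fokker–Planck equation $-\nabla\cdot(bp) + \Delta p = 0$ holds identically, so $p(z;\lambda)$ is invariant for the diffusion. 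Under standard regularity (confining potential, Lipschitz drift), the diffusion is ergodic with this unique stationary distribution.

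Next I would interpret the SGLD update in Eq.~(\ref{eq:SGLD}) as an Euler–Maruyama discretization of this SDE with a noisy drift. Decomposing
\begin{equation*}
    \Delta(z^{(l-1)};\lambda) = \nabla_z \log p(z^{(l-1)};\lambda) + \xi^{(l)},
\end{equation*}
where $\xi^{(l)}$ is a zero-mean stochastic-gradient noise with bounded variance, one sees that the recursion is exactly Euler–Maruyama applied to the Langevin SDE, with an additional perturbation $\delta_l\, \xi^{(l)}$ to the drift. The key observation is a noise scale comparison: the injected Gaussian increment has standard deviation of order $\sqrt{\delta_l}$, whereas the stochastic-gradient perturbation enters with scale $\delta_l$. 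Hence for small $\delta_l$ the injected Brownian noise dominates, and the contribution of $\xi^{(l)}$ becomes asymptotically negligible.

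I would then combine this with the step-size conditions $\sum_l \delta_l = \infty$ and $\sum_l \delta_l^2 < \infty$. The first guarantees that the effective simulated time $\sum_l \delta_l$ diverges, so the chain has enough ``physical time'' to mix to equilibrium; the second guarantees that the accumulated stochastic-gradient bias, which scales like $\sum_l \delta_l^2 \mathbb{E}\|\xi^{(l)}\|^2$, stays finite and the Metropolis–Hastings rejection probability tends to zero, so omitting the MH correction introduces vanishing bias (this is the rationale behind the Welling--Teh analysis and the more rigorous treatments by Sato and Nakagawa and by Ma, Chen and Fox cited in the text). The conclusion $z^{(l)} \Rightarrow p(\cdot;\lambda)$ then follows by standard weak-convergence arguments for time-inhomogeneous Euler schemes of ergodic SDEs.

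The main obstacle, and the part I would rely on the cited references to supply rigorously, is making the weak-convergence step precise: one needs to control the Euler discretization error on an unbounded time horizon together with the stochastic-gradient noise, which typically requires a Lyapunov/drift condition on $\log p(\cdot;\lambda)$ and uniform moment bounds on the iterates $z^{(l)}$. Given those technical ingredients, the two noise scales and the Robbins–Monro conditions combine cleanly to yield convergence to $p(z;\lambda)$ as the stationary distribution.
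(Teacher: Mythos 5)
Your sketch is essentially the same route the paper takes: the monograph does not prove this theorem itself but states it as a summary of the convergence results in the cited works (Sato and Nakagawa; Chen, Fox and Guestrin; Ma, Chen and Fox), and your argument — invariance of $p(z;\lambda)$ for the Langevin SDE via the Fokker–Planck equation, viewing Eq.~(\ref{eq:SGLD}) as an Euler–Maruyama step with noisy drift, the $\sqrt{\delta_l}$-versus-$\delta_l$ noise-scale comparison, and the Robbins–Monro step-size conditions — is exactly the standard reasoning underlying those references. You correctly identify that the rigorous weak-convergence step (Lyapunov/moment conditions, control of discretization error over an unbounded horizon, bounded gradient-noise variance) must be supplied by those same citations, which is precisely what the paper does.
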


\subsection{Importance sampling}
\label{sec:IS}
\index{Importance sampling (IS)}

One of the principal reasons for wishing to sample from complicated distributions is to be able to estimate expectations of the form \eqref{eq:ebm_mc_averaging}. The technique of \emph{importance sampling} (IS) provides a framework for approximating expectations directly.

Suppose, generally, one is interested in estimating
\begin{equation}
\label{eq:expect}
E_{p_{\theta}(x)} \left[ g(x) \right] = \int p_{\theta}(x) g(x) dx
\end{equation}
Importance sampling is based on the use of a \emph{proposal distribution}\index{Proposal distribution} $q(x)$ from which it is easy to draw samples, say, $x^{(1)},\cdots,x^{(M)} \sim q(x)$. We can then express the expectation by Monte Carlo averaging, i.e., in the form of a finite sum over samples $\{x^{(j)}\}$ drawn from $q(x)$:
\begin{align}
E_{p_{\theta}(x)} \left[ g(x) \right] &= \int q(x)
\frac{p_{\theta}(x)}{q(x)} g(x) dx \nonumber \\
&\approx \frac{1}{M} \sum_{j=1}^M \frac{p_{\theta}(x^{(j)})}{q(x^{(j)})} g(x^{(j)}) \label{eq:IS}
\end{align}
which is an \emph{unbiased} estimate of the expectation in \eqref{eq:expect}.
The quantities $w^{(j)} = \frac{p_{\theta}(x^{(j)})}{q(x^{(j)})}$ are known as \emph{importance weights}\index{Importance weight}, and they correct the bias that $\{x^{(j)}\}$ are drawn from the proposal distribution rather than from the target distribution. 

It will often be the case that the distribution $p_\theta(x)$ can only be evaluated up to a normalization constant (e.g., in EBMs), so that $p_\theta(\bx) = \tilde{p}_\theta(\bx)/Z_p$ where $\tilde{p}_\theta(x)$ can be evaluated easily, whereas $Z_p$ denotes the unknown normalizing constant. Generally, we may wish to use a proposal $q(x) = \tilde{q}(x)/Z_q$, which is also in the un-normalized form. We then have
\begin{align}
E_{p_{\theta}(x)} \left[ g(x) \right] &= \frac{Z_q}{Z_p} \int q(x)
\frac{\tilde{p}_{\theta}(x)}{\tilde{q}(x)} g(x) dx \nonumber \\
&\approx \frac{Z_q}{Z_p} \frac{1}{M} \sum_{j=1}^M \frac{\tilde{p}_{\theta}(x^{(j)})}{\tilde{q}(x^{(j)})} g(x^{(j)}) \label{eq:IS_unbiased}
\end{align}
We can use the same samples to evaluate the ratio $\frac{Z_q}{Z_p}$ with the result:
\begin{align}
   \frac{Z_p}{Z_q} &= \frac{1}{Z_q} \int \tilde{p}_{\theta}(x) dx 
   = \int q(x) \frac{\tilde{p}_{\theta}(x)}{\tilde{q}(x)}  dx \nonumber \\
   &\approx \frac{1}{M} \sum_{j=1}^M \frac{\tilde{p}_{\theta}(x^{(j)})}{\tilde{q}(x^{(j)})}  \label{eq:IS_Z_ratio}
\end{align}
which is an \emph{unbiased} estimate of $\frac{Z_p}{Z_q}$.
When $\tilde{q}(x)$ is self-normalized (i.e., $Z_q=1$), \eqref{eq:IS_Z_ratio} shows a way of using importance sampling to estimate the normalizing constant $Z_p$.

Combining \eqref{eq:IS_unbiased} and \eqref{eq:IS_Z_ratio} and letting $\tilde{w}^{(j)} = \frac{\tilde{p}_{\theta}(x^{(j)})}{\tilde{q}(x^{(j)})}$, we can approximate the expectation by
\begin{equation} \label{eq:IS_biased}
E_{p_{\theta}(x)} \left[ g(x) \right] \approx 
\frac{ \tilde{w}^{(1)} g(x^{(1)})+\cdots+\tilde{w}^{(M)} g(x^{(M)}) }
{ \tilde{w}^{(1)} +\cdots+\tilde{w}^{(M)} }
\end{equation}
which turns to be a \emph{biased} estimate of the expectation in \eqref{eq:expect}.
Further, by defining \emph{normalized importance weights} $\omega^{(j)} = \frac{\tilde{w}^{(j)}}{\sum_{j=1}^M \tilde{w}^{(j)}}$, \eqref{eq:IS_biased} can be re-written in a simpler form:
\begin{equation}\label{eq:IS_biased_2}
E_{p_{\theta}(x)} \left[ g(x) \right] \approx 
\sum_{j=1}^M \omega^{(j)} g(x^{(j)}) 
\end{equation}
which is often referred to as \emph{self-normalized importance sampling} (SNIS)\index{Self-normalized importance sampling (SNIS)}, for example, in \citep{parshakova2019global}.
A major advantage of using the biased estimate \eqref{eq:IS_biased_2} instead of the unbiased estimate \eqref{eq:IS_unbiased} is that in using the former (although biased), we need \emph{only} to know the ratio $\frac{p_\theta(x)}{q(x)}$ up to a multiplicative constant; whereas in the latter, the ratio needs to be known exactly.

Remarkably, the success of the importance sampling approach depends crucially on how well the proposal distribution $q(x)$ matches the target distribution $p_\theta(x)$.

\subsection{Stochastic approximation methods}
\label{sec:SA}
\index{Stochastic approximation (SA)}

In the above, we introduce the basics of some classic Monte Carlo methods and the general idea of applying them in maximum likelihood learning of EBMs. We show in \eqref{eq:ebm_grad} that the log-likelihood gradient for learning EBMs is equal to the difference between empirical expectation and model expectation, and in \eqref{eq:ebm_mc_averaging}, that the model expectation is approximated by Monte Carlo sampling from EBM distribution $p_{\theta}(x)$.
Combining \eqref{eq:ebm_grad}, \eqref{eq:ebm_mc_averaging} and \eqref{eq:ebm_minibatching}, we could obtain a naive algorithm of learning EBMs by Monte Carlo methods, as shown in \algref{alg:naive_learning_ebm}.

\begin{algorithm}[t]
	\algsetup{linenosize=\small}
	%\scriptsize
	\caption{A naive algorithm of learning EBMs by Monte Carlo methods}
 	\label{alg:naive_learning_ebm}
	\begin{algorithmic}
	    \REQUIRE A target EBM distribution $p_\theta(x)$
		\FOR{each minibatch of size $B$} 
		      \STATE Obtain empirical expectations by \eqref{eq:ebm_minibatching};
                \FOR{$j=1$ to $M$} 
                    \STATE Draw $x^{(j)}$ with $p_\theta(x)$ as the target distribution;
                \ENDFOR
                \STATE Obtain model expectations by \eqref{eq:ebm_mc_averaging};
                \STATE Update parameter $\theta$ by gradient \eqref{eq:ebm_grad};
            \ENDFOR
	\end{algorithmic}
\end{algorithm}

Typically we use MCMC to generate the samples $x^{(1)},\cdots,x^{(M)}$, for each minibatch. For EBM distribution $p_\theta(x)$, which can only be evaluated up to a normalization constant, using the unbiased IS estimate \eqref{eq:IS_unbiased} is intractable. 
Using the biased IS estimate \eqref{eq:IS_biased_2} will produce biased gradient estimates, which were used in some prior studies \citep{parshakova2019global}.

In learning EBMs by Monte Carlo methods, at first thought (as shown in \algref{alg:naive_learning_ebm}), there are two loops.
The outer loop iterates over minibatches of training data.
The inner loop iterates to generate samples via MCMC (e.g., MH), but running MCMC sufficiently long (with large $M$) to approaching convergence at the inner loop would be extremely slow.
Fortunately, it was shown by \citep{younes1989parametric} that we can start the MCMC chain at its previous value from the outer loop, and just take a few Markov moves in the inner loop (i.e., using small $M$).
So in this way, the Markov chain evolves persistently across outer loops.
This algorithm, called \emph{stochastic maximum likelihood} (SML) \citep{younes1989parametric}, along with its variants appeared in the literature, turn out to be application of the more general stochastic approximation (SA) methodolgy to learning EBMs.
See further introduction in \secref{sec:SA_ebm}.

\paragraph{Note.}
The above cognition of the EBM learning by Monte Carlo is very important. Many people may think that learning with Monte Carlo methods is very slow. But since we do not need to run MCMC to convergence at the inner loop, but just a few steps. Learning with MCMC is not so expensive as people might think.

\begin{algorithm}[tb]
	\caption{The general stochastic approximation (SA) algorithm}\label{alg:SA}
	\begin{algorithmic}	
		\FOR {$t=1,2,\cdots$}
		\STATE \underline{Monte Carlo sampling:} Draw a sample $z^{(t)}$ with a Markov transition kernel $K_{\lambda^{(t-1)}}(z^{(t-1)},\cdot)$, which starts with $z^{(t-1)}$ and admits $p_{\lambda^{(t-1)}}(\cdot)$ as the invariant distribution.
		\STATE \underline{SA updating:} Set $\lambda^{(t)} = \lambda^{(t-1)} + \gamma_t F_{\lambda^{(t-1)}}(z^{(t)})$, where $\gamma_t$ is the learning rate.
		\ENDFOR
	\end{algorithmic}
\end{algorithm}

\subsubsection{Introduction to stochastic approximation (SA) methodology}
\label{sec:SA_intro}

Stochastic approximation methods are an important family of iterative stochastic optimization algorithms, introduced in \citep{SA51} and extensively studied \citep{benveniste2012adaptive,chen2002stochastic}.
Basically, stochastic approximation provides a mathematical framework for stochastically solving a root finding problem, which has the form of expectations being equal to zeros.
Suppose that the objective is to find the solution $\lambda^*$ of $f(\lambda) = 0$ with
\begin{equation}
\label{eq:SA}
f(\lambda) = E_{z \sim p_\lambda(\cdot) } [ F_\lambda(z) ],
\end{equation}
where $\lambda$ is a $d$-dimensional parameter vector in $\Lambda \subset R^d$, and $z$ is an observation from a probability distribution $p_\lambda(\cdot)$ depending on $\lambda$.
$F_\lambda(z) \in R^d $ is a function of $z$, providing $d$-dimensional stochastic measurements of the so-called mean-field function $f(\lambda)$.
Intuitively, we solve a system of simultaneous equations, $f(\lambda) = 0$, which consists of $d$ constraints, for determining $d$-dimensional $\lambda$.

Given some initialization $\lambda^{(0)}$ and $z^{(0)}$, a general SA algorithm iterates \emph{Monte Carlo sampling} and \emph{parameter updating}, as shown in Algorithm \ref{alg:SA}.
The convergence of SA has been established under conditions \citep{benveniste2012adaptive, andrieu2005stability, song2014weak}, including a few technical requirements for the mean-field function $f(\lambda)$, the Markov transition kernel $K_{\lambda^{(t-1)}}(z^{(t-1)},\cdot)$ and the learning rates.
Particularly, when $f(\lambda)$ corresponds to the gradient of some objective function, then $\lambda^{(t)}$ will converge to local optimum, driven by stochastic gradients  $F_\lambda(z)$.
For completeness, we provide a short summary on the convergence of $\left\lbrace \lambda_t, t \ge 1\right\rbrace $ in Algorithm \ref{alg:SA}, based on Theorem 1 in \citep{song2014weak}.

\begin{theorem}
	\label{th:SA}
	Let $\left\lbrace \gamma_t\right\rbrace $ be a monotone non-increasing sequence of positive numbers such that\footnote{In practice, we can set a large learning rate at the early stage of learning and decrease to $1/t$ for convergence.} $\sum_{t=1}^\infty \gamma_t = \infty$ and $\sum_{t=1}^\infty \gamma_t^2 < \infty$. Assume that $\Lambda$ is compact and the Lyapunov condition on $f(\lambda)$ and the drift condition on the transition kernel $K_{\lambda}(\cdot|\cdot)$ hold. Then we have: $d(\lambda_t, \mathcal{L}) \to 0$ almost surely as $t \to \infty$, where $\mathcal{L}=\left\lbrace \lambda: f(\lambda)=0 \right\rbrace $ and $d(\lambda, \mathcal{L}) = \inf_{\lambda' \in \mathcal{L}} || \lambda - \lambda' ||$.
\end{theorem}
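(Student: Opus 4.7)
The plan is to decompose the SA update into a mean-field drift plus a noise term, control the noise with a Poisson-equation trick, and then close the argument with a Lyapunov analysis. Writing $\eta^{(t)} = F_{\lambda^{(t-1)}}(z^{(t)}) - f(\lambda^{(t-1)})$, I would reorganize the iteration as
\[
\lambda^{(t)} = \lambda^{(t-1)} + \gamma_t f(\lambda^{(t-1)}) + \gamma_t \eta^{(t)}.
\]
The Lyapunov condition supplies a $C^1$ nonnegative function $V$ on $\Lambda$ with $\langle \nabla V(\lambda), f(\lambda) \rangle \le 0$ and equality only on $\mathcal{L}$. Using compactness of $\Lambda$ (hence boundedness of $\nabla V$ and its modulus of continuity), a Taylor expansion gives
\[
V(\lambda^{(t)}) - V(\lambda^{(t-1)}) \le \gamma_t \langle \nabla V(\lambda^{(t-1)}), f(\lambda^{(t-1)}) \rangle + \gamma_t \langle \nabla V(\lambda^{(t-1)}), \eta^{(t)} \rangle + C \gamma_t^2,
\]
so the whole task reduces to showing the accumulated noise contribution is finite almost surely.

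The core difficulty is that $\eta^{(t)}$ is \emph{not} a martingale difference: $z^{(t)}$ is drawn from $K_{\lambda^{(t-1)}}(z^{(t-1)}, \cdot)$, which only has $p_{\lambda^{(t-1)}}$ as its invariant distribution — the chain has not equilibrated at any finite step, and the parameter $\lambda$ is itself drifting. To handle this, I would invoke the drift condition on $K_\lambda$ to produce a solution $\hat{F}_\lambda(z)$ of the Poisson equation $\hat{F}_\lambda(z) - (K_\lambda \hat{F}_\lambda)(z) = F_\lambda(z) - f(\lambda)$ with moment bounds uniform in $\lambda$ and Lipschitz dependence on $\lambda$. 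Then I would split
\[
\eta^{(t)} = \bigl[\hat{F}_{\lambda^{(t-1)}}(z^{(t)}) - (K_{\lambda^{(t-1)}}\hat{F}_{\lambda^{(t-1)}})(z^{(t-1)})\bigr] + \bigl[(K_{\lambda^{(t-1)}}\hat{F}_{\lambda^{(t-1)}})(z^{(t-1)}) - (K_{\lambda^{(t)}}\hat{F}_{\lambda^{(t)}})(z^{(t)})\bigr] + r_t,
\]
where the first bracket is a genuine martingale difference, the second telescopes, and $r_t$ collects residuals controlled by $|\lambda^{(t)} - \lambda^{(t-1)}| = O(\gamma_t)$ through the Lipschitz dependence of $\hat{F}_\lambda$ on $\lambda$.

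Once the noise is cast in this form, convergence follows from standard machinery. The condition $\sum \gamma_t^2 < \infty$ and the uniform moment bound on $\hat{F}$ make the martingale-difference component an $L^2$-bounded martingale, hence almost surely convergent; the telescoping piece sums to a bounded remainder; and the residual is summable because it is $O(\gamma_t^2)$ after multiplication by $\gamma_t$. Plugging back into the Lyapunov inequality and summing yields
\[
\sum_{t=1}^\infty \gamma_t \langle \nabla V(\lambda^{(t-1)}), -f(\lambda^{(t-1)}) \rangle < \infty \quad \text{a.s.}
\]
Since $\sum \gamma_t = \infty$ and the integrand is nonnegative and continuous, together with compactness of $\Lambda$ (to extract limit points of $\lambda^{(t)}$), any accumulation point $\lambda^\star \notin \mathcal{L}$ is excluded, giving $d(\lambda^{(t)}, \mathcal{L}) \to 0$ almost surely.

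The hard part will be the regularity side of the Poisson-equation argument: establishing joint $(z,\lambda)$-regularity of $\hat{F}_\lambda$ strong enough to make the parameter-shift term $(K_{\lambda^{(t-1)}}\hat{F}_{\lambda^{(t-1)}}) - (K_{\lambda^{(t)}}\hat{F}_{\lambda^{(t)}})$ summable in the $\gamma_t$-weighted sense. This is precisely what the drift condition is crafted to deliver, and in concrete settings (e.g., the MH-within-Gibbs and SGLD kernels used for EBMs) verifying it — in particular ruling out escape of the chain to non-geometrically ergodic regions as $\lambda$ moves — is where the genuine technical work lies; the rest of the proof is a fairly mechanical assembly of Lyapunov plus martingale convergence.
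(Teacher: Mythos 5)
The paper itself does not prove this theorem: it is stated as a summary of known stochastic-approximation results and deferred to the cited literature (Theorem 1 of \cite{song2014weak}, with conditions as in \cite{benveniste2012adaptive,andrieu2005stability}). Your sketch reconstructs exactly the route those references take — Poisson-equation decomposition of the Markovian noise into a martingale-difference part, a telescoping part, and a $O(\gamma_t)$ residual controlled by the Lipschitz dependence of $\hat{F}_\lambda$ on $\lambda$ (this regularity being what the drift condition supplies), followed by a Lyapunov/Robbins–Siegmund argument — so in spirit your proposal is the intended proof, and your identification of the joint $(z,\lambda)$-regularity of the Poisson solution as the technical crux is accurate.

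One step is too quick, however, and as written it does not yield the stated conclusion. From $\sum_t \gamma_t \, h(\lambda^{(t-1)}) < \infty$ a.s.\ with $h(\lambda) \triangleq -\langle \nabla V(\lambda), f(\lambda)\rangle \ge 0$ and $\sum_t \gamma_t = \infty$, you can only conclude $\liminf_t h(\lambda^{(t)}) = 0$, i.e.\ that \emph{some} subsequence approaches $\mathcal{L}$; it does not exclude accumulation points outside $\mathcal{L}$, since the trajectory could revisit such a point infinitely often while the $\gamma$-weighted time spent near it stays summable. To get $d(\lambda^{(t)},\mathcal{L}) \to 0$ you additionally need (i) almost-sure convergence of $V(\lambda^{(t)})$, obtained from the same one-step inequality via a supermartingale (Robbins–Siegmund) argument, and (ii) an excursion or ODE-limit argument showing that, because $\|\lambda^{(t)}-\lambda^{(t-1)}\| = O(\gamma_t) \to 0$ and the accumulated perturbation converges, the iterate cannot leave a fixed neighborhood of $\mathcal{L}$ infinitely often without forcing $V$ to decrease by a fixed amount each excursion — typically under a mild extra condition such as $V(\mathcal{L})$ having empty interior. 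This is standard machinery in the cited works, but it is a genuine missing piece of your closing argument rather than a routine consequence of compactness and continuity.
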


Remarkably, Algorithm \ref{alg:SA} shows stochastic approximation with Markovian perturbations \citep{benveniste2012adaptive}. It is more general than the non-Markovian SA which requires exact sampling $z^{(t)} \sim p_{\lambda^{(t-1)}}(\cdot)$ at each iteration and in some tasks can hardly be realized.
In non-Markovian SA, we check that $F_\lambda(z)$ is unbiased estimates of $f(\lambda)$, while in SA with Markovian perturbations, we check the ergodicity property of the Markov transition kernel.

To speed up convergence, during each SA iteration, it is possible to generate a set of multiple observations $z$ by performing the Markov transition repeatedly 
and then use the average of the corresponding values of $F_\lambda(z)$ for updating $\lambda$, which is known as \emph{SA with multiple moves} \citep{Wang2017LearningTR}, as shown in \algref{alg:SA-multiple-move}.

\begin{algorithm}[tb]
	\caption{SA with multiple moves}\label{alg:SA-multiple-move}
	\begin{algorithmic}	
		\FOR {$t=1,2,\cdots$}
		\STATE
		\begin{enumerate}
			\item \underline{Monte Carlo sampling:} Set $z^{(t,0)}=z^{(t-1,K)}$.
			For $k$ from $1$ to $K$,
			generate $z^{(t,k)} \sim K_{\lambda^{(t-1)}}(z^{(t, k-1)},\cdot)$,
			where $K_{\lambda^{(t-1)}}(z^{(t, k-1)},\cdot)$ is a Markov transition kernel that admits $p_{\lambda^{(t-1)}}(\cdot)$ as the invariant distribution.
			\item \underline{SA updating:} Set $\lambda^{(t)} = \lambda^{(t-1)} + \gamma_t \{ \frac{1}{K} \sum_{z\in B^{(t)}} F_{\lambda^{(t-1)}}(z) \}$,  where $B^{(t)} = \{ z^{(t,k)} | k = 1,\cdots,K \}$.
		\end{enumerate}	
		\ENDFOR
	\end{algorithmic}
\end{algorithm}

\paragraph{Note I.}
Perhaps the most familiar application of SA in machine learning literature is the \emph{stochastic gradient descent} (SGD)\index{Stochastic gradient descent (SGD)} technique, particularly the \emph{minibatching}\index{Minibatching} technique.
When the objective (and therefore its gradient) is a sum of many terms that can be computed independently, SGD samples one term at a time and follows one noisy estimate of the gradient with a decreasing step size.
Furthermore, it can be easily seen that SGD training with minibatches is an application of SA with multiple moves.

\paragraph{Note II.}
Generally, SA represents an iterative methodology to find the root of an expectation. Each iteration consists of a sampling step and a parameter updating step.
Basically, we use MCMC to simulate the noisy measurements to approximate the expectation. 
A keypoint is that we do not need to wait the chain to converge, but use a decaying learning rate to guarantee the convergence.
Intuitively, as the learning rate becomes sufficiently small compared to the mixing rate of the Markov chain, the chain will stay close to the stationary distribution, even if it only runs for one Markov move per parameter update.

\subsubsection{Application of SA to learning EBMs}
\label{sec:SA_ebm}

\begin{algorithm}[t]
	\caption{Stochastic maximum likelihood for fitting an EBM}\label{alg:SA_EBM}
	\begin{algorithmic}	
		\FOR {$t=1,2,\cdots$}
		\STATE \underline{Sampling:} Draw $x_\kappa$ from training data, and simulate a sample $x^{(t)}$ with a Markov transition kernel $K_{\theta^{(t-1)}}(x^{(t-1)},\cdot)$, which starts with $x^{(t-1)}$ and admits $p_{\theta^{(t-1)}}(\cdot)$ as the invariant distribution.
		\STATE \underline{Updating:} Update $\theta$ by gradient ascent as:
  \begin{equation} \label{eq:SA_update}
\theta^{(t)} = \theta^{(t-1)} + \gamma_t \{ \nabla_\theta U_{\theta}(x_\kappa) - \nabla_\theta U_{\theta}(x^{(t)}) \}|_{\theta = \theta^{(t-1)}}      
  \end{equation}
		\ENDFOR
	\end{algorithmic}
\end{algorithm}

It can be easily seen that the EBM gradients $\nabla_\theta L(\theta)$ in \eqref{eq:ebm_grad} exactly follows the form of \eqref{eq:SA}, as summarized in \thref{th:ebm_grad_SA_form}.
So the problem of maximum likelihood estimate of EBM parameters can then be solved by setting the gradients to zeros and applying the SA algorithm to finding the root for the resulting system of simultaneous equations.

\begin{theorem}
\label{th:ebm_grad_SA_form}
Consider an EBM distribution $p_\theta(x)$ parameterized with $\theta$ as shown in \eqref{eq:unsup-RF}, and a training dataset consisting of IID data points $\left\lbrace x_1, \cdots, x_N \right\rbrace$.
Introduce an index variable $\kappa$ which is uniformly distributed over $\{1,\cdots,N\}$.
	The log-likelihood gradients w.r.t. $\theta$ as shown in \eqref{eq:ebm_grad} can be recast in the expectation form of \eqref{eq:SA} (i.e. as expectation of stochastic gradients), by letting $\lambda \triangleq \theta$, $z \triangleq (\kappa, x)^T$, $p_\lambda(z) \triangleq \frac{1}{N} p_\theta(x)$, $f(\lambda) \triangleq \nabla_\theta L(\theta)$, and
	\begin{displaymath}
	F_\lambda(z) \triangleq 
\nabla_\theta U_{\theta}(x_\kappa) - \nabla_\theta U_{\theta}(x)
	\end{displaymath}
\end{theorem}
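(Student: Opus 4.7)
The plan is to verify that the claimed expectation identity $f(\lambda) = E_{z \sim p_\lambda(\cdot)}[F_\lambda(z)]$ reduces, under the stated substitutions, to the EBM gradient formula already derived in \eqref{eq:ebm_grad}. This is essentially a book-keeping identification: the content of the theorem is not a new calculation but a repackaging of the already-established log-likelihood gradient as the mean-field function of a stochastic approximation scheme, so that \thref{th:SA} can be applied downstream.

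First I would unpack the joint distribution $p_\lambda(z) = \frac{1}{N} p_\theta(x)$ on $z = (\kappa, x)^T$. Since $\kappa$ ranges over $\{1, \ldots, N\}$ and the factor $\frac{1}{N}$ is exactly the uniform mass on $\kappa$, this defines a product measure under which $\kappa \sim \text{Uniform}\{1,\ldots,N\}$ and $x \sim p_\theta(\cdot)$ independently. It is important to note that this joint is a valid probability distribution (it sums/integrates to one) and depends on $\lambda = \theta$ only through the $x$-marginal, matching the form required by \eqref{eq:SA}.

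Next I would apply linearity of expectation to split $E_{z \sim p_\lambda}[F_\lambda(z)]$ into two terms, $E_\kappa[\nabla_\theta U_\theta(x_\kappa)] - E_{x \sim p_\theta}[\nabla_\theta U_\theta(x)]$. The first term, by averaging over uniform $\kappa$, evaluates to $\frac{1}{N}\sum_{i=1}^N \nabla_\theta U_\theta(x_i)$, which is precisely the empirical expectation $E_{p_\text{emp}}[\nabla_\theta U_\theta(x)]$ appearing as the first term of \eqref{eq:ebm_grad}. The second term is exactly the model expectation that arose as $\nabla_\theta \log Z_\theta$ in \eqref{eq:grad_log_Z_equal_expect}. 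Subtracting the two therefore reproduces $\nabla_\theta L(\theta) = f(\lambda)$ verbatim.

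There is no genuine obstacle in the argument; the only subtle point worth emphasizing is that the construction deliberately packages two independent sources of randomness, minibatching over training indices (via $\kappa$) and Monte Carlo sampling from the model (via $x$), into a single joint variable $z$. This is what allows the EBM learning problem to be placed inside the SA framework of \algref{alg:SA}, where in practice the $x$-component will be produced by a Markov transition kernel rather than by exact sampling, giving rise to the stochastic maximum likelihood update \eqref{eq:SA_update}.
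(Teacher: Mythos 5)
Your proposal is correct and follows essentially the same route as the paper's own proof: rewrite \eqref{eq:ebm_grad} as an expectation over the product distribution in which $\kappa$ is uniform on $\{1,\cdots,N\}$ and $x \sim p_\theta(x)$ independently, then split by linearity to recover the empirical and model expectation terms. Your additional remarks on the validity of the joint $p_\lambda(z)$ and the packaging of the two randomness sources are fine elaborations of the same argument.
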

\begin{proof}
	This can be readily seen by rewriting \eqref{eq:ebm_grad} as:
	\begin{displaymath}
\nabla_\theta L(\theta) = E_{\kappa \sim \text{Uni}[1,N], x \sim p_{\theta}(x)}\left[ \nabla_\theta U_{\theta}(x_\kappa) - \nabla_\theta U_{\theta}(x) \right]
	\end{displaymath}
and applying the independence between $\kappa$ and $x$.
\end{proof}

Combining \thref{th:ebm_grad_SA_form} and general SA (\algref{alg:SA}), the particular resulting pseudocode for learning EBMs is shown in \algref{alg:SA_EBM}, which is often known as \emph{stochastic maximum likelihood} (SML)\index{Stochastic maximum likelihood (SML)} \citep{younes1989parametric}.
\algref{alg:SA_EBM} corresponds to SA with single move. 
Further, by applying SA with multiple moves (\algref{alg:SA-multiple-move}), at each iteration, we can draw a minibatch from training data (minibatching)\index{Minibatching}, say drawing
$\kappa_1,\cdots,\kappa_B$ from $\{1,\cdots,N\}$.
At each iteration, we could directly draw $x^{(t,1)},\cdots,x^{(t,M)} \sim p_{\theta^{(t-1)}}(x)$ when it is tractable, or run multiple steps of a single chain, or multiple parallel chains, or a combination of both, to draw multiple samples, say obtaining $x^{(t,1)},\cdots,x^{(t,M)}$ that admit $p_{\theta^{(t-1)}}(x)$ as the invariant distribution. 
Then, at each iteration, parameter updating in \eqref{eq:SA_update} can be replaced by:
  \begin{displaymath}
\theta^{(t)} = \theta^{(t-1)} + \gamma_t 
\left.\left\{ 
\frac{1}{B} \sum_{j=1}^B \nabla_\theta U_{\theta}(x_{\kappa_j})
- \frac{1}{M} \sum_{j=1}^M \nabla_\theta U_{\theta}(x^{(t,j)})
\right\}\right|_{\theta = \theta^{(t-1)}}
  \end{displaymath}
  
\paragraph{Historical comments.}
The general stochastic approximation methodology was originally proposed in \citep{SA51}. 
The stochastic maximum likelihood method, proposed in \citep{younes1989parametric}, turns out to be application of the more general SA methodology to learning EBMs.
The same idea was applied to training RBMs in \citep{tieleman2008training}, which is called \emph{persistent contrastive divergence} (PCD)\index{Persistent contrastive divergence (PCD)} to emphasize that the Markov chain is not reset between parameter updates.
The regular \emph{contrastive divergence} (CD)\index{Contrastive divergence (CD)} method, proposed in \citep{hinton2002training}, restarts the Markov chain at the training data rather than at the previous state. This will not converge to MLE.
As commented in \citep{Salakhutdinov2009LearningDG}, ``Clearly, the widely used practice of CD1 learning is a rather poor “substitute” for maximum likelihood learning.''
% It can be seen that training of EBMs are not so widely known in its history, and even today. So hope this tutorial will let more people know RFs and their training methods.

\subsection{Variational methods with auxiliary models}

As introduced before, Monte Carlo sampler is a crucial component which affects maximum likelihood learning of EBMs.
A recent progress as studied in \citep{Kim2016DeepDG,wang2017language,Kuleshov2017NeuralVI,xie2018cooperative} is to pair the target EBM $p_\theta(x)$ with an auxiliary directed generative model (often called \emph{generator}) $q_\phi(x)$ parameterized by $\phi$, which approximates sampling from the target EBM.
Learning is performed by maximizing the log-likelihood of training data under $p_\theta$ or some bound of the log-likelihood, and simultaneously minimizing some divergence between the target EBM $p_\theta$ and the auxiliary generator $q_\phi$\footnote{Such optimization using two objectives has also been employed in training other types of models apart from learning EBMs, such as learning GAN with log$D$ trick \citep{goodfellow2014generative}, the wake-sleep algorithm \citep{Hinton1995the} for learning Helmholtz Machines.}:
\begin{displaymath}
\left\{
\begin{split}
& \text{Maximize over~} \theta \text{~the log-likelihood itself or some bound} \\
& \text{Minimize over~} \phi \text{~some divergence between~} p_\theta \text{~and~} q_\phi.\\
\end{split}
\right.
\end{displaymath}
Different learning methods mainly differ in the objective functions used in the joint training of $p_\theta$ and $q_\phi$, and thus have different computational and statistical properties.
%For example, minimizing the exclusive-divergence $KL[q_\phi||p_\theta] \triangleq \int q_\phi \log \left( q_\phi / p_\theta \right) = - H \left[ q_\phi \right] - \int q_\phi \log p_\theta$ w.r.t. $\phi$, as employed in \cite{Kim2016DeepDG}, involves the intractable entropy term $H \left[ q_\phi \right]$ and tends to enforce the generator to seek modes, yielding missing modes.
There are also other factors that distinguish different studies in learning EBMs with auxiliary models, e.g. modeling discrete or continuous data, different model choices of the target EBM and the auxiliary generator.

Many methods in learning EBMs with auxiliary models are related to variational methods.
\textbf{Variational methods}\index{Variational methods} provide an optimization-based principle to inference and learning \citep{jordan1999introduction,frey2005comparison}.
A classic application of variational methods is in Bayesian inference, called \emph{variational inference} (VI)\index{Variational inference (VI)}. VI posits a family of approximating distributions $q$ and then finds the member of that family that is closest to the target posterior distribution $p$, mostly by minimizing the exclusive-divergence $KL(q||p)$. 
Variational methods have also been widely used in the context of maximum likelihood parameter estimation, which is often called \emph{variational learning}\index{Variational learning}.
In particular, \citep{neal1998view} shows a link between variational bound and maximum likelihood parameter estimation via the Expectation-Maximization (EM) algorithm\index{Expectation-Maximization (EM) algorithm}. Variational learning in early days is called variational EM \citep{frey2005comparison}.

Remarkably, classic variational methods mostly optimize the exclusive divergence, and hence could be classified as the \emph{exclusive-variational}\index{Exclusive-variational approach} approach.
Recently, there have emerged some variational methods that optimize the inclusive divergence $KL(p||q)$, which has good statistical properties that makes it more appropriate for certain inference and learning problems. 
These studies include joint stochastic approximation (JSA)\index{Joint stochastic approximation (JSA)} \citep{xu2016joint,ou2020joint}, Markovian score climbing (MSC)\index{Markovian score climbing (MSC)} \citep{naesseth2020markovian}, parallel Markov chain score ascent (pMCSA) \citep{kim2022markov}, and transport score climbing (TSC) \citep{zhang2022transport} for learning latent-variable models (belonging to directed graphical models), AugSA plus JSA \citep{wang2017language} and inclusive-NRF \citep{song2018learning} for learning EBMs (belonging to undirected graphical models).
We could refer these studies collectively as the \emph{inclusive-variational}\index{Inclusive-variational approach} approach. See \citep{ou2018review} for more introduction on variational methods and the two approaches.

In practice, the performance of learning EBMs with auxiliary models often performs better than without auxiliary models.
In the following, we first follow \citep{song2018learning} to give a short literature review of this class of studies, and then mainly detail the inclusive-variational approach for learning EBMs.

\subsubsection{Related work in MLE of EBMs with auxiliary models}

Let $p_\theta(x)$ denote the target EBM as defined in \eqref{eq:unsup-RF}, $q_\phi(x)$ the auxiliary generator which allows efficient sampling and approximates sampling from the target EBM, and $p_\text{emp}(x)$ the empirical distribution for training data.

\begin{itemize}
\item It is shown in \citep{song2018learning} that we have the following \emph{evidence upper bound} (EUBO)\index{Evidence upper bound (EUBO)} w.r.t. $\theta$ for EBMs:
\begin{align*}
\text{EUBO}(x;\theta,\phi) &= \log p_\theta(x) + KL(q_\phi(x) || p_\theta(x))\\
&= U_\theta(x) -\log Z(\theta) - (E_{q_\phi(x)}[p_\theta(x)]+H[q_\phi(x)])\\
& = U_\theta(x) -( E_{q_\phi(x)}[U_\theta(x)] + H[q_\phi(x)] )\\
& \ge \log p_\theta(x)
\end{align*}
It is further shown in \citep{song2018learning}
that learning in \citep{Kim2016DeepDG} amounts to maximizing the EUBO bound w.r.t. $\theta$, while simultaneously minimizing the gap, i.e., the exclusive-divergence $KL[q_\phi||p_\theta]$ w.r.t. $\phi$:
\begin{equation}
\label{eq:exclusive-variational}
\left\{
\begin{split}
& \max_{\theta} E_{x \sim p_\text{emp}(x)} \text{EUBO}(x;\theta,\phi) \\
& \min_{\phi} KL\left[  q_\phi(x) || p_\theta(x) \right] \\
\end{split}
\right.
\end{equation}
Remarkably, the EUBO bound involves the intractable entropy term $H \left[ q_\phi \right]$ and tends to enforce the generator to \emph{seek modes}, yielding missing modes. 
In \eqref{eq:exclusive-variational}, we optimize the exclusive-divergence w.r.t. an auxiliary distribution to approximate a target distribution $p_\theta(x)$. 
Hence we classify \eqref{eq:exclusive-variational} as the \emph{exclusive-variational} approach, which is called exclusive-NRF\index{Exclusive-NRF algorithm} in \citep{song2018learning}.
\item
Learning in \citep{wang2017language,song2018learning} minimizes the inclusive-divergence $KL[p_\theta||q_\phi]$ w.r.t. $\phi$, which could be classified as the \emph{inclusive-variational} approach for learning EBMs. 
The main idea is to perform maximum likelihood learning of $p_\theta$ and simultaneously minimize the inclusive-divergence between the target EBM $p_\theta$ and the auxiliary generator $q_\phi$ by
\begin{equation}
\label{eq:inclusive-variational}
\left\{
\begin{split}
& \min_{\theta} KL\left[  p_\text{emp}({x}) || p_\theta({x}) \right] \\
& \min_{\phi} KL\left[  p_\theta(x) || q_\phi(x) \right] \\
\end{split}
\right.
\end{equation}
The first line of \eqref{eq:inclusive-variational} is equivalent to maximum likelihood fitting of the target EBM $p_\theta$ under the empirical distribution $p_\text{emp}$, which requires sampling from $p_\theta$.
Simultaneously, the second line optimizes the generator $q_\phi$ to be close to $p_\theta$ so that $q_\phi$ becomes a good proposal for sampling from $p_\theta$.

Compared to the exclusive-variational approach, the inclusive-variational approach shown in Eq. (\ref{eq:inclusive-variational}) has several advantages.
First, minimizing inclusive-divergence avoids the annoying entropy term, which is suffered by minimizing the exclusive-divergence.
Second, inclusive-divergence minimization tends to drive the auxiliary generator, acting like an adaptive proposal in adaptive MCMC \citep{andrieu2008tutorial,roberts2009examples}, to \emph{cover modes} of the target density $p_\theta$. 
Mode-covering is a desirable property for proposal design in MCMC. In contrast, minimizing exclusive-divergence leads to variational approximations that seek modes and underestimate uncertainty.
The auxiliary model $q_\phi(x)$ and the sampler for $p_\theta(x)$ can be very flexibly designed, depending on the nature of data $x$, discrete or continuous.
\begin{itemize}
\item \citep{wang2017language} mainly studies neural random field language models, using LSTM generators (autoregressive with no latent variables) and employing Metropolis independence sampler (MIS) - applicable for \emph{discrete data}  (natural sentences). The learning algorithm proposed in \citep{wang2017language}, called \emph{AugSA plus JSA}\index{AugSA plus JSA algorithm}, is an instance of the inclusive-variational approach for learning EBMs over discrete data.
\item \citep{song2018learning} mainly designs neural random field models (NRFs)\index{Neural random fields (NRFs)} for \emph{continuous data} (e.g., images), choosing latent-variable generators and developing SGLD (stochastic gradient Langevin dynamics)/SGHMC (stochastic gradient Hamiltonian Monte Carlo) samplers to exploit noisy gradients in the continuous space. The learning algorithm proposed in \citep{song2018learning}, called \emph{inclusive-NRF}\index{Inclusive-NRF algorithm}, is an instance of the inclusive-variational approach for learning EBMs over continuous data.
\end{itemize}
\item
In \citep{xie2018cooperative} (CoopNet)\index{CoopNet}, motivated by interweaving maximum likelihood training of the EBM $p_\theta(x)$ and the latent-variable generator $q_\phi(h,x)$, a joint training method is introduced to train EBMs.
There are clear differences that distinguish the inclusive-variational approach.
First, CoopNet uses LD (Langevin dynamics) sampling to generate samples, but two LD sampling steps are intuitively interleaved according to $ \frac{\partial}{\partial x} \log  p_\theta(x)$ (with $L_x$ steps) and $\frac{\partial}{\partial h} \log q_\phi(h,x)$ (with $L_h$ steps) separately, not aiming to draw samples from $p_\theta(x) q_\phi(h|x)$.
This is different from the stochastic gradient sampler in the augmented space in inclusive-NRF, which moves $(x,h)$ jointly.
Second, according to theoretical understanding in \citep{xie2018cooperative}, Coopnet considers the following joint optimization problem:
\begin{displaymath}
\label{eq:coopnet_obj}
\left\{
\begin{split}
& \min_{\theta} \left\lbrace KL\left[  {p}_\text{emp}({x}) || p_\theta({x}) \right] - KL\left[ r(h,x)  || p_\theta(x) \right]\right\rbrace \\
& \min_{\phi} KL\left[  r(h,x) || q_\phi(h,x) \right] \\
\end{split}
\right.
\end{displaymath}
where $r(h,x)$ denotes the distribution of $(x^{(L_x)}, h^{(L_h)})$, resulting from the CoopNet sampler.
This objective is also clearly different from inclusive-NRF, which aims to minimize the inclusive-divergence $KL[p_\theta||q_\phi]$ w.r.t. $\phi$.
It is found in \citep{song2018learning} that inclusive-NRF with SGLD outperforms CoopNet in image generation.
%	Let $K^p_\theta, K^q_\phi$ denote the Markov transition kernel of Langevin dynamics that samples $x \sim p_\theta(x)$ and $h \sim q_\phi(h|x)$ respectively. Let $K^q_\phi \circ K^p_\theta \circ q_\phi(h,x)$ be the distribution obtained by running the Markov transition starting from $(h,x) \sim q_\phi(h,x)$.	
%	It is shown in \cite{coopnets} that the training method converges to a fixed point $(\hat{\theta}, \hat{\phi})$:
%	\begin{displaymath}
%	\begin{aligned}
%	\hat{\theta} &= \arg \min_{\theta} \left\lbrace  KL\left[  \tilde{p} || p_\theta \right] - KL\left[  \mathcal{M}_{\hat{\theta}}q_{\hat{\phi}} || p_\theta \right] \right\rbrace ,\\
%	\hat{\phi} &= \arg \min_{\phi} KL\left[  \mathcal{M}_{\hat{\theta}}q_{\hat{\phi}} || q_\phi \right].
%	\end{aligned}
%	\end{displaymath}
\item
Learning in \citep{Kuleshov2017NeuralVI} minimizes the $\chi^2$-divergence $\chi^2[q_\phi||p_\theta] \triangleq \int \frac{(p_\theta-q_\phi)^2}{q_\phi} $ w.r.t. $\phi$, which also tends to drive the generator to cover modes. But this approach is severely limited by the high variance of the gradient estimator w.r.t. $\phi$, and is only tested on the simpler MNIST and Omniglot.
\item
Learning in \citep{han2019divergence} further extends CoopNet and introduces an inference model, apart from the target EBM and the latent-variable generator, and jointly optimizes the three models under a divergence triangle.
%\item
%Different NRF studies also distinguish in models used in the joint training with auxiliary models. The target NRF used in this work is different from those in previous studies \cite{Kim2016DeepDG,wang2017language,xie2018cooperative}. The differences are: \cite{Kim2016DeepDG} includes additional linear and squared terms in $u_{\theta}(x)$, \cite{wang2017language} defines over discrete-valued sequences, and \cite{xie2018cooperative} defines in the form of exponential tilting of a reference distribution (Gaussian white noise).
%There also exist different choices for the generator, such as GAN models in \cite{Kim2016DeepDG}, LSTMs in \cite{wang2017language}, or latent-variable models in  \cite{xie2018cooperative} and this work.
\end{itemize}

\subsubsection{The inclusive-variational approach for learning EBMs}

\begin{figure*}[t]
	\center
	\includegraphics[width=0.9\textwidth]{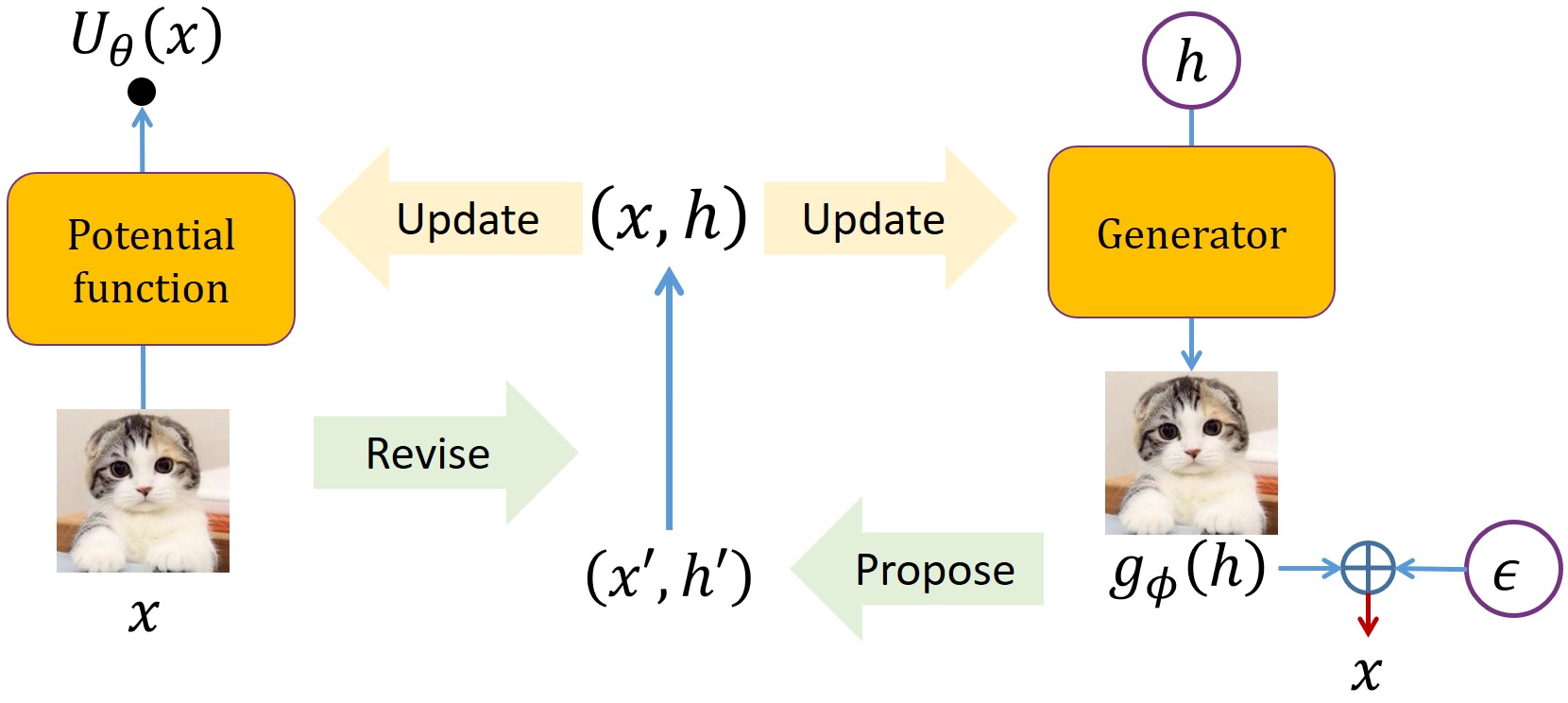}
	\caption{Overview of the inclusive-variational approach for learning EBMs for continuous data. Two neural networks are used to define the EBM's potential function $U_{\theta}(x)$ and the auxiliary generator $g_\phi(h)$ respectively. The parameters of both networks, $\theta$ and $\phi$, are updated by using the revised samples $(x,h)$ in the augmented space, which are obtained by revising the samples $(x',h')$ proposed by the auxiliary generator, according to the stochastic gradients defined by both the target EBM and the auxiliary generator. \citep{song2018learning}} 
	\label{fig:inclusive-NRF}
\end{figure*}

The basic idea of using the inclusive-variational approach for learning EBMs is described in \eqref{eq:inclusive-variational}.
The auxiliary model $q_\phi(x)$ and the sampler for $p_\theta(x)$ can be very flexibly designed, depending on the nature of data $x$, discrete or continuous.
In the following, we mainly introduce the inclusive-variational approach for learning EBMs for \emph{continuous data}, which is called inclusive-NRF in \citep{song2018learning} and illustrated in Figure \ref{fig:inclusive-NRF}. 
The important features of inclusive-NRF is that the auxiliary model $q_\phi(x)$ is a latent-variable model, and stochastic gradient guided samplers (SGLG/SGHMC) are developed, which is particularly useful for sampling from a continuous distribution $p_\theta(x)$. For the inclusive-variational approach for learning EBMs for \emph{discrete data}, readers could see \citep{wang2017language}.

\paragraph{The NRF model.}
EBMs parameterized by neural network, as defined in \eqref{eq:unsup-RF}, are called Neural random fields (NRFs) in \citep{song2018learning}, which are usually denoted by $p_\theta(x)$. The potential $U_{\theta}(x) : \mathbb{R}^{d_x} \rightarrow \mathbb{R}$ is realized by a neural network, which takes the multi-dimensional $x \in \mathbb{R}^{d_x}$ as input and outputting the scalar $u_{\theta}(x) \in \mathbb{R}$.

\paragraph{An inclusive-divergence minimized auxiliary generator} $q_\phi(x)$ is introduced to approximate sampling from the target EBM, particularly for fixed-dimensional continuous observations $x \in \mathbb{R}^{d_x}$ (e.g. images).
We use a directed generative model, $q_\phi(x,h) \triangleq q(h)q_\phi(x|h)$, for the auxiliary generator, which is defined as follows\footnote{Note that during training, $\sigma^2$ is absorbed into the learning rates and does not need to be estimated.}:
\begin{equation}\label{eq:generator}
\begin{aligned}
h &\sim \mathcal{N}(0,I_h),\\
x &= g_\phi(h)+\epsilon, \epsilon \sim \mathcal{N}(0,\sigma^2 I_{\epsilon}).
\end{aligned}
\end{equation}
Here $g_\phi(h):\mathbb{R}^{d_h} \rightarrow \mathbb{R}^{d_x}$ is implemented as a neural network with parameter $\phi$, which maps the latent code $h$ to the observation space.
$I_h$ and $I_{\epsilon}$ denote the identity matrices, with dimensionality implied by $h$ and $\epsilon$ respectively.
Drawing samples from the generator $q_\phi(x,h)$ is simple as it is just ancestral sampling\index{Ancestral sampling} \citep{murphy2012machine} from a 2-variable directed graphical model.

By using Fisher equality (\appref{sec:fisher_eq}), we have the following gradients for $\theta$ and $\phi$ respectively, where in the first equation, we use $\tilde{x}$ and $x$ to differentiate samples from the empirical distribution ${p}_\text{emp}({x})$ and those from the model distribution $p_\theta(x)$.
\begin{prop} \label{prop:nrf_gradient_proof}
	The gradients for optimizing the two objectives in Eq. (\ref{eq:inclusive-variational}) can be derived as follows:
	\begin{equation}
	\label{eq:jrf_unsup_gradient}
	\left\{
	\begin{split}
	&-\frac{\partial}{\partial \theta} KL\left[  {p}_\text{emp}({x}) || p_\theta({x}) \right]
	=E_{{p}_\text{emp}(\tilde{x})}\left[\nabla_\theta u_\theta(\tilde{x})\right]-E_{p_\theta(x)}\left[\nabla_\theta u_\theta(x)\right]\\
	&-\frac{\partial}{\partial \phi} KL\left[  p_\theta(x) || q_\phi(x) \right]
	=E_{p_\theta(x) q_\phi(h|x)}\left[ \nabla_\phi logq_\phi(x,h)\right]
	\end{split}
	\right.
	\end{equation}
\end{prop}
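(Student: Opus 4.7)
The plan is to handle the two gradients separately, since they are essentially independent calculations, and both reduce to well-known identities.

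For the first line, I would start by noting that $KL[p_\text{emp}(x) \,\|\, p_\theta(x)] = \text{const} - L(\theta)$, where $L(\theta)$ is the scaled log-likelihood defined in \eqref{eq:ebm_loglik}, with the constant independent of $\theta$. Hence $-\partial_\theta KL[p_\text{emp} \,\|\, p_\theta] = \nabla_\theta L(\theta)$, and the result follows immediately from the EBM gradient formula already established in \eqref{eq:ebm_grad}, namely $\nabla_\theta L(\theta) = E_{p_\text{emp}(\tilde{x})}[\nabla_\theta U_\theta(\tilde{x})] - E_{p_\theta(x)}[\nabla_\theta U_\theta(x)]$, after identifying $U_\theta$ with $u_\theta$ in the NRF notation. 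Nothing new is required beyond re-using the earlier derivation culminating in \eqref{eq:grad_log_Z_equal_expect}.

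For the second line, I would expand
\begin{displaymath}
KL[p_\theta(x) \,\|\, q_\phi(x)] = E_{p_\theta(x)}[\log p_\theta(x)] - E_{p_\theta(x)}[\log q_\phi(x)].
\end{displaymath}
The first term does not depend on $\phi$, and since $p_\theta(x)$ does not depend on $\phi$ either, the derivative passes inside the expectation to give
\begin{displaymath}
-\partial_\phi KL[p_\theta \,\|\, q_\phi] = E_{p_\theta(x)}[\nabla_\phi \log q_\phi(x)].
\end{displaymath}
The remaining step is to re-express the marginal score $\nabla_\phi \log q_\phi(x)$ in terms of the tractable joint $q_\phi(x,h) = q(h) q_\phi(x|h)$. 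Here I would invoke Fisher's identity (the result referenced in \appref{sec:fisher_eq}), which states that $\nabla_\phi \log q_\phi(x) = E_{q_\phi(h|x)}[\nabla_\phi \log q_\phi(x,h)]$. A quick sketch of why this holds: writing $q_\phi(x) = \int q_\phi(x,h)\,dh$ and differentiating, one multiplies and divides by $q_\phi(x)$ inside the integral to recover the posterior $q_\phi(h|x)$ as the averaging distribution. Substituting yields
\begin{displaymath}
E_{p_\theta(x)}[\nabla_\phi \log q_\phi(x)] = E_{p_\theta(x) q_\phi(h|x)}[\nabla_\phi \log q_\phi(x,h)],
\end{displaymath}
which is exactly the claimed expression.

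There is no real obstacle here; the proposition is essentially a bookkeeping result packaging the EBM-MLE gradient with Fisher's identity. The only point worth being careful about is legitimacy of exchanging differentiation with the expectation under $p_\theta$, which is standard under mild regularity on the potential $u_\theta$ and the generator network $g_\phi$, and which is already implicitly assumed when writing \eqref{eq:grad_log_Z_equal_expect}.
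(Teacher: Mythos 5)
Your proof is correct and follows essentially the same route as the paper: the first identity is just the EBM maximum-likelihood gradient \eqref{eq:ebm_grad} restated via $KL[p_\text{emp}\|p_\theta]=\text{const}-L(\theta)$, and the second is obtained by pushing $\nabla_\phi$ inside the expectation (since $p_\theta$ is free of $\phi$) and invoking Fisher equality \eqref{eq:fisher_eq_2} to replace $\nabla_\phi \log q_\phi(x)$ by $E_{q_\phi(h|x)}[\nabla_\phi \log q_\phi(x,h)]$, which is precisely how the paper justifies the proposition.
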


\begin{algorithm*}[t]
	\caption{The inclusive-NRF algorithm for learning EBMs for continuous data with latent-variable auxiliary models}
	\label{alg:learning-NRF-IAG}
	\begin{algorithmic}
		\REPEAT
		\STATE \underline{Sampling:}
		Draw a minibatch $\mathcal{M}=\left\lbrace (\tilde{x}^i,x^i,h^i), i=1,\cdots\,|\mathcal{M}|\right\rbrace $ from ${p}_\text{emp}(\tilde{x}) p_\theta(x) q_\phi(h|x)$ (see Algorithm \ref{alg:model-sampling});
		
		\STATE \underline{Updating:}
		
		Update $\theta$ by ascending:		
		$\frac{1}{|\mathcal{M}|} \sum_{(\tilde{x},x,h) \sim \mathcal{M}}
		\left[\nabla_\theta u_\theta(\tilde{x}) - \nabla_\theta u_\theta(x) \right] $;
		
		Update $\phi$ by ascending:
		$
		\frac{1}{|\mathcal{M}|} \sum_{(\tilde{x},x,h) \sim \mathcal{M}}
		\nabla_\phi \log q_\phi(x,h) $;
		
		\UNTIL{convergence}
	\end{algorithmic}
\end{algorithm*}

By Proposition \ref{prop:nrf_gradient_proof}, we can obtain the gradients w.r.t. $\theta$ and $\phi$ (to be ascended).
In practice, we apply minibatch based stochastic gradient descent (SGD) to solve the optimization problem \eqref{eq:inclusive-variational}, as shown in Algorithm \ref{alg:learning-NRF-IAG}.

Ideally, the learning of $\theta$ could be conducted without $\phi$, by using an MCMC sampler (e.g. LD) to draw samples from $p_\theta(x)$.
But the chain often mixes between modes so inefficiently that severely slow down the learning of $\theta$ especially when the target density $p_\theta(x)$ is multimodal. 
This is the main difficulty that hinders the effective training of NRFs.
Introducing auxiliary generator $q_\phi$ to approximate the target NRF $p_\theta$ is inspired by and related to two advanced MCMC ideas - auxiliary variable MCMC \citep{neal2011mcmc} and adaptive MCMC \citep{andrieu2008tutorial,roberts2009examples}.
\begin{itemize}
\item The classic example of adaptive MCMC is adaptive scaling of the variance of the step-size in random-walk Metropolis \citep{roberts2009examples}.
In inclusive-NRF, the auxiliary generator acts like an adaptive proposal, updated by using samples from the target density\footnote{Minimizing the inclusive-divergence tends to drive the generator (the proposal) to have higher entropy than the target density, which is a desirable property for proposal design in MCMC.}.
\item 
Further to be detailed next, the target density is extended to be $p_\theta(x) q_\phi(h|x)$, which leaves the original target as the marginal, but sampling in the augmented space $(x,h)$ can be easier (more efficiently), with the help of the adaptive proposal $q_\phi(x,h)$.
This follows the basic idea of auxiliary variable MCMC \citep{neal2011mcmc} - sampling in an augmented space could be more efficient.
\end{itemize}

\paragraph{Developing stochastic gradient samplers for EBMs for continuous data.}
In Algorithm \ref{alg:learning-NRF-IAG}, we need to draw samples $(x,h) \in \mathbb{R}^{d_x + d_h}$ in the augmented space defined by the target joint distribution $p_\theta(x) q_\phi(h|x)$ given current $\theta$ and $\phi$.
Gradient guided samplers (\secref{sec:LD_HMC}), such as Langevin dynamics (LD) and Hamiltonian Monte Carlo (HMC) \citep{neal2011mcmc}, are known to be efficient in exploring the continuous state space.
The gradients of the target distribution can be derived as follows:
\begin{equation} \label{eq:grad-x-h}
\left\{
\begin{split}
&\frac{\partial}{\partial x} \log \left[ p_\theta(x) q_\phi(h|x) \right] 
= \frac{\partial}{\partial x} \left[ \log  p_\theta(x) +  \log q_\phi(h,x) -  \log q_\phi(x) \right]\\
&\frac{\partial}{\partial h} \log \left[ p_\theta(x) q_\phi(h|x) \right] = \frac{\partial}{\partial h} \log q_\phi(h,x)
\end{split}
\right.
\end{equation}
It can be seen that it is straightforward to obtain the gradient w.r.t. $h$ and the first two terms in the gradient w.r.t. $x$.
However, calculating the third term $\frac{\partial}{\partial x} \log q_\phi(x)$ in the gradient w.r.t. $x$ is intractable. Therefore we are interested in developing stochastic gradient variants of those samplers, which rely on using noisy estimate of $\frac{\partial}{\partial x} \log q_\phi(x)$.

By considering $z \triangleq (x,h)$, $p(z; \lambda) \triangleq p_\theta(x) q_\phi(h|x)$, $\lambda \triangleq (\theta, \phi)^T$, and Eq. (\ref{eq:grad-x-h}),
we can use Theorem \ref{theorem:SGLD} to develop the sampling step for Algorithm \ref{alg:learning-NRF-IAG}, as presented in Algorithm \ref{alg:model-sampling}.
For the gradient w.r.t. $x$, the intractable term $\frac{\partial}{\partial x} \log q_\phi(x)$ is estimated by a stochastic gradient.

\begin{prop} \label{prop:unbiased-est}
	Given $q_\phi(h,x)$, we have
	\begin{equation} \label{eq:unbiased-est}
	\frac{\partial}{\partial x} \log q_\phi(x)
	= E_{h^* \sim q_\phi(h^*|x)} \left[ \frac{\partial}{\partial x} \log q_\phi(h^*,x) \right].
	\end{equation}
\end{prop}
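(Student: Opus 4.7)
The plan is to prove the identity by a direct log-derivative (score) manipulation, marginalizing out $h$ and then re-expressing the derivative as an expectation under the conditional $q_\phi(h|x)$. This is essentially a Fisher-type identity applied to the $x$-marginal of $q_\phi(h,x)$, mirroring the Fisher equality referenced earlier in \appref{sec:fisher_eq} (which the paper uses to derive Proposition \ref{prop:nrf_gradient_proof}), but now with differentiation taken with respect to the data variable $x$ rather than the parameter.

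First I would write $\frac{\partial}{\partial x} \log q_\phi(x) = \frac{1}{q_\phi(x)} \frac{\partial}{\partial x} q_\phi(x)$ and express the marginal via $q_\phi(x) = \int q_\phi(h,x)\, dh$. Next, I would interchange differentiation and integration to obtain $\frac{\partial}{\partial x} q_\phi(x) = \int \frac{\partial}{\partial x} q_\phi(h,x)\, dh$. Then applying the standard log-derivative identity $\frac{\partial}{\partial x} q_\phi(h,x) = q_\phi(h,x) \frac{\partial}{\partial x} \log q_\phi(h,x)$ and dividing by $q_\phi(x)$ gives
\begin{displaymath}
\frac{\partial}{\partial x} \log q_\phi(x) = \int \frac{q_\phi(h,x)}{q_\phi(x)} \frac{\partial}{\partial x} \log q_\phi(h,x)\, dh,
\end{displaymath}
which is exactly the claimed expectation under $q_\phi(h|x)$.

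The only non-trivial step is the interchange of differentiation and integration, and I would justify this by appealing to standard regularity assumptions (dominated convergence), which are clearly satisfied in the setting used throughout the paper, where $q_\phi(x|h) = \mathcal{N}(g_\phi(h), \sigma^2 I)$ as in \eqref{eq:generator}: the integrand $q_\phi(h,x)$ is smooth in $x$ and its $x$-gradient is dominated by an integrable envelope (polynomial in $x$ times the Gaussian density). The discrete case is even easier, as differentiation and summation always commute for absolutely convergent series. Hence no real obstacle arises, and the proof is essentially a one-line application of the score-function trick to the marginal density.
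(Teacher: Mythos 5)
Your proof is correct and is essentially the paper's argument: the paper disposes of this proposition with a one-line appeal to the Fisher equality of \appref{sec:fisher_eq}, and your direct marginalize--interchange--score-trick derivation is exactly what that citation unpacks to, with differentiation taken with respect to $x$ instead of the parameter. The remark on justifying the interchange of differentiation and integration is a reasonable addition but not something the paper itself spells out.
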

\begin{proof}
By using Fisher equality (\appref{sec:fisher_eq}).
\end{proof}

Motivated by Proposition \ref{prop:unbiased-est},
ideally we draw $h^* \sim q_\phi(h^*|x)$ and then use $\frac{\partial}{\partial x} \log q_\phi(h^*,x)$ as an unbiased estimator of $\frac{\partial}{\partial x} \log q_\phi(x)$.
In practice, at step $l$, given $x^{(l-1)}$ and starting from $h^{(l-1)}$, we run one step of LD sampling over $h$ targeting $q_\phi(h|x^{(l-1)})$, to obtain $h^{(l-1)*}$ and calculate $\frac{\partial}{\partial x^{(l-1)}} \log q_\phi(h^{(l-1)*},x^{(l-1)})$. This gives a biased but tractable estimator to  $\frac{\partial}{\partial x} \log q_\phi(x)$. It is empirically found in experiments in \citep{song2018learning} that more steps of this inner LD sampling do not significantly improve the performance for NRF learning.

So instead of using the exact gradient $\frac{\partial}{\partial z} \log p(z;\lambda)$ as shown in Eq. (\ref{eq:grad-x-h}), \cite{song2018learning} developed a tractable biased stochastic gradient $\Delta(z;\lambda)$ as follows:
\begin{equation}  \label{eq:stochastic-grad}
\Delta(z;\lambda) \triangleq
\left( \begin{array}{c}
\frac{\partial}{\partial x} \left[ \log  p_\theta(x) +  \log q_\phi(h,x) -  \log q_\phi(h^*,x) \right] \\
\frac{\partial}{\partial h} \log q_\phi(h,x)
\end{array} \right),
\end{equation}
where $h^*$ is an approximate sample from $q_\phi(h^*|x)$ obtained by running one step of LD from $(h,x)$.
Remarkably, as shown in Algorithm \ref{alg:model-sampling}, the starting point $(h^{(0)}, x^{(0)})$ for the SGLD/SGHMC recursions is obtained from an ancestral sampling\index{Ancestral sampling} from $ q_\phi(h,x)$. Thus at step $l=1$, $h^{(0)}$ is already a sample from $q_\phi(h|x^{(0)})$ given $x^{(0)}$, and we can directly use $h^{(0)}$ as $h^{(0)*}$ without running the inner LD sampling.
Afterwards, for $l > 1$, the conditional distribution of $h^{(l-1)}$ given $x^{(l-1)}$ is close to $q_\phi(h|x^{(l-1)})$, though strictly not.
One or more steps of LD could be run to obtain $h^{(l-1)*}$ to reduce the bias in the stochastic gradient estimator.

With the above stochastic gradients in Eq. (\ref{eq:stochastic-grad}), the sampling step in Algorithm \ref{alg:learning-NRF-IAG} can be performed by running $|\mathcal{M}|$ parallel chains, each chain being executed by running finite steps of SGLD/SGHMC with tractable gradients w.r.t. both $x$ and $h$, as shown in Algorithm \ref{alg:model-sampling}. 
Intuitively, the auxiliary generator first gives a proposal $(x',h')$, and then the system follows the gradients of $p_\theta(x)$ and $q_\phi(h,x)$ (w.r.t. $x$ and $h$ respectively) to revise $(x',h')$ to $(x,h)$.
The gradient terms pull samples moving to low energy region of the random field and adjust the latent code of the generator, while the noise term brings randomness.
In this manner, we obtain Markov chain samples in the augmented space defined by $p_\theta(x) q_\phi(h|x)$.

%\begin{figure}[t]
%	\center
%	\includegraphics[width=0.9\textwidth]{fig/sampler.pdf}
%	\caption{Sampler’s performance measured by the KL divergence with 10 independent runs to obtain standard deviations.
%		``CoopNet $L=20,20$'' denotes the sampling method in \cite{xie2018cooperative} with $(L_x=20, L_h=20)$.
%		``LD'' or ``HMC'' means the Langevin Dynamics or Hamiltonian Monte Carlo sampling of the target distribution $p_\theta(x)q_\phi(h|x)$ with exact gradients.
%		``Our SGLD'' or ``Our SGHMC'' are our developed samplers with stochastic gradients (Algorithm \ref{alg:model-sampling}).
%		We fix the total iterations of $x$ and $h$ to be the same for each sampling method. Thus one iteration of ``CoopNet $L=20,20$'' would be regarded as 20 iterations of other methods in the figure.}
%	\label{fig:sampler}
%\end{figure}

\begin{algorithm}[!ht]
	\caption{Sampling in the augmented space defined by $p_\theta(x) q_\phi(h|x)$}
	\label{alg:model-sampling}
	\begin{algorithmic}
		\STATE 1. Conduct ancestral sampling from the auxiliary generator $q_\phi(x,h)$, i.e. first draw $h' \sim q(h')$, and then draw $x' \sim q_\phi(x'|h')$;
		\STATE 2. Starting from $(x',h') = z^{(0)}$, run finite steps of SGLD $(l=1,\cdots,L)$ to obtain $(x,h)=z^{(L)}$, which we call \emph{sample revision}, according to Eq. (\ref{eq:SGLD}). 
		\STATE In particular, the SGLD recursions are conducted as follows:
		%Starting from $h^{(l-1)}$, run finite steps (empirically ) of LD
		\begin{equation} \label{eq:SGLD-specific}
		\left\{
		\begin{split}
		x^{(l)} = x^{(l-1)}
		+ \delta_l \frac{\partial}{\partial x^{(l-1)}} &\left[ \log  p_\theta(x^{(l-1)}) +  \log q_\phi(h^{(l-1)},x^{(l-1)}) \right. \\
		&\left. -\log q_\phi(h^{(l-1)*},x^{(l-1)}) \right]
		+ \sqrt{2\delta_l} \eta_x^{(l)},\\
		h^{(l)} = h^{(l-1)}
		+ \delta_l \frac{\partial}{\partial h^{(l-1)}} &\log q_\phi(h^{(l-1)},x^{(l-1)})
		+ \sqrt{2\delta_l} \eta_h^{(l)},\\
		&\eta^{(l)}\triangleq (\eta_x^{(l)}, \eta_h^{(l)})^T \sim \mathcal{N}(0,I)
		\end{split}
		\right.
		\end{equation}
%		\STATE and the SGHMC recursions are conducted as follows:
%		\begin{displaymath} \label{eq:SGHMC-specific}
%		\left\{
%		\begin{split}
%		v_x^{(l)}=(1-\beta)v_x^{(l-1)}+\delta_l \frac{\partial}{\partial x^{(l-1)}} &\left[ \log  p_\theta(x^{(l-1)}) + \log q_\phi(h^{(l-1)},x^{(l-1)}) \right.\\
%		& \left. -\log q_\phi(h^{(l-1)*},x^{(l-1)}) \right]+ \sqrt{2\beta\delta_l} \eta_x^{(l)},\\
%		v_h^{(l)}=(1-\beta)v_h^{(l-1)}+\delta_l \frac{\partial}{\partial h^{(l-1)}} &\log q_\phi(h^{(l-1)},x^{(l-1)})
%		+ \sqrt{2\beta\delta_l} \eta_h^{(l)},\\
%		x^{(l)} = x^{(l-1)}+v_x^{(l)},  h^{(l)} = h^{(l-1)}&+ v_h^{(l)}, \\
%		&\eta^{(l)}\triangleq (\eta_x^{(l)}, \eta_h^{(l)})^T \sim \mathcal{N}(0,I)
%		\end{split}
%		\right.
%		\end{displaymath}
%		\STATE 
		where, for $l > 1$, $h^{(l-1)*}$, which is an approximate sample from $q_\phi(h|x^{(l-1)})$ given $x^{(l-1)}$, is obtained from running one step of LD as follows, starting from $h^{(l-1)}$:
		\begin{equation} \label{eq:h-LD}
		\begin{split}
		h^{(l-1)*}=h^{(l-1)}+ \delta_{l}^{*} \frac{\partial}{\partial h^{(l-1)}} &\log q_\phi(h^{(l-1)},x^{(l-1)})
		+ \sqrt{2\delta_{l}^{*}} \eta_{h}^{(l)*},\\ &\eta_h^{(l)*} \sim \mathcal{N}(0,I);
		\end{split}
		\end{equation}
		for $l=1$, we directly use $h^{(0)}$ as $h^{(0)*}$, since, by initialization, $h^{(0)}$ is an exact sample from $q_\phi(h|x^{(0)})$ given $x^{(0)}$.
		\STATE \textbf{Return} $(x,h)$, i.e. $z^{(L)}$.
	\end{algorithmic}
\end{algorithm}

\subsection{Non-MLE methods for learning EBMs}

Maximum likelihood estimation (MLE) has been the most widely used objective for learning probabilistic models.
When training an EBM with MLE, we need to sample from the EBM per training iteration. 
The training efficiency of the MLE method highly depends on the mixing efficiency of the Markov chain. In high-dimensional problems, it is very challenging to design Markov chains with fast mixing rates. 
Thus, MLE training of EBMs may converge slowly.
As alternatives to MLE training, non-MLE methods for learning EBMs (as a kind of unnormalized models) have been explored, such as \emph{noise-contrastive estimation} \citep{nce,gutmann2012NCE} and \emph{score matching} \citep{hyvarinen2005SM}.

Another motivation to pursue non-MLE methods is that the optimization criterion used has profound effect on the behavior of the optimized model \citep{theis2016a}. Maximizing likelihood is equivalent to minimizing the KL divergence between $p_\text{ora}(x)$ and $p_\theta(x)$, because
\begin{align}
KL[ p_{\text{ora}}(x) || p_\theta(x)] &=
-E_{x \sim p_{\text{ora}}(x)} [ \log p_\theta(x) ]+E_{x \sim p_{\text{ora}}(x)} [ \log p_{\text{ora}}(x) ] \nonumber \\
&= -E_{x \sim p_{\text{ora}}(x)} [ \log p_\theta(x) ] + \text{constant} \nonumber \\
&\approx -E_{x \sim p_{\text{emp}}(x)} [ \log p_\theta(x) ] + \text{constant} \label{eq:KL_ML}\\
&= - L(\theta) + \text{constant} \nonumber
\end{align}
where $p_\text{ora}(x)$ denote the the underlying (oracle) data distribution, and $L(\theta)$ is the log-likelihood in \eqref{eq:ebm_loglik}. \eqref{eq:KL_ML} is an unbiased Monte Carlo integration since the expectation under $p_\text{ora}(x)$ is approximated by empirical samples $\left\lbrace x_i\right\rbrace_{i=1}^N$.

It is known that the KL divergence is asymmetric, and  optimizing which direction of the KL divergence leads to different trade-offs. The KL approximation covers the data distribution while reverse-KL has more of a mode-seeking behavior \citep{Minka2005DivergenceMA}. 
The basic \emph{score matching} (SM)\index{Score matching (SM)} objective minimizes a discrepancy between two distributions called the Fisher divergence:
\begin{displaymath}
D_F( p_{\text{ora}}(x) || p_\theta(x) ) =  E_{x \sim p_{\text{ora}}(x)} \left[ \frac{1}{2}  || \nabla_x \log p_{\text{ora}}(x) - \nabla_x \log p_\theta(x) ||^2 \right]   
\end{displaymath}

Learning under most criteria is provably consistent given infinite model capacity and data.
Most of the methods are firstly examined over continuous data, with few on discrete data.
The score matching method is based on minimizing the expected squared distance of the \emph{score function}\footnote{The gradient of log-density with respect to the data vector $x$ is called the score function.}\index{Score function} of the data distribution and the score function given by the model, and thus is not applicable to training EBMs over discrete data such as natural languages. In contrast, the noise-contrastive estimation method has no such limitation, which will be detailed below.

\section{Learning EBMs by noise-contrastive estimation (NCE)}
\label{sec:NCE}
\index{Noise-contrastive estimation (NCE)}

Noise-contrastive estimation (NCE) is proposed in \citep{nce,gutmann2012NCE}, as a typical non-MLE method, for learning unnormalized statistical models.
Its basic idea is ``learning by comparison'', i.e. to perform nonlinear logistic regression to discriminate between \emph{data samples} drawn from the data distribution $p_{\text{ora}}(\bx)$ and \emph{noise samples} drawn from a known noise distribution $q(\bx)$.
An advantage of NCE is that the normalizing constant can be treated as a normal parameter and updated together with the model parameters.

Denote the target unnormalized model by:
\begin{equation} \label{eq:NCE_p}
 p_\theta(\bx) = \frac{1}{Z_\theta} \tilde{p}_\theta(\bx)   
\end{equation}
where we highlight that the model, parameterized by $\theta$, is unnormalized, and the normalizing constant $Z_\theta = \int \tilde{p}_\theta(\bx) d \bx$.
To apply NCE, we treat $\log Z_\theta$ as an additional parameter $\zeta$ and rewrite \eqref{eq:NCE_p} in the following form:
\begin{equation}\label{eq:trf-nce}
p_{\theta,\zeta}(\bx) = \exp\left[  -\zeta + \log \tilde{p}_\theta(\bx)  \right] 
\end{equation}
% Now there are three distributions involved in NCE -- the oracle but unknown data distribution denoted by $p_\text{ora}(\bx)$, the model distribution $p_{\theta,\eta}(\bx)$ in \eqref{eq:trf-nce} and a fixed, known noise distribution denoted by $q(x)$. 
As shown below, model parameters $(\theta, \zeta)$ will be jointly estimated in NCE.

Introduce a fixed, known noise distribution denoted by $q(x)$, and consider a binary classification.
There are two classes of samples, with prior probabilities $P(C=+)$ and $P(C=-)$. For class $+$, the sample is drawn from the data distribution
$p_\text{ora}$; for class $-$, the sample is drawn from the noise distribution $q$. This defines a generation process of samples in NCE.

Given a sample $\bx$ from such a generation process, the class posterior probabilities are defined as follows:
\begin{align*}
 p(C=+|\bx) &= \frac{p(C=+)p(\bx|C=+)}{p(C=+)p(\bx|C=+)+p(C=-)p(\bx|C=-)}\\
 p(C=-|\bx) &= 1-p(C=+|\bx)
\end{align*}
By our design of the binary classification experiment, $p(\bx|C=-)$ is the noise distribution $q(\bx)$. The (unknown) class-conditional density for class $+$ is assumed to be modeled by the target model $p_{\theta,\zeta}(\bx)$.
Let the ratio between the prior probabilities $\frac{p(C=-)}{p(C=+)}$ be $\nu$ (i.e., the ratio of noise sample size to real sample size).
Then the posterior probabilities can be parameterized as follows:
\begin{equation}\label{eq:NCE_post}
\begin{split}
p(C=+|\bx; \theta,\zeta) &= \frac{p_{\theta,\zeta}(\bx)}{p_{\theta,\zeta}(\bx) + \nu q(x)} \\
p(C=-|x; \theta,\zeta) &= 1 - p(C=+|x; \theta,\zeta)    
\end{split}    
\end{equation}

NCE estimates the model parameters $(\theta, \zeta)$ by optimizing the two-class classification through maximizing the following conditional log-likelihood:
\begin{equation} \label{eq:NCE-obj}
J_\text{NCE}(\theta,\zeta) = E_{\bx \sim p_\text{ora}(\bx)} \left[ \log p(C=+|\bx;\theta,\zeta)\right]  + 
\nu E_{\bx \sim q(\bx)} \left[  \log p(C=-|\bx;\theta,\zeta) \right] 
\end{equation}
The objective function $J(\theta,\zeta)$ is a sum of two expectations.
The first is the expectation w.r.t. the data distribution $p_\text{ora}(\bx)$, which can be approximated by randomly drawing samples from training data, namely approximating $p_\text{ora}(\bx)$ by $p_\text{emp}(\bx)$.
The second is the expectation w.r.t. the noise distribution $q(\bx)$, which can be approximated by drawing samples from the noise distribution.

\begin{algorithm}[t!]
	\caption{NCE for fitting an unnormalized model}
	\label{alg:NCE}
	\begin{algorithmic}
		\REPEAT
		\STATE \underline{Sampling:}
		Draw an empirical minibatch $\mathcal{D} \sim p_\text{ora}(x)$ and a noise minibatch $\mathcal{B} \sim q(x)$, satisfying $\nu = |\mathcal{B}|/|\mathcal{D}|$;
		\STATE \underline{Updating:}
		Update $(\theta,\zeta)$ by ascending:
  \begin{displaymath}
\frac{1}{|\mathcal{D}|} \sum_{\bx \sim \mathcal{D}}
		\nabla_{\theta, \zeta}\log p(C=+|\bx;\theta, \zeta)
+ \frac{\nu}{|\mathcal{B}|} \sum_{\bx \sim \mathcal{B}}
		\nabla_{\theta, \zeta}\log p(C=-|\bx;\theta, \zeta)      
  \end{displaymath}
		\UNTIL{convergence}
	\end{algorithmic}
\end{algorithm}

Setting to zeros the gradients of $J(\theta,\zeta)$ w.r.t. $(\theta,\zeta)$, we can apply the SA algorithm to find its root and thus solve the optimization problem in \eqref{eq:NCE-obj}. The pseudocode of NCE is shown in \algref{alg:NCE}.
The relevant gradients can be further simplified as follows:
\begin{equation} \label{eq:NCE-grad}
\begin{split}
\nabla_{\theta, \zeta}\log p(C=+|\bx;\theta, \zeta) = 
p(C=-|\bx;\theta, \zeta) \nabla_{\theta, \zeta} \log p_{\theta,\zeta}(\bx) \\
\nabla_{\theta, \zeta}\log p(C=-|\bx;\theta, \zeta) = 
- p(C=+|\bx;\theta, \zeta) \nabla_{\theta, \zeta} \log p_{\theta,\zeta}(\bx)
\end{split}
\end{equation}

It is shown in \citep{gutmann2012NCE} that under the ideal situation of infinite amount of data and infinite model capacity, we have the following theorem (nonparametric estimation).
It is further shown \citep{gutmann2012NCE} that the NCE estimator is consistent.
% In Appendix \ref{sec:appendix-NCE}, we give an connection between NCE learning and variational estimation of $f$-divergence, which also shows that the data density $p_0(x)$ can be found by maximization of $\mathcal{J}(\hat\theta)$.

\begin{theorem}[Nonparametric estimation]
	\label{theorem:NCE}
$J_\text{NCE}(\theta,\zeta)$ attains its maximum at $p_{\theta,\zeta}(\bx) = p_\text{ora}(\bx)$.
There are no other extrema if the noise density $q(x)$ is chosen such that it is nonzero whenever $p_\text{ora}(x)$ is nonzero.
\end{theorem}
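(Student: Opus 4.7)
The plan is to recast $J_\text{NCE}(\theta,\zeta)$ as the expected log-likelihood of a two-class classification problem and then invoke the strict form of Gibbs' inequality. Concretely, I would introduce the auxiliary joint distribution on $(C,\bx)$ defined by $P(C=+) = 1/(1+\nu)$, $P(C=-)=\nu/(1+\nu)$, $p(\bx|C=+)=p_\text{ora}(\bx)$, $p(\bx|C=-)=q(\bx)$. A direct rearrangement of \eqref{eq:NCE-obj} then shows $J_\text{NCE}(\theta,\zeta) = (1+\nu)\, E_{(C,\bx)}\!\left[\log p(C|\bx;\theta,\zeta)\right]$, so the NCE objective is (up to a constant factor) the expected conditional log-likelihood of the true class given $\bx$ under the model's posterior \eqref{eq:NCE_post}.

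Next, I would work at the level of densities and write $f(\bx)\triangleq p_{\theta,\zeta}(\bx)$, treating the nonparametric setting as optimization over all positive functions $f$. Setting $\mu(\bx)=p_\text{ora}(\bx)+\nu q(\bx)$, $\pi(\bx)=p_\text{ora}(\bx)/\mu(\bx)$ (the true posterior of class $+$ given $\bx$), and $\hat\pi(\bx)=f(\bx)/(f(\bx)+\nu q(\bx))$ (the model's posterior), the objective becomes
$$ J_\text{NCE}(f) = \int \mu(\bx)\bigl[\pi(\bx)\log\hat\pi(\bx) + (1-\pi(\bx))\log(1-\hat\pi(\bx))\bigr]\, d\bx. $$
For each fixed $\bx$, the integrand is the (negative) cross-entropy between the Bernoulli distributions with parameters $\pi(\bx)$ and $\hat\pi(\bx)$; by strict concavity of $t\mapsto \pi\log t+(1-\pi)\log(1-t)$ on $(0,1)$ and Gibbs' inequality, it is uniquely maximized at $\hat\pi(\bx)=\pi(\bx)$. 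Hence the global optimum is characterized by $\hat\pi=\pi$ $\mu$-almost everywhere.

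From $f(\bx)/(f(\bx)+\nu q(\bx)) = p_\text{ora}(\bx)/(p_\text{ora}(\bx)+\nu q(\bx))$, a one-line cross-multiplication and cancellation gives $\nu q(\bx)\,f(\bx) = \nu q(\bx)\, p_\text{ora}(\bx)$, so $f(\bx)=p_\text{ora}(\bx)$ wherever $q(\bx)>0$. The support hypothesis on $q$ then guarantees this covers the entire support of $p_\text{ora}$, yielding $p_{\theta,\zeta}(\bx)=p_\text{ora}(\bx)$ almost everywhere and establishing both attainment and uniqueness of the maximum. As a bonus, because $p_\text{ora}$ integrates to $1$, the parameter $\zeta$ at the optimum must equal $\log Z_\theta$, i.e.\ NCE automatically recovers the correct normalization.

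The main obstacle, in my view, lies not in the pointwise analysis but in being careful about where the equality $\hat\pi=\pi$ holds: the natural ``almost everywhere'' is with respect to $\mu$, and translating this into an equality of $f$ and $p_\text{ora}$ on the support of $p_\text{ora}$ is exactly where the support assumption $\{p_\text{ora}>0\}\subseteq\{q>0\}$ is used. Without this assumption one could modify $f$ arbitrarily on $\{q=0\}\cap\{p_\text{ora}>0\}$ without changing the posterior $\hat\pi$, so additional extrema (indeed, flat directions) would appear. Thus the assumption is not merely technical but is precisely what rules out spurious extrema.
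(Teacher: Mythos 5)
Your proof is correct. The monograph itself gives no proof of Theorem~\ref{theorem:NCE} but defers to \citep{gutmann2012NCE}, and your argument is essentially the standard one given there: recast $J_\text{NCE}$ as the (scaled) expected conditional log-likelihood of the class label, maximize the Bernoulli cross-entropy pointwise in the posterior $\hat\pi(x)$, and use the support condition on $q$ to convert $\hat\pi=\pi$ into $p_{\theta,\zeta}=p_\text{ora}$; your closing remark that $\zeta=\log Z_\theta$ at the optimum likewise matches the self-normalization discussion around \eqref{eq:trf-nce} and in \secref{sec:cond_nce}.
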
	

The noise distribution $q$ and the ratio $\nu$ have an influence on the
accuracy of the NCE estimate of model parameters $(\theta, \zeta)$. A natural question to ask in applying NCE is what, from a statistical standpoint, the best choice of $q$ and $\nu$ is, to get estimates with a small estimation error.
This question is discussed in the original paper of NCE \citep{gutmann2012NCE}, which give the following suggestions:
\begin{enumerate}
    \item Choose noise for which an analytical expression for $\log q$ is available.
    \item  Choose noise that can be sampled easily.
    \item Choose noise that is in some aspect, for example with respect to its covariance structure, similar to the data.
    \item Make the noise sample size as large as computationally possible.    
\end{enumerate}

\subsection{Dynamic noise-contrastive estimation (DNCE)}
\label{sce:DNCE}
\index{Dynamic noise-contrastive estimation (DNCE)}

Remarkably, there exist two problems in applying NCE learning.
First, reliable NCE needs a large $\nu$, especially when the noise distribution is not close to the data distribution.
And the time and memory cost for gradient calculation are almost linearly increased with $\nu$.
Second, the expectation w.r.t. the data distribution $p_\text{ora}$ in \eqref{eq:NCE-obj} is approximated by the expectation w.r.t. the empirical distribution $p_\text{emp}$ (namely the training data), which is rather sparse for high-dimensionality data modeling.
The model estimated by NCE is thus easily overfitted to the empirical distribution.
\emph{Dynamic noise-contrastive estimation} (DNCE) was proposed in \citep{wang2018improved} to address the above problems, with two modifications.

First, instead of using a fixed noise distribution, a dynamic noise distribution $q_\phi(\bx)$ with parameter $\phi$ is introduced in DNCE.
In addition to maximizing w.r.t. $(\theta, \zeta)$ the NCE objective function $J_\text{NCE}(\theta,\zeta)$, DNCE simultaneously performs maximum likelihood optimization of $\phi$ over training data:
\begin{displaymath}
\max_{\phi} E_{\bx \sim p_\text{ora}(\bx)} \left[ \log q_\phi(\bx) \right]
\end{displaymath}
The motivation is to push the noise distribution to be close to the data distribution, so that we can achieve reliable model estimation even using a small $\nu$.
Theoretically, one can optimize the noise distribution beforehand and then use a fixed noise density in NCE. 
It is found that dynamic noise distribution helps optimization, by gradually increasing the difficulty of the two-class discrimination task \citep{wang2018improved}.
If the noise distribution $q_\phi$ is too different from the data distribution $p_\text{ora}$, the two-class discrimination problem might be too easy and would not require the system to learn much about the structure of the data. But if $q_\phi$ is too close to $p_\text{ori}$ from the beginning, the discrimination problem might be too difficult to proceed.
 
Second, instead of using the standard NCE objective function $J_\text{NCE}(\theta,\zeta)$ in \eqref{eq:NCE-obj}, a modified objective function is proposed as follows\footnote{In $J_\text{DNCE}(\theta,\zeta)$ and $p_\text{int}(\bx)$, we suppress their dependency on $\phi$, since, as will be shown later, the optimization of $J_\text{DNCE}(\theta,\zeta)$ is taken  only over $(\theta, \zeta)$ while fixing $\phi$.}:
\begin{equation} \label{eq:DNCE-obj}
J_\text{DNCE}(\theta,\zeta) = E_{\bx \sim p_\text{int}(\bx) } \left[ \log p(C=+|\bx;\theta,\zeta)\right]
+ \nu E_{\bx \sim q_\phi(\bx)} \left[  \log p(C=-|\bx;\theta,\zeta) \right]
\end{equation}
where
\[p_\text{int}(\bx) = \alpha p_\text{ora}(\bx) + (1-\alpha) q_\phi(\bx)\] 
denotes an interpolation of the data distribution and the noise distribution, and $0 < \alpha < 1$ is the interpolating factor.
$p(C=+|\bx;\theta,\zeta)$ and $p(C=-|\bx;\theta,\zeta)$ are defined the same as in \eqref{eq:NCE_post}, except that the fixed noise distribution $q(\bx)$ is replaced by the dynamic noise distribution $q_\phi(\bx)$.
Intuitively, as the noise distribution $q_\phi$ converges to the data distribution, using the interpolated distribution $p_\text{int}$ will increase the number of data-like samples by adding samples drawn from the noise distribution. This could avoid the model to be overfitted to the sparse training set.

Putting the two modifications together, DNCE conducts the following joint optimization,
\begin{displaymath}
\left\{
\begin{split}
& \max_{\theta,\zeta} J_\text{DNCE}(\theta,\zeta) \\
& \max_{\phi} E_{\bx \sim p_\text{ora}(\bx)} \left[ \log q_\phi(\bx) \right] \\
\end{split}
\right.
\end{displaymath}
which can be solved by applying minibatch-based stochastic gradient ascent.

At each iteration, a set of data samples, denoted by $\mathcal D$, is sampled from $p_\text{ora}$, with the number of samples in $\mathcal{D}$ denoted as $|\mathcal{D}|$.
Additionally, two sets of noise samples are drawn from the noise distribution $q_\phi$, denoted by $\mathcal{B}_1$ and $\mathcal{B}_2$,
whose sizes satisfy $|\mathcal{B}_1| = \frac{1-\alpha}{\alpha} |\mathcal{D}|$ and $|\mathcal{B}_2| = \frac{\nu}{\alpha} |\mathcal{D}|$ respectively.
As a result, the union of $\mathcal{D}$ and $\mathcal{B}_1$ can be viewed as samples drawn from the interpolated distribution $p_\text{int}$, with $|\mathcal{D} \cup \mathcal{B}_{1}| = \frac{|\mathcal{D}|}{\alpha}$.
Model parameters $(\theta, \zeta)$ are updated by ascending the following stochastic gradients:
\begin{displaymath}
\begin{split}
 &\frac{\alpha}{|\mathcal{D}|}\sum_{\bx \in \mathcal{D} \cup \mathcal{B}_{1}}
p(C=-|\bx;\theta, \zeta) \nabla_{\theta, \zeta} \log p_{\theta,\zeta}(\bx) \\
&-\frac{\alpha}{|\mathcal{D}|}\sum_{\bx \in \mathcal{B}_{2}}
p(C=+|\bx;\theta, \zeta) \nabla_{\theta, \zeta} \log p_{\theta,\zeta}(\bx)
\end{split}
\end{displaymath}
Noise model parameter $\phi$ are updated by ascending the following stochastic gradient:
\begin{displaymath}
\frac{1}{|\mathcal{D}|}\sum_{\bx\in \mathcal{D}}
\nabla_{\phi} \log q_{\phi}(\bx)
\end{displaymath}

A final remark is that the theoretical consistency of DNCE learning in the nonparametric limit can be shown by the following theorem.

\begin{theorem}
Suppose that an arbitrarily large number of data samples can be drawn from $p_\text{ora}$, and the model distribution $p_{\theta}(\bx)$ and the noise distribution $q_\phi(\bx)$ have infinite capacity.
Then we have 
\begin{enumerate}[(i)]
    \item The KL divergence $KL(p_\text{ora}||q_\phi)$ can be minimized to attain zero;
    \item If $KL(p_\text{ori}||q_\phi)$ attains zero at $\phi^*$, and the conditional log-likelihood \eqref{eq:DNCE-obj} attains a maximum at $(\theta^*,\zeta^*)$, then we have 
    \[p_{\theta^*}(\bx) = q_{\phi^*}(\bx) = p_\text{ora}(\bx)\]
\end{enumerate}
\end{theorem}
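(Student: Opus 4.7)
The plan is to establish the two parts in sequence, with part (ii) reducing cleanly to the nonparametric NCE result already stated as Theorem \ref{theorem:NCE}.

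First I would prove (i) by observing that the auxiliary optimization $\max_\phi E_{\bx \sim p_\text{ora}}[\log q_\phi(\bx)]$ is precisely maximum likelihood for $q_\phi$, and rewriting
\begin{displaymath}
E_{\bx \sim p_\text{ora}}[\log q_\phi(\bx)] = - KL(p_\text{ora}\,\|\,q_\phi) + H(p_\text{ora}),
\end{displaymath}
where $H(p_\text{ora})$ does not depend on $\phi$. Under the infinite-capacity assumption on $q_\phi$, together with access to arbitrarily many samples from $p_\text{ora}$, there exists $\phi^*$ such that $q_{\phi^*}(\bx) = p_\text{ora}(\bx)$ pointwise, at which $KL(p_\text{ora}\,\|\,q_{\phi^*}) = 0$, which is the minimum of any KL divergence.

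Next I would prove (ii) by exploiting the fact that at such $\phi^*$ the interpolated distribution collapses:
\begin{displaymath}
p_\text{int}(\bx) = \alpha\, p_\text{ora}(\bx) + (1-\alpha)\, q_{\phi^*}(\bx) = p_\text{ora}(\bx).
\end{displaymath}
Substituting $p_\text{int} = p_\text{ora}$ and $q_\phi = p_\text{ora}$ into \eqref{eq:DNCE-obj}, the DNCE objective becomes structurally identical to the standard NCE objective $J_\text{NCE}(\theta,\zeta)$ with fixed noise distribution $q = p_\text{ora}$. Since $p_\text{ora}$ trivially has the same support as itself, the support hypothesis of Theorem \ref{theorem:NCE} is satisfied, and the infinite-capacity assumption on $p_\theta$ covers the nonparametric regime. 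Applying Theorem \ref{theorem:NCE}, any maximizer $(\theta^*,\zeta^*)$ of the resulting objective satisfies $p_{\theta^*,\zeta^*}(\bx) = p_\text{ora}(\bx)$. Combining this with $q_{\phi^*}(\bx) = p_\text{ora}(\bx)$ yields the desired triple equality.

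The main obstacle, and really the only step that deserves scrutiny, is the boundary case in which the two classes of the NCE binary classification become statistically indistinguishable because the noise distribution coincides with the data distribution. One might worry that this collapses the identifiability argument underlying NCE, since in that case the Bayes-optimal posterior reduces to the constant $1/(1+\nu)$. However, Theorem \ref{theorem:NCE} is formulated with only a mild support condition on $q$, and its proof performs functional optimization over $p_{\theta,\zeta}$ rather than relying on genuine separation between the two classes; so its conclusion carries over unchanged to the coincident case. Verifying this is the place where one should double-check the hypotheses before invoking the earlier theorem.
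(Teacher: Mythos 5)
Your proof is correct and follows essentially the same route as the paper's: part (i) via the MLE/KL decomposition together with the infinite capacity of $q_\phi$, and part (ii) by using $q_{\phi^*}=p_\text{ora}$ to collapse $p_\text{int}$ to $p_\text{ora}$ and then invoking Theorem \ref{theorem:NCE} to identify the maximizer of \eqref{eq:DNCE-obj} (the paper states this as $p_{\theta^*}=p_\text{int}|_{\phi=\phi^*}$ before substituting, which is the same reduction). Your extra check of the degenerate case where the noise coincides with the data distribution is a point the paper passes over silently, and it does hold: the nonparametric NCE argument optimizes the unnormalized model pointwise and forces $p_{\theta^*,\zeta^*}(x)=p_\text{int}(x)$ wherever the noise density is positive, even though the Bayes-optimal posterior is then the constant $1/(1+\nu)$.
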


\begin{proof}
\begin{enumerate}[(i)]
\item This conclusion can be easily seen by consistencey of MLE, since minimizing $KL(p_\text{ora}||q_\phi)$ is equivalent to MLE of $q_\phi$.
\item 
From $KL(p_\text{ori}||q_{\phi^*})=0$, we have 
$q_{\phi^*} = p_\text{ora}$.

By \thref{theorem:NCE}, with fixed $\phi^*$, \eqref{eq:DNCE-obj} has the only extremum at
$p_{\theta^*}(\bx) = p_\text{int}(\bx)|_{\phi=\phi^*}
= \alpha p_\text{ora}(\bx) + (1-\alpha) q_{\phi^*}(\bx)$.
\end{enumerate}
The conclusion is clear from combining the above two equations.
\end{proof}

\section{Generation from EBMs}
\label{sec:gen}

Given an EBM, an important inference task is sampling from the model, i.e., drawing or generating samples from the model. Sampling is not only a critical step in maximum likelihood learning of EBMs (as we introduce in \secref{sec:MLE}), but also itself forms as an important class of applications. 
Generating text, images, speech, or other media has received increasing interests, and recently has been collectively referred to as generative AI\footnote{\url{https://en.wikipedia.org/wiki/Generative_artificial_intelligence}}\index{Generative AI}.

Transformer-based \citep{vaswani2017attention} autoregressive language models (ALMs)\index{Autoregressive language model (ALM)}, generating text sequentially from left to right, have been the dominant approach for text generation \citep{radford2018improving}.
Key to their success is local normalization,
i.e. they are defined in terms of a product of conditional distributions, one for each token in the sequence. 
These models can be trained efficiently via maximum likelihood teacher-forcing, and sampling from ALMs is straightforward by ancestral sampling \index{Ancestral sampling}.
Unfortunately, local normalization also brings some drawbacks for these \emph{locally-normalized sequence models}\index{Locally-normalized sequence model}, when compared to \emph{globally-normalized sequence models}\index{Locally-normalized sequence model} (namely EBM based sequence models)\footnote{
There are some studies, which do not involve EBM modeling, but use a masked language modeling objective to train the model.
This approach, referred to as \emph{non-autoregressive generation}\index{Non-autoregressive generation}, performs iterative decoding, i.e., generating non-autoregressively, then masking out and regenerating, and so cycles for a number of iterations \citep{ghazvininejad2019mask}.
}.
As will be detailed in \secref{sec:bias} and \secref{sec:residual_ebm}, the drawbacks include:
\begin{itemize}
\item Discrepancy between training and inference
(related to \emph{exposure bias});
\item Limitation in long-range coherency due to only left-to-right modeling and decoding (related to \emph{label bias});
\item Inflexibility in controlled generation (e.g., satisfying hard lexical constraints and/or soft topical constraints in text generation).
\end{itemize}
There are similar concerns for generating other sequence media (e.g., speech).
There have been studies for speech synthesis by EBMs \citep{sun2023energy} and also by some non-autoregressive models, such as FastSpeech 2 \citep{ren2020fastspeech} and diffusion models \citep{popov2021grad}.

Remarkably, given a learned EBM, some applications need likelihood evaluation (e.g., in language modeling for speech recognition), while other tasks require generation from the learned generative model (e.g., in many NLP tasks such as summarization, dialog, and machine translation).
Generation from generative models basically is sampling from them. 
In practice, generating from locally-normalized sequence models, sometimes also referred to as decoding, can be readily realized by greedy decoding (beam search) or nucleus sampling \citep{holtzman2019curious} as engineering variants of ancestral sampling.
There is no easy methods for sampling from EBMs.
Basically we have to resort to Monte Carlo methods, such as MCMC and importance sampling (IS), which is usually not as computational efficient as ancestral sampling.
Perhaps this is the most challenging practical limitation of EBMs for their applications in generation, and the dominant approach to text generation is still based on large neural auto-regressive models.

\begin{table}[t]
	\caption{A survey of different sampling methods used in generating text from EBMs. 
	The target model is the EBM, while a proposal is required for both MCMC and IS.	
		Different proposals are used in different applications.
Shorthands: ALMs (autoregressive language model), MLM (masked language model), SNIS (self-normalized importance sampling), ASR (automatic speech recognition), CTG (controlled text generation), CTGAP (conditional text generation after prefix).}
	\label{tab:gen_survey}
	\centering
	\begin{xltabular}{\textwidth}{l|c|X}
	\hline
	Sampling method & Proposal &Application \\
	\hline\hline
	\multirow{2}{7em}{MH within Gibbs sampling} 
	&\multirow{2}{6em}{Conditional of word class} &ASR \citep{wang2015trans,Wang2017LearningTR}\\
	\cline{2-3}
	&ALM &ASR \citep{wang2017language}\\
	\cline{2-3}
	&MLM &CTG \citep{miao2019cgmh,goyal2021exposing,mixandmatch} (see \secref{sec:sampling_mix_and_match})\\
	\hline
	SNIS &ALM &CTG \citep{parshakova2019global,khalifa2020distributional}; CTGAP \citep{deng2020residual} (see \secref{sec:generation_residual})\\
	\hline		
	Langevin dynamics &- &CTG \citep{qin2022cold}\\
	\hline	
\end{xltabular}
\end{table}

Theoretically, sampling from EBMs can be performed by the MCMC and importance sampling methods, which are introduced in \secref{sec:MCMC} and \secref{sec:IS} respectively.
Gradient-based MCMC methods (\secref{sec:LD_HMC}), such as Langevin dynamics, are good choices for sampling for continuous data (e.g., images), by using the gradient of the potential $\nabla_x U_\theta(x)$. 
However, since text is discrete, the gradient is not well-defined, making it non-trivial to apply gradient-based MCMC methods for sampling text from EBMs.

Various MCMC and IS methods have been explored in applications of EBMs for generating text.
In \tbref{tab:gen_survey}, we survey some recent studies, which provide concrete examples.
Remarkably, both MCMC and IS methods need proposal distributions, and the design of proposal distributions heavily depends on specific applications.
We comment on the particular proposal distributions used in different applications in \tbref{tab:gen_survey}. some further discussions are as follows.

\paragraph{MH within Gibbs sampling.}
Suppose we use Gibbs sampling to generate a sequence of $n$ tokens, $x=(x_1,\cdots,x_n)$, from an EBM distribution $p(x_1,\cdots,x_n)$.
Exact Gibbs sampling needs to calculate the conditional distribution $p(x_i|x_{\setminus i})$ of token $x_i$ for each position $i$, given all the other tokens $x_{\setminus i} \triangleq (x_1,\cdots,x_{i-1},x_{i+1},\cdots,x_n)$.
For modern EBMs developed for text, such as in \citep{wang2015trans,Wang2017LearningTR,wang2017language} and so on, this is computational expensive, because calculating $p(x_i|x_{\setminus i})$ needs to enumerate all possible values of $x_i$ from $\mathcal{V}$ and to compute the joint probability $p(x_i, x_{\setminus i})$ for each possible value, where $\mathcal{V}$ denotes the vocabulary.
Metropolis-Hastings (MH) within Gibbs sampling has been explored, with a proposal, to draw MCMC samples from $p(x_i|x_{\setminus i})$.

In \citep{wang2015trans,Wang2017LearningTR}, word classing is introduced to accelerate sampling, which means that each word is assigned to a single class.
Through applying MH within Gibbs sampling, we first sample the class by using a reduced model as the proposal, which includes only the features that depend on $x_i$ through its class\footnote{This is possible, since features used in \citep{wang2015trans,Wang2017LearningTR} are discrete features ($n$-gram features). (See introduction in \secref{sec:trf_discrete_feature})}, and then sample the word. This reduces the computational cost from $|\mathcal{V}|$ to $|\mathcal{C}|+|\mathcal{V}|/|\mathcal{C}|$ on average, where $|\mathcal{C}|$ denotes the number of classes.

The computation reduction in using word classing in EBMs with neural features (i.e., parameterized by neural networks) is not as significant as in EBMs with discrete features, because EBMs parameterized by neural networks involve a much larger context, which makes the sampling computation with the reduced model still expensive.
In later work in \citep{wang2017language}, an ALM (autoregressive language model) is introduced to propose for $p(x_i|x_{\setminus i})$, and in \citep{miao2019cgmh,goyal2021exposing,mixandmatch}, a MLM (masked language models) is used as the proposal.
The proposal model can be jointly trained with the EBM, as in \citep{wang2017language}, or pre-trained language models\index{Pre-trained language model (PLM)} can be directly used for the proposal \citep{miao2019cgmh,goyal2021exposing,mixandmatch}.

\paragraph{Self-normalized importance sampling (SNIS).}
The basics are introduced in \secref{sec:IS}. See \secref{sec:generation_residual} for details in applications.

\paragraph{Langevin dynamics.} 
The basics are introduced in \secref{sec:LD_HMC}. See \secref{sec:COLD} for details in applications.

\chapter{EBMs for sequential data with applications in language modeling}
\label{ch:lm}

% Most EBM models are applied to images, only recently, we develop rf LMs, which needs special treatment different from processing fix-dimensional data (e.g., images). 

% In previous studies, particularly in computer vision tasks, grid-like RFs are mostly used; higher-order RFs have been pursued but most are in fact conditional random fields (CRFs) \citep{wang2013markov}.
% Notably, CRFs can only be used for discriminative tasks, e.g. segmenting and labeling of natural language sentences or images, and usually have a reduced sample spaces of labels. Thus, the learning algorithms developed in CRFs are usually not applicable to unconditional RFs.

In this chapter, we are mainly concerned with learning the (\emph{marginal}) distribution of observation $x$ itself by EBMs.
Considering the sequential nature of speech and language, we show how to develop EBMs for sequential data, or more generally, for \emph{trans-dimensional} data.

EBMs are mostly developed in \emph{fixed-dimensional} settings, for example, in the modeling of fixed-size images.
Trans-dimensional setting means that the observations can be of different dimensions.
A familiar case is temporal modeling of sequential data, where each observation is a sequence of a random length.
% Can we build an EBM applicable to sequences of different lengths?
\emph{Language modeling} falls exactly in this trans-dimensional setting, where an observation sequence $x$ is a natural language sentence (i.e., a token sequence).

\section{Autoregressive language model (ALM)}
\label{sec:ALM}

\emph{Language modeling} involves determining the joint probability $p(x)$ of a sentence $x$, which can be denoted as a pair
$x =(l,x^l)$, where $l$ is the length and $x^l =(x_1,\dots,x_l)$ is a sequence of $l$ tokens.
Currently, the dominant approach to language modeling is the locally-normalized or conditional modeling, which decomposes the joint probability of $x^l$ into a product of conditional probabilities by using the chain rule,
\begin{equation}\label{eq:chain}
p(x_1,\dots,x_l) = \prod_{i=1}^l p(x_i|x_1, \dots, x_{i-1}).
\end{equation}
Language models (LMs)\index{Language model (LM)} in the form of \eqref{eq:chain} is known as \emph{autoregressive language models} (ALMs)\index{Autoregressive language model (ALM)}.
Remarkably, for an ALM to make the sum of the probabilities of all sequences equal to 1, it is necessary to place a special token $\langle \texttt{EOS} \rangle$ at the end of sentences and to include this in the product of \eqref{eq:chain} \citep{chen1999empirical}. Otherwise, the sum of the probabilities of all sequences of a given length is 1, and the sum of the probabilities of all sequences is then infinite. 
% And the joint probability of $x$ is modeled as $p(x)=p(x^l) p(\langle EOS \rangle |x_l)$, where $\langle EOS \rangle$ is a special token placed at the end of every sentence. Thus the distribution of the sentence length is implicitly modeled.

In early days before the deep learning era, the history of $x_i$, denoted as $(x_1,\cdots,x_{i-1})$, is often reduced to equivalence classes through a mapping $\phi(x_1,\cdots,x_{i-1})$ with the assumption
\begin{displaymath}\label{eq:CLM}
p(x_i|x_1,\cdots,x_{i-1}) \approx p(x_i|\phi(x_1,\cdots,x_{i-1})).
\end{displaymath}
% Language modeling in this conditional approach consists of finding suitable mappings $\phi(h_i)$ and effective methods to estimate $p(x_i | \phi(h_i))$.
A classic example is the traditional $n$-gram language models (LMs) with
\begin{displaymath}
\phi(x_1,\cdots,x_{i-1})=(x_{i-n+1},\dots,x_{i-1}),
\end{displaymath}
assuming that current token $x_i$ depends on history only through the previous $n-1$ tokens, i.e., the $(n-1)$-order Markov assumption.
Various smoothing techniques have been used for parameter estimation, and particularly, the modified Kneser-Ney (KN) smoothed $n$-gram LMs are still widely used because of its simplicity and good performance \citep{chen1999empirical}.

Recently, neural network LMs have begun to surpass the traditional $n$-gram LMs, and also follow the locally-normalized approach. The mapping $\phi(x_1,\cdots,x_{i-1})$ condenses the history into a hidden vector $h_i \in \mathbb{R}^D$ through a neural network (NN), which can be a feedforward NN \citep{schwenk2007continuous}, a recurrent NN \citep{mikolov2011,lstm2012}, or more recently, a Transformer NN \citep{vaswani2017attention}.
% Current token $x_i$ depends on the history through an hidden vector, recurrently updated as we show here / or calculated through self attention.
Specifically, in either manners, the neural network calculates the conditional probability at each position as follows:
\begin{equation} \label{eq:ALM}
p(x_i=k|x_1,\cdots,x_{i-1}) = \frac{\exp(z_k)}{\sum_{j=1}^{|\mathcal{V}|} \exp(z_j)}   
\end{equation}
which is in the form of a multi-class logistic regression, as introduced in \eqref{eq:softmax}.
The logits $z_k = w_k^T h_i + b_k, k=1,\cdots,K$ are calculated from a linear layer on top of hidden vector $h_i$.
$w_k \in \mathbb{R}^D, b_k \in \mathbb{R}$ denote the weight vector and bias of the linear layer, respectively. $\mathcal{V}$ denotes the vocabulary of all possible tokens.

\paragraph{Drawbacks of ALMs.}
Both the classic $n$-gram LMs and the recent neural network LMs are autoregressive language models, which are locally normalized.
Unfortunately, \emph{local normalization in ALMs brings some drawbacks}. As will be detailed in \secref{sec:bias}, ALMs are prone to exposure bias \citep{wiseman2016sequence,ranzato2016sequence} and label bias \citep{lafferty2001conditional,andor2016globally}.

%Second, for neural ALMs, the output layer involves a softmax calculation at each position, which is expensive, especially when the vocabulary size $|\mathbb{V}|$ is large. This computation complexity issue could be partly alleviated by using a small set of tokens, e.g., a bytepair encoding (BPE) vocabulary \citep{sennrich2015neural}.

\section{Energy-based language model (ELM)}
\label{sec:ELM}

\subsection{Globally-normalized ELM (GN-ELM)}
\label{sec:GN-ELM}

Energy-based language models (ELMs)\index{Energy-based language model (ELM)} parameterize an unnormalized distribution for natural sentences and are radically different from autoregressive language models.
Let $x$ be a natural sentence (i.e., a token sequence). An \emph{energy-based language model} (ELM) is defined as follows
\begin{equation}
\label{eq:elm}
    p_\theta(x)=\frac{\exp(U_\theta(x))}{Z(\theta)}
\end{equation}
where $U_\theta(x)$ denotes the potential function with parameter $\theta$, $Z(\theta)=\sum_x \exp(U_\theta(x))$ is the normalizing constant, and $p_\theta(x)$ is the probability of sentence $x$.
For reasons to be clear below (mainly to be differentiated from TRF-LM), the model in \eqref{eq:elm} is called \emph{globally-normalized ELM} (GN-ELM).\index{Globally-normalized ELM (GN-ELM)}

ELMs potentially address the drawbacks of ALMs introduced above, as they do not require any local normalization.
Early attempts on building energy-based language models are GN-ELMs and date back to \citep{rosenfeld2001whole}, which proposes \emph{whole-sentence maximum entropy} (WSME)\index{Whole-sentence maximum entropy (WSME)} language models\footnote{Due to the connection between log-linear model and maxent model as we introduced before in \secref{sec:ugm_loglinear}, this model is called WSME.}.
Specifically, a WSME model has the log-linear form
\begin{equation}\label{eq:WSME}
p(x;\lambda) = \frac{1}{Z(\lambda)} e^{\lambda^T f(x)}.
\end{equation}
Here $f(x)$ is a vector of features, which are computable functions of $x$ such as $n$-grams conventionally used,
$\lambda$ is the corresponding parameter vector, and $Z(\lambda)= \sum_x e^{\lambda^T f(x)}$ is the global normalizing constant.

There has been little work on WSME-LMs, mainly in \citep{rosenfeld2001whole,amaya2001improvement,ruokolainen2010using}.
Although the whole-sentence approach has the potential advantage of being able to flexibly integrate a richer set of features,
the empirical results of previous WSME-LMs are not satisfactory, almost the same as traditional $n$-gram LMs.
After incorporating lexical and syntactic information, a mere relative improvement of 1\% and 0.4\% respectively in perplexity and in WER (word error rate)\index{Word error rate (WER)} was reported for the resulting WSME-LM \citep{rosenfeld2001whole}. Subsequent studies of using WSME LMs with grammatical features, as in \citep{amaya2001improvement} and \citep{ruokolainen2010using}, reported perplexity improvement above 10\% but no WER improvement when using WSME LMs alone.

In recent years, there are encouraging progresses in both theories and applications of ELMs. A new class of ELMs, called trans-dimensional random fields (TRFs), have been developed, which are different from GN-ELMs and \emph{present the first strong empirical evidence supporting the power of using energy-based approach to language modeling} \citep{wang2015trans,Wang2017LearningTR}.
Applications of ELMs have covered computation of sentence likelihoods (up to a constant) for speech recognition \citep{wang2015trans,Wang2017LearningTR,wang2017language,BinICASSP2018,wang2018improved,gao2020integrating} (to be introduced in the next section), text generation \citep{deng2020residual} (to be covered in \secref{sec:residual_ebm}), language model pre-training \citep{clark2020pre} (to be covered in \secref{sec:electric}), calibrated natural language understanding \citep{he2021joint} (to be covered in \secref{sec:JRF_calibrate}), and so on.

\subsection{Trans-dimensional random field (TRF) LMs}
\label{sec:TRF_modeldef}
\index{Trans-dimensional random field (TRF)}

To describe trans-dimensional observations in general, a new energy-based probabilistic model, called the \emph{trans-dimensional random field} (TRF) model, has been proposed in \citep{wang2015trans,Wang2017LearningTR}, which explicitly mixes a collection of random fields in sample spaces of different dimensions.
% , and further extended in \citep{,wang2017language,BinICASSP2018,wang2018improved,gao2020integrating}
% mainly for calculating language model scores in automatic speech recognition (ASR).
\emph{A GN-ELM is globally-normalized over all sequences of all lengths}. In contrast, \emph{a TRF-LM is a collection of random fields, each normalized on subspaces of different lengths separately and weighted by empirical length probabilities}.

% To describe trans-dimensional observations, we propose a new probabilistic model, called the trans-dimensional random field (TRF) model, by explicitly mixing a collection of RFs in sample spaces of different dimensions.
% With this model formulation, we develop an effective training algorithm in the framework of stochastic approximation (SA) \cite{SA51,benveniste2012adaptive,chen2002stochastic},
% to jointly estimate the model parameters and normalizing constants.
% The training algorithm, which is called augmented SA (AugSA) algorithm, involves two steps, sampling observations and then updating estimates of model parameters and normalizing constants, at each iteration.
% For the sampling step, we also develop a powerful Markov chain Monte Carlo (MCMC) technique, called trans-dimensional mixture sampling (TransMS).
% Furthermore, we introduce several statistical and computational techniques, including two-stage configuration of learning rates, use of empirical variances to rescale SA updates, proper specification of dimension probabilities
% for mixture sampling, a modeling strategy to deal with rare high-dimensional observations (e.g. rare long sentences),
% use of class information and multiple CPUs to reduce the computational cost, regularization and stopping decision. All these techniques together enable successful training of our TRF models on large datasets.

Suppose that it is of interest to build random field models for multiple sets of observations of different dimensions, such as images of different sizes or sentences of different lengths.
Denote by $x^j$ an observation in a sample space $\mathcal{X}^j$ of dimension $j$, ranging from 1 to $m$.
The space of all observations is then the union $\mathcal{X} =\cup_{j=1}^m \mathcal{X}^j$. To emphasize the dimensionality, each observation in
$\mathcal{X}$ can be represented as a pair $x=(j,x^j)$, even though $j$ is identified as the dimension of $x^j$.
By abuse of notation, write $f(x) = f(x^j)$ for features of $x$.

For $j=1,\ldots,m$, assume that the observations $x^j$ are distributed from a random field in the form
\begin{displaymath} \label{eq:sub-model}
p_j(x^j; \lambda) = \frac{1}{Z_j(\lambda)} e^{U_\theta (x^j)},
\end{displaymath}
$U_{\theta}(x^j) : \mathcal{X}^j \rightarrow \mathbb{R}$ denotes the potential function which assigns a scalar value to each configuration of $x$ in $\mathcal{X}$ and \emph{can be very flexibly parameterized} through linear functions or nonlinear functions with neural networks of different architectures, to be explained in \secref{sec:TRF-energy}.
$\theta$ denotes the set of parameters, and
% where $f(x^j)=(f_1(x^j),f_2(x^j), \dots, f_d(x^j))^T$ is a feature vector, $\lambda=(\lambda_1,\lambda_2,\dots ,\lambda_d)^T$ is the corresponding parameter vector, and 
$Z_j(\theta)$ is the normalizing constant:
\begin{displaymath}\label{eq:zeta}
Z_j(\theta) = \sum_{x^j} e^{U_\theta (x^j)}, \quad \quad j=1,\dots,m.
\end{displaymath}
% For identifiability, assume that no linear combination of the features $f_i(x^j)$, $i=1,\ldots,d$, is a constant function of $x^j$.
Moreover, assume that dimension $j$ is associated with a probability $\pi_j$ for $j=1,\ldots,m$ with $\sum_{j=1}^m \pi_j=1$. Therefore, the pair $(j,x^j)$ is jointly distributed as
\begin{equation}\label{eq:trf-model}
p(j,x^j;\pi, \theta) = \pi_j\, p_j(x^j;\theta) = \frac{\pi_j}{Z_j(\theta)} e^{U_\theta (x^j)},
\end{equation}
where $\pi=(\pi_1,\ldots,\pi_m)^T$.

Here we actually define a mixture of random fields for joint modeling sentences of different dimensions (namely lengths).
There is a random field for each length.
By maximum likelihood, the mixture weights can be estimated to be the empirical length probabilities.

% wang2015trans,Wang2017LearningTR,wang2017language,BinICASSP2018,wang2018improved,gao2020integrating
\begin{table}[t!]
	\caption{The development of TRF-LMs.}
	\label{tab:TRF_history}
	\centering
	\begin{xltabular}{\textwidth}{l|X}
		\hline
		Work & Contribution \\
		\hline\hline
		\multirow{2}{8em}{\cite{wang2015trans,Wang2017LearningTR}} 
		& $\bullet~$Discrete features\\
		& $\bullet~$Augmented stochastic approximation (AugSA) for model training\\
		\hline
		\multirow{2}{8em}{\cite{wang2017language}} 
		& $\bullet~$Potential function as a deep CNN\\
		& $\bullet~$Model training by AugSA plus JSA (joint stochastic approximation)\\
		\hline
		\multirow{3}{8em}{\cite{BinICASSP2018}} 
		& $\bullet~$Potential function in the form of exponential tilting, revisited in residual EBMs \citep{deng2020residual}\\
		& $\bullet~$Use LSTM on top of CNN\\	
		& $\bullet~$NCE is introduced to train TRF-LMs\\
		\hline
		\multirow{2}{8em}{\cite{wang2018improved}} 
		& $\bullet~$Simplify the potential definition by using only bidirectional LSTM\\
		& $\bullet~$Propose Dynamic NCE for improved model training\\
		\hline	
		\cite{gao2020integrating} & $\bullet~$Mixed-feature TRFs, by integrating discrete and neural features\\
		\hline
		\cite{ExploringELM} & $\bullet~$Pre-trained language models (PLMs) are used as the backbones of energy functions, noise distributions in NCE and proposal distributions in Monte Carlo\\
		\hline
	\end{xltabular}
\end{table}

As outlined in Table \ref{tab:TRF_history}, there are a series of works in the development of TRF-LMs, each with its own contribution in different parameterizations of potential functions and model training methods. Before expanding introductions around Table \ref{tab:TRF_history}, let us first recognize the differences between GN-ELM and TRF-LM.

\subsection{Comparison between GN-ELM and TRF-LM}

The following comment on the connection and difference between GN-ELM and TRF-LM models is adapted from the comparison between WSME and TRF models in \citep{Wang2017LearningTR}.
Suppose that we add the dimension features
%\footnote{
%	The dimension feature of $x$ corresponding to dimension $j$ is a binary feature, equal to 1 if $x$ is of dimension $j$ and 0 otherwise.
%}
in the GN-ELM model \eqref{eq:elm} and obtain
\begin{equation} \label{eq:model-g-len}
p(j,x^j;\lambda,\nu) = \frac{1}{Z(\theta,\nu)} e^{{\nu}^T \delta(j) + U_\theta (x^j)},
\end{equation}
where $\delta(j) = (\delta_1(j), \cdots, \delta_m(j))^T$ denotes the dimension features such that $\delta_l(j) = 1(j=l)$.
% $f(x^j)$ denotes the ordinary features as used in both models \eqref{eq:elm} and \eqref{eq:model},
$\nu=(\nu_1,\ldots,\nu_m)^T$ denotes the corresponding parameter vector, and $Z(\theta,\nu)$ is the global normalizing constant
\begin{equation} \label{eq:global-Z}
Z(\theta,\nu)= \sum_{j=1}^{m} \sum_{x^j \in \mathcal{X}^j} e^{{\nu}^T \delta(j) + U_\theta (x^j)} \\
= \sum_{j=1}^m e^{\nu_j} Z_j (\theta).
\end{equation}

Similar to Proposition 1 in \citep{Wang2017LearningTR}, it can be seen that when both fitted by maximum likelihood estimation, model \eqref{eq:model-g-len} is equivalent to model \eqref{eq:trf-model} but with different parameterization.
The parameters in model \eqref{eq:model-g-len} are $(\theta,\nu)$, whereas the parameters in model \eqref{eq:trf-model} are $(\pi, \theta)$.
Therefore, an important distinction of TRF-LM from GN-ELM lies in the use of dimension features, which has significance consequences in both model definition and model learning.

First, it is clear that model \eqref{eq:trf-model} is a mixture of random fields on subspaces of different dimensions,
with mixture weights explicitly as free parameters. Hence model \eqref{eq:trf-model} will be called a \emph{trans-dimensional random field} (TRF).
Moreover, by maximum likelihood, the mixture weights can be estimated to be the empirical dimension probabilities. 

Second, it is instructive to point out that model \eqref{eq:elm} is essentially also a mixture of random fields, but the mixture weights implied are fixed to be proportional to the normalizing constants $Z_j(\theta)$:
\begin{equation} \label{eq:WSME2}
p(j, x^j; \theta) = \frac{Z_j(\theta)}{Z(\lambda)} \cdot \frac{1}{Z_j(\theta)} e^{ U_\theta (x^j) },
\end{equation}
where $Z(\theta)= \sum_x e^{ U_\theta (x) }=\sum_{j=1}^{m} Z_j(\theta)$.
Typically the unknown mixture weights in \eqref{eq:WSME2} may differ from
the empirical length probabilities and also from each other by orders of magnitudes, e.g. $10^{40}$ or more in the experiments of \citep{Wang2017LearningTR}.
As a result, it is very difficult to efficiently sample from model \eqref{eq:elm}, in addition to the fact that the length probabilities are poorly fitted for model \eqref{eq:elm}.
Setting mixture weights to the known, empirical length probabilities enables us to develop an effective learning algorithm, as introduced in \citep{Wang2017LearningTR}.
Basically, the empirical weights serve as a control device to improve sampling from multiple distributions \citep{liang2007stochastic,Tan2015}.

\section{ELMs for speech recognition}

As an important application, ELMs have been successfully used as a means for calculating sentence scores in automatic speech recognition (ASR).
The design of energy function and the optimization of parameters are central research questions in applying ELMs, which are introduced in the following two subsections respectively.

% Although WSME models have the potential benefits of being able to naturally express sentence-level phenomena and integrate features from a variety of knowledge sources, their performance results ever reported are not satisfactory \citep{rosenfeld2001whole,amaya2001improvement,ruokolainen2010using}.
% The learning algorithm used is Generalized Iterative Scaling (GIS) \citep{GIS} and the sampling methods for approximating model expectations are Gibbs sampling, independence Metropolis-Hasting sampling and importance sampling \citep{rosenfeld2001whole}.
% Simple applications of these methods are hardly able to work efficiently in a trans-dimensional setting especially with large-size state space\footnote{In LMs, the size of state space is the vocabulary size, usually above $10^4$.}, and hence the WSME models are in fact poorly fitted to the data. This is one of the reasons for the unsatisfactory results of previous WSME models.

\subsection{Architectures of energy functions}
\label{sec:TRF-energy}

One is generally free to choose the energy function in ELMs, as long as it assigns a scalar energy to every sentence, no matter for TRF-LMs or GN-ELMs.
A subtle difference between defining the energy function $-U_\theta(x)$ in \eqref{eq:elm} for GN-ELMs and defining $-U_\theta(x^j)$ in \eqref{eq:trf-model} for TRF-LMs is whether to include the special token $\langle \texttt{EOS} \rangle$ or not.
We do not need to include $\langle \texttt{EOS} \rangle$ at the end of $x^j$ in calculating $U_\theta(x^j)$ for TRF-LMs. For GN-ELMs, we often include $\langle \texttt{EOS} \rangle$ at the end of $x$ in calculating $U_\theta(x)$, hoping to help modeling of sentence lengths.
Except this difference, the architectures of energy functions for TRF-LMs can be readily used for GN-ELMs, and vise versa.

Broadly speaking, there are three types of energy functions.
\begin{itemize}
\item Early ELMs are log-linear models using \emph{discrete features}, including \cite{rosenfeld2001whole} (WSME-LM) and \cite{wang2015trans,Wang2017LearningTR} (TRF-LM). These ELMs could thus be referred to as discrete ELMs.
\item Later, based on CNN and LSTM networks, ELMs using neural network based energy functions (neural ELMs) have been developed \citep{wang2017language,BinICASSP2018,wang2018improved}, outperforming neural ALMs with similar model sizes.
	ELMs using neural network based energy functions could be regarded as using \emph{neural features}, by following the discussion in \secref{sec:ugm_loglinear}.
\item By integrating discrete and neural features, \emph{mixed-feature} TRFs have also been proposed \citep{gao2020integrating}, demonstrating the advantage of energy-based models in integrating discrete and neural features.	
\end{itemize}

Recently, based on Transformer networks \citep{vaswani2017attention} and large pre-trained lanugage models (PLMs) such as BERT \citep{bert} and GPT2 \citep{gpt2}, neural ELMs have been further advanced \citep{ExploringELM}.
Extensive experiments are conducted on two datasets, AISHELL-1 \citep{aishell1} and WenetSpeech \citep{wenet}. The results show that the best ELM achieves competitive results with the finetuned GPT2 and performs significantly better than the finetuned BERT. Further analysis show that the ELM obtains better confidence estimate performance than the finetuned GPT2.

In the following, we summarize the architectures used to define $U_\theta(x)$ in the literature, in roughly chronological order.
We first introduce the classic log-linear energy function using discrete features. Then, a suite of nonlinear energy functions are shown, which are called \texttt{Hidden2Scalar}, \texttt{SumInnerProduct}, \texttt{SumTargetLogit}, \texttt{SumMaskedLogit} and \texttt{SumTokenLogit}, respectively.

\emph{Notably, although one energy architecture is proposed in one context of either TRF-LMs or GN-ELMs, it can be applied in both TRF-LMs and GN-ELMs.}
Let $x=\{x_i\}_{i=1...|x|}$, where $x_i \in \{1,\cdots,V\}$ is the $i$-th token in $x$.
$|x|$ denotes the length of sentence $x$ in tokens.
$V$ denotes the size of token vocabulary.

\begin{figure}[t]
	\centering
	\includegraphics[scale=0.7]{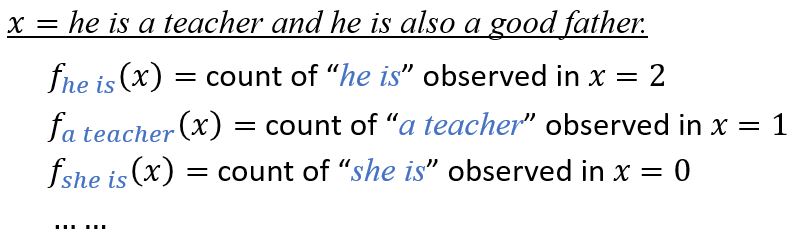}
	\caption{Example of discrete features.}
	\label{fig:discrete_feature_example}
\end{figure}

\subsubsection{Linear energy using discrete features}
\label{sec:trf_discrete_feature}
Linear energy functions using discrete features\index{Discrete feature} basically corresponds to log-linear models, as illustrated in Example \ref{eg:word_morph}, i.e., defining
\begin{equation} \label{eq:discrete_feature}
U_\lambda(x) = \lambda^T f(x)
\end{equation}
Here we follow the notations in \citep{Wang2017LearningTR}, 
where $f(x)=(f_1(x),f_2(x), \dots, f_d(x))^T$ is a \emph{feature vector}, $\lambda=(\lambda_1,\lambda_2,\dots ,\lambda_d)^T$ is the corresponding \emph{parameter vector} (instead of using $\theta$).

A feature $f_i(x)$, $i=1,\ldots,d$, can be any computable function of the input $x$.
For various applications such as language modeling, the parameters of local potential functions across locations are often \emph{tied} together \citep{inducingfeatures}.
Each feature $f_i(x)$ is often defined in the form
$f_i(x) = \sum_{k} f_{i}(x,k)$, where $f_{i}(x,k)$ is a binary function of $x$ evaluated at position $k$. In a \emph{trigram}\index{Trigram} example, $f_{i}(x,k)$ equals to 1 if three specific words appear at positions $k$ to $k+2$
and $k\le j-2$.
The binary features $f_{i}(x,k)$
share the same parameter $\lambda_i$ for different positions $k$ and dimensions $j$, so called  position-independent and dimension-independent.
Hence, the feature $f_i(x)$ indicates the count of nonzero $f_{i}(x,k)$ over $k$ in the observation $x^j$  and takes values as non-negative integers. Intuitively, $f_i(x)$ returns the count of a specific phrase (often called a $n$-gram feature) observed in the input sentence $x$, as shown in \figref{fig:discrete_feature_example}.

\newcommand{\wg}{w}
\newcommand{\cg}{c}
\newcommand{\wj}{ws}
\newcommand{\cj}{cs}
\newcommand{\ws}{wsh}
\newcommand{\cs}{csh}
\newcommand{\cpw}{cpw}
\newcommand{\cpwj}{cpw}
\newcommand{\tied}{tied}

\begin{table}
	\caption{Feature definition in TRF LMs \citep{Wang2017LearningTR}} 
	\label{tab:feature}
	\centering
%	\small
	\begin{tabular}{c|l}
		\hline
		Type    &   Features \\ \hline \hline
		\wg       &   $(w_{-3}w_{-2}w_{-1}w_0)$$(w_{-2}w_{-1}w_0)$$(w_{-1}w_0)$$(w_0)$ \\ \hline
		\cg       &   $(c_{-3}c_{-2}c_{-1}c_0)$$(c_{-2}c_{-1}c_0)$$(c_{-1}c_0)$$(c_0)$ \\ \hline
		\wj      &   $(w_{-3}w_0)$$(w_{-3}w_{-2}w_0)$$(w_{-3}w_{-1}w_0)$$(w_{-2}w_0)$ \\ \hline
		\cj      &   $(c_{-3}c_0)$$(c_{-3}c_{-2}c_0)$$(c_{-3}c_{-1}c_0)$$(c_{-2}c_0)$ \\ \hline
		\ws      &   $(w_{-4}w_0)$ $(w_{-5}w_0)$  \\  \hline
		\cs      &   $(c_{-4}c_0)$ $(c_{-5}c_0)$  \\  \hline
		\cpw     &   $(c_{-3}c_{-2}c_{-1}w_0)$ $(c_{-2}c_{-1}w_0)$$(c_{-1}w_0)$ \\ \hline
		\tied    &   $(c_{-9:-6},c_0)$ $(w_{-9:-6},w_0)$ \\
		%cpwj    &   $(c_{-3}w_0)$$(c_{-3}c_{-2}w_0)$ $(c_{-3}c_{-1}w_0)$$(c_{-2}w_0)$ \\ \hline
		\hline
	\end{tabular}
%	\vspace{-10pt}
\end{table}

The energy-based language modeling approach allows a very flexible use of features, not limited to ordinary $n$-gram features. In \citep{Wang2017LearningTR}, a variety of features as shown in \tbref{tab:feature} are used, mainly based on word and class information.
Each word is deterministically assigned to a single class, by running the automatic clustering algorithm proposed in \citep{clustering} on the training dataset.
In \tbref{tab:feature},
$w_i, c_i, i=0,-1,\dots,-5$, denote the word and its class at different position offset $i$,
e.g., $w_0, c_0$ denotes the current word and its class.
The word/class $n$-gram features ``\wg''/``\cg'', skipping $n$-gram features ``\wj''/``\cj'' \citep{goodman2001abit}, higher-order skipping features ``\ws''/``\cs'', and the crossing features ``\cpw'' (meaning class-predict-word) are introduced in \citep{wang2015trans}. 
In \citep{Wang2017LearningTR}, the tied long-skip-bigram features ``\tied'' \citep{GoogleSkip15} are further introduced,
in which the skip-bigrams with skipping distances from 6 to 9 share the same parameter.
In this way we can leverage long distance contexts without increasing the model size.
All the features $f(x)$ in model \eqref{eq:discrete_feature} are constructed from \tbref{tab:feature} in a position-independent and dimension-independent manner,
and only the features observed in the training data are used.
It is shown in \citep{Wang2017LearningTR} that the TRF-LM using features ``\wg+\cg+\wj+\cj+\ws+\cs+\tied'' outperforms the KN 5-gram LM significantly with 10\% relative error reduction.

\begin{figure}
	\centering
	\includegraphics[scale=0.6]{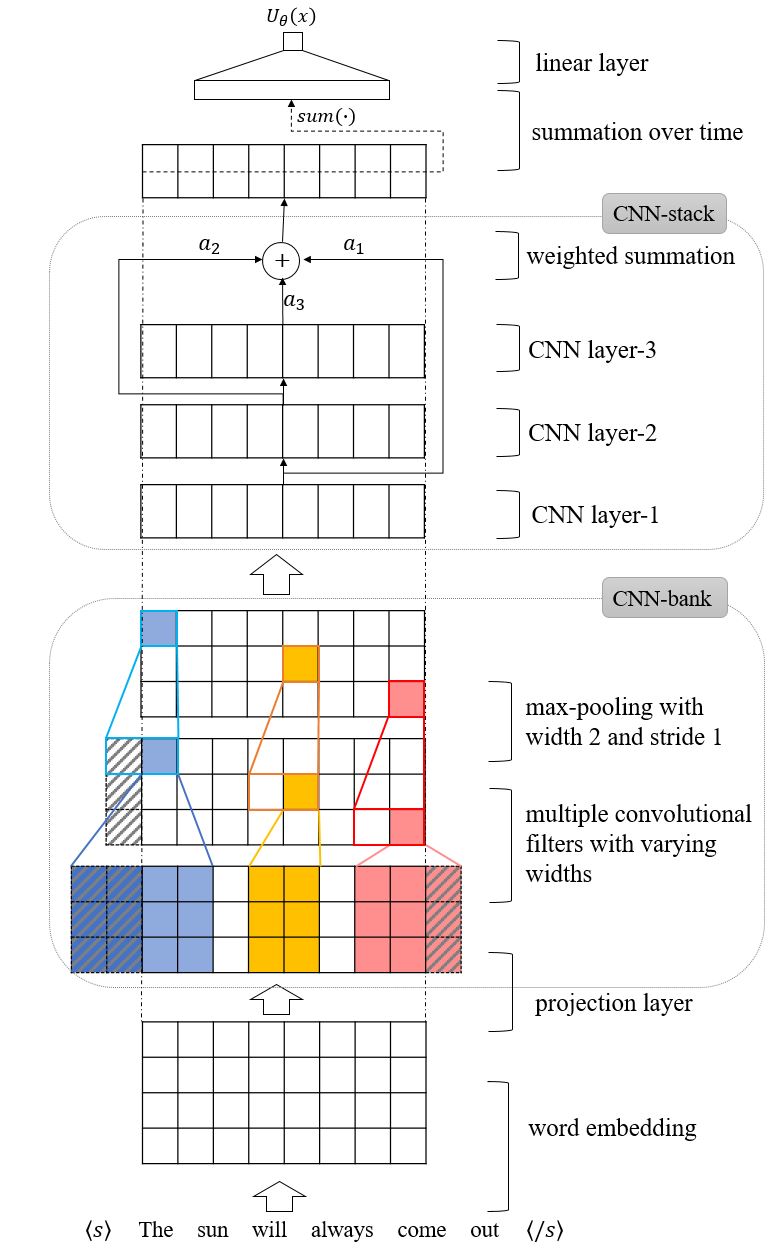}
	\caption{Hidden2Scalar: a deep CNN architecture used to define the potential function $U_\theta(x)$. Shadow areas denote the padded zeros. \citep{wang2017language}
	}
	\label{fig:ASRU17}
\end{figure}

\begin{figure}
	\centering
	\includegraphics[width=0.7\linewidth]{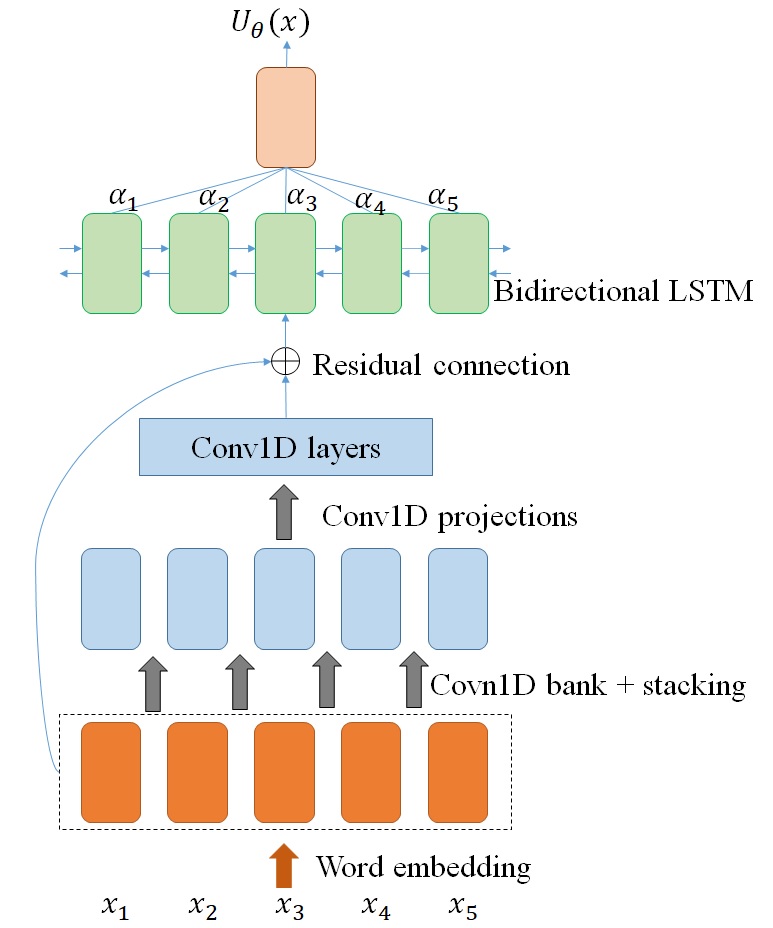}
	\caption{Hidden2Scalar: a bidirectional LSTM on top of CNN used to define the potential function $U_\theta(x)$. \citep{BinICASSP2018}}
	\label{fig:ICASSP2018}
\end{figure}

\begin{figure}
	\centering
	\includegraphics[width=0.7\linewidth]{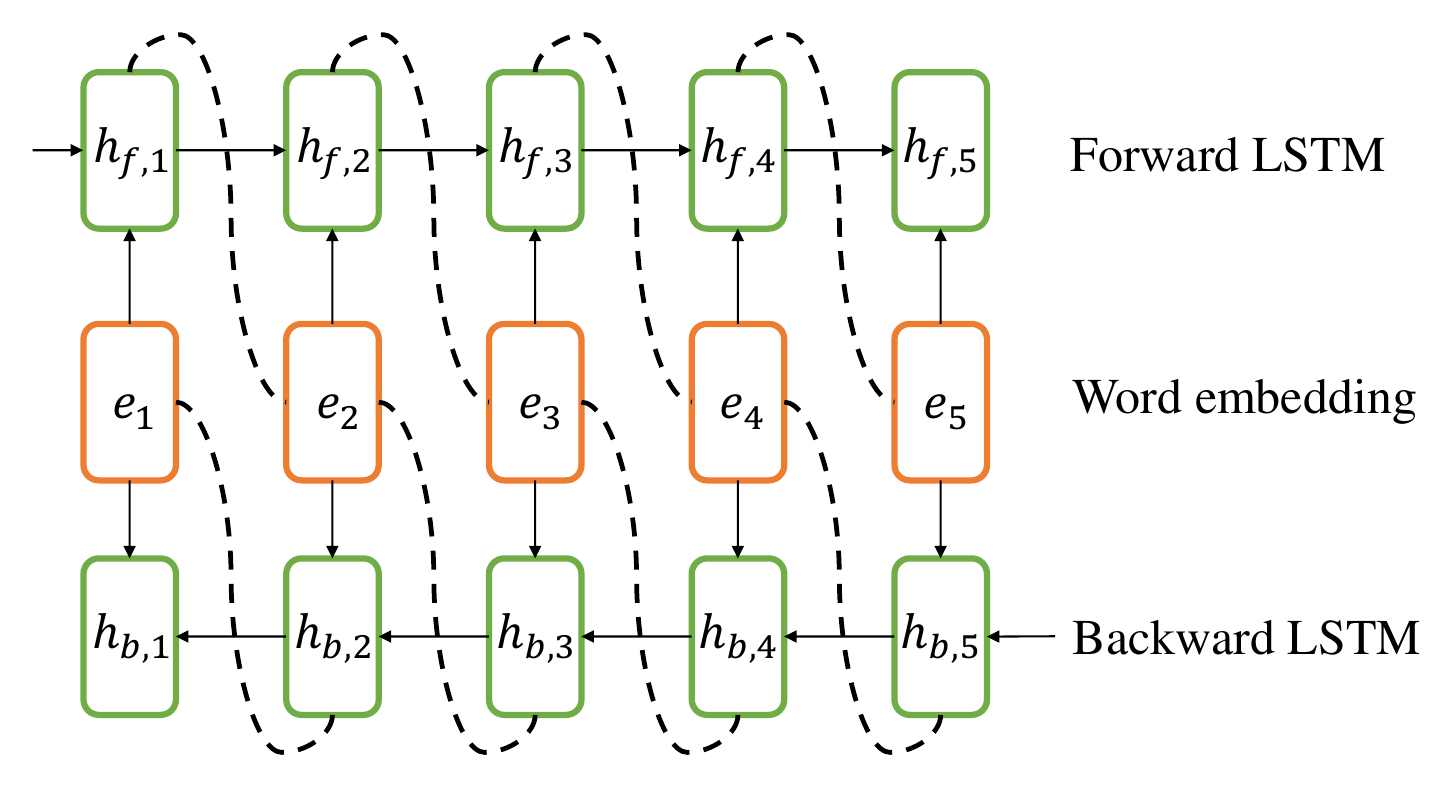}
	\caption{SumInnerProduct: a bidirectional LSTM used to define the potential function  $U_\theta(x)$. \citep{wang2018improved}}
	\label{fig:SLT2018.png}
\end{figure}

\subsubsection{Non-linear energy: Hidden2Scalar}
%The energy of SumTargetLogit is defined in uni-directional order like in ALMs.
Generally speaking, like in \citep{wang2017language, BinICASSP2018, deng2020residual, he2021joint}, we can use a text encoder to encode $x$ and  denote the encoder output (hidden vectors) by $\text{enc}_{\theta}(x)$. At position $i$, we have $\text{enc}_{\theta}(x)[i]$.
Then, the potential function can be defined as
\begin{equation}
\label{eq:hidden2scalar}
%\vspace{-0.5em}
U_\theta(x) = \text{Linear} \left( \sum_{i=1}^{|x|} \text{enc}_\theta(x)[i] \right)
%\vspace{-0.5em}
\end{equation}
% where $enc_\theta$ is an encoder which encodes the sentence into hidden vector sequence and 
where $\text{Linear}(\cdot)$ denotes a trainable linear layer whose output is a scalar.
This energy function is obtained by transforming neural hidden vectors into a scalar, hence it is named by \texttt{Hidden2Scalar}.

The text encoder can be based on a fully CNN architecture \citep{wang2017language} as shown in \figref{fig:ASRU17}, or a bidirectional LSTM (BLSTM) stacked on top of CNN as shown in \figref{fig:ICASSP2018}. Recently, BERT based text encoders have been used in \citep{deng2020residual, he2021joint, ExploringELM}.
In \citep{deng2020residual}, in the final layer of RoBERTa \citep{Liu2019RoBERTaAR}, the mean-pooled hidden states are projected to a scalar energy value.
In \citep{he2021joint}, three variants of energy functions (\texttt{scalar}, \texttt{hidden}, and \texttt{sharp-hidden}) are defined on top of a RoBERTa based text encoder, which will be detailed in \secref{sec:JRF_calibrate}.

\subsubsection{Non-linear energy: SumInnerProduct}

In \citep{wang2018improved}, a bidirectional LSTM based potential function is defined as follows, illustrated in \figref{fig:SLT2018}.
First, each word $x_i$ ($i=1,\ldots,|x|$) in a sentence is mapped to an embedded vector $e_i \in R^d$.
Then the word embedding vectors are fed into a bidirectional LSTM to extract the long-range sequential features from the forward and backward contexts.
Denote by $h_{f,i}, h_{b,i} \in R^d$ the hidden vectors of the forward and backward LSTMs respectively at position $i$.
Finally, we calculate the inner product of the hidden vector of the forward LSTM at current position and the embedding vector at the next position,
and calculate the inner product of the hidden vector of the backward LSTM at current position and the embedding vector at the pervious position (dash line in \figref{fig:SLT2018}).
The potential function $\phi(x^l;\theta)$ is computed by summing all the inner products, hence named by \texttt{SumInnerProduct},
\begin{equation}\label{eq:phi}
U_\theta(x) = \sum_{i=1}^{|x|-1} h_{f,i}^T e_{i+1} + \sum_{i=2}^{|x|} h_{b,i}^T e_{i-1}
\end{equation}
where $\theta$ denotes all the parameters in the neural network. 
The \texttt{SumInnerProduct} energy provides a theoretical-solid framework to incorporate the bidirectional LSTM features.

\subsubsection{Non-linear energy: SumTargetLogit}

The \texttt{SumInnerProduct} energy uses a bidirectional network.
In order to exploit pre-trained language models\index{Pre-trained language model (PLM)}, which mostly are ALMs and unidirectional, we could consider energy definition tailored to ALMs \citep{ExploringELM}.
Given history $x_{1:i-1}$, let the output logits to predict the next token be denoted by $z_{\theta}(x_{1:i-1})$, whose dimension is equal to $V$. The $k$-th logit is denoted by $z_{\theta}(x_{1:i-1})[k]$.
Then, the potential is defined as
\begin{equation}
\label{eq:sumtargetlogit}
U_\theta(x)=\sum_{i=1}^{|x|} z_{\theta}(x_{1:i-1})[x_i]
\end{equation}
% where $Id(x_{i+1})$ represents the index of $x_{i+1}$ in the vocabulary. 
This potential function sums the logits corresponding to the target token (next token) at each position, hence it is named by \texttt{SumTargetLogit}.
In contrast, the ALM applies local normalization (softmax) to the logits $z_{\theta}(x_{1:i-1})$ to obtain the conditional probability of $x_i$ given history $x_{1:i-1}$.

\subsubsection{Non-linear energy: SumMaskedLogit}
For \emph{masked language model} (MLM)\index{Masked language model (MLM)}, e.g., BERT, pseudo-log-likelihood (PLL)\index{Pseudo-log-likelihood (PLL)} is introduced for scoring sentences \citep{pll}. Inspired by this, we can define the potential function as follows \citep{ExploringELM}:
\begin{equation}
\label{eq:SumMaskedLogit}
U_\theta(x) = \sum_{i=1}^{|x|} g_\theta(\text{mask}(x,i))[i][x_i]
\end{equation}
% Similar to SumTargetLogit, we take the logits of MLM as the energy. 
where $g_\theta$ denotes the MLM, whose output, at each position, is the logits before softmax. 
$g_\theta(\text{mask}(x,i))$ means masking the $i$-th token in $x$ and sending the masked sequence into the MLM for a forward pass. At position $i$, the logit corresponding to the masked token $x_i$ is denoted as $g_\theta(\text{mask}(x,i))[i][x_i]$.
In \eqref{eq:SumMaskedLogit}, the potential is defined by summing the logits corresponding to masked tokens, hence it is named by \texttt{SumMaskedLogit}.
Notably, this architecture is much time-consuming than others, since it requires $|x|$ forward passes to calculate the energy of one sentence, therefore this architecture is primarily for stimulating ideas rather than conducting experiments.

\subsubsection{Non-linear energy: SumTokenLogit}
To overcome the deficiency of \texttt{SumMaskedLogit}, a simplication is proposed \citep{ExploringELM}, i.e.,  omitting the masking step and feeding $x$ directly to the MLM, so that the logits at all positions can be calculated in parallel. The potential is defined as:
\begin{equation}
\label{eq:sumtokenlogit}
U_\theta(x) =  \sum_{i=1}^{|x|}  g_\theta(x)[i][x_i]
\end{equation}

\subsubsection{Comparison of non-linear energy architectures}
For comparing the different non-linear energy architectures, experiments in \citep{ExploringELM} find that the bi-directional architectures (\texttt{Hidden2Scalar} and \texttt{SumTokenLogit}) based on BERT are generally better than the unidirectional architecture (\texttt{SumTargetLogit}).

\subsection{Integrating discrete and neural features}

There has been a long recognition that discrete features ($n$-gram features) and neural network based features have complementary strengths for language models (LMs).
Generally, LMs with neural features (e.g. neural ALMs, neural TRF-LMs) outperform LMs with discrete features (e.g. KN $n$-gram LMs, discrete TRF-LMs), but interpolation between them usually gives further improvement.
This suggests that discrete and neural features have complementary strengths.
Presumably, the $n$-gram features mainly capture local \emph{lower-order} interactions between words, while the neural features particularly defined by LSTMs can learn \emph{higher-order} interactions.
Additionally, by embedding words into continuous vector spaces, neural LMs are good at learning \emph{smoothed regularities}, while discrete LMs may be better suited to handling \emph{symbolic knowledges} or idiosyncrasies in human language, as noted in \citep{ostendorf2016continuous}.
Currently, model interpolation, either linear or log-linear \citep{chen2019exploiting,wang2016model}, is often a second step, after the discrete and neural models are separately trained beforehand.
The interpolation weights are ad-hoc fixed or estimated over held-out data (different from the training data in the first step).
This two-step integration is sub-optimal.

The ELM approach can provide a unified and simplified approach in integrating both discrete and neural features, based on its capability in flexibly integrating a rich set of features.
\emph{Basically, with the ELM approach, one is free to define the potential function in any sensible way with much flexibility.}
It is straightforward to define a mixed-feature TRF-LM \citep{gao2020integrating}, in which the potential function is a sum of a linear potential using discrete features and a nonlinear potential using neural features. Mixed-feature TRF-LMs can be trained by applying the dynamic noise-contrastive estimation (DNCE) method \citep{wang2018improved}, as introduced in \secref{sce:DNCE}. 

\begin{figure}
	\centering
	\includegraphics[width=0.7\linewidth]{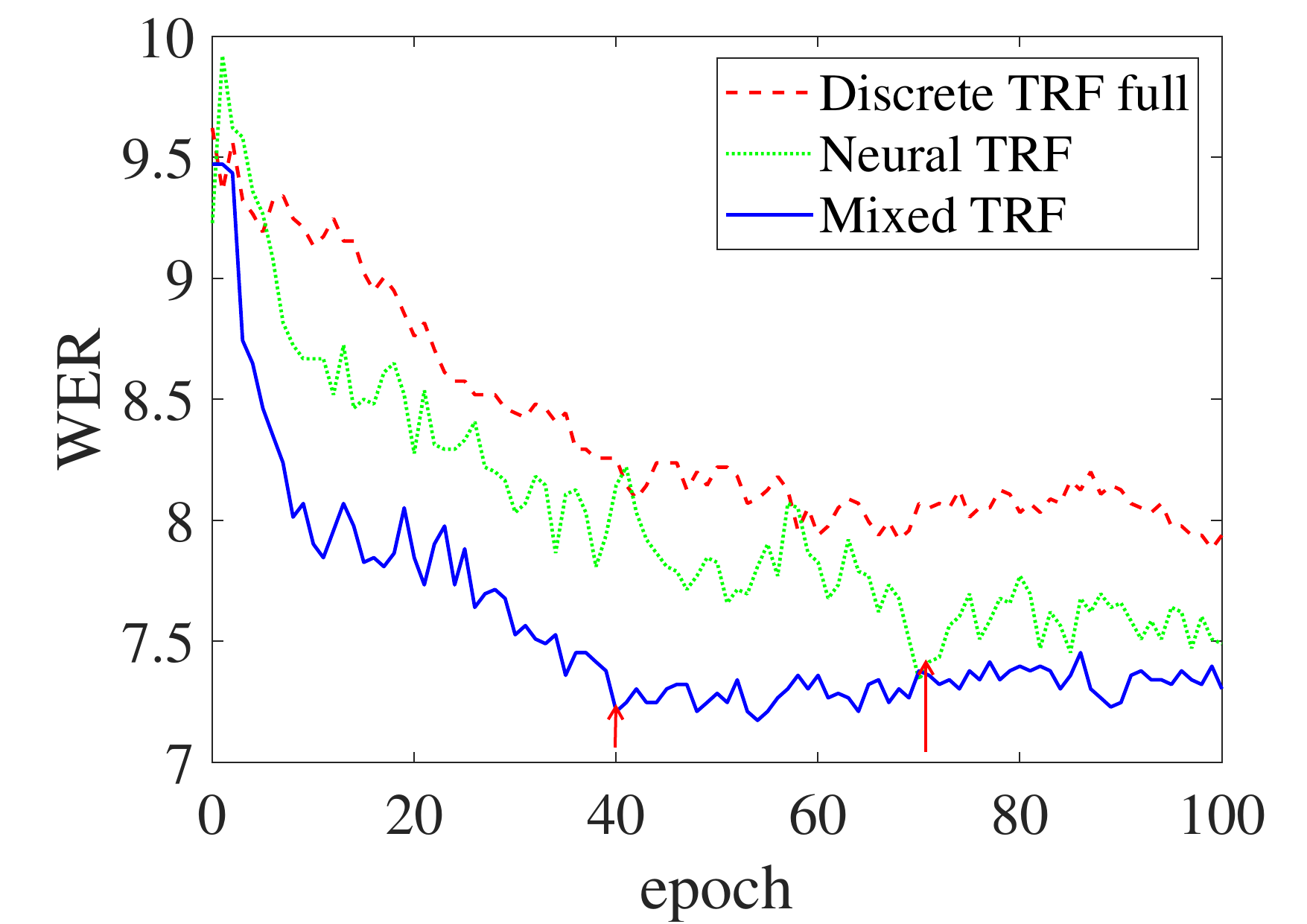}
	\caption{The WER curves of the three TRF-LMs during the first 100 training epochs are plotted. \citep{gao2020integrating}}
	\label{fig:mix_TRF_convergence}
\end{figure}

Mixed-feature TRF-LMs represent the first single LM model that incorporates both discrete and neural features without relying on a second-step interpolation.
Apart from naturally integrating discrete and neural features, another bonus from using mixed-feature TRF-LMs is that faster training convergence and shorter training time can be achieved, using only 58\% training epochs when compared to training neural TRF-LMs alone (see \figref{fig:mix_TRF_convergence}).
Notably, the log-likelihood of the training data with respect to (w.r.t.) the parameters of discrete features is concave. This helps to reduce the non-convexity of the optimization problem for maximum likelihood training.
Also, after incorporating the linear potential, the nonlinear potential only needs to capture the residual interactions between tokens. This may also explain the faster training convergence of mixed-feature TRF models.

In \citep{gao2020integrating}, various LMs are trained over the Wall Street Journal (WSJ) portion of Penn Treebank (PTB) \citep{marcus1993building} and Google one-billion-word corpus\footnote{\url{https://github.com/ciprian-chelba/1-billion-word-language-modeling-benchmark}}, and evaluated in N-best list rescoring experiments for speech recognition.
Among all single LMs (i.e. without model interpolation), the mixed-feature TRF-LMs perform the best, improving over both discrete TRF-LMs and neural TRF-LMs alone, and also being significantly better than LSTM ALMs. 
Compared to interpolating two separately trained models with discrete and neural features respectively, 
the performance of mixed-feature TRF-LMs matches the best interpolated model, and with simplified one-step training process and reduced training time.

\subsection{Residual ELMs}
\index{Residual ELM}\label{sec:residual_ELM}

As mentioned previously in this section, there are three types of energy functions based on discrete, neural and mixed features, respectively. Orthogonal to this\footnote{It means that the energy architectures described previously can be applied to $U_\theta(x)$ in residual ELMs.}, there is another architecture for EBMs, i.e., in the form of \emph{exponential tilting}\index{Exponential tilting} of a reference distribution. In the fields of lanugage modeling, this model formulation dates back to \cite{rosenfeld2001whole}, \cite{BinICASSP2018}, and recently \cite{deng2020residual}. Specifically, the probability of a sentence $x$ is defined as follows:
\begin{equation} \label{eq:residual_ELM}
p_\theta(x) = \frac{1}{Z(\theta)} q(x) e^{U_\theta(x)}
\end{equation}
where a reference distribution $q(x)$ is introduced as the baseline distribution; $U_\theta(x)$ denotes the residual potential function with parameter $\theta$; $Z(\theta)=\sum_x q(x) \exp(U_\theta(x))$ is the normalizing constant. Hence, \eqref{eq:residual_ELM} is called a \emph{residual ELM}\index{Residual ELM}, after \cite{deng2020residual}.

The role of residual energy $-U_\theta(x)$ is to fit the difference between the data distribution and the reference distribution.
If $q(x)$ is a good approximation of the data distribution, such as the LSTM based ALMs used in \citep{BinICASSP2018}, the Transformer based ALMs in \citep{deng2020residual},
fitting the difference between the data distribution and the reference distribution $q(x)$ shall be much simpler than fitting the data distribution directly.
Moreover, when the reference distribution $q(x)$ is used as the proposal distribution in Monte Carlo methods for learning a residual EBM, we have the importance weight in the following simple form:
\begin{displaymath}
\frac{p_\theta(x)}{q(x)} \propto e^{U_\theta(x)}
\end{displaymath}
Similarly, when the reference distribution $q(x)$ is used as the noise distribution in NCE methods for learning a residual EBM, we will have a simple form of \eqref{eq:NCE_post}.

\subsection{Training methods} 
\label{sec:TRF-train-methods}

Different types of ELMs, including GN-ELMs \eqref{eq:elm}, TRF-LMs \eqref{eq:trf-model} and residual ELMs \eqref{eq:residual_ELM}, all obeys the general from of general EBMs \eqref{eq:unsup-RF}.
Basically, we can use the methods introduced in \secref{sec:MLE} (MLE) and \secref{sec:NCE} (NCE) for learning general EBMs to train ELMs. 
A GN-ELM \eqref{eq:elm} has the same form as a EBM \eqref{eq:unsup-RF}, so those methods can be seamlessly applied. In TRF-LMs \eqref{eq:trf-model}, length probabilities are introduced and normalization is conducted for each length separately, some special care is needed.

%However, both exact computation of the normalizing constant and exact generation of samples from ELMs are generally intractable, which makes training especially difficult for ELMs.

For more details about MLE for learning TRF-LMs, refer to \citep{Wang2017LearningTR,wang2017language}.
For more details about DNCE for learning TRF-LMs, refer to \citep{wang2018improved,gao2020integrating}.

% Electric
\section{Energy-based cloze models for representation learning over text} 
\label{sec:electric}

\subsubsection{Motivation}
In this section, we show the capability of EBMs for representation learning over text \citep{clark2020pre}.
The cloze task of predicting the identity of a token given its surrounding context has proven highly effective for representation learning over text. 
BERT \citep{bert}, as a representative \emph{masked language model} (MLM)\index{Masked language model (MLM)}, implements the cloze task by replacing input tokens with a special placeholder token \texttt{[MASK]}, but there are some drawbacks with the BERT approach:

\begin{figure}
	\centering
	\includegraphics[width=1\linewidth]{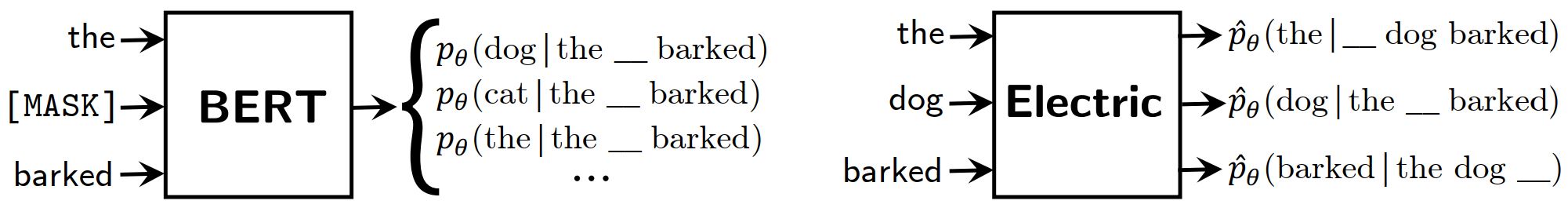}
	\caption{Comparison of BERT and Electric. Both model the conditional probability of a token given its surrounding context. BERT produces normalized conditional distribution for masked positions, while Electric calculates unnormalized conditional probabilities for all input tokens. \citep{clark2020pre}}
	\label{fig:Electric}
\end{figure}

\begin{itemize}
\item It suffers from the drawback in efficiency (only 15\% of tokens are masked out at a time);
\item It introduces a pre-train/fine-tune mismatch where BERT sees \texttt{[MASK]} tokens in training but not in fine-tuning;
\item In principle, it does not produce log-likelihoods for sentences (even up to an additive constant) and, in this sense, does not define a language model. The pseudo-log-likelihood (PLL) has been introduced for scoring sentences by BERT \citep{pll}, but calculating the PLL for a sentence $x$ requires $|x|$ passes of the transformer (once with each token masked out), and is thus computationally expensive.
\end{itemize}

%these models try to learn the \emph{conditional} probability $p_{\text{ora}}(x_t|x_{\setminus t})$ of a token $x_t$ occurring in the surrounding context $x_{\setminus t} = (x_1,\cdots,x_{t-1},x_{t+1},\cdots,x_n)$.

Some details are as follows. BERT and related masked language models \citep{bert,Liu2019RoBERTaAR} train a large neural network to perform the cloze task.
In contrast to the standard language modeling task to learn the \emph{joint} probability $p_{\text{ora}}(x)$, 
these models try to learn the \emph{conditional} probabilities of masked tokens occurring in their surrounding context.
Specifically, multiple positions (e.g., 15\%) in the input sentence $x=(x_1,\cdots,x_l)$ are randomly drawn, denoted by $R=\{t_1, \cdots, t_k\}$, $1\le t_j \le l$, $j=1,\cdots,k$, and those positions are replaced by \texttt{[MASK]} (i.e., masked).
The masked sentence, denoted by $\text{mask}(x,R)$, is encoded into vector representations by a Transformer network \citep{vaswani2017attention}. Then the vector representation at position $t_j$ is passed into a softmax layer to calculate a distribution over the vocabulary for position $t_j, j=1,\cdots,k$. The conditional probabilities are maximized for learning the model parameters $\theta$:
\begin{displaymath}
\sum_{j=1}^k \log p_\theta(x_{t_j}| \text{mask}(x,R) )
\end{displaymath}

\subsubsection{The Electric model}

The new model proposed in \citep{clark2020pre}, called \emph{Electric}\index{Electric model}, is closely related to the Electra pre-training method \citep{clark2020electra}, and implements the cloze task based on using conditional EBMs.

Specifically, Electric does not use masking or a softmax layer.
Electric first maps the unmasked input $x=(x_1,\cdots,x_l)$ into contextualized vector representations $h(x)=(h_1,\cdots,h_l)$ using a Transformer network.
Then, the conditional probability of a token $x_t$ occurring in the surrounding context $x_{\setminus t} = (x_1,\cdots,x_{t-1},x_{t+1},\cdots,x_l)$ is modeled by a \emph{conditional EBM}:
\begin{equation}
\label{eq:electric}
p_{\theta}(x_t|x_{\setminus t}) = \frac{1}{Z_\theta(x_{\setminus t})} \exp(- w^T h_t), 1 \le t \le l
\end{equation}
where $w$ is learnable weight vector, and $h_t$ is the vector representation at position $t$.
A comparison of BERT and Electric is illustrated in \figref{fig:Electric}.

\subsubsection{Training of the Electric model}
\label{sec:electric_training}

The conditional EBMs defined in \eqref{eq:electric} can be trained using NCE (\secref{sec:NCE}), and more specifically its conditional version (\secref{sec:cond_nce}). First, we define the un-normalized output
\begin{displaymath}
\hat{p}_{\theta}(x_t|x_{\setminus t}) = \exp(- w^T h_t)
\end{displaymath}
Denote the training dataset by $\mathcal{D}$.
In NCE, a binary classifier is trained to distinguish positive token $x_t$ vs negative token $\hat{x}_t$, with $k$ negatives and $l$ positives (e.g., $k = \lceil 0.15 l \rceil $) for a sentence $x$ of length $l$. 
Denote the random positions by $R=\{t_1, \cdots, t_k\}$, $1\le t_j \le l, j=1,\cdots,k$. 
Formally, referring to \eqref{eq:NCE-obj}, the NCE loss $\mathcal{L}(\theta)$ is as follows:
\begin{equation}\label{eq:electric_loss}
\begin{split}
&l \cdot E_{x \sim \mathcal{D},t \sim \text{Uni}(1,l)} \left[ -\log\frac{l \cdot \hat{p}_{\theta}(x_t|x_{\setminus t})}{l \cdot \hat{p}_{\theta}(x_t|x_{\setminus t}) + k \cdot q(x_t|x_{\setminus t}) } \right]\\
+ &k \cdot E_{\begin{subarray}{l} x \sim \mathcal{D},t \sim R,\\ \hat{x}_t \sim q(\hat{x}_t|x_{\setminus t}) \end{subarray}} \left[ -\log\frac{k \cdot q(\hat{x}_t|x_{\setminus t})}{l \cdot \hat{p}_{\theta}(\hat{x}_t|x_{\setminus t}) + k \cdot q(\hat{x}_t|x_{\setminus t}) } \right]
\end{split}
\end{equation}
where $q(x_t|x_{\setminus t})$ denotes the noise distribution, which was realized by a two-tower cloze model \citep{clark2020pre}. 
Specifically, the noise model runs two transformers $T_{\text{LTR}}$ and $T_{\text{RTL}}$ over the input sequence. These transformers apply causal masking so one processes the sequence left-to-right (LTR) and the
other operates right-to-left (RTL). The model's predictions come from a softmax layer applied to the concatenated states of the two transformers:
\begin{align*}
&\overrightarrow{h} = T_{\text{LTR}}(x),  \overleftarrow{h} = T_{\text{RTL}}(x)\\
&q(x_t|x_{\setminus t}) = \text{softmax}(W [\overrightarrow{h}_{t-1}, \overleftarrow{h}_{t+1} ])_{x_t}
\end{align*}
The noise distribution is trained simultaneously with Electric using standard maximum likelihood estimation over the data. Thus, the training method used by Electric is in fact DNCE (\secref{sce:DNCE}).

Note that the NCE loss \eqref{eq:electric_loss} does not introduce additional parameters for normalizing constants $Z_\theta(x_{\setminus t})$. 
In the setting of nonparametric estimation (\thref{theorem:NCE}), the NCE loss is minimized when $\hat{p}_{\theta}(x_t|x_{\setminus t})$ matches the oracle density.
Thus, we assume that the model is of sufficient capacity and the model can learn to be self-normalized such that $Z_\theta(x_{\setminus t})=1$.

A further examination of \eqref{eq:electric_loss} reveals that its optimization is computationally expensive to run. It requires $k+1$ forward passes through the Transformer to compute the $\hat{p}_{\theta}$'s, once for the positive samples $x_t|x_{\setminus t}$ and once for every negative sample $\hat{x}_t|x_{\setminus t}$.
So a modified algorithm, allowing more efficient calculation, is to 
replace the $k$ random positions simultaneously by negative tokens $\hat{x}_t \sim q(x_t|x_{\setminus t})$. The resulting noised sequence is denoted by
\begin{displaymath}
x^{\text{noised}} = \text{replace}(x, R, (\hat{x}_{t_1},\cdots,\hat{x}_{t_k}))
\end{displaymath}
To apply this efficiency trick, Electric assumes $\hat{p}_{\theta}(\cdot|x_{\setminus t}) = \hat{p}_{\theta}(\cdot|x^{\text{noised}}_{\setminus t})$, for ``$\cdot$'' taking $x_t$ or $\hat{x}_t$, i.e., assuming that extra noise replacing does not change the conditional distribution much, for both positive and negative tokens.
The modified loss for a sentence $x$ of length $l$ becomes as follows\footnote{To compare with \eqref{eq:electric_loss}, this modified loss is written for $l-k$ positives and $k$ negatives, while \eqref{eq:electric_loss} is for $l$ positives and $k$ negatives.}:
\begin{equation}\label{eq:electric_loss_mod}
\begin{split}
&l \cdot E_{x \sim \mathcal{D},t \sim \text{Uni}(1,l)} \left[ -\log\frac{(l-k) \cdot \hat{p}_{\theta}(x_t|x^{\text{noised}}_{\setminus t})}{(l-k) \cdot \hat{p}_{\theta}(x_t|x^{\text{noised}}_{\setminus t}) + k \cdot q(x_t|x_{\setminus t}) } \right]\\
+ &k \cdot E_{\begin{subarray}{l} x \sim \mathcal{D},t \sim R,\\ \hat{x}_t \sim q(\hat{x}_t|x_{\setminus t}) \end{subarray}} \left[ -\log\frac{k \cdot q(\hat{x}_t|x_{\setminus t})}{(l-k) \cdot \hat{p}_{\theta}(\hat{x}_t|x^{\text{noised}}_{\setminus t}) + k \cdot q(\hat{x}_t|x_{\setminus t}) } \right]
\end{split}
\end{equation}
Calculating \eqref{eq:electric_loss_mod} requires just one pass through the Transformer for $k$ noise sample and $l-k$ data samples. This brings significant computation reduction.

\subsubsection{Performance of the Electric model}

Experiments in \citep{clark2020pre} on GLUE natural language understanding benchmark \citep{wang2018glue} and SQuAD question
answering dataset \citep{rajpurkar2016squad} show that Electric substantially outperforms BERT but slightly under-performs ELECTRA. 
However, Electric is particularly useful in its ability to efficiently produce pseudo-log-likelihood (PLL) scores for text, as defined below.
\begin{equation}\label{eq:PLL}
\text{PLL}(x) 
= \sum_{t=1}^l \log \hat{p}_{\theta}(x_t|x_{\setminus t}) 
= \sum_{t=1}^l - w^T h_t
\end{equation}

Experiments on the 960-hour LibriSpeech corpus \citep{panayotov2015librispeech} show Electric is better at re-ranking the outputs of a speech recognition system than GPT2 \citep{gpt2}  and is much faster at re-ranking than BERT because it scores all input tokens simultaneously rather than having to be run multiple times with different tokens masked out. 
It appears that EBMs are a promising alternative to the standard ALMs and MLMs currently used for language representation learning.

\chapter{Conditional EBMs with applications}
\label{ch:conditional}

In Chapter \ref{ch:lm}, we mainly introduce EBMs for modeling the \emph{marginal} distribution of natural language sentences,  with applications to language modeling for speech recognition and language representation learning.

In this chapter, we introduce EBMs for modeling \emph{conditional} distributions, i.e., using EBMs for conditional models (\secref{sec:cond}).
In \secref{sec:cond_basics}, we present basics for conditional EBMs, or equivalently, conditional random fields (CRFs).
In the following sections, we show how they can be applied in not only discriminative tasks such as speech recognition (\secref{sec:crf_asr}) and natural language labeling (\secref{sec:crf_labeling}) but also conditional text generation tasks (\secref{sec:cond_gen}).

\section{CRFs as conditional EBMs}
\label{sec:cond_basics}
\index{Contional EBM}
\index{Conditional random field (CRF)}

As introduced in (\secref{sec:cond}), many real-world applications are solved by conditional models.
\emph{Conditional random fields} (CRFs) \citep{lafferty2001conditional,sutton2012introduction} have been known to be one of the most successful conditional models, especially for sequence labeling.
A CRF is basically a conditional distribution $p(y|x)$ defined as a random field, or equivalently, an undirected graphical model. A formal definition is as follows. Similar to the equivalent meaning of random fields and EBMs in unconditional setting, conditional random fields and conditional EBMs are exchangeable terms.

\begin{figure}
	\centering
	\includegraphics[width=0.5\linewidth]{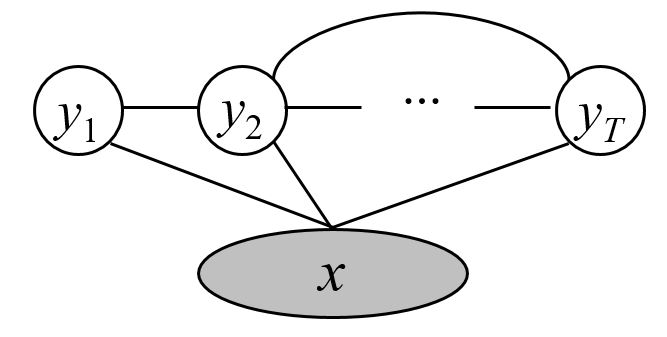}
	\caption{Graphical model representation of a conditional random field (CRF).}
	\label{fig:crf}
\end{figure}

\begin{definition}[Conditional random field] \label{def:crf}
Consider probability distributions $p(y|x)$, where $x$ represents an observation corresponding to the \emph{input} variable, and $y$ is the \emph{output} variable that we wish to predict.
A variable can either be scalar- or vector-valued.
Suppose $y=y_1,\cdots,y_T$.
In an undirected graph consisting of $x$ and $(y_1,\cdots,y_T)$ (e.g., as shown in \figref{fig:crf}), let $\mathcal{C}$ denote the set of cliques in the subgraph induced by $y$.
Associated with each clique $C \in \mathcal{C}$, let $\phi_C(y_C,x)$ denote a (log) potential function.
An conditional random field (CRF) in terms of this undirected graph  consists of a family of distributions that factorize as:
	\begin{equation} \label{eq:crf_basic}
	p(y|x) = \frac{1}{Z(x)} \prod_{C \in \mathcal{C}} \phi_C(y_C,x)
	\end{equation}
	where $Z(x)$ is the normalizing constant given by
	\begin{equation} \label{eq:crf_partition_func}
	Z(x) = \sum_{y} \prod_{C \in \mathcal{C}} \phi_C(y_C,x)
	\end{equation}
\end{definition}

\eqref{eq:crf_basic} actually means that for every assignment $x$, $p(y|x)$ factorizes according to the subgraph induced by $y$. Since $x$ is given, $x$ is treated as a constant, and included in the definition of potentials.

\subsection{Linear-chain CRFs}
\label{sec:linear_chain_crf}
\index{Linear-chain CRF}

Linear-chain CRFs are a class of CRFs, particularly useful for \emph{sequence labeling}\index{Sequence labeling} in the area of natural language processing (NLP), such as part-of-speech (POS) tagging \citep{fromscratch}, named entity recognition (NER) \citep{lample2016neural,ma2016end}, chunking \citep{sogaard2016deep} and syntactic parsing \citep{durrett2015neural}, and also in other areas such as bioinformatics \citep{Sato2005RNASS,peng2009conditional}.
Formally, in sequence labeling, the input $x$ is a sequence with the same length of the output $y$.
Thus, given a sequence of observations $x = (x_1,\cdots\,x_T) = x_{1:T}$, the task of sequence labeling is to predict a sequence of labels $y = (y_1,\cdots\,y_T) = y_{1:T}$, with one label for one observation in each position.
$y_i \in \left\lbrace 1,\cdots,K \right\rbrace$ denotes the label at position $i$.

A \emph{linear-chain CRF} defines a conditional distribution for label sequence $y$ given observation sequence $x$ in the following form:
\begin{equation}
\label{eq:linear_chain_crf}
p(y|x) \propto \exp \left\lbrace   \sum_{t=1}^T\phi_t(y_t,x)+
\sum_{t=1}^T \psi_t(y_{t-1},y_t,x) \right\rbrace.
\end{equation}
Here the labels $y_t$'s are structured to form a chain, giving the term linear-chain. Over the linear-chain, we define node potentials and edge potentials.
\begin{itemize}
\item $\phi_t(y_t,x)$ is often called the \emph{node potential}\index{Node potential} at position $t$.
Traditionally, people use \emph{discrete features}\index{Discrete feature}, as introduced in Example \ref{eg:word_morph} and \secref{sec:trf_discrete_feature}, to define node potentials.
The feature functions (or simply, the features) are usually hand-crafted indicator functions, which we think are useful for labeling, as shown below. 
\begin{align*}
f_1(y_t,x) = &\delta(y_t = \text{prep}, x_t = \text{on})\\
f_2(y_t,x) = &\delta(y_t = \text{adv}, x_t~\text{ends in ly})\\
&\cdots
\end{align*}
Then, the potential function at position $t$ can be defined as follows:
\begin{equation}
\phi_t(y_t,x) = \sum_i \lambda_i f_i(y_t, x)
\end{equation}
where $i$ indexes the different features. Notably, the potential functions $\phi_t(\cdot,x)$ defined above at different positions take identical forms, i.e.,  position-independent.
\item 
A recent progress is the development of \emph{Neural CRFs} (NCRFs)\index{Neural CRF (NCRF)}, which combines the sequence-level discriminative ability of CRFs and the representation ability of neural networks (NNs). 
In different studies, these models are called conditional neural field \citep{peng2009conditional}, neural CRF \citep{artieres2010neural}, recurrent CRF \citep{Mesnil2015UsingRN}, and LSTM-CRF \citep{lample2016neural,ma2016end}.
Though there are detailed differences between these existing models, generally they are all defined by using NNs (of different network architectures) to implement the non-linear node potentials in CRFs, while still keeping the linear-chain hidden structure, i.e., using a bigram table as the edge potential (to be detailed below). 
Suppose that the input sequence $x_{1:T}$ is transformed into vector representations $h_{1:T} = (h_1,\cdots\,h_T) \in \mathbb{R}^{T \times D}$ via an appropriate neural network. 
The vector representations $h_{1:T}$ can be viewed as features, and the neural network is referred as a \emph{feature extractor}\index{Feature extractor}, as discussed previously in \secref{sec:logistic_regression}.
Then, the node potential can be calculated via a linear layer on top of the vector representations:
\begin{equation}
\label{eq:neural_node_pot}
\phi_t(y_t = k,x) = w_k^T h_t + b_k \triangleq \phi_t^k, k=1,\cdots,K
\end{equation}
where $w_k \in \mathbb{R}^D, b_k \in \mathbb{R}$ denote the weight vector and bias of the linear layer, independent of $t$.
$\phi_t^k$ denotes the potential value at positon $t$ for label $k$.
\item 
$\psi_t(y_{t-1},y_t,x)$ is the \emph{edge potential}\index{Edge potential} defined on the edge connecting $y_{t-1}$ and $y_t$. It is mostly implemented as a transition matrix $A$:
\begin{equation}
\label{eq:edge_pot}
\psi_t(y_{t-1}=j,y_t=k,x) = A_{j,k}
\end{equation}
\end{itemize}

\subsection{Label bias and exposure bias}
\label{sec:bias}
\index{Label bias}
\index{Exposure bias}

It is known that \emph{locally-normalized sequence models}\index{Locally-normalized sequence model} are prone to label bias \citep{lafferty2001conditional,andor2016globally} and exposure bias \citep{wiseman2016sequence,ranzato2016sequence}.
Depending on locally-normalized or globally-normalized, unconditional or conditional, there are four classes of models, as overviewed in \tbref{tab:seq_model}.
\begin{itemize}
\item The label bias problem was raised initially in the conditional setting \citep{lafferty2001conditional}. 
Later in this section, we will show that the label bias problem also exist in the unconditional setting.
\item The exposure bias is related to the manner of model training, rather than caused by the model itself; so it exist in both unconditional and conditional settings.
\end{itemize}
In summary, the problems of label bias and exposure bias exist in both unconditional and conditional settings. 
%This can be easily understood, since a problem suffered in unconditional setting also exist in conditional setting and vice versa, more or less.
An advantage of \emph{globally-normalized sequence models}\index{Globally-normalized sequence model} is that they avoid the problems of label bias and exposure bias, as explained below.

\subsubsection{The label bias problem}

\begin{table}
	\caption{A general classification of sequence models, with some common examples.} \label{tab:seq_model}
	\centering
%	\small
	\begin{tabular}{l|l|l}
		\hline
		&   Unconditional & Conditional \\ 
		\hline \hline
		Locally-normalized & ALM & seq2seq  \\ 
		\hline
		Globally-normalized & ELM & CRF/Conditional EBM \\ 
		\hline
	\end{tabular}
\end{table}

Conditional, locally-normalized sequence models such as \emph{maximum entropy Markov models} (MEMMs)\index{Maximum entropy Markov model (MEMM)} \citep{mccallum2000maximum} and \emph{sequence-to-sequence models} (seq2seq)\index{Sequence-to-sequence model (seq2seq)} \citep{sutskever2014sequence} are potential victims of the label
bias problem.
In general, a conditional, locally-normalized sequence model is described by the conditional probability $p(y|x)$ with the following decomposition, where $x = (x_1,\cdots\,x_T) = x_{1:T}$ is an input sequence and $y = (y_1,\cdots\,y_L) = y_{1:L}$ is its corresponding output sequence whose length $L$ may differ from $T$.
\begin{equation} \label{eq:seq2seq}
p(y_{1:L}|x_{1:T}) = \prod_{i=1}^L p(y_i | x_{1:T},y_1,\cdots,y_{i-1}).
\end{equation}
The ouput probabilities at each time-step, $p(y_i | x_{1:T},y_1,\cdots,y_{i-1})$, are locally normalized, \emph{so successors of incorrect histories receive the same mass as do the successors of the true history} \citep{wiseman2016sequence}.
So locally normalized sequence models often have a very weak ability to revise earlier decisions \citep{andor2016globally}. This problem has been called the label bias problem.

The label bias problem is more severe when the output probability of $y_i$ can only depend on a partial input sequence $x_{1:t(i)}$, where $t(i)$ denotes the available length of the partial input when producing $y_i$.
For example, in \emph{streaming speech recognition}\index{Streaming speech recognition}, each token $y_i$ must be recognized shortly after it was spoken.
It is found in \citep{variani2022global} that by switching from a locally normalized model to a globally normalized model, the streaming speech recognition performance can be significantly improved.

In the case of using $p(y_i | x_{1:t(i)},y_1,\cdots,y_{i-1})$, the model becomes:
\begin{equation} \label{eq:seq2seq_2}
p(y_{1:L}|x_{1:T}) = \prod_{i=1}^L p(y_i | x_{1:t(i)},y_1,\cdots,y_{i-1}).
\end{equation}
The label bias problem can be understood by considering the independence assumption made by the model. In this case, we have $y_{1:i} \perp x_{t(i)+1:T} | x_{1:t(i)}$, i.e., 
\begin{displaymath}
p(y_{1:i}|x_{1:T}) = p(y_{1:i}|x_{1:t(i)})
\end{displaymath}
Thus, observations from later in the input sequence $x_{t(i)+1:T}$ has no effect on the posterior probability of history input $y_{1:i}$.
Intuitively, we would like the model to be able to revise an earlier decision made during search, when later evidence becomes available that rules out the earlier decision as incorrect \citep{andor2016globally}.

\begin{figure}
	\centering
	\includegraphics[width=0.9\linewidth]{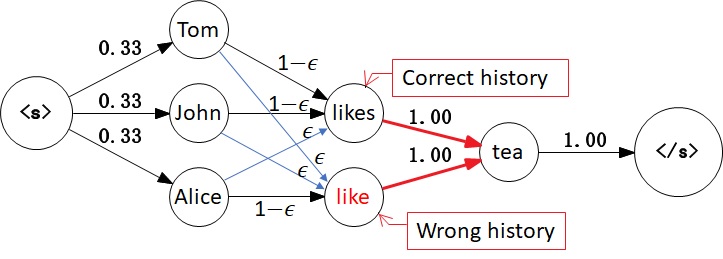}
	\caption{State transitions resulting from estimating an autoregressive language model from training data - ``Tom likes tea'', ``John likes tea'', and ``Alice like tea''.
	For some transitions not appeared in the training data, the transition probabilities are smoothed to take small values $\epsilon$.
	We pad the beginning and the end of a sentence with special tokens, $\langle \texttt{s} \rangle$ and $\langle /\texttt{s} \rangle$, respectively \citep{chen1999empirical}.}
	\label{fig:label_bias_directed}
\end{figure}

\begin{figure}
	\centering
	\includegraphics[width=0.9\linewidth]{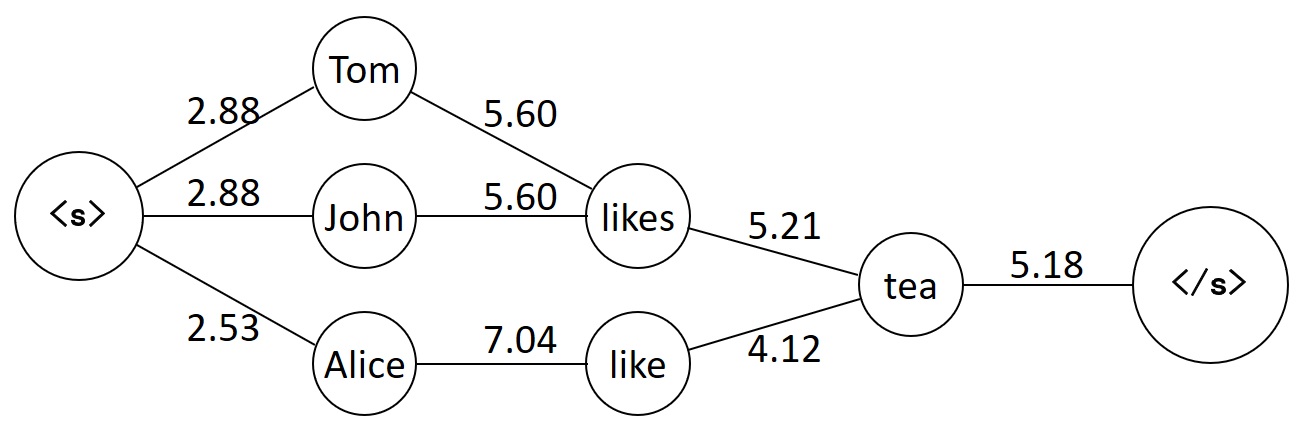}
	\caption{Estimating a globally-normalized energy-based language model (ELM) from training data - ``Tom likes tea'', ``John likes tea'', and ``Alice like tea''. The bi-gram features used by the ELM are similar to those used in the bigram ALM, and so can also be illustrated by a graph. The estimated parameters are shown over the  edges, which represent the corresponding bi-gram features.}
	\label{fig:label_bias_undirected}
\end{figure}

The above discussion of the label bias problem is for conditional, locally-normalized sequence models. 
In the following, we will show, through an empirical example\footnote{This English example is adapted from a Chinese example in Section 3.4.1 of \citep{BinWang2018thesis}'s.}, that the label bias problem causes trouble not only in \emph{conditional}, locally-normalized sequence models, but also in \emph{unconditional}, locally-normalized sequence models; and can be naturally overcome in globally-normalized sequence models.
Suppose that we would like to build a model for sentences with the following training data, which consist of three sentences:
\begin{align*}
&\text{Tom likes tea}\\
&\text{John likes tea}\\
&\text{Alice like tea}
\end{align*}
Since the training data are often noisy, there exist some samples with some incorrect, infrequent patterns such as in ``Alice like tea''.
We estimate different models over these training data, apply them to score test data
\begin{align*}
&\text{Alice likes tea}\\
&\text{Tom like tea}
\end{align*}
and examine the performance of different models.

Let us first consider an autoregressive language model (ALM), which is a typical unconditional, locally-normalized sequence model, trained over these training data.
We estimate a smoothed bi-gram ALM, i.e., we smooth the transition probabilities for unseen bi-grams. 
The resulting ALM with estimated transition probabilities are shown in \figref{fig:label_bias_directed}.
We can see that transitions from the wrong history (``like'') and the correct history (``likes'') to ``tea'' get the same score, due to local normalization of conditional probabilities.
Consequently, when scoring test data - ``Alice likes tea'' and ``Tom like tea'', the bi-gram ALM cannot score them correctly - the two test samples are scored as being of the same probability:
\begin{align*}
P(\text{Alice likes tea}) &= P(\text{Alice}|\langle \texttt{s} \rangle) P(\text{likes}|\text{Alice}) P(\text{tea}|\text{likes}) P(\langle \texttt{s} \rangle|\text{tea})\\
&= 0.33 \times \epsilon  \times 1.0 \times 1.0 = 0.33 \epsilon\\
P(\text{Tom like tea}) &= P(\text{Tom}|\langle \texttt{s} \rangle) P(\text{like}|\text{Tom}) P(\text{tea}|\text{like}) P(\langle \texttt{s} \rangle|\text{tea})\\
&= 0.33 \times \epsilon  \times 1.0 \times 1.0 = 0.33 \epsilon
\end{align*}
In fact, ``Alice likes tea'' should be more likely than ``Tom like tea'', considering that the correct form of verb appears more often than the incorrect form in the training data. We can see that the bi-gram ALM seems not to be a good model choice for this example, due to local normalization. To be more precise, this label bias problem is caused by \emph{the improper local normalization following incorrect histories}.
Incorrect histories occur in training data, as part of noise. Successors after incorrect histories are in fact biased labels, even they appear in training data. This issue should be somehow fixed in order to better model the regularities in the data\footnote{We usually do not assume that the data generating distribution is exactly the same as the empirical distribution, but rather the modeling task is to seek a smoothed distribution based on the empirical distribution (See discussion of the concept of learning in \secref{sec:probabilistic_approach}).}. However, it is worthwhile to remark that how adverse this label bias issue will cause depends on the particular data and model under investigation. Further theoretical analysis will be interesting future work.

Next, let us consider a globally-normalized energy-based language model (GN-ELM) (\secref{sec:GN-ELM}), trained over the same training data, but with a different model assumption from ALMs. The globally-normalized model, which is defined as follows, also uses the bi-gram features, similar to those used in the bi-gram ALM:
\begin{equation}\label{eq:label_bias_ELM}
P(x_{1}, x_2, x_{3}) = \frac{1}{Z} \exp \left(  \sum_{k=1}^9 \lambda_k f_k(x_{1}, x_2, x_{3}) \right) 
\end{equation}
where the bi-gram features are indicator functions as follows:
\begin{align*}
&f_1 = \delta(x_1=\text{Tom}),\\
&f_2 = \delta(x_1=\text{John}),\\
&f_3 = \delta(x_1=\text{Alice}),\\
&f_4 = \delta(x_1=\text{Tom},x_2=\text{likes})), \\
&f_5 = \delta(x_1=\text{John},x_2=\text{likes}), \\
&f_6 = \delta(x_1=\text{Alice},x_2=\text{like}), \\
&f_7 = \delta(x_2=\text{likes}, x_3=\text{tea}), \\
&f_8 = \delta(x_2=\text{like}, x_3=\text{tea}), \\
&f_9 = \delta(x_3=\text{tea}).
\end{align*}
Maximum likelihood estimate (MLE) of  \eqref{eq:label_bias_ELM} can be performed by the stochastic approximation (SA) method or the improved iterative scaling (IIS) method, as introduced in \citep{Wang2017LearningTR}.
The estimated parameters, $\{\lambda_k,k=1,\cdots,9\}$, are shown in \figref{fig:label_bias_undirected}. 
We show the parameters over the edges, which represent the corresponding bi-gram features.
The estimated ELM model can be used to score test data - ``Alice likes tea'' and ``Tom like tea'':
\begin{align*}
\log P(\text{Alice likes tea}) &=2.53+5.21+5.18-\log Z=-7.06\\
\log P(\text{Tom like tea}) &=2.88+4.12+5.18-\log Z=-7.79
\end{align*}
where the log normalizing constant, in this simple example, can be directly calculated as follows:
\begin{displaymath}
\log Z = \log \sum_{
	\substack{
	x_1 \in \{\text{Tom,John,Alice}\},\\x_2 \in \{\text{likes, like} \}, x3=\text{tea}   }} \exp \left(  \sum_{k=1}^9 \lambda_k f_k(x_{1}, x_2, x_{3}) \right) =19.98
\end{displaymath}
Interestingly, according to the ELM modeling, ``Alice likes tea'' is naturally scored as being more probable than ``Tom like tea'', without relying on any other ad-hoc tricks. It seems that the ELM model is smoothed more correctly (not disturbed by the noise in the training data) - this is a desirable result.
Compare to the ALM modeling, the ELM modeling abstains from using local normalization. There is \emph{no improper local normalization following incorrect histories}. In this sense, we could say that the label bias issue is avoided.

%From the above example, we see that ALMs are also potential victims of the label bias problem, due to local normalization in ALMs.

\subsubsection{The exposure bias problem}

\begin{figure}
	\centering
	\includegraphics[width=0.9\linewidth]{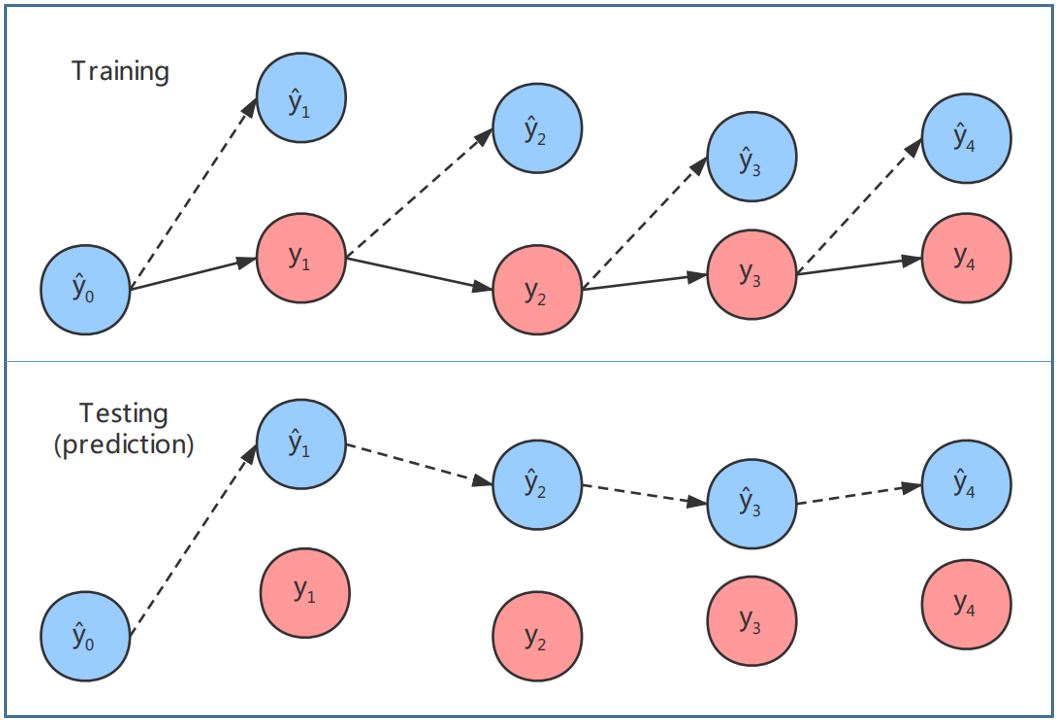}
	\caption{Illustration of exposure bias. $y$: real, $\hat{y}$: predicted.}
	\label{fig:Exposure bias}
\end{figure}

Locally-normalized sequence models, whether conditional or unconditional, suffers from the exposure bias problem. Locally-normalized sequence models are usually trained in a manner called \emph{teacher forcing}\index{Teacher forcing} \citep{williams1989learning}.
Take the training of a ALM as an example. 
\begin{itemize}
	\item In training, the model maximizes the likelihood of each successive target word, conditioned on the gold history of the target word.
	As shown in \figref{fig:Exposure bias}, in training, the model is only exposed to real data, which predicts $\hat{y}_i$ given the real data of the history, $y_1,\cdots,y_{i-1}$.
	\item In testing, the model predicts the next step $\hat{y}_i$, using its own predicted words in testing, i.e., $\hat{y}_1,\cdots,\hat{y}_{i-1}$.
\end{itemize}
\emph{Such mismatch between training (teacher forcing) and testing (prediction) of locally-normalized sequence models} has been called the exposure bias problem.
The model is never exposed to its own errors during training, and so the inferred histories at test-time do not resemble the gold training histories \citep{wiseman2016sequence}.

Remarkably, exposure bias results from training in a certain way, which maybe alleviated by some ad-hoc methods such as \emph{scheduled sampling}\index{Scheduled sampling} \citep{bengio2015scheduled}. In contrast, label bias results from properties of the model itself.
Thus, it may be more difficult to overcome label bias than to avoid exposure bias.

\subsection{Training of CRFs}
\label{sec:crf_training}

To introduce the training methods for CRFs/conditional EBMs, we write the model in Definition \ref{def:crf} in a simpler form, with input $x$ and output $y$:
\begin{equation}\label{eq:crf_def2}
p_{\theta}(y|x)=\frac{1}{Z_\theta(x)} \exp\left[  U_{\theta}(x,y) \right] 
\end{equation}
$U_{\theta}(x,y) : \mathcal{X}\times \mathcal{Y} \rightarrow \mathbb{R}$ denotes the (log) potential function, which assigns a scalar value to each combined configuration of $x$ in $\mathcal{X}$ and $y$ in $\mathcal{Y}$, and can be very flexibly parameterized through neural networks of different architectures.
$\mathcal{X}$ and $\mathcal{Y}$ denotes the space of all possible values of $x$ and $y$, respectively.
Normalizing is taken only over $\mathcal{Y}$, and $Z_\theta(x)$ denotes the normalizing constant:
\begin{equation}\label{eq:Z_crf}
Z_\theta(x)=\sum_{y \in \mathcal{Y}}\exp\left[  U_{\theta}(x,y) \right]
\end{equation}

\subsubsection{Learning CRFs by conditional maximum likelihood (CML)}
\index{Conditional maximum likelihood (CML)}

Suppose we have a training dataset consisting of $N$ independent and identically distributed (IID) data points $\mathcal{D} = \left\lbrace (x_i,y_i), i=1, \cdots, N \right\rbrace$.
We can fit $p_{\theta}(x)$ to data by maximizing the log conditional likelihood of training data, defined by
\begin{equation}
\label{eq:conditional_ebm_loglik}
L(\theta) \triangleq \frac{1}{N} \sum_{i=1}^N \log p_{\theta}(y_i|x_i)
= \frac{1}{N} \sum_{i=1}^N \left\lbrace  U_{\theta}(x_i,y_i) - \log Z_\theta(x_i) \right\rbrace 
\end{equation}
as a function of $\theta$.

Taking the derivative of the log conditional likelihood with respect to $\theta$ and making use of \eqref{eq:grad_log_Z_equal_expect} about the derivative of the log normalizing constant, we obtain the core formula in learning conditional EBMs:
\begin{equation} \label{eq:conditional_ebm_grad}
\nabla_\theta L(\theta) 
= \frac{1}{N} \sum_{i=1}^N \left\lbrace 
\nabla_\theta U_{\theta}(x_i,y_i) - 
E_{p_{\theta}(y|x_i)} \left[ \nabla_\theta U_{\theta}(x_i,y) \right] 
\right\rbrace 
\end{equation}
%Introducing $p_{\text{emp}}(x) = \frac{1}{N} \sum_{i=1}^{N} \delta(x-x_i)$ as the empirical distribution over the training dataset,
%\eqref{eq:conditional_ebm_grad} can be re-written as:
%\begin{equation} \label{eq:conditional_ebm_grad_emp}
%\nabla_\theta L(\theta) 
%= E_{p_{\text{emp}}(x)} \left\lbrace 
%\nabla_\theta U_{\theta}(x,y) - 
%E_{p_{\theta}(y|x)} \left[  \nabla_\theta U_{\theta}(x,y) \right] 
%\right\rbrace 
%\end{equation}

The maximum likelihood estimate of $\theta$ is obtained as a solution to $ \nabla_\theta L(\theta) = 0$. 
It can be easily seen that the gradients $\nabla_\theta L(\theta)$ in \eqref{eq:conditional_ebm_grad} in learning conditional EBMs exactly follows the form of \eqref{eq:SA}, as summarized in \thref{th:ebm_grad_SA_form}.
So the problem of learning conditional EBMs by \emph{conditional maximum likelihood} (CML) can then be solved by setting the gradients to zeros and applying the SA algorithm to finding the root for the resulting system of simultaneous equations. 
\emph{Techniques in learning (unconditional) EBMs are applicable in learning conditional EBMs here.} See \secref{sec:SA_ebm} for details. For example:
\begin{itemize}
\item Minibatching from training data when $N$ is large;
\item When calculating the model expectation $E_{p_{\theta}(y|x_i)} \left[ \nabla_\theta U_{\theta}(x_i,y) \right] $ is intractable, we can resort to Monte Carlo methods and conduct Monte Carlo averaging.
\item The model expectation can be exactly calculated under some limited circumstances, mostly in low tree-width\footnote{
	The tree-width of a graph is defined as the minimum width over all possible tree decompositions of the graph, which measures roughly how close the graph is to a tree. The \emph{tree-width}\index{Tree-width} of a tree is 1.
}
random fields (e.g., chain-structured) with moderately sized \emph{state spaces}\index{State space}\footnote{The state space of a multivariate model is the set of all possible values for each coordinate of a multivariate observation.}.
\end{itemize}

\subsubsection{CML training of neural linear-chain CRFs}
Here we provide more details about CML training of a neural linear-chain CRF, for which the model expectation can be exactly calculated.
Continue with the notations in \secref{sec:linear_chain_crf}, and consider a neural linear-chain CRF as follows, with input $x = (x_1,\cdots\,x_T) = x_{1:T}$, output $y = (y_1,\cdots\,y_T) = y_{1:T}$. Combining \eqref{eq:linear_chain_crf}, \eqref{eq:neural_node_pot} and \eqref{eq:edge_pot}, we obtain
\begin{align}
\label{eq:neural_linear_chain}
p_{\theta}(y|x)&=\frac{1}{Z_\theta(x)} \exp\left[  U_{\theta}(x,y) \right] \nonumber \\
U_{\theta}(x,y) &= \sum_{t=1}^T \left[ \phi_t^{y_t} + A_{y_{t-1},y_t} \right] 
\end{align}
where the parameters $\theta$ consists of the network parameters and the transition matrix $A$.

Suppose the state space of $y_{1:T}$ is $\left\lbrace 1,\cdots,K \right\rbrace$, i.e., $y_i \in \left\lbrace 1,\cdots,K \right\rbrace$.
Thus, at each position $t=1,\cdots,T$, there are $K$ node potential values, $\phi_t^{k}$, $k=1,\cdots,K$, which are calculated by a neural network, as defined in \eqref{eq:neural_node_pot}. 
The network is trained to optimize the log conditional likelihood \eqref{eq:conditional_ebm_loglik}.
The gradient of the log conditional likelihood w.r.t. $\theta$ for a single data point $(x_i, y_i)$ is:
\begin{equation} \label{eq:conditional_ebm_grad_single}
\frac{\partial \log p_\theta(y_i|x_i)}{\partial \theta}
= \frac{\partial U_{\theta}(x_i,y_i)}{\partial \theta}
 - E_{p_{\theta}(y|x_i)} \left[ \frac{\partial U_{\theta}(x_i,y)}{\partial \theta} \right] 
\end{equation}

Note that $\log p_\theta(y_i|x_i)$ depends on the network parameters through $\phi_t^{y_t}$. Thus, an important quantity in calculating \eqref{eq:conditional_ebm_grad_single} for the network parameters is the gradient w.r.t. to the potential values, which can be derived from \eqref{eq:neural_linear_chain} as follows:
\begin{align*}
\frac{\partial \log p_\theta(y_i|x_i)}{\partial \phi_t^k} &= \delta(y_t=k) - E_{p_{\theta}(y|x_i)} \left[ \delta(y_t=k) \right]\\
&= \delta(y_t=k) - p(y_t=k)
\end{align*}
This difference between the empirical count and the expected count is the error signal received by the NN based feature extractor during training.
The expected count $p(y_t=k)$ is often known as the posterior state occupation probability, which can be calculated using the alpha-beta variables from the forward-backward algorithm \citep{rabiner1989tutorial}. 
The gradients for the network parameters can be calculated from the gradient w.r.t. to the potential values based on the back-propagation procedure.

\subsubsection{Learning CRFs by conditional NCE}
\label{sec:cond_nce}
\index{Conditional NCE}

A theoretical analysis of NCE in the estimation of conditional unnormalized models in the form of \eqref{eq:crf_def2} has been given in \citep{ma2018noise}.
A subtle but important question when generalizing NCE to the conditional case is raised, i.e., the unconditional EBM in \eqref{eq:unsup-RF} has a single partition function, which is estimated as a parameter of the model, whereas the conditional model in \eqref{eq:crf_def2} has a separate partition function $Z_\theta(x)$ for each value of $x$.

Introduce the unnormalized density $\tilde{p}_{\theta}(y|x) = \exp\left[  U_{\theta}(x,y) \right]$ and rewrite \eqref{eq:crf_def2} as follows:
\begin{equation}\label{eq:crf_def3}
p_{\theta}(y|x)=\frac{1}{Z_\theta(x)} \tilde{p}_{\theta}(y|x)
\end{equation}
where, for each value of $x$, the partition function is defined by
\begin{displaymath}\label{eq:Z_crf2}
Z_\theta(x) = \sum_{y \in \mathcal{Y}} \tilde{p}_{\theta}(y|x)
\end{displaymath}

Applying NCE in estimating conditional unnormalized models yields the \emph{conditional NCE} method. Suppose we have a training dataset consisting of $N$ independent and identically distributed (IID) data points $\mathcal{D} = \left\lbrace (x_i,y_i), i=1, \cdots, N \right\rbrace$.
The basic idea of NCE is to perform nonlinear logistic regression to discriminate between \emph{data samples} drawn from the data and \emph{noise samples} drawn from a known noise distribution $q(x|y)$.
Formally, referring to \eqref{eq:NCE-obj}, conditional NCE estimates the model parameters $\theta$ by maximizing the following objective function:
\begin{align*}
J(\theta) = E_{\kappa \sim Uni(1,N)}& \left\lbrace  \log \frac{\tilde{p}_{\theta}(y_\kappa|x_\kappa)}{\tilde{p}_{\theta}(y_\kappa|x_\kappa) + \nu q(y_\kappa|x_\kappa) } \right.  \\
 + & \left. \nu E_{y \sim q(y|x_\kappa)}  \left(   \log \frac{\nu q(y|x_\kappa)}{\tilde{p}_{\theta}(y|x_\kappa) + \nu q(y|x_\kappa)} \right) \right\rbrace 
\end{align*}
where $\nu$ represents the ratio of noise sample size to real sample size.

It is shown in \citep{ma2018noise} that the conditional NCE method gives consistent parameter estimates
under the assumption that $Z_\theta(x)$ is constant with respect to $x$. 
Equivalently, the conditional NCE method is consistent under the assumption that the function $\tilde{p}_{\theta}(y|x)$ is powerful enough to incorporate $Z_\theta(x)$.
Thus, under the assumption that the model is of sufficient capacity, 
the model can learn to be self-normalized such that $Z_\theta(x)=1$.
Remarkably, the discussion in \citep{ma2018noise} is in line with \thref{theorem:NCE} about NCE (nonparametric estimation), since in the setting of nonparametric estimation, the NCE loss is minimized when $\tilde{p}_{\theta}(y|x)$ matches the oracle density and thus is self-normalized.

As a final note, it can be easily seen that training of the Electric model in \secref{sec:electric_training} is exactly an instance of the conditional NCE method.

\section{CRFs for speech recognition}
\label{sec:crf_asr}

In this section, we show the development of CRFs for \emph{automatic speech recognition}\index{Automatic speech recognition (ASR)} (ASR). First, we introduce some background knowledge in ASR, particularly the \emph{connectionist temporal classification} (CTC) method for recent end-to-end ASR.
Then, we elaborate on the recently developed approach of CRF-based single-stage acoustic modeling with CTC topology \citep{CTCCRF_IC19,CAT}.

ASR basically is a sequence discriminative problem.
Given acoustic observations $x = (x_1,\cdots\,x_T) = x_{1:T}$, the task of ASR is to find the most likely labels $y = (y_1,\cdots\,y_L) = y_{1:L}$.
Acoustic observations are usually spectral feature vectors.
Different units can be used for labels, such as	phone (namely monophone), triphone, character, wordpiece and word, as shown in \figref{fig:asr_label_unit}.

\begin{figure}[t]
	\centering
	\includegraphics[width=1.0\linewidth]{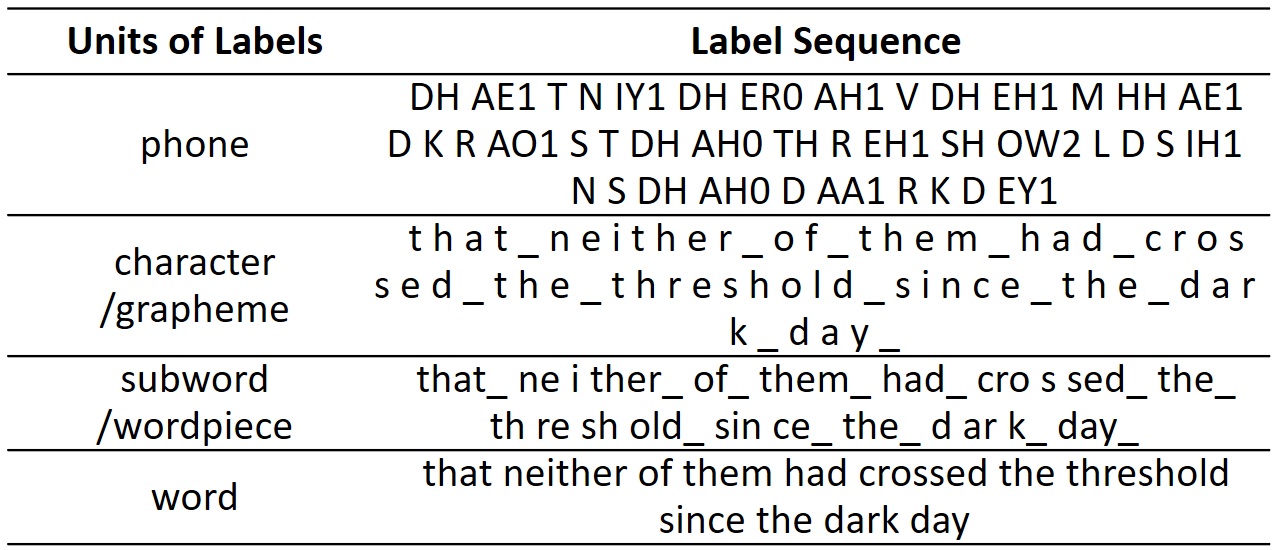}
	\caption{Different units of labels can be used in speech recognition.}
	\label{fig:asr_label_unit}
\end{figure}
	
\subsection{Connectionist Temporal Classification (CTC)}
\label{sec:ctc}

\subsubsection{The DNN-HMM hybrid approach}
\index{DNN-HMM hybrid}

The history of speech recognition dates back to 1970s \citep{jelinek1976continuous} and even earlier. The classic model for speech recognition is HMMs, as we review in \secref{sec:hmm}.
In building an HMM for speech recognition, consisting of acoustic sequence $x = (x_1,\cdots\,x_T) = x_{1:T}$ and state sequence $\pi = (\pi_1,\cdots\,\pi_T) = \pi_{1:T}$, we need to specify two distributions, the state-transition distribution and the state-output distribution.

The state sequence $\pi$ forms a Markov chain. In terminology of Markov chains, state transitions in a Markov chain are determined by its \emph{state topology}\index{State topology of a Markov chain}, or say, its \emph{state transition graph}\index{State transition graph of a Markov chain}.
The state transition graph is determined by combining some knowledge sources, as illustrated in \figref{fig:ASR_HMM_state_transitions}.
For example, we need a language model to model how words are connected to form a sentence. We need a lexicon to model how phones are connected to form a word.
Depending on left and right phonetic contexts, we can define different context-dependent phones, e.g., triphones.
Each context-dependent phone can be decomposed into several phonetic states, which intuitively correspond to the beginning, steady, closing phases of a phone.
The resulting state transition graph, often represented by a \emph{weighted finite-state transducer}\index{Weighted finite-state transducer (WFST)} (WFST), encodes common constraints in human speech\footnote{For subtle differences between the two graphs (WFSTs and state transition graphs), readers can refer to \citep{ou2010study}.}. 
In practice, each knowledge source can be represented by a component WFST, and the combination of several knowledge sources can be easily realized by applying the composition and some compression operations to the component WFSTs and producing a single integrated WFST \citep{mohri2008speech}.

\begin{figure}
	\centering
	\includegraphics[width=1.0\linewidth]{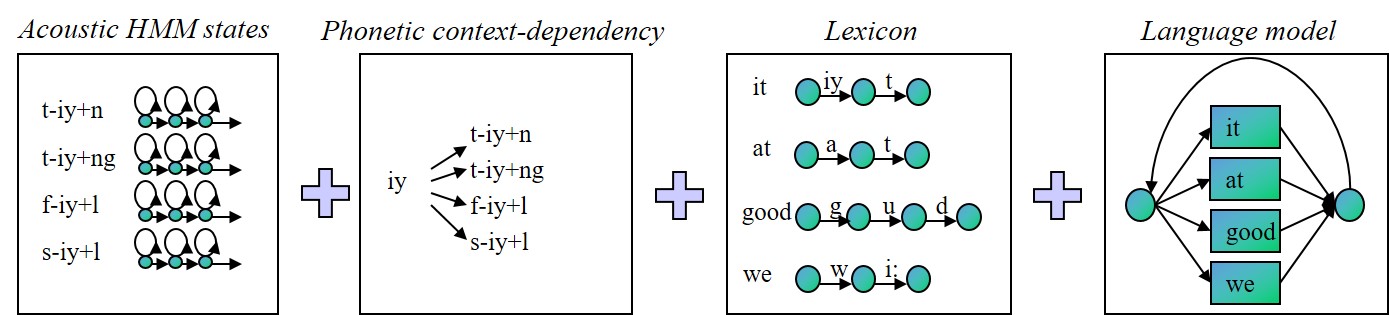}
	\caption{State transitions in HMMs for speech recognition are constrained by a number of knowledge sources.}
	\label{fig:ASR_HMM_state_transitions}
\end{figure}

Formally, a WFST represents a weighted relation between sequences of input symbols and sequences of output symbols \citep{mohri2008speech}.
A state sequence $\pi$ for an utterance of $T$ frames can be seen as a path traversing in the integrated WFST for $T$ steps, with $\pi = (\pi_1,\cdots\,\pi_T)$ matching the input symbols and the emitted output symbols corresponding to $y = (y_1,\cdots\,y_L)$.
It can be seen that in this way, a path $\pi$ uniquely determines a label sequence $y$, but not vice versa.
Thus, based on the integrated WFST, a many-to-one mapping, $\mathcal{B}_{\text{HMM}}: \pi \rightarrow y$, is defined.
The state topology basically determines the mapping. We also say that the mapping represents the state topology.

For state-output distribution $p(x_{t} | \pi_{t} )$, Gaussian mixture models (GMMs) were commonly used in the so-called GMM-HMM approach, before the deep learning era.
Recently, deep neural networks (DNNs)\index{Deep neural network (DNN)} of various architectures have become dominantly used for modeling the state-output distributions.
Based on the following Bayesian formula, the state likelihood $p(x_{t} | \pi_{t} )$ can be calculated from the state posteriori $p(\pi_{t} | x_{t} )$, divided by the state prior $p(\pi_{t})$, while the marginal likelihood $p(x_{t})$, being a constant, can be ignored.
\begin{displaymath}
p(x_{t} | \pi_{t} ) = \frac{ p(\pi_{t} | x_{t} ) p(x_{t})}{p(\pi_{t}) }
\end{displaymath}
State prior probabilities are estimated from the training data.
State posterior probabilities are calculated from the DNN, but we need frame-level alignments in training of the DNN.

In summary, the approach described above is often referred to as the \emph{DNN-HMM hybrid}\index{DNN-HMM hybrid} approach for ASR \citep{dahl2012context}.
It is featured by using the frame-level loss (cross-entropy) to train the DNN to estimate the posterior probabilities of HMM states.

\subsubsection{End-to-end ASR}

%Deep neural networks (DNNs) of various architectures have become dominantly used in automatic speech recognition (ASR), which roughly can be classified into two approaches - the DNN-HMM hybrid and the end-to-end (E2E) approaches.

Notably, building DNN-HMM systems for ASR runs in multi-stages.
A GMM-HMM training is firstly needed to obtain frame-level alignments and then the DNN-HMM is trained.
The hybrid approach usually consists of an DNN-HMM based acoustic model (AM)\index{Acoustic model (AM)}, a state-tying decision tree for context-dependent phone modeling, a pronunciation lexicon and a language model (LM), which can be compactly combined into a weighted finite-state transducer (WFST).
The WFST not only operationally represents the state topology, but also is useful as a compact data structure for efficient decoding.

A recent trend in ASR is to develop end-to-end ASR models \citep{graves2006connectionist,Miao2015EESEN,graves2012sequence,chorowski2014end}. 
The end-to-end approach is characterized by eliminating the construction of GMM-HMMs and phonetic decision-trees, training the DNN from scratch (in single-stage) and, even ambitiously, removing the need for a pronunciation lexicon and training the acoustic and language models jointly rather than separately.
%The key to achieve this is to define a differentiable sequence-level loss of mapping the acoustic sequence to the label sequence.
For developing end-to-end ASR models, there are two main issues. 
\begin{itemize}
\item The first is how to handle alignment between $x$ and $y$, since the two sequences generally differ in length ($T \neq L$).
\item The second is how to obtain $p(y|x)$, the posteriori probability of the label sequence $y$ given the acoustic sequence $x$. To be end-to-end, we need a differentiable sequence-level loss of mapping the acoustic sequence $x$ to the label sequence $y$.
\end{itemize}

Three widely-used end-to-end models are based on connectionist temporal classification (CTC) \citep{graves2006connectionist}, RNN-transducer (RNN-T)  \citep{graves2012sequence}, and attention based encoder-decoder (AED) \citep{chorowski2014end} respectively. The three models are featured by different losses. In CTC and RNN-T, the alignment is handled explicitly by introducing hidden state sequence $\pi$, while AED implements implicit, soft alignment via the attention mechanism.

Remarkably, all the three models are (conditional) locally normalized sequence models, according to the classification of sequence models as shown in \tbref{tab:seq_model}. In \secref{sec:ctc-crf}, we will introduce a recently developed (conditional) globally normalized sequence model, called CTC-CRF, for end-to-end ASR.

\subsubsection{The CTC method}

\begin{figure}
	\centering
	\includegraphics[width=1.0\linewidth]{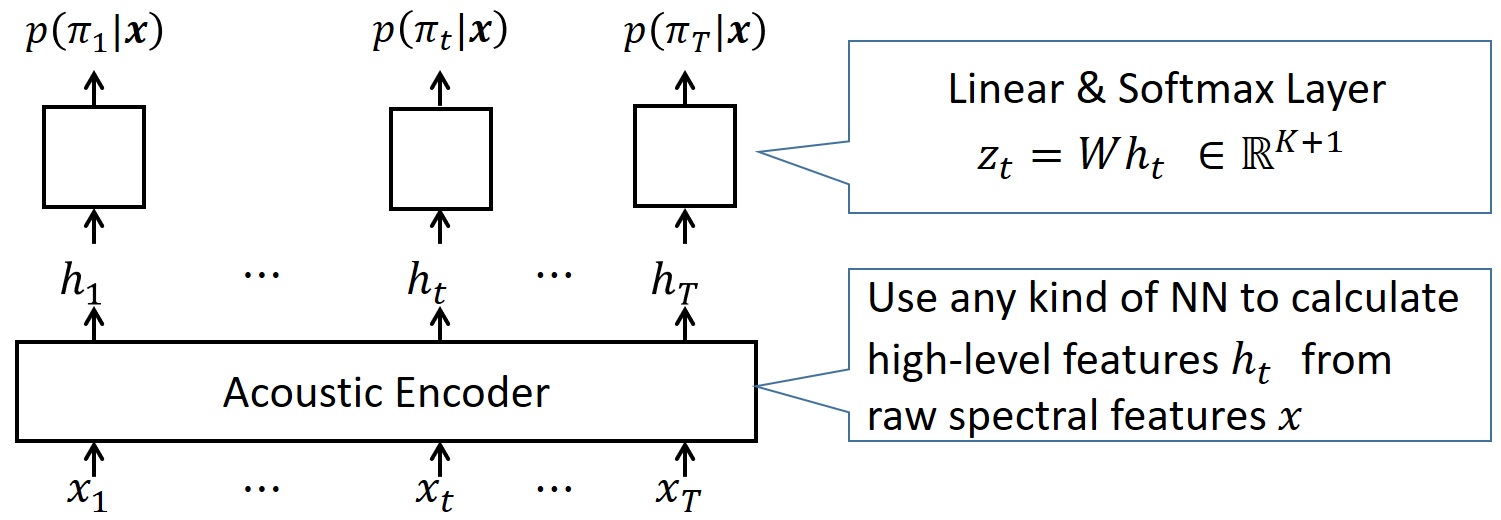}
	\caption{Overview of CTC architecture.}
	\label{fig:CTC_overview}
\end{figure}

The motivation of CTC \citep{graves2006connectionist} is to enable the training of $p(y|x)$ without requiring the frame-level alignments between the acoustics $x$ and the transcripts $y$. 
To this end, CTC introduces a \emph{blank} symbol <b> in addition to the ordinary labels, and further introduces a state sequence $\pi = (\pi_1,\cdots\,\pi_T) = \pi_{1:T}$, which aids the aligning between $x_{1:T}$ and $y_{1:L}$.
See \figref{fig:CTC_overview} for an overview of CTC architecture.

\paragraph{Handle alignment.}
Given acoustic sequence $x_{1:T}$, at each frame $t$, the possible values that $\pi_t$ can freely take is $\mathcal{Y} \cup \text{<b>}$, where $\mathcal{Y}$ denotes the the alphabet of ordinary labels.
Suppose the alphabet size is $K$, then number of possible values that $\pi_t$ can freely take is $K+1$.
When given $y_{1:L}$, the state transitions followed by $\pi$ is constrained by the state topology, so that the output sequence is $y_{1:L}$.
CTC defines a special state topology, which is enforced by a special many-to-one mapping $\mathcal{B}_{\text{CTC}}: \pi_{1:T} \rightarrow y_{1:L}$. The mapping  $\mathcal{B}_{\text{CTC}}$ is defined by reducing repetitive symbols in $\pi$ to a single symbol, and removing all blank symbols, e.g., 
\begin{displaymath}
\mathcal{B}_{\text{CTC}}(-\text{CC}--\text{AA}-\text{T}-) = \text{CAT}
\end{displaymath}
It can be seen that for given acoustic sequence $x_{1:T}$ and label sequence $y_{1:L}$, all possible alignments between them can be organized in a lattice, as shown in \figref{fig:CTC}.
A \emph{path}\index{Path in CTC} from the top left to the bottom right in the lattice represents an alignment between $x$ and $y$, which, in terminology of CTC \citep{graves2006connectionist}, is denoted by $\pi_{1:T} \in \{\mathcal{Y} \cup \text{<b>}\}^T $.

\begin{figure}
	\centering
	\includegraphics[width=0.8\linewidth]{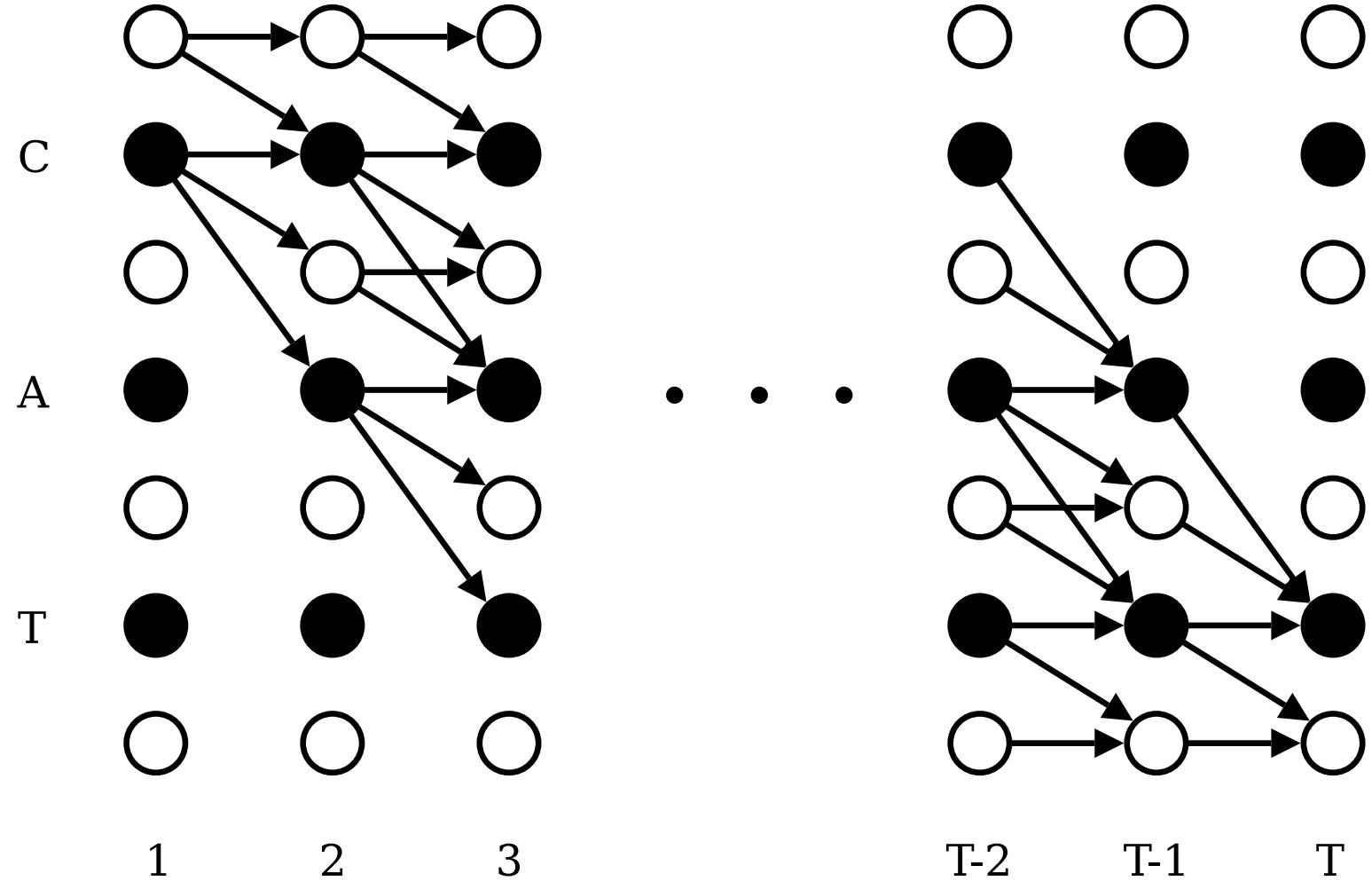}
	\caption{Illustration of the lattice, which contains all the possible alignments between the acoustic sequence and the label sequence `CAT'. Also illustration of the forward-backward algorithm. Black circles
		represent ordinary labels, and white circles represent blanks. Arrows signify allowed transitions. \citep{graves2006connectionist}}
	\label{fig:CTC}
\end{figure}

\paragraph{Obtain $p(y|x)$.}
We can use any kind of neural network (NN) to calculate the high-level feature vectors $h_{1:T} = (h_1,\cdots\,h_T) \in \mathbb{R}^{D \times T}$ from the raw spectral features $x_{1:T}$, or simply say, we encode $x$ into $h$. For each frame $t$, we obtain $h_t$.
The vector representations $h_{1:T}$ can be viewed as features, and the neural network is referred as a feature extractor as discussed previously in \secref{sec:logistic_regression}, or an acoustic encoder in the context of ASR.

Then, we can apply a linear layer followed by a softmax layer to calculate the posteriori distribution of $\pi_t$, as follows:
\begin{align*}
z_t &= W^T h_t + b \in \mathbb{R}^{K+1}\\
p(\pi_t= k |x ) &=  \frac{\exp(z_t^k)}{ \sum_{j=1}^K \exp(z_t^j) } \triangleq p_t^k, k=1,\cdots,K+1
\end{align*}
where $W \in \mathbb{R}^{(K+1)\times D}, b \in \mathbb{R}^{K+1}$ denote the weight matrix and bias vector of the linear layer, respectively.
$p_t^k$ represents the the probability of observing label $k$ at time $t$.
The un-normalized outputs $z_t$ are often called logits, and $z^k_t$ denotes the $k$-th logit corresponding to label $k$.

CTC assumes the conditional independence between states in a path $\pi_{1:T}$, and defines the path posteriori as follows:
\begin{equation}
\label{eq:ctc_pi_post}
p(\pi |x ) = \prod_{t=1}^T p(\pi_t |x )
\end{equation}
Finally, we use the CTC topology to calculate the posteriori probability of the label sequence $y$, by summing over all possible paths, which map to $y$:
\begin{equation}
\label{eq:ctc_y_post}
p(y |x ) = \sum_{\pi: \mathcal{B}_{\text{CTC}}(\pi)=y} p(\pi |x )
\end{equation}

\paragraph{CTC model training.}
In the previous introduction of the CTC model, we suppress the model parameters in formula. In the following, we make explicit of the model parameters $\theta$, which parameterizes the acoustic encoder network.

The network is trained to optimize the log conditional likelihood \eqref{eq:ctc_y_post}.
According to Fisher equality \eqref{eq:fisher_eq_2}, the gradient of the log conditional likelihood w.r.t. $\theta$ for a single data point $(x, y)$ is:
\begin{equation} \label{eq:ctc_grad}
\begin{split}
\frac{\partial \log p_\theta(y |x )}{\partial \theta}
&= E_{p_{\theta}(\pi|x,y)} \left[ \frac{\partial \log p_{\theta}(\pi,y|x) }{\partial \theta} \right] \\
&= E_{p_{\theta}(\pi|x,y)} \left[ \frac{\partial \log p_{\theta}(\pi|x) }{\partial \theta} \right]
\end{split}
\end{equation}
where the second line holds because $\pi$ deterministically determines $y$ with $\mathcal{B}_{\text{CTC}}$.

Note that $\log p_\theta(y|x)$ depends on the network parameters through logits $z_t^{k}$'s. Thus, an important quantity in calculating \eqref{eq:ctc_grad} for the network parameters is the gradient w.r.t. to the logits, which can be derived as follows:
\begin{align}
\frac{\partial \log p_\theta(y|x)}{\partial z_t^k} &= E_{p_{\theta}(\pi|x,y)} \left[ \frac{ \log p_{\theta}(\pi|x) }{\partial z_t^k} \right] (\because \text{Fisher equality}~\eqref{eq:fisher_eq_2})\nonumber\\
 &= E_{p_{\theta}(\pi|x,y)} \left[ \frac{ \log p_t^{\pi_t} }{\partial z_t^k} \right] (\because \text{\eqref{eq:ctc_pi_post}})\nonumber\\
&= E_{p_{\theta}(\pi|x,y)} \left[ \delta(\pi_t=k) - p_t^k \right]\nonumber\\
&= p_\theta(\pi_t=k|x,y) - p_t^k \label{eq:ctc_grad2}
\end{align}
This difference between the posteriori probability and the prior probability $p_t^k$ (without observing $y$) is the error signal received by the NN based feature extractor during training.
$p_\theta(\pi_t=k|x,y)$ is often known as the posterior state occupation probability, which can be calculated using the alpha-beta variables from the forward-backward algorithm \citep{rabiner1989tutorial}. 
The gradients for the network parameters can be calculated from the gradient w.r.t. to the logits based on the back-propagation procedure.

\subsection{CRF-based acoustic modeling with CTC topology}
\label{sec:ctc-crf}

\subsubsection{Motivation}

From \eqref{eq:ctc_pi_post}, we see that the CTC model assumes the conditional independence between states in a path. To overcome this drawback, the RNN-T model (RNN-transducer) and the CTC-CRF model (CRF with CTC topology) have been developed in \citep{graves2012sequence} and \citep{CTCCRF_IC19}, respectively.

A second motivation is that we are interested in bridging the hybrid and the end-to-end approaches for ASR, trying to inherit the data-efficiency of the hybrid approach and the simplicity of the end-to-end approach. 
Remarkably, when comparing the hybrid and end-to-end approaches (modularity versus a single neural network, separate optimization versus joint optimization), it is worthwhile to note the pros and cons of each approach.
\begin{itemize}
\item The end-to-end approach aims to subsume the acoustic, pronunciation, and language models into a single neural network and perform joint optimization.
This appealing feature comes at a cost, i.e. the end-to-end ASR systems are \emph{data hungry}\index{Data hungry}, which require above thousands of hours of labeled speech to be competitive with the hybrid systems \citep{vs,chiu2018state,tuske2019advancing}.
\item In contrast, the modularity of the hybrid approach permits training the AM and LM independently and on different data sets. A decent acoustic model can be trained with around 100 hours of labeled speech, whereas the LM can be trained on text-only data, which is available in vast amounts for many languages. In this sense, modularity promotes \emph{data efficiency}\index{Data efficiency}.
Due to the lack of modularity, it is difficult for an end-to-end model to exploit the text-only data, though there are recent efforts to alleviate this drawback \citep{toshniwal2018comparison,zheng2022empirical}.
\end{itemize}

\subsubsection{The CTC-CRF model}
\label{sec:ctccrf}

The CTC-CRF approach consists of separable AM and LM, which meets the principle to be data efficient by keeping necessary modularity.
In the following, we mainly describe the CTC-CRF based AM.
Different types of LMs, whether autoregressive LMs or energy-based LMs, can be used  with the CTC-CRF based AM.
Different schemes are available for decoding, such as WFST based decoding with $n$-gram LMs, one-pass decoding with shallow fusion of AM and LM scores, or two-pass decoding with LM rescoring.
More details can be found in the toolkit \citep{CAT}.

Continue with the notations in CTC, and note that the core formula in establishing the CTC model is \eqref{eq:ctc_pi_post} and \eqref{eq:ctc_y_post}, while \eqref{eq:ctc_pi_post} makes the conditional independence assumption.
\emph{The main idea of CTC-CRF} is that we can still use the CTC topology to define the posteriori probability of the label sequence $y$, $p_\theta(y |x )$, from the path posteriori $p_\theta(\pi|x)$, as defined in \eqref{eq:ctc_y_post}, but we define the path posteriori as a globally normalized sequence model, or say, a condition random field (CRF), as follows:
\begin{equation}
\label{eq:ctccrf_pi_post}
p_\theta(\pi|x) = \frac{ \exp(\phi_\theta(\pi,x))}
{ \sum_{\pi'}{\exp(\phi_\theta({\pi', x}))} }
\end{equation}
Here $\phi_{\theta}(\pi, x)$ denotes the potential function of the CRF, defined as:
\begin{equation}
\label{eq:ctccrf_pot}
\phi_{\theta}(\pi, x) = \log p(\mathcal{B}(\pi))+ \sum_{t=1}^{T} \log p_{\theta}(\pi_t|x)
\end{equation}
$\sum_{t=1}^{T} \log p_{\theta}(\pi_t|x)$ defines the \emph{node potential}, which is calculated from the acoustic encoder network with parameters $\theta$.
$\log p(\mathcal{B}(\pi))$ defines the \emph{edge potential}, realized by an $n$-gram LM of labels.

Remarkably, regular CTC suffers from the conditional independence between the states in $\pi$. In contrast, by incorporating $\log p(\mathcal{B}(\pi))$ into the potential function in CTC-CRF, this drawback is naturally avoided.
The difference between the CTC model and the CTC-CRF model can be clearly seen from their graphical model representations, as shown in \figref{fig:ctc_vs_ctccrf}.
Note that the $n$-gram LM of labels means the transition structure between labels is of ($n$-1)-th order. 
The transition structure between $\pi_t$'s, when represented by a WFST, is determined by the composition of two component WFSTs, i.e., the WFST representation of the CTC topology and the WFST representation of the $n$-gram LM of labels. 
For reasons to be clear in the following, the resulting WFST is referred to as the \emph{denominator WFST}.
Due to the composition operation, the order of the transition structure between states ($\pi_t$'s) is larger than ($n$-1)-th order. Thus, as a reminder for reading \figref{fig:ctc_vs_ctccrf}(b), the edge potential does not involve exactly $n$ consecutive nodes for a $n$-gram LM of labels.
The graphical model representation of CTC-CRF in \figref{fig:ctc_vs_ctccrf}(b) is mainly for concept illustration.
In practice, in training of a CTC-CRF model, the forward-backward algorithm involving $\log p(\mathcal{B}(\pi))$ can be conducted in the denominator WFST (to be detailed in the following).
%$\log p(\mathcal{B}(\pi))$ is calculated as the path weight in the WFST

Finally, note that we may use a $n$-gram LM of words in decoding. It is reminded not to confuse the $n$-gram LM of labels used in defining the potential in CTC-CRF and the $n$-gram LM of words used in decoding.

\begin{figure}
	\centering
	\includegraphics[width=1.0\linewidth]{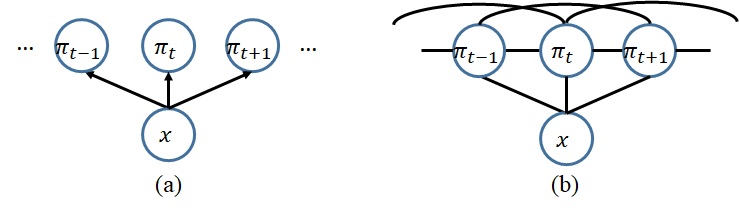}
	\caption{Graphical model representation of the CTC model (a) and the CTC-CTF model (b). Note that the edge potential does not involve exactly $n$ consecutive nodes for a $n$-gram LM of labels, as detailed in the text of \secref{sec:ctccrf}.}
	\label{fig:ctc_vs_ctccrf}
\end{figure}

\subsubsection{Training of the CTC-CRF model}

Note that only the parameters from the acoustic encoder network, denoted by $\theta$, need to be trained in the CTC-CRF model, since the $n$-gram LM of labels is estimated from the training transcripts and fixed in estimating $\theta$.

The network is trained to optimize the log conditional likelihood \eqref{eq:ctc_y_post}.
Making use of the Fisher equality \eqref{eq:fisher_eq_2} and the gradient of CRF's log conditional likelihood \eqref{eq:conditional_ebm_grad}, the gradient of CTC-CRF's log conditional likelihood w.r.t. $\theta$ for a single data point $(x, y)$ is:
\begin{equation} \label{eq:ctccrf_grad}
\begin{split}
\frac{\partial \log p_\theta(y |x )}{\partial \theta}
&= E_{p_{\theta}(\pi|x,y)} \left[ \frac{ \log p_{\theta}(\pi|x) }{\partial \theta} \right] (\because \eqref{eq:fisher_eq_2},~\text{similar to CTC})\\
&= E_{p_{\theta}(\pi|x,y)} \left[ \frac{\partial \phi_{\theta}(\pi, x)}{\partial \theta} - E_{p_{\theta}(\pi'|x)} \left[ \frac{\partial \phi_{\theta}(\pi', x)}{\partial \theta} \right] \right]\\
&= E_{p_{\theta}(\pi|x,y)} \left[ \frac{\partial \phi_{\theta}(\pi, x)}{\partial \theta} \right] - E_{p_{\theta}(\pi'|x)} \left[ \frac{\partial \phi_{\theta}(\pi', x)}{\partial \theta} \right]
\end{split}
\end{equation}
where the second term in the last line does not depend on $\pi$ and thus can be moved out of the expectation w.r.t. $\pi$.
Notably, $p_{\theta}(\pi'|x)$ denotes the model distribution, as defined in \eqref{eq:ctccrf_pi_post} but with $\pi'$ to represent the path.
As commonly found in estimating CRFs, the above gradient is the difference between empirical expectation and model expectation. 
The two expectations are similar to the calculations using the numerator graph and denominator graph in LF-MMI respectively \citep{povey2016purely}.

Remarkably, \eqref{eq:ctccrf_grad} can be derived in another way, which reveals the concept of numerator and denominator. Combining \eqref{eq:ctc_y_post} and \eqref{eq:ctccrf_pi_post} yields CTC-CRF's log conditional likelihood:
\begin{equation} \label{eq:crf-obj2}
 \log p_\theta(y |x ) = \log \frac{  \sum_{\pi: \mathcal{B}_{\text{CTC}}(\pi)=y} \exp(\phi_{\theta}(\pi, x))}{\sum_{\pi'}{\exp(\phi_{\theta}({\pi', x}))}}
\end{equation}
It can be easily seen that the gradient of the above objective function involves two gradients calculated from the numerator and denominator respectively, which can be shown to be equal to the two terms in \eqref{eq:ctccrf_grad}.

Denote $\log p(\pi_t= k |x )=\phi_t^k$, and note that $\log p_\theta(y|x)$ depends on the network parameters through potential values $\phi_t^{k}, 1 \le t \le T, 1 \le k \le K+1$. Thus, an important quantity in calculating \eqref{eq:ctccrf_grad} for the network parameters $\theta$ is the gradient w.r.t. to the potential values, which can be derived as follows:
\begin{align}
\frac{\partial \log p_\theta(y|x)}{\partial \phi_t^k} &= E_{p_{\theta}(\pi|x,y)} \left[ \frac{\partial \phi_{\theta}(\pi, x)}{\partial \phi_t^k} \right] - E_{p_{\theta}(\pi'|x)} \left[ \frac{\partial \phi_{\theta}(\pi', x)}{\partial \phi_t^k} \right] \nonumber\\
&= E_{p_{\theta}(\pi|x,y)} \left[ \delta(\pi_t=k) \right]-E_{p_{\theta}(\pi'|x)} \left[ \delta(\pi'_t=k) \right] \label{eq:ctccrf_grad2}
\end{align}
This difference is the error signal received by the NN based feature extractor during training.
The gradients for the network parameters $\theta$ can be calculated from the gradient w.r.t. to the potential values based on the back-propagation procedure.

Both terms in \eqref{eq:ctccrf_grad2} can be obtained via the forward-backward (FB) algorithm.
\begin{itemize}
	\item Calculating the first term, often referred to the numerator calculation, amounts to running the FB algorithm over the WFST determined by $y$, which is similar to conducting the FB algorithm in calculating the first term of \eqref{eq:ctc_grad2} in CTC.
	\item Calculating the second term, often referred to the denominator calculation, involves running the forward-backward algorithm over the denominator WFST $\mathbf{T}_\text{den}$.
$\mathbf{T}_\text{den}$ is an composition of the CTC topology WFST and the WFST representation of the $n$-gram LM of labels.
The $n$-gram LM of labels is thus often called the denominator $n$-gram LM, to be differentiated from the word-level LM in decoding.
\end{itemize}

\begin{table}
	\caption{Comparison of different models for ASR. 
	HMM topology denotes that labels (including silence) are modeled by multiple states with left-to-right transitions, possible self-loops and skips. CTC topology denotes the special state transitions used in CTC (including blank).
	Locally/globally normalized denotes the formulation of the model distribution.
	In defining the joint distribution of a model,
	locally normalized models use conditional probability functions, while globally normalized models use un-normalized potential functions.
	SS-LF-MMI is classified as globally normalized, though it is cast as MMI-based discriminative training of a pseudo HMM and the HMM model is locally normalized.
	AED does not use states to align label sequence $y$ and observation sequence $x$.
	}
	\label{tab:model_comparison}
	\centering
	\begin{tabular}{l|c|c|c}
		\hline
		\multirow{2}{4em}{Model} & State  & Training &Locally/globally  \\ 
		& topology & objective & normalized \\
		\hline \hline
		HMM & HMM & $p(x|y)$ & local\\
		CTC & CTC & $p(y|x)$ & local \\
		SS-LF-MMI & HMM & $p(y|x)$ & global \\
		CTC-CRF & CTC & $p(y|x)$ & global \\
		RNN-T & RNN-T & $p(y|x)$ & local \\
		AED & - & $p(y|x)$ & local \\
		\hline 
	\end{tabular}
\end{table}

\subsubsection{Related work}

In \tbref{tab:model_comparison}, we give a brief review of existing models in ASR, depending on state topologies, training objectives and whether the model distribution is locally or globally normalized.
We differentiate HMM topology and CTC topology, though the later may be interpreted as a special HMM topology \citep{zeyer2017ctc}.
The two differ not only in the state transition structure but also in the label inventory used (which affects not only the definition of the whole state space but also the estimation of the denominator LM).
%As can been seen from the following review and to the best of our knowledge, this paper represents the first exploration of CRFs with CTC topology.

Further, graphical model representations of existing ASR models are plotted in \figref{fig:ASR_models}, which clearly show the differences between those models.
An ASR model involves an acoustic observations $x = (x_1,\cdots\,x_T) = x_{1:T}$ and a label sequence $y = (y_1,\cdots\,y_L) = y_{1:L}$.
HMM, CTC and CTC-CRF are defined in \eqref{eq:hmm}, \eqref{eq:ctc_pi_post}, and \eqref{eq:ctccrf_pi_post}, respectively.
\emph{RNN-transducer} (RNN-T)\index{Recurrent neural network transducer (RNN-T)} \citep{graves2012sequence} is defined by
\begin{equation}
\label{eq:rnnt}
p(\pi_{1:T+L}|x_{1:T}) = \prod_{j=1}^{T+L} p(\pi_j|\pi_{1:j-1})
\end{equation}
Here $\pi_{1:T+L} = (\pi_1,\cdots\,\pi_{T+L})$ denote the state sequences, or say, the path with $T$ blanks and $L$ labels in RNN-T, such that removing the blanks in $\pi_{1:T+L}$ yields $y_{1:L}$ (see \secref{sec:rnnt} for details).
\emph{Attention based encoder-decoder} (AED)\index{Attention based encoder-decoder (AED)} \citep{chorowski2014end} is defined by
\begin{equation}
\label{eq:AED}
p(y_{1:L}|x_{1:T}) = \prod_{i=1}^{L} p(y_i|x,y_1,\cdots,y_{i-1})
\end{equation}
In summary, from \tbref{tab:model_comparison} and \figref{fig:ASR_models}, we can cleary see that CTC-CRF is fundamentally different from those prior models.
For comparison between CTC-CRF and single-stage (SS) lattice-free maximum-mutual-information (LF-MMI) \citep{hadian2018flat}, readers can refer to \citep{CTCCRF_IC19}.

%A discriminative model directly models $p(y|x)$, e.g. regular CTC \cite{graves2006connectionist} and CTC-CRF.

%For RNN transducer, if we only allow emitting one symbal at each frame, the state topology of RNN transducer is the same as the approximate CTC topology shown in \cite{hadian2018end}.

\begin{figure}
	\centering
	\includegraphics[width=1.0\linewidth]{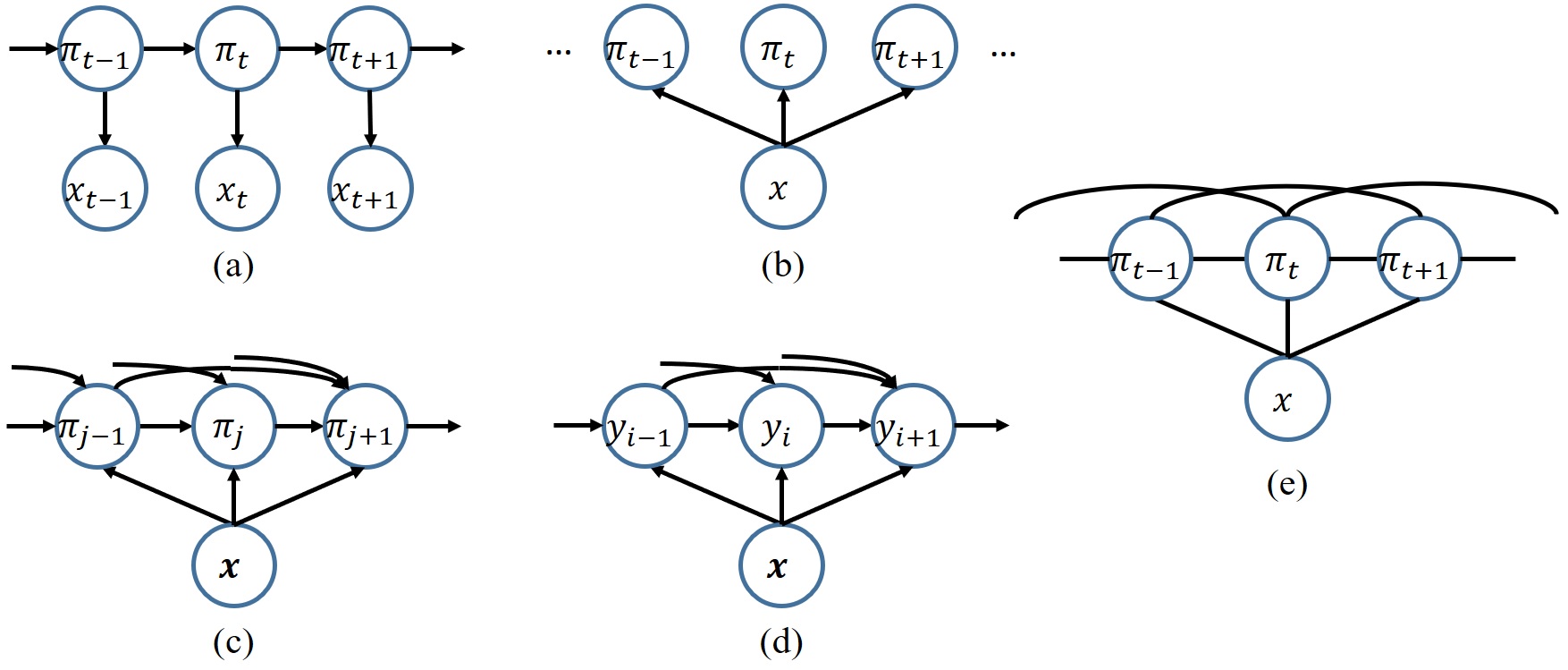}
	\caption{Graphical model representations of different ASR models: (a) HMM, defined in \eqref{eq:hmm}, (b) CTC, defined in \eqref{eq:ctc_pi_post}, (c) RNN-T, (d) AED, (e) CTC-CRF, defined in \eqref{eq:ctccrf_pi_post}.}
	\label{fig:ASR_models}
\end{figure}

\paragraph{Relation to CRF-based acoustic models.}
ASR is a sequence transduction problem in that the input and output sequences differ in lengths, and both
lengths are variable.
An idea in applying CRFs to ASR is to introduce a (hidden) state sequence $\pi$ to align the label sequence $y$ and observation sequence $x$, and define a CRF $p(\pi | x)$ over the (hidden) state sequence $\pi$.
As shown in \eqref{eq:ctc_y_post}, deriving $p(y | x)$ based on $p(\pi | x)$ depends on the mapping between $\pi$ and $y$, which is determined by the state topology that allows for different choices, e.g., CTC topology or HMM topology.
This kind of hidden CRFs was explored in \citep{gunawardana2005hidden} for phone classification, using zero, first and second order features.
Generally speaking, (hidden) CRFs using neural features for ASR are underappreciated. 
The CTC-CRF model proposed in \citep{CTCCRF_IC19} represents the first exploration of CRFs with CTC topology and advances the CRF-based approach with strong empirical results.
Segmental CRFs \citep{lu2016segmental} provide another solution to the alignment problem.

\subsubsection{Performance of the CTC-CRF model}

The CTC-CRF model inherits the data-efficiency of the hybrid approach and the simplicity of the end-to-end approach. 
CTC-CRF eliminates the conditional independence assumption in CTC and performs significantly better than CTC on a wide range of benchmarks, including WSJ (80-h), AISHELL (170-h Chinese), Switchboard (260-h), Librispeech (1000-h), and Fisher-Switchboard (2300-h) (the numbers in the parentheses are the size of training data in hours) \citep{CTCCRF_IC19,CAT}.
It has been also shown \citep{CTCCRF_IC19,CAT} that CTC-CRF is on par with other state-of-the-art end-to-end models, like RNN-T and AED.

The CTC-CRF models have also been used in a variety of tasks in ASR, which show their great potential.
\begin{itemize}
	\item Streaming ASR, particularly the Chunking, Simulating Future Context and Decoding (CUSIDE) approach \citep{an2022cuside};
	\item Neural architecture search \citep{zheng2021efficient};
	\item Children Speech Recognition \citep{yu2021slt};
	\item Modeling with wordpieces and Conformer architectures \citep{zheng2021advancing};
	\item Multilingual and crosslingual speech recognition, particularly the JoinAP (Joining of Acoustics and
	Phonology) approach \citep{zhu2021multilingual}.
\end{itemize}

\section{CRFs for sequence labeling in NLP}
\label{sec:crf_labeling}

Conditional random fields (CRFs) have been shown to be one of the most successful approaches to \emph{sequence labeling}\index{Sequence labeling}.
Various linear-chain \emph{neural CRFs}\index{Neural CRF (NCRF)} (NCRFs) have been developed, as introduced in \secref{sec:linear_chain_crf}.
The node potential modeling is improved by using NNs, but the linear-chain structure is still kept, i.e., using a bigram table as the edge potential.
NCRFs represent an extension from conventional CRFs, where both node potentials and edge potentials are implemented as linear functions using discrete indicator features.
However, linear-chain NCRFs capture only first-order\footnote{Fixed $n$-th order can be cast as first-order.} interactions and neglect higher-order dependencies between labels, which can be potentially useful in real-world sequence labeling applications, e.g., as shown in \citep{zhang2018does} for chunking and NER.
How can we improve CRFs to capture long-range dependencies in the label sequence (preferably non-Markovian)?

\paragraph{Related work.}
Extending CRFs to model higher-order interactions than pairwise relationships between labels is an important research problem for sequence labeling.
There are some prior studies, e.g. higher-order CRFs \citep{Chatzis2013TheIC}, semi-Markov CRFs \citep{Sarawagi2004SemiMarkovCR} and latent-dynamic CRFs \citep{morency2007latent}, but not using NNs.
Using NNs to enhance the modeling of long-range dependencies in CRFs is under-appreciated in the literature. 
A related work is structured prediction energy networks (SPENs) \citep{belanger2016structured}, which use
neural networks to define energy functions that potentially can capture long-range dependencies between structured outputs/labels.
SPENs depend on relaxing labels from discrete to continuous and use gradient descent for test-time inference, which is time-consuming. Training and inference with SPENs are still challenging, though with progress \citep{tu2018learning}.

Outside of the globally normalized sequence models, where CRFs represent a typical class, attention-based encoder-decoder (AED) and RNN-T exploit non-Markovian dependences between labels, but both are locally normalized sequence models and thus suffer from the label bias and exposure bias problems, as described in \secref{sec:bias}.
The work in \citep{wiseman2016sequence} extends AED, by removing the final softmax in the RNN decoder to learn global sequence scores, but cast as a non-probabilistic variant of the sequence-to-sequence model.
A recent work in \citep{cui2021reducing} aims to reducing exposure bias in training RNN-T.

In this section, we mainly introduce a progress made by \emph{CRF transducers}\index{CRF transducer} \citep{hu2019neural}\footnote{Reproducible code is at \url{https://github.com/thu-spmi/SPMISeq}}, which introduce a LSTM-RNN to implement a new edge potential so that long-range dependencies in the label sequence are captured and modeled in CRFs.
So there are two LSTM-RNNs in a CRF transducer, one extracting features from observations to define the node potential and the other capturing (theoretically infinite) long-range dependencies between labels to define the edge potential.
In this view, a CRF transducer is similar to a RNN-transducer (RNN-T) \citep{graves2012sequence}, which also uses two LSTM-RNNs.

In the following, we firstly briefly introduce RNN-T, and then describe CRF transducer in details.
We continue with the notations in \secref{sec:linear_chain_crf} for sequence labeling. 
Given a sequence of observations $x = (x_1,\cdots\,x_T) = x_{1:T}$, the task of sequence labeling is to predict a sequence of labels $y = (y_1,\cdots\,y_T) = y_{1:T}$, with one label for one observation in each position.
$y_i \in \left\lbrace 1,\cdots,K \right\rbrace$ denotes the label at position $i$.

\subsection{RNN-Transducer (RNN-T)}
\label{sec:rnnt}

RNN Transducers (RNN-T) are originally developed for general sequence-to-sequence learning \citep{graves2012sequence}, which do not assume that the input and output sequences are of equal lengths and aligned, e.g., in speech recognition. 
In the following, we introduce RNN transducers in a simple form for applications in sequence labeling, i.e., for the aligned setting - one label for one observation at each position.
To this end, we define
\begin{equation}
\label{eq:rnnt_simp}
 p(y|x)=\prod_{i=1}^T p(y_i|y_{0:i-1},x)
\end{equation}
and implement $p(y_i|y_{0:i-1},x)$ through two networks - transcription network $F$ and prediction network $G$ as follows:
\begin{equation} \label{eq:RNNT-conditional}
p(y_i=k|y_{0:i-1},x) = \frac{\exp({f_i^k + g_i^k})}{\sum_{k'=1}^{K}\exp({f_i^{k'} + g_i^{k'}})}
\end{equation}
Here $F$ scans the observation sequence $x$ and outputs the transcription vector sequence $f = (f_1,\cdots\,f_T)=f_{1:T}$. 
$G$ scans the label sequence $y_{0:T-1}$ and outputs the prediction vector sequence $g = (g_1,\cdots\,g_T)=g_{1:T}$. 
$y_0$ denotes the beginning symbol (<bos>) of the label sequence. 
For a sequence labeling task with $K$ possible labels, $f_i$ and $g_i$ are $K$ dimensional vectors.
Superscript $k$ is used to denote the $k$-th element of the vectors.
Remarkably, the prediction network $G$ can be viewed as a language model of labels, capable of modeling long-range dependencies in $y$, which is exactly the motivation to introducing $G$ in RNN transducers.

To ease comparison, we will also refer to the network below the CRF layer in linear-chain NCRFs as a transcription network, which also implement $\phi_t(y_t=k,x)$ as $f_t^k$.

\subsection{From RNN-T to CRF transducer}
\label{sec:CRF-T}

\begin{table}[t]
\begin{center}
\caption{Model comparison and connection.}
\label{tab:model-table}
\begin{tabular}{l|c|c}
\hline
\multirow{2}{4em}{Model} & Globally  & Long-range dependencies\\ 
& normalized & between labels\\
\hline \hline
	Linear-chain CRF  &$\surd$ &$\times$ \\
	RNN Transducer  &$\times$ &$\surd$ \\
	CRF transducer &$\surd$ &$\surd$ \\
	\hline
\end{tabular}
\end{center}	
\end{table}

\begin{figure}
	\centering
	\includegraphics[width=1.0\linewidth]{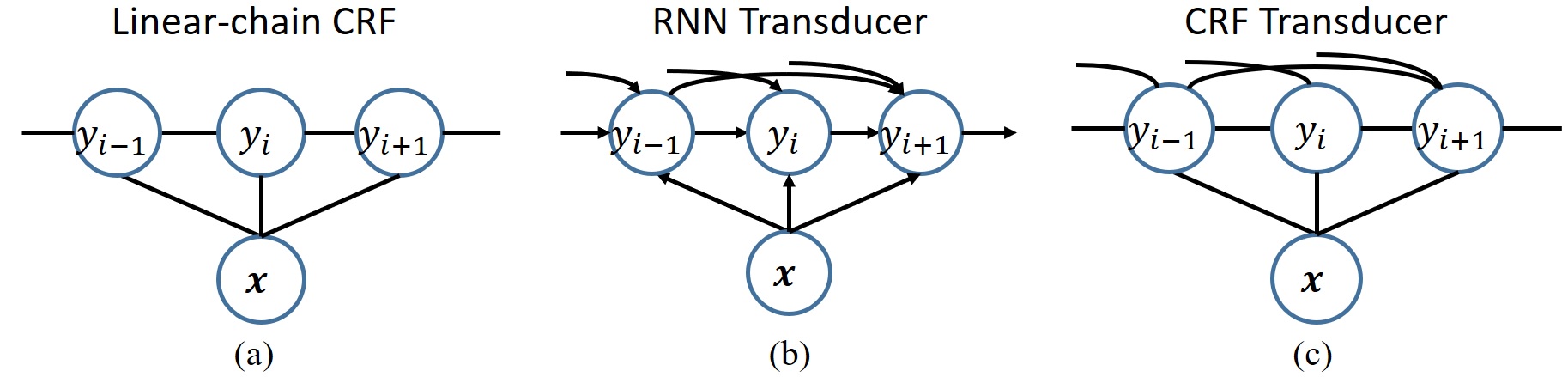}
	\caption{Graphical model representations of (a) a linear-chain CRF, (b) a RNN-T for the aligned setting, and (c) a CRF transducer.
Notably, the graphical representation of the RNN-T for the aligned setting, as defined in \eqref{eq:rnnt_simp}, is different from that of the usual RNN-T as shown in \figref{fig:ASR_models}(c).}
	\label{fig:CRFT_comparisons}
\end{figure}

In the following, we introduce CRF transducers, which combine the advantages of linear-chain NCRFs (globally normalized, using LSTM-RNNs to implement node potentials) and of RNN transducers (capable of capturing long-range dependencies in labels), and meanwhile overcome their drawbacks, as illustrated in \tbref{tab:model-table}.
Further, graphical model representations of different models are plotted in \figref{fig:CRFT_comparisons}, which clearly show the differences between those models.

\paragraph{Model definition.}
A CRF transducer defines a globally normalized, conditional distribution $p(y|x;\theta)$ as follows:
\begin{displaymath}
\ p(y|x;\theta) = \frac{\exp \left\lbrace  u(y,x;\theta)\right\rbrace }{Z(x;\theta)}.
\end{displaymath}
where $Z(x;\theta) = \sum_{y'\in\mathcal{D}_T}\exp \left\lbrace u(y',x;\theta) \right\rbrace $ is the global normalizing term and $\mathcal{D}_T$ is the set of allowed label sequences of length $T$. The total potential $u(y,x;\theta)$ is decomposed as follows:
\begin{displaymath}
u(y,x;\theta) = \sum_{i=1}^{T} \left\lbrace \phi_i(y_i,x;\theta) + \psi_i(y_{0:i-1},y_i;\theta) \right\rbrace.
\end{displaymath}
where $\phi_i(y_i,x;\theta)$ is the node potential at position $i$, $\psi_i(y_{0:i-1},y_i;\theta)$ is the clique potential involving labels from the beginning up to position $i$. Thus the underlying undirected graph for the label sequence $y$ is fully-connected, which potentially can capture long-range dependencies from the beginning up to each current position.

\begin{figure}[t]
	\centering
	\centerline{\includegraphics[width=8.5cm]{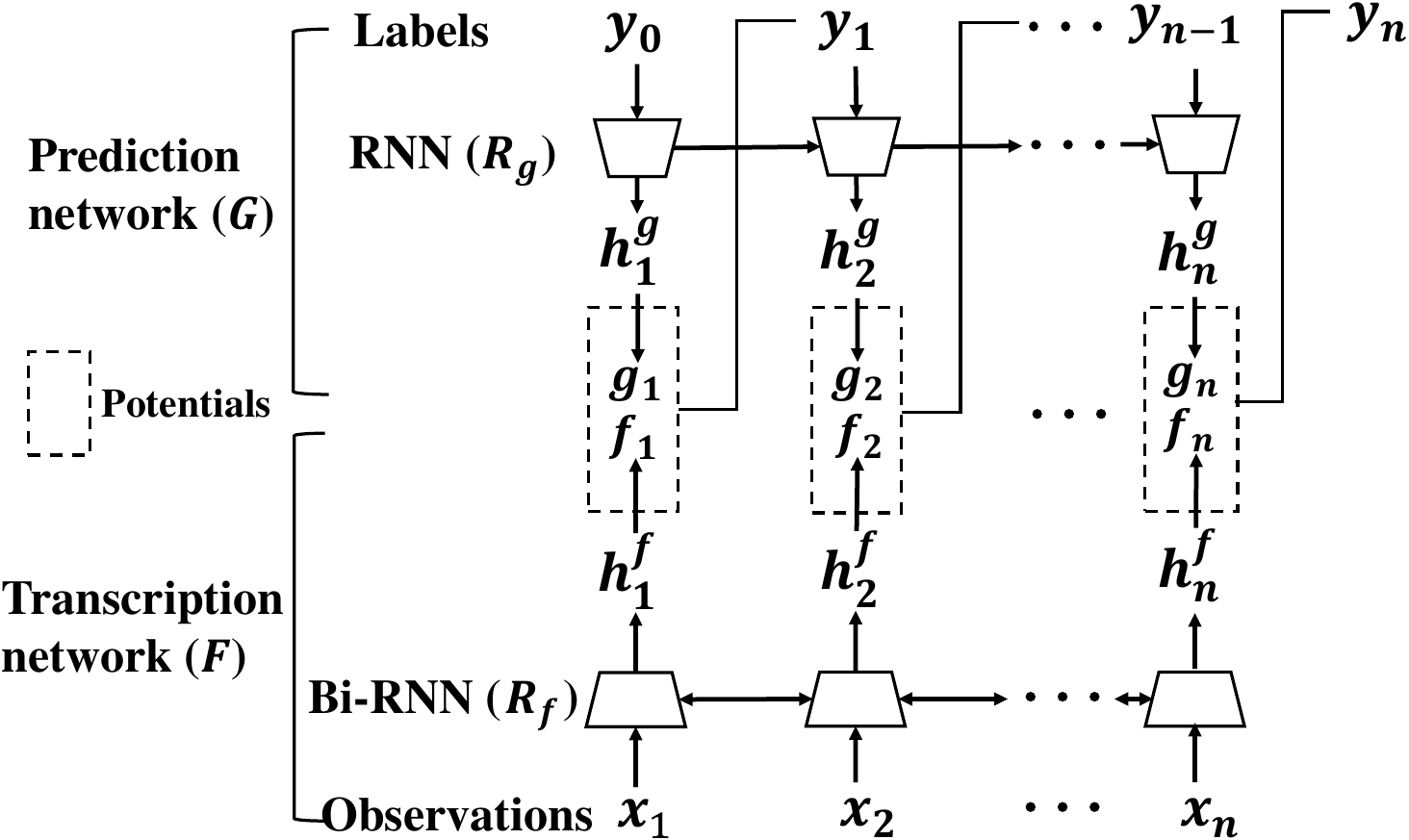}}
	\caption{The architecture of a CRF transducer.}
	\label{fig:crft.png}
\end{figure}

\paragraph{Neural network architectures.}
Like in RNN transducers, we introduce two networks in CRF transducers, as shown in \figref{fig:crft}. The transcription network $F$ implements the node potential $\phi_i(y_i,x;\theta)$, which represents the score for $y_i$ based on observations $x$. In the experiments on NLP sequence labeling, each word $x_i$ is represented by a concatenation of a pre-trained word embedding vector and another embedding vector obtained from a character-level CNN \citep{hu2019neural}.
The transcription network $F$ is a bidirectional
RNN ($R_f$) that scans the sequence of the concatenated vectors for words to generate hidden vectors $h_i^f=[\overrightarrow{h_i^f};\overleftarrow{h_i^f}]$, which are then fed to a linear layer with output size of $K$ to generate $f_i \in \mathbb{R}^K$.

%\(\psi_i(y_{
%	\begin{tiny}
%	0:i-1,
%	\end{tiny}}\\y_i;\theta\))
The prediction network $G$ implements the clique potential $\psi_i(y_{0:i-1},y_i;\theta)$, which represents the score for $y_i$ by taking account of dependencies between $y_i$ and previous labels $y_{0:i-1}$.
In the experiments, each label $y_i$ is represented by a label embedding vector, initialized randomly. $G$ is a unidirectional RNN ($R_g$) that accepts the label sequence $y$ and generates hidden vectors $h_i^g=\overrightarrow{h_i^g}$, which are then fed to a linear layer with output size of $K$ to generate $g_i \in \mathbb{R}^K$.

It can be seen from above that a CRF transducer is similar to a RNN transducer. The difference is that a RNN transducer is locally normalized through softmax calculations as shown in Eq. (\ref{eq:RNNT-conditional}), while a CRF transducer is globally normalized, locally producing (un-normalized) potential scores.

\paragraph{Potential design.}
Based on $f_i$ and $g_i$, there are two possible designs to implement the potentials $\phi_i$ and $\psi_i$, which can be chosen empirically in the experiments. The first design is:
\begin{equation} \label{eq:design-a}
\begin{aligned}
\phi_i(y_i=k,x;\theta)= f_i^k \\
\psi_i(y_{0:i-1},y_i=k;\theta)= g_i^k
\end{aligned}
\end{equation}
The second design is:
\begin{equation} \label{eq:design-b}
\begin{aligned}
\phi_i(y_i=k,x;\theta) = \log \frac{\exp({f_i^k})}{\sum_{k'=1}^{K}\exp(f_i^{k'})} \\
\psi_i(y_{0:i-1},y_i=k;\theta)= \log \frac{\exp({g_i^k})}{\sum_{k'=1}^{K}\exp(g_i^{k'})}
\end{aligned}
\end{equation}

\paragraph{Decoding and training.}
CRF transducers break the first-order Markov assumption in the label sequence as in linear-chain NCRFs and thus do not admit dynamic programming for decoding. Instead, beam search can be used to approximately find the most probable label sequence:
\begin{displaymath}
\hat{y}=\argmax_{y'\in\mathcal{D}_T} p(y'|x;\theta) = \argmax_{y'\in\mathcal{D}_T} u(y',x;\theta).
\end{displaymath}

Training data consists of inputs $x$ paired with oracle
label sequences $y^*$. Stochastic gradient
descent (SGD) on the negative log-likelihood of the
training data is conducted:
\begin{displaymath}
L(y^*;\theta) = -u(y^*,x;\theta) + \log Z(x;\theta).
\end{displaymath}
It is easy to calculate the gradient of the first term. However, the gradient of the log normalizing term involves model expectation:
\begin{displaymath}
\nabla_\theta \log Z(x;\theta) = E_{p(y'|x;\theta)} \left[ \nabla_\theta u(y',x;\theta) \right] 
\end{displaymath}
The calculations of the normalizing term and the model expectation can be exactly performed for linear-chain NCRFs (via the forward and backward algorithm), but are intractable for CRF transducers.
It is empirically found in the experiments \citep{hu2019neural} that the method of beam search with early updates \citep{collins2004incremental} marginally outperforms Monte Carlo based methods for training CRF transducers.

The basic idea is that we run beam search and approximate the normalizing term by summing over the paths in the beam.
Early updates refer to that as the training sequence is being decoded, we keep track of the location of the oracle path in the beam; If the oracle path falls out of the beam at step $j$, a stochastic gradient step is taken on the following objective:
\begin{displaymath}
L(y^*_{1:j};\theta) = -u(y^*_{1:j};\theta) + log \sum_{y'\in\mathcal{B}_j}\exp \left\lbrace u(y'_{1:j};\theta) \right\rbrace
\end{displaymath}
where $u(y_{1:j};\theta) = \sum_{i=1}^{j} \left\lbrace \phi_i(y_i,x;\theta) + \psi_i(y_{0:i-1},y_i;\theta) \right\rbrace $ denotes the partial potential (with abuse of the notation of $u$).
The set $\mathcal{B}_j$ contains all paths in the beam
at step $j$, together with the oracle path prefix $y^*_{1:j}$.

\paragraph{Performance of CRF transducers.}
Different sequence labeling methods are evaluated over POS tagging, chunking and NER (English, Dutch) in \citep{hu2019neural}.
Experiment results show that CRF transducers achieve consistent improvements over linear-chain NCRFs and RNN transducers across all the four tasks, and can improve state-of-the-art results.

\newpage
\section{EBMs for conditional text generation}
\label{sec:cond_gen}

\subsection{Residual energy-based models} % Yutian
\label{sec:residual_ebm}
\index{Residual ELM}

\subsubsection{Motivation}
Text generation is ubiquitous in many NLP tasks, from summarization, to dialogue and machine translation.
In this section, we will further introduce residual energy-based language models, as briefly covered in \secref{sec:residual_ELM}, and show their application in (conditional) text generation.
The dominant approach to text generation is based on autoregressive language models (ALMs), especially recent large ALMs parameterized by large neural networks \citep{radford2018improving}, which are locally-normalized.
Unfortunately, local normalization also brings some drawbacks, as described in \citep{deng2020residual} and listed below.
First, the designer of the model needs to specify the order in which tokens are generated. 
Second, at training time the model is conditioned on ground truth context while at test time it is conditioned on its own generations, a
discrepancy referred to as \emph{exposure bias} (\secref{sec:bias}). 
Finally, while heuristics like beam search somewhat help rescore at the sequence level, generation generally lacks long-range coherency
because it is produced by the greedy selection of one token at a time 
without lookahead\footnote{This drawback of generation without lookahead is related to the \emph{label bias} problem of locally-normalized sequence models, namely being weak in revising earlier decisions (\secref{sec:bias}).}.

In principle, EBMs potentially address all these issues, as they do not require any local normalization. EBMs are not prone to exposure bias and label bias, and they can score the whole input at once. EBMs may enable generation of large chunks of text, which should help improve coherency.
However, a difficulty of applying EBMs in text generation is that sampling from EBMs is intractable, and so is maximum likelihood training.
A recent work in \citep{deng2020residual} develops residual EBMs for text generation and shows impressive results, which will be detailed below.

%Local normalized model such as large pre-trained autoregressive models can efficiently generate samples of remarkable quality, yet they suffer from some discrepancies like label bias, exposure bias and lack of  long-range coherency. On the other hand, globally normalized EBMs can potentially address these issues but they had limited application in text generation due to the intractable sampling. 

%\citep{deng2020residual} proposes residual energy-based models to integrate the advantages and overcome the shortcomings of locally normalized and globally normalized models.
%Specifically, the EBM is defined in the form of exponential tilting of a locally normalized language model $p_{LM}(x)$ :
%\begin{equation}
%    p_\theta(x)=\frac{p_{LM}(x)exp(-E_\theta(x))}{Z(\theta)}
%\end{equation}
%The similar formulation is also used in \citep{xie2016theory, descriptor, xie2018cooperative, wang2017language}. 
%The formulation has multi-fold benefits including leveraging recent advancements in locally normalized language models, a natural proposal distribution for training and the capacity for efficient evaluation and generation.

\subsubsection{The residual EBM model}

The formulation of residual EBMs, as introduced in \secref{sec:residual_ELM}, has multi-fold benefits for text generation \citep{deng2020residual}.
First, by incorporating an autoregressive language
model in the residual formulation, we can leverage recent advancements in autoregressive language modeling. Second,
the autoregressive language model provides a natural proposal distribution for training of residual EBMs, and
the training can be made efficient by using the conditional noise contrastive estimation objective (\secref{sec:cond_nce}). Lastly, this formulation enables efficient evaluation
and generation via importance sampling, as we shall detail in the following.
In some sense, this last point is perhaps the central contribution of \citep{deng2020residual}, as it allows estimating
perplexity of the residual EBM, and thus allows these EBMs to be compared in a standard way to
other models.

\cite{deng2020residual} investigates an EBM trained on the residual of a pre-trained autoregressive LM, particularly for conditional generation of discrete sequences.
Given a conditioning prefix $x_1,\cdots,x_c$ with $x_j \in  \mathbb{V}$ where $\mathbb{V}$ is the vocabulary, the probability of generating a sequence of total length $T > c$ is defined as follows:
%Given a prefix sequence $x=x_{1:p}$ and a target sequence $y=y_{1:T}$, the conditional generative model is:
\begin{equation}
\label{eq:ELM_deng}
\begin{split}
&p_\theta(x_{c+1},\cdots,x_T|x_1,\cdots,x_c)\\
=&\frac{p_{\text{LM}}(x_{c+1},\cdots,x_T|x_1,\cdots,x_c) \exp(-E_\theta(x_1,\cdots,x_c,x_{c+1},\cdots,x_T))}{Z_\theta(x_1,\cdots,x_c)}\\
=&\frac{\tilde{p}_\theta(x_{c+1},\cdots,x_T|x_1,\cdots,x_c)}{Z_\theta(x_1,\cdots,x_c)}
\end{split}
\end{equation}
where $p_{\text{LM}}$ denotes the pre-trained autoregressive LM and is fixed throughout training, $E_\theta(\cdot)$ is the residual energy function parameterized by $\theta$. 
In the experiment of \citep{deng2020residual}, $E_\theta$ is initialized with BERT/RoBERTa; in the final layer the mean-pooled hidden states are projected to a scalar energy value.
$p_\theta$ is called \emph{the joint model}.
$Z_\theta(x_1,\cdots,x_c)$ is the normalizing factor known as partition function.
$\tilde{p}_\theta$ denotes the un-normalized probability.

%where $\oplus$ denotes the concatenation of sequences and $Z(\theta;x)$ denotes the partition function depending on $\theta$ and $x$. By abuse of notation, we omit the conditional part and use $x$ to denote the whole sequence in the following derivations of training and evaluation.

\subsubsection{Training of the residual EBM}

%The goal of training is to learn the parameters of the energy
%function such that the joint model distribution gets close to the data distribution. 
The conditional EBMs defined in \eqref{eq:ELM_deng} can be trained using NCE (\secref{sec:NCE}), and more specifically its conditional version (\secref{sec:cond_nce}). 
For the sake of reducing clutter in the notation, we will drop the conditioning variables $x_1,\cdots,x_c$ and use $x$ to denote a target token sequence (namely $x_{c+1},\cdots,x_T$) in the following discussion.
Denote the training dataset by $\mathcal{D}$.

NCE requires two distributions: the model distribution and a noise distribution. 
Here the model distribution is the joint model of \eqref{eq:ELM_deng}, $p_\theta$, while the noise distribution is the pre-trained LM, $p_{\text{LM}}$. 
NCE then trains a binary classifier on the difference of log-probability scores of these two models. 
Since the joint model is the product of the energy function (whose parameters we want to learn) with the pre-trained LM, the difference reduces to: 
$\log \tilde{p}_\theta - \log p_{\text{LM}} = - E_\theta $. 
Therefore, under these modeling assumptions of residual learning and noise model, the NCE objective function becomes:

%Noise Contrastive Estimate (NCE) \citep{nce} can be well applied to the training of residual EBMs, because the objective function can be reduced to a simple form as follows:

\begin{equation}
\label{eq:residual_nce}
    \begin{aligned}
        {J}_\theta&=\mathop{\mathbb{E}}\limits_{x_+ \sim \mathcal{D}} \log \frac{\tilde{p}_\theta(x_+)}{\tilde{p}_\theta(x_+)+ p_{\text{LM}}(x_+)} +  \mathop{\mathbb{E}}\limits_{x_- \sim p_{\text{LM}}} \log \frac{ p_{\text{LM}}(x_-)}{\tilde{p}_\theta(x_-)+ p_{\text{LM}}(x_-)}\\
        &=\mathop{\mathbb{E}}\limits_{x_+ \sim \mathcal{D}} \log\frac{1}{1+ \exp(E_\theta(x_+))} +  \mathop{\mathbb{E}}\limits_{x_- \sim p_{\text{LM}}} \log\frac{1}{1+\exp(-E_\theta(x_-))}
    \end{aligned}
\end{equation}
where $x_+$ denotes the positive sequence taken from the human generated training set, and $x_-$ the negative sequence drawn from the pre-trained LM (for a given ground truth prefix). 
%$\nu$ denotes the ratio of negative sample size to real sample size.
Again, we see that NCE training of the energy function reduces to training a binary classifier to discriminate between real text and text generated by a pre-trained autoregressive language model.
The experiment of \citep{deng2020residual} uses a prefix of size 120 tokens and generates the following 40 tokens; with the notation of \eqref{eq:ELM_deng}, $c = 120$ and $T = 160$. 
For NCE training, for efficiency $16$ samples per prefix for CC-News \citep{bakhtin2019real} were generated offline, and sampled uniformly from those samples at training time.

\begin{algorithm}[t!]
	%\scriptsize
	\caption{Top-$k$ sampling for the residual EBM}
	%	\vspace{-0.5em}
	\label{alg:residualEBM_sampling}
	\begin{algorithmic}
		\REQUIRE Number of samples $n$ drawn from $p_{\text{LM}}$, value of $k$ in top-k
		\STATE \texttt{// Get a set of samples from} $p_{\text{LM}}$
		\STATE Sample $n$ samples $\{x^1,...,x^n\}$ from $p_{\text{LM}}$ with top-k sampling
		\STATE Calculate energies $s^i=E_\theta(x^i)$ for each $x^i \in \{x^1,...,x^n\}$
		\STATE \texttt{// Resample from the set of LM samples}
		\STATE Sample $x=x^i$ with probability $\frac{\exp(-s^i)}{\sum_{j=1}^n \exp(-s^j)}$
		\STATE {\bf Return:} $x$
	\end{algorithmic}
\end{algorithm}

\subsubsection{Generation from the residual EBM}
\label{sec:generation_residual}
In order to generate from the residual EBM model \eqref{eq:ELM_deng} efficiently, \cite{deng2020residual} uses self-normalized importance sampling (SNIS) (\secref{sec:IS})\index{Self-normalized importance sampling (SNIS)}. 
Under the assumptions that the model from which we wish to draw samples is the joint model, which is the product of the autoregressive model and the energy function, and that the proposal distribution is the autoregressive model itself, sampling proceeds simply by: 
a) sampling from the autoregressive language model, followed by b) resampling according to the energy
function.
The algorithm is shown in \algref{alg:residualEBM_sampling}, where a top-$k$ constraint on the pre-trained language model\index{Pre-trained language model (PLM)} is introduced to improve the quality of samples in the set.
Without the top-k constraint, as the number of samples goes to infinity, we would recover exact samples from the joint model distribution.

\subsubsection{Evaluation of the residual EBM}

A commonly used protocol for evaluating generative sequence models, especially language models, is perplexity (PPL)\index{Perplexity (PPL)}, which is equal to 
\begin{displaymath}
\text{PPL} = 2^{ - \frac{1}{T-c} \sum_{i=c+1}^T \log_2 p_\theta(x_i | x_1,\cdots,x_{i-1})  }
\end{displaymath}
PPL can be interpreted as the average number of tokens the model is uncertain of at every time step. 
For the residual EBM model, the log-likelihood required by PPL relies on estimating the partition function 
\begin{displaymath}
Z_\theta = \sum_x p_{\text{LM}}(x) \exp(- E_\theta(x))
 = \mathbb{E}_{x \sim p_{\text{LM}}} \exp(- E_\theta(x)).
\end{displaymath}
Based on \citep{nowozin2018debiasing}, two estimators are derived in \citep{deng2020residual} for the lower and upper bounds of the partition function respectively, as shown in the following theorem.

\begin{theorem}
	Denote $T_n$ as the empirical estimate of $\log \mathbb{E}_{x \sim p_{\text{LM}}} \exp(-E_\theta(x))$ with $n$ samples $x_i \sim P_{\text{LM}}$, $i=1,\cdots,n$: $T_n = \log\frac{1}{n}\sum_{i=1}^n \exp(-E(x_i))$, then ${\forall}\epsilon>0, {\exists} N>0$ such that ${\forall}n>N$ we have
	\begin{equation}
	\label{eq:residual_bounds}
	Z_\theta-\epsilon<\mathbb{E}[T_n]<Z_\theta<\mathbb{E}[(2n-1)T_n-2(n-1)T_{n-1}]<Z_\theta+\epsilon.
	\end{equation}
\end{theorem}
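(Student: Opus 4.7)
The plan is to read the four inequalities as statements about $\log Z_\theta$ (the quantity that $T_n$ actually targets, given the paper's slight abuse of notation), and to establish them in two groups: the lower-bound pair $Z_\theta - \epsilon < \mathbb{E}[T_n] < Z_\theta$ follows from Jensen's inequality combined with the law of large numbers, while the upper-bound pair $Z_\theta < \mathbb{E}[(2n-1)T_n - 2(n-1)T_{n-1}] < Z_\theta + \epsilon$ requires a careful asymptotic expansion of the bias $\mathbb{E}[T_n] - \log Z_\theta$. Throughout I would set $Y_i = \exp(-E_\theta(x_i))$, $\mu = \mathbb{E}[Y_i] = Z_\theta$, and $S_n = \tfrac{1}{n}\sum_{i=1}^n Y_i$, so that $T_n = \log S_n$ is a sample-mean estimator of $\log\mu$.

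For the middle inequality $\mathbb{E}[T_n] < \log\mu$, Jensen's inequality applied to the strictly concave function $\log$ yields $\mathbb{E}[\log S_n] < \log \mathbb{E}[S_n] = \log\mu$, strict whenever $Y$ is non-degenerate. For the first inequality $\log\mu - \epsilon < \mathbb{E}[T_n]$, the strong law of large numbers gives $S_n \to \mu$ almost surely; continuity of $\log$ at $\mu > 0$ then yields $T_n \to \log\mu$ almost surely, and dominated convergence under a mild moment assumption on $\log Y$ gives $\mathbb{E}[T_n] \to \log\mu$, so the lower bound holds for all $n$ past some $N_1(\epsilon)$.

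The core is the upper-bound pair. I would Taylor-expand $T_n = \log\mu + \log\bigl(1 + (S_n - \mu)/\mu\bigr)$ via $\log(1+x) = x - x^2/2 + x^3/3 - \cdots$, take expectations using $\mathbb{E}[(S_n-\mu)^k] = O(n^{-\lceil k/2 \rceil})$, and derive
\[
\mathbb{E}[T_n] = \log\mu - \frac{c}{n} + \frac{d}{n^2} + O(n^{-3}),
\]
where $c = \sigma^2/(2\mu^2) > 0$ and $\sigma^2 = \mathrm{Var}(Y)$. Substituting into $U_n \triangleq (2n-1)\mathbb{E}[T_n] - 2(n-1)\mathbb{E}[T_{n-1}]$, the $\log\mu$ terms combine to $\log\mu$ and, by a short direct calculation using $1/(n-1) = 1/n + 1/n^2 + O(n^{-3})$, the $c/n$ pieces assemble into $+c/n$ (rather than cancelling), yielding
\[
U_n = \log\mu + \frac{c}{n} + O(n^{-2}).
\]
Since $c > 0$, this gives $U_n > \log\mu$ for all $n$ sufficiently large, together with $U_n \to \log\mu$, which is exactly the upper-bound pair.

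The hard part will be making the Taylor expansion rigorous: the series for $\log(1+x)$ only converges for $|x| < 1$, so I must control the event $\{|S_n - \mu|/\mu \geq 1/2\}$ and show that both the contribution from this tail event and the remainder of the truncated expansion integrate to $O(n^{-3})$. This demands some regularity on $Y = \exp(-E_\theta(X))$ such as bounded fourth moments (or, more comfortably, a bound on the scalar output of the network defining $E_\theta$), conditions which are implicit in the paper's setting and which should simply be recorded in the hypothesis. With such a moment assumption the remainder control follows from standard concentration bounds for sample means, the asymptotic expansion is justified uniformly in $n$, and the four inequalities hold simultaneously for all $n > N(\epsilon)$.
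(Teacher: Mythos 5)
Your proposal is correct and follows essentially the same route as the argument the paper relies on: the monograph states this theorem without giving its own proof, deferring to Deng et al. (2020) and Nowozin (2018), where precisely this combination of Jensen's inequality for the lower pair and a delta-method bias expansion $\mathbb{E}[T_n] = \log Z_\theta - \frac{\sigma^2}{2 Z_\theta^2 n} + O(n^{-2})$ feeding the over-corrected jackknife combination $(2n-1)T_n - 2(n-1)T_{n-1}$ (whose expected bias becomes $+c/n$ with $c>0$) is used. Your reading of the bounds as statements about $\log Z_\theta$ rather than $Z_\theta$ correctly repairs the notational slip in the statement, and the regularity you flag is indeed available in this setting, since $\exp(-E_\theta(x))$ ranges over a finite set of token sequences and is bounded above and below by positive constants, so all moments needed to control the truncated expansion and the tail event exist.
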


Similarly to locally normalized models, we can also factorize the probabilities of an entire sequence step by step,
$p_\theta(x) = \prod_{t=1}^T p_\theta(x_t|x_{<t})$.
By marginalizing over the future, the following step-wise probabilities are derived in \citep{deng2020residual}:
\begin{equation}
\label{eq:residual_stepwise_prob}
p_\theta(x_t|x_{<t})
=p_{\text{LM}}(x_t|x_{<t})\frac{\mathbb{E}_{x'_{t+1:T}\sim p_{\text{LM}}(\cdot|x_{\le t})}[\exp(-E_\theta(x_{\le t}, x'_{t+1:T}))]}{\mathbb{E}_{x'_{t:T}\sim p_{\text{LM}}(\cdot|x_{\le t-1})}[\exp(-E_\theta(x_{\le t-1}, x'_{t:T}))]}
\end{equation}
Note that both the numerator and the denominator in \eqref{eq:residual_stepwise_prob} take the same form as the partition function, we can also use \eqref{eq:residual_bounds} to estimate the upper and lower bounds. For example, the lower bound of $\log p_\theta(x_t|x_{<t})$ can be obtained by using the lower bound of the numerator and the upper bound of the denominator.
Remarkably, for $t=T$, we can calculate the log probability by exhaustive enumeration. This gives us an idea of the true performance of the model at the last step, and it also provides a sanity-check of the tightness of the estimators.

%One vital characteristic of residual EBMs is the capacity to estimate the perplexity (PPL), which allows residual EBMs to be compared in a standard way to other models. To calculate PPL, the probability of a sequence  can be factorized step by step as follows:
%\begin{equation}
%\label{eq:residual_factor}
%\begin{aligned}
%    p_\theta(x) &= \prod_{t=1}^T p_\theta(x_t|x_{<t})\\
%    &=\prod_{t=1}^T p_{\text{LM}}(x_t|x_{<t})\frac{\mathbb{E}_{x'_{t+1:T}\sim p_{\text{LM}}(\cdot|x<t+1)}[\exp(-E_\theta(x_{<t+1}\oplus x'_{t+1:T}))]}{\mathbb{E}_{x'_{t:T}\sim p_{\text{LM}}(\cdot|x<t)}[\exp(-E_\theta(x_{<t}\oplus x'_{t:T}))]}
%\end{aligned}
%\end{equation}

\subsubsection{Performance of the residual EBM}
%Experiments on two datasets, CC-News \citep{bakhtin2019real} and the Toronto Book Corpus \citep{zhu2015aligning} show that the residual EBMs has lower perplexity than the base language models (a residual autoregressive language model and a double size autoregressive language model). Human evaluation also show that the residual EBMs are preferred more often than base language models.
In \citep{deng2020residual}, experiments on two datasets, CC-News \citep{bakhtin2019real} and the Toronto Book Corpus \citep{zhu2015aligning} show that residual EBMs demonstrated improved generation ability against strong autoregressive baselines, both in terms of estimated perplexity and through human evaluation.

\subsection{Controlled text generation from pre-trained language models} % Naver; Mix and Match
\label{sec:controlled_text_gen}

\subsubsection{Motivation}
While large transformer-based autoregressive language models trained on massive amounts of data exhibit exceptional capabilities to generate natural language text, effective methods for generating text that satisfy global constraints
and possess holistic desired attributes remains an active area of research \citep{khalifa2020distributional,mixandmatch}.
For instance, we may want to avoid toxic content; or steer generations towards a certain topic or style.
Much of the prior work has approached controlled generation via either training domain-conditioned neural language models or finetuning/modifying an underlying large pre-trained base model for attribute sensitive generation (see references in \cite{mixandmatch}). 
Not only do these approaches involve computational overhead and estimation errors associated with the training of language models, but they are also dependent on access to a large amount of attribute-specific language data which can be impractical in many scenarios and exacerbate privacy concerns.

In order to address these limitations, the study in \citep{mixandmatch} thus aims to \emph{eschew training} and focuses on \emph{generation-time control} from pre-trained modules.
\emph{The mix-and-match method} proposed in \citep{mixandmatch}, as shown in \figref{fig:MixMatch}, draws samples from a test-time combination of pre-trained black-box experts that each scores a desired property of output text - for example, fluency, attribute sensitivity, or faithfulness to the context.
Specifically, the product of these black-box experts is viewed as a EBM, and sampling is performed (without further training or fine-tuning), using a specialized Gibbs sampler with a Metropolis-Hastings (MH) correction step \citep{goyal2021exposing}, which is basically MH within Gibbs sampling, as introduced in \secref{sec:MCMC}.

A related work in \citep{khalifa2020distributional} proposes \emph{a distributional approach} for addressing controlled text generation from pre-trained language models (PLMs)\index{Pre-trained language model (PLM)}.
Both ``pointwise'' constraints (hard requirements on each individual) and ``distributional'' constraints (collective statistical requirements such as moment constraints) are permitted to added in the target LM, while minimizing KL divergence from the initial pre-trained LM distribution. The optimal target distribution is also uniquely determined as a residual EBM model and can be trained through a variant of policy gradient based on importance sampling.

Also for controlled text generation, the work in \citep{qin2022cold} is motivated by the need to enrich decoding algorithms that can work directly with pre-trained language models (PLMs) without task-specific fine-tuning, and support complex combinations of hard and soft constraints to control the generated text on the fly.
Previous studies use MH within Gibbs sampling or SNIS, as surveyed in \tbref{tab:gen_survey}.
A new decoding method, called \emph{Constrained decoding with Langevin dynamics} (COLD)\index{Constrained decoding with Langevin dynamics (COLD)}, is developed in \citep{qin2022cold}, which introduces Langevin dynamics to text-based EBMs for efficient gradient-based sampling. 
Specifically, the COLD based text generation performs sampling by iteratively updating a continuous relaxation of text using gradients of the energy function. The resulting continuous text samples are then mapped back to the discrete space with a simple guided discretization approach, yielding text sequences that are fluent and adhere to the constraints.
Experiments are conducted in three text generation applications - lexically-constrained generation, abductive reasoning, and counterfactual reasoning and show the effectiveness of the COLD approach, both in terms
of automatic and human evaluation.

\begin{figure}[t]
	\centering
	\centerline{\includegraphics[width=1.0\linewidth]{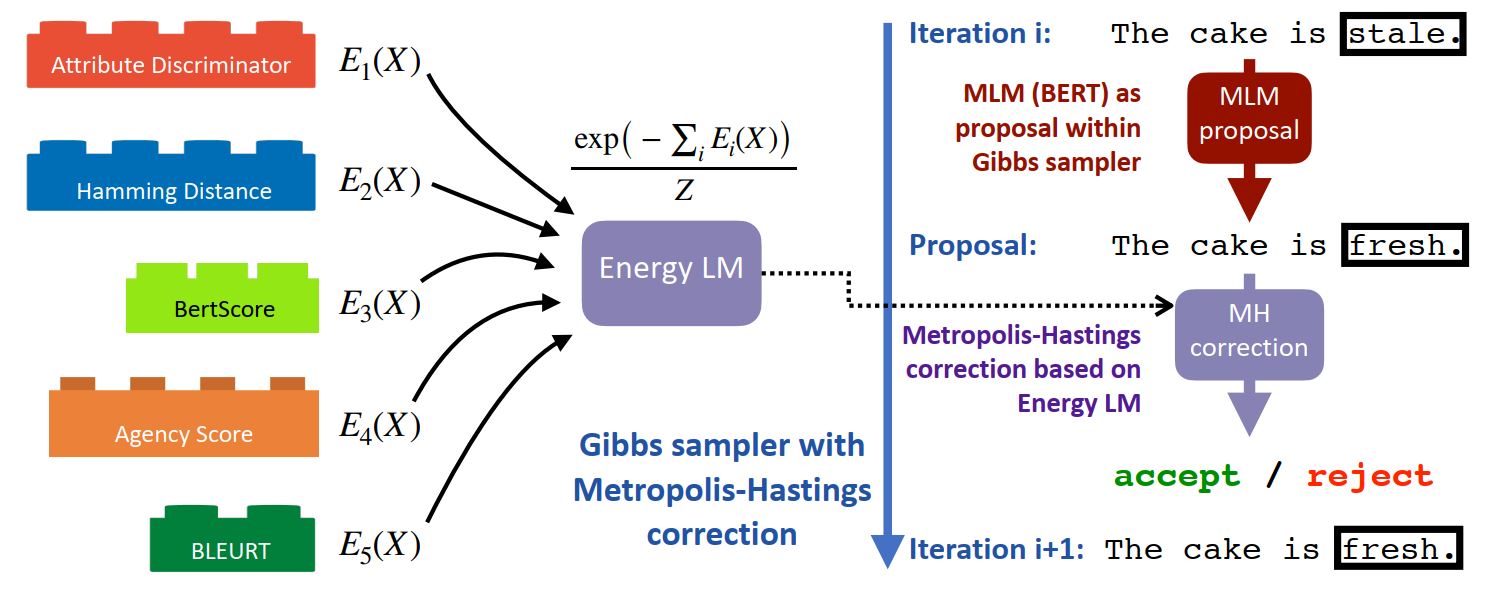}}
	\caption{Overview of mix-and-match LM. The Lego pieces show different experts that can be used to form the energy LM and help control different features in the generated text. The right side shows the $i$-th step in the the Gibbs sampling chain, where a proposal is made by the MLM, and then it is accepted/rejected based on the energy score. \citep{mixandmatch}}
	\label{fig:MixMatch}
\end{figure}

\subsubsection{The mix-and-match language model}
\index{Mix-and-match language model}

In \citep{mixandmatch}, the problem of performing controlled generation is framed as a problem of sampling from a specialized energy based (or globally normalized) sequence model that defines a probability distribution that satisfies the desired constraints we wish to impose in the controlled generation setting. 

As described below, this energy based model is composed of pre-trained components and does not require any further optimization. Specifically, an energy-based sequence model defines the probability distribution over the space of possible sequences $\mathcal{X}$ as:
\begin{displaymath}
p_\theta(X) = \frac{e^{- E_\theta(X)}}{\sum_{X' \in \mathcal{X}} e^{- E_\theta(X')}}
\end{displaymath}
where $E_\theta(X)$ refers to the scalar energy of a sequence $X$ that is parametrized by $\theta$. Lower energy corresponds to the higher likelihood of $X$. In contrast to the
common autoregressive sequence models, exact likelihood computation and efficient sampling
from EBM models is challenging. Despite these challenges, the EBM paradigm offers increased flexibility via sequence-level features and constraints. As we discuss next, this capability lets us easily define expressive functions for controlled generation of sequences which is not readily offered by the autoregressive modeling paradigm. 

Note that the task of controlled generation requires concentrating probability mass over a small subspace of sequences in $\mathcal{X}$ that satisfies various constraints pertaining to fluency, target attributes, and other control variables. 
The EBM based mix-and-match language model, as defined below, involves a linear combination of various black-box experts in order to obtain a distribution whose samples satisfy the requirements of a desired controlled generation task: 
\begin{equation}
    E_{\text{M\&M}}(X)=\sum_{i=1}^k \alpha_i E_i(X)
\end{equation}
where the mix-and-match (M\&M) energy is composed of $k$ expert energy components, which are weighted by scalar hyperparameters $\alpha$.
The following black-box experts are used in \citep{mixandmatch}:
\begin{itemize}
    \item $E_\text{mlm}(X)$: Masked language models (MLMs) like BERT \citep{bert} are used as a black-box to model the form and fluency of sentences.
    Particularly, $E_\text{mlm}(X)$ is defined as the negative of the sum of unnormalized logits iteratively computed at each position obtained via the forward pass of the MLM after masking the corresponding position \citep{goyal2021exposing}.
    \item $E_\text{disc}(X)$: This particular expert refers to the energy obtained via the discriminator for the attributes of interest. 
    What this module returns is the raw logits of the discriminator, for the target attribute. 
    For instance, if we have a sentiment classifier, and want to produce positive sentiment, 
    then $E_\text{disc}(X) = -\log p(+ |X)$.
    \item $E_\text{hamm}(X,X')$: For a given sequence $X'$, this quantity refers to the hamming distance between the sequence $X$ and $X'$. This penalizes token level deviation from $X'$ which is useful if we are interested in only making minor edits to $X'$.
    \item $E_\text{fuzzy}(X,X')$: Similar to the hamming distance, this quantity refers to the BertScore \citep{Zhang2020BERTScore} computed between $X$ and $X'$ which can be viewed as a fuzzy hamming distance that takes semantic similarity into account.
\end{itemize}

\subsubsection{Sampling from the mix-and-match model}
\label{sec:sampling_mix_and_match}
In \citep{mixandmatch}, Metropolis-Hastings (MH) based MCMC scheme (\secref{sec:MCMC}) is used to sample from the M\&M model. The proposal distribution is implemented by a masked language model (MLM) like BERT. At each MH step, we mask out a token at a random position $i$ in current sequence $X$, propose a new token 
by sampling from the MLM conditional softmax at the masked position\footnote{Note that the proposed move $\overline{X}_i$ is generated independent of the previous state $X_i$, thus the sampling algorithm used here is in fact MIS with Gibbs sampling.}, and obtain the new sequence $\overline{X}$. The new sequence is accepted with the probability 
\begin{displaymath}
\min\left\lbrace 1, \frac{\exp(-E_\text{M\&M}(\overline{X}))p_\text{mlm}(X_i|X_{\backslash i})}{\exp(-E_\text{M\&M}(X))p_\text{mlm}(\overline{X}_i|X_{\backslash i})}\right\rbrace 
\end{displaymath}
In experiments in \citep{mixandmatch}, a MCMC chain is run for sentence generation for more than 8 epochs, where an epoch refers to one masking cycle over all positions of the sequence.

\subsubsection{The performance of the mix-and-match model}
Two kinds of controlled generation tasks are performed in \citep{mixandmatch}. 
\paragraph{Prompted generation.} This task focuses on generating well-formed sentences that start with a specified prompt and also satisfy a target attribute for which we have access to a discriminator.
An example task would be to generate positive sentiment sequences starting with \texttt{This movie}.
The energy function takes the form:
\begin{displaymath}
E(X) = E_\text{mlm}(X)+\alpha E_\text{disc}(X)
\end{displaymath}
where $\alpha$ is a hyperparameter that controls the tradeoff between the MLM score and the discriminator's influence.
\paragraph{Controlled text revision.} This task aims to edit a source sequence $X'$ to generate sequence $X$ to satisfy the desired target attributes. The energy function for this task is :
\begin{displaymath}
E(X)=E_\text{mlm}(X)+\alpha E_\text{disc}(X)+\beta E_\text{hamm}(X,X')+\gamma E_\text{fuzzy}(X,X')
\end{displaymath}
This energy function, in addition to valuing well-formedness and satisfying target attribute requirements, also focuses on maintaining faithfulness to the source sequence $X'$.

%The experiment results show that M\&M model achieves a higher Agency Accuracy but a slightly lower BertScore than the baseline model in the text revision task introduced by \citep{powertransformer}. 
%This demonstrates that the sampling in M\&M model is guided by the energy that is directly built on the metrics of interest. 
%Other experiments of style transfer, sentiment transfer, formality transfer and prompted controlled generation also show that M\&M framework outperforms prior methods.

In \citep{mixandmatch}, the effectiveness of the mix-and-match approach has been shown on controlled generation tasks (with sentiment or topic) and style-based text revision tasks (controllable debiasing, style transfer), by outperforming recently proposed methods that involve extra training, fine-tuning, or restrictive assumptions over the form of models.

%\subsection{Factual error correction} % Lei Li
%\label{sec:factural_err_corr}

\subsubsection{Constrained decoding with Langevin dynamics (COLD)}
\index{Constrained decoding with Langevin dynamics (COLD)}
\label{sec:COLD}

Constrained text generation aims to produce text samples $y$ that satisfy a set of constraints (usually
conditioned on an input $x$ omitted for brevity).
Let $y = (y_1,\cdots, y_T)$ denote a discrete sequence where each $y_t$ is a token from a vocabulary $\mathcal{V}$. 
Assume each constraint can be captured by a
constraint function $f_i(y) \in \mathbb{R}$, where higher values of $f_i$ mean that the text y better satisfies the
constraint. 
Generating text under the constraints can be
seen as sampling from a energy-based distribution $y \sim p(y)$:
\begin{displaymath}
p(y) \propto \exp \left\lbrace U(y) \right\rbrace, U(y) = \sum_i \lambda_i f_i(y)
\end{displaymath}
EBMs are flexible - any differentiable function that outputs a goodness score of text can be used as a constraint function, as long as it reflects the requirements of the target task. Three example constraints are shown in \citep{qin2022cold}: soft fluency constraint, future-token prediction constraint, and N-gram similarity constraint.

For efficient sampling from $p(y)$, we want to use Langevin dynamics, which makes use of the gradient $\nabla_y \log p(y) = \nabla_y U(y)$. 
However, in text generation, $y$ is a discrete sequence and the gradient $\nabla_y \log p(y)$ is not well-defined.
To address this problem, \citep{qin2022cold} proposed a new sampling/decoding method, called \emph{Constrained decoding with Langevin dynamics} (COLD), which consists of three steps - continuous relaxation of text, performing Langevin dynamics with an energy defined on a sequence of continuous ``soft'' token vectors, and finally, guided discretization.

\paragraph{Continuous relaxation of text.}

Instead of defining the energy function on discrete tokens, the energy function is defined on a sequence of continuous vectors $\tilde{y} = (\tilde{y}_1,\cdots,\tilde{y}_T)$, which is called a soft sequence. 
Each position in the soft sequence is a vector of logits $\tilde{y}_t \in \mathbb{R}^{|\mathcal{V}|}$, each element corresponding to the logit of a word in the vocabulary.
Taking the softmax of $\tilde{y}_t$ yields a distribution over the vocabulary for position $t$. An EBM model can be defined on the soft sequence $\tilde{y}$ as follows:
\begin{equation}
\label{eq:COLD_energy}
p(\tilde{y}) \propto \exp \left\lbrace U(\tilde{y}) \right\rbrace
\end{equation}

\paragraph{Langevin dynamics over soft tokens.}
In the continuous space, we can perform gradient guided MCMC (\secref{sec:LD_HMC}) such as Langevin dynamics to generate samples $\tilde{y}^{(1)}, \tilde{y}^{(2)}, \cdots$, which ends
up generating samples from the distribution induced by the energy function \eqref{eq:COLD_energy}.

\paragraph{From soft tokens to discrete text.}
After receiving a soft sequence sample $\tilde{y}$ from running Langevin dynamics, we map the soft sequence to a discrete text sequence, which we consider as the output of COLD decoding. 
A simple method would be selecting the most-likely token at each position $t$, 
\begin{displaymath}
y_t = \argmax_{v \in \mathcal{V}} \tilde{y}_t(v)
\end{displaymath}
However, the resulting text suffers from fluency issues even if the soft fluency constraint is used, due to competing constraints that sacrifice fluency. 
To overcome this, COLD uses the underlying pre-trained language model (PLM) as a ``guardian'' for obtaining the discrete sequence. 
Specifically, at each position $t$, we first use the PLM to produce the top-$k$ most-likely candidate tokens based on its generation distribution conditioning on preceding tokens, which we denote as $\mathcal{V}_t^k$. 
We then select from the top-$k$ candidates the most likely token based on the soft sample $\tilde{y}$: 
\begin{displaymath}
y_t = \argmax_{v \in \mathcal{V}_t^k} \tilde{y}_t(v)
\end{displaymath}
We refer to this method as ``top-$k$ filtering''. 
The resulting text tends to be fluent because each token is among the top-$k$ most probable tokens from the PLM.

\chapter{Joint EBMs with applications}
\label{ch:jem}

In this chapter, we introduce EBMs for modeling \emph{joint} distributions for both fixed-dimensional and sequential data, with the applications for semi-supervised learning, training more calibrated models, and improved language modeling with additional relevant linguistic tags (e.g., part-of-speech tags).
First, we present basics for semi-supervised learning. 
Then, we introduce the fixed-dimensional case of joint EBMs (\secref{sec:JEM}), then move on to the sequential case (\secref{sec:JRF}). 
Finally, we present the application of EBM-based joint modeling for semi-supervised learning and calibrated natural language understanding in \secref{sec:JRF_semi} and \secref{sec:JRF_calibrate}, respectively.

\section{Basics for semi-supervised learning}
\label{sec:SSL}
\index{Semi-supervised learning (SSL)}
% adapt from semiEBM

As we have witnessed, supervised learning from large amounts of labeled data, particularly with deep neural networks (DNNs)\index{Deep neural network (DNN)}, has achieved tremendous success in various intelligence tasks, spanning speech processing, computer vision, and natural language processing (NLP)\index{Natural language processing (NLP)}.
However, collecting labeled data is difficult and expensive, but there are often easily-available unlabeled data.
This has motivated the community to develop \emph{semi-supervised learning} (SSL).
SSL aims to leverage both labeled and unlabeled data to train a conditional model for inference (for either discriminative or generation tasks), which, basically, is to find the posteriori of label $y$ given observation $x$.
A plethora of semi-supervised learning methods have emerged to train deep neural networks (DNNs) \citep{miyato2018virtual,laine2016temporal,tarvainen2017mean,sohn2020fixmatch,chen2020simple,cvt}, spanning over various domains such as image classification, natural language labeling and so on.
People may wonder how unlabeled observations $x$'s may help finding the posterior.
A common belief is that the the information contained in the unlabeled observations can provide some kind of priors, or alternatively say, some regularizations or inductive biases, for finding the posterior $p(y|x)$.

%The key to designing SSL methods is how to effectively exploit the prior information contained in the unlabeled data \cite{zhu2006semi} for finding good classifiers. 
%A Bayesian view of regularizations is priors, which reflect our priori knowledge regarding the model \cite{miyato2018virtual}.

Roughly speaking, recent SSL methods with DNNs 
can be distinguished by the priors they adopt, and, can be divided into two classes\footnote{We mainly discuss the SSL methods for using DNNs.
	General discussion of SSL can be referred to \citep{zhu2006semi}.} - based on generative models or discriminative models, which are referred to as \emph{generative SSL}\index{Generative SSL} and \emph{discriminative SSL}\index{Discriminative SSL},  respectively.
An intuitive way to differentiate between generative and discriminative SSL is that generative SSL typically requires modeling the marginal distribution of the data, while discriminative SSL only involves modeling the conditional distribution.

\subsection{Discriminative SSL}
\index{Discriminative SSL}

Discriminative SSL methods often assume that the outputs from the discriminative classifier are smooth with respect to local and random perturbations of the inputs.
 Examples include virtual adversarial training (VAT) \citep{miyato2018virtual} and a number of recently developed consistency-regularization based methods
 such as temporal ensembling \citep{laine2016temporal}, mean teachers \citep{tarvainen2017mean}, FixMatch \citep{sohn2020fixmatch} and contrastive learning based methods such as SimCLR \citep{chen2020simple}.
 
Discriminative SSL methods thus heavily rely on domain-specific data augmentations, e.g., RandAugment \citep{cubuk2020randaugment}, which are tuned intensively for images and lead to impressive performance in some image domains. But discriminative SSL is often less successful for other domains, where these augmentations are less effective (e.g., medical images and text). For instance, random input perturbations are more difficult to apply to discrete data like text \citep{cvt}.
Although there are some efforts to use data-independent model noises, e.g., by dropout \citep{srivastava2014dropout}, domain-specific data augmentation is indispensable.

\subsection{Generative SSL}
\label{sec:generativeSSL}
\index{Generative SSL}

Generative SSL methods exploit unsupervised learning of generative models over unlabeled data, which inherently does not require data augmentations and generally can be applied to a wider range of domains.
Generative SSL usually involves blending unsupervised learning and supervised learning. These methods make fewer domain-specific assumptions and tend to be domain-agnostic.
The performance comparisons between generative and discriminative SSL methods are mixed.
It is found that consistency based discriminative SSL methods often outperform generative SSL methods in image domain.
However, in text domain, the generative SSL methods such as those based on pre-trained word vectors \citep{mikolov2013distributed,pennington2014glove} and pre-trained language models (PLMs)\index{Pre-trained language model (PLM)} \citep{gpt2,raffel2020t5} are more successful and widely used.

Considering observation $x$ and label $y$, there exist two different methods for the generative SSL approach - \emph{joint-training}\index{Joint-training for generative SSL} \citep{larochelle2012learning,Kingma2014SemiSupervisedLW} and \emph{pre-training}\index{Pre-training for generative SSL} \citep{hinton2006a}.
\begin{itemize}
\item In joint-training, a joint model of $p(x,y)$ is defined.
When we have label $y$, we maximize $p(y|x)$ (the supervised objective), and when the label is unobserved, we marginalize it out and maximize $p(x)$ (the unsupervised objective). Semi-supervised learning over a mix of labeled and unlabeled data is formulated as maximizing the (weighted) sum of $\log p(y|x)$ and $\log p(x)$.
Given infinite model capacity and data, the joint-learning based SSL is consistent and can find the oracle $p(y|x)$\footnote{This is because that the maximum likelihood estimator is consistent.}.
\item  In pre-training, we perform unsupervised representation learning on unlabeled data, which is followed by supervised training (called fine-tuning) on labeled data.
Thus, pre-training is usually based on a model that only defines the marginal distribution $p(x)$ without the need to involve $y$.
Its empirical effectiveness is mostly understood from the perspective of representation learning. There is no theoretical guarantee that this kind of pre-training will lead to finding the oracle $p(y|x)$.
This method of pre-training followed by fine-tuning has been widely used in natural language processing \citep{radford2018improving}\footnote{But, more recently in NLP, an approach of jointly modeling the $input$, the $output$, and the $task$ description, has emerged to gain more attention and achieved superior performances \citep{gpt2}.
	The $input$, the $output$, and the $task$ description can all be specified as a sequence of tokens, and a language model is trained for estimating natural language sequences so that $p(task, input, output)$ for various tasks, inputs and outputs are implicitly trained.}. 
\end{itemize}

\begin{figure}[t]
	\centering
	\centerline{\includegraphics[width=1.0\linewidth]{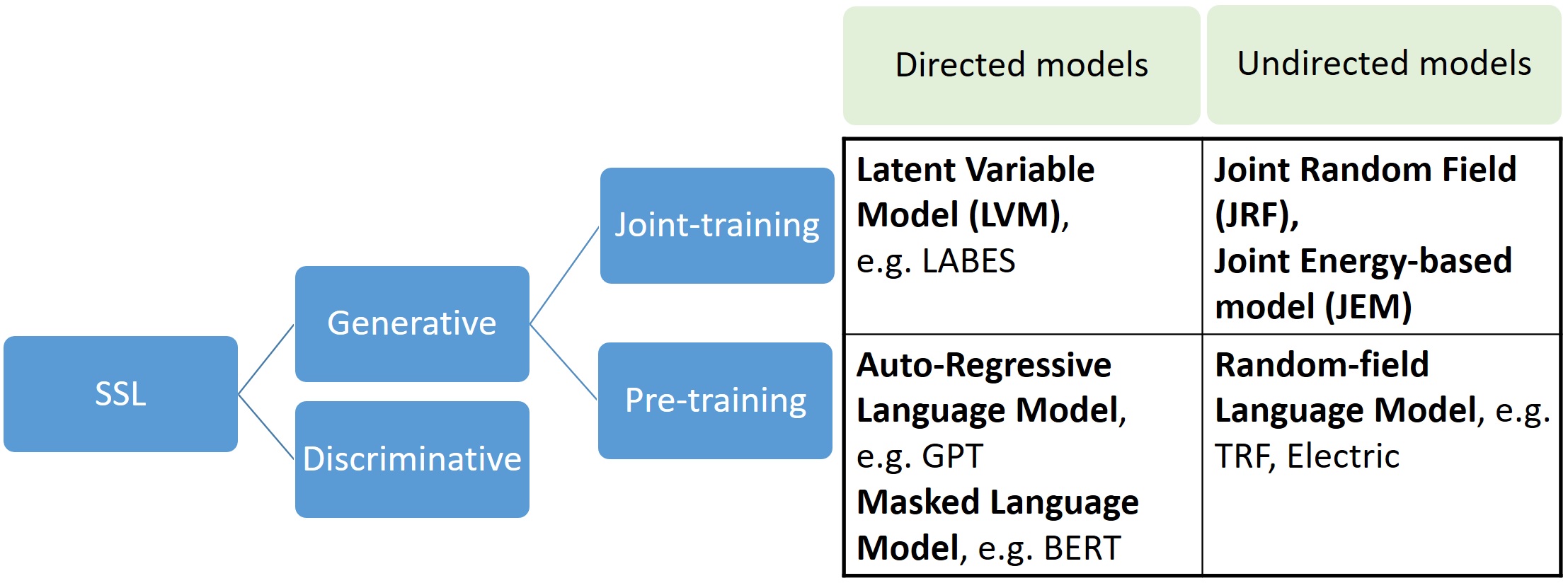}}
	\caption{An overview of SSL and a general categorization of generative SSL methods. Examples are mainly chosen from NLP.}
	\label{fig:GenerativeSSL}
\end{figure}

\paragraph{A categorization of generative SSL methods} is shown in \figref{fig:GenerativeSSL}.
For either of joint-training and pre-training, we can use directed models (locally-normalized) or undirected models (globally-normalized).
So there are four main classes for generative SSL.
The models used in joint-training could be latent-variable model (LVM)\index{Latent-variable model (LVM)} such as in LABES \citep{zhang-etal-2020-probabilistic}, or JRF \citep{jrf} or say JEM \citep{zhaojoint}.
Pre-training could be based on masked language models \citep{bert}, autoregressive language models \citep{radford2018improving}, or random-field language models \citep{Wang2017LearningTR}. 
Further, two approaches of pre-training and joint-training could be combined or compared to each other.
So there are many open questions in designing semi-supervised methods for particular tasks.

\label{sec:EBM_SSL}
\paragraph{EBM based SSL.}
%This paper focuses on pushing forward domain-agnostic semi-supervised learning, particularly via energy-based generative models.
Among existing generative SSL methods, energy-based models (EBMs), as an important class of generative models, have been shown with promising results for semi-supervised learning across various domains.
Early studies date back to the pre-training of restricted Boltzmann machines (RBMs) \citep{hinton2006a} (which are a simple type of EBMs) and the joint-training with classification RBMs \citep{larochelle2012learning}.

Recently, it is shown in \citep{song2018learning} that joint-training via EBMs produces state-of-the-art SSL results on images (MNIST, SVHN and CIFAR-10), compared to previous  generative SSL methods based on Variational AutoEncoders (VAEs) and Generative Adversarial Networks (GANs).
It is also shown in \citep{zhaojoint} that joint-training via EBMs outperforms VAT (virtual adversarial training) \citep{miyato2018virtual} on tabular data from the UCI dataset repository.
Further, joint-training via EBMs has been extended in \citep{jrf} to modeling sequences and consistently outperforms conditional random fields (CRFs) (the supervised baseline) and self-training (the classic semi-supervised baseline) on natural language labeling tasks such as POS (part-of-speech) tagging, chunking and NER (named entity recognition).

%move to later compare section
%More encouraging SSL results have been shown recently for joint-training via more advanced EBMs, which are defined by using DNN-based energy functions.
%In \citep{song2018learning,zhaojoint,jrf}, state-of-the-art SSL results are reported based on EBMs and across different data modalities (images, natural languages, an protein structure prediction and year prediction from the UCI dataset repository) and in different data settings (fix-dimensional and sequence data).

Note that although both joint-training and pre-training of EBMs have been used for SSL in the literature, very few studies evaluated and compared the two methods. The results from previous individual works are often not directly comparable to each other, since they are not evaluated in a common experimental setup.
In \citep{song2021empirical}, a suite of experiments are conducted to systematically compare joint-training and pre-training for EBM-based SSL.
Both the amounts of labeled and unlabeled data are varied to give a realistic whole picture of the performances of the two methods for SSL \citep{oliver2018realistic}.
It is found that joint-training EBMs outperform pre-training EBMs marginally but nearly consistently, presumably because the optimization of joint-training is directly related to the targeted task, while pre-training does not.

In the remainder of this chapter, we will detail EBM based SSL, including both pre-training and joint-training.
Note that pre-training only involves the \emph{marginal} distribution $p(x)$, so the basics for EBM based pre-training can be referred to \secref{ch:lm} for learning with discrete $x$ such as natural languages, and for learning with continuous $x$ such as images, be referred to \citep{song2018learning}.
We will mainly introduce the basics for EBM based joint-training, i.e., establishing EBMs for modeling \emph{joint} distributions, first in the fixed-dimensional case (\secref{sec:JEM}) and then in the sequential case (\secref{sec:JRF}). 
Although speech and language processing is primarily concerned with the sequential case, the introduction of the fixed-dimensional case can lay the foundation for understanding the more complicated, sequential case.

\section{Upgrading EBMs to Joint EBMs (JEMs) for fixed-dimensional data}
\label{sec:JEM}
% adapt from NRF_semi

\paragraph{Motivation.} 
Originally, EBMs are established for modeling the marginal distribution $p(x)$ of observations $x$, as introduced in \secref{ch:lm}.
Recently, a kind of EBM, called \emph{semi-supervised EBM}, for modeling the joint distribution of observation $x$ and label $y$ has been developed and used for semi-supervised classification \citep{song2018learning}.
In \citep{grathwohl2019your}, a similar kind of EBM has been studied, which is called \emph{joint EBM} (JEM)\index{Joint EBM (JEM)}.
It is established by reinterpreting a standard discriminative classifier of $p(y|x)$ as an energy based model for the joint distribution $p(x, y)$. 
It is demonstrated that energy based training of the joint distribution improves calibration, robustness, and out-of-distribution detection while also enabling sample generation which rivals the quality of recent GAN approaches.
Hereafter, these EBMs are collectively referred to as JEMs.

\paragraph{Model definition.} 
Note that different models are needed in unsupervised and semi-supervised learning, because SSL needs to additionally consider labels apart from observations.

In semi-supervised tasks, we consider the following EBM for joint modeling of observation $x \in \mathbb{R}^{d_x}$ and class label $y \in \left\lbrace 1,\cdots,K \right\rbrace$:
\begin{equation}
\label{eq:semi-nrf-joint}
p_{\theta}(x,y)=\frac{1}{Z(\theta)} \exp\left[  U_{\theta}(x,y) \right].
\end{equation}
This is different from Eq. (\ref{eq:unsup-RF}) for unsupervised learning which only models $x$ without labels.
To implement the potential function $U_\theta(x,y)$, we consider a neural network $\Psi_\theta(x):\mathbb{R}^{d_x} \rightarrow \mathbb{R}^K$, with $x$ as the input and the output size being equal to the number of class labels, $K$. 
Then we define 
\begin{displaymath}
U_\theta(x,y) = {onehot}(y)^T \Psi_\theta(x)
\end{displaymath}
where ${onehot}(y)$ represents the one-hot encoding vector for the label $y$.
In this manner, the conditional density $p_{\theta}(y|x)$ is the classifier, defined as follows:
\begin{equation}\label{eq:rf-classifier}
p_{\theta}(y|x) = \frac{p_{\theta}(x,y)}{p_{\theta}(x)}
= \frac{\exp\left[  U_{\theta}(x,y) \right]}{\sum_{y'} \exp\left[  U_{\theta}(x,y') \right]}
\end{equation}
which acts like multi-class logistic regression using $K$ logits calculated from $x$ by the neural network $\Psi_\theta(x)$. And we do not need to calculate $Z(\theta)$ for classification.
%The auxiliary generator is implemented the same as in Eq. \ref{eq:generator}, i.e. an unconditional generator.

With the definition the joint density in \eqref{eq:semi-nrf-joint}, it can be shown that, with abuse of notation, the marginal density is
\begin{equation}
\label{eq:jem-marginal}
p_\theta(x) = \frac{1}{Z(\theta)} \exp\left[  U_{\theta}(x) \right]
\end{equation}
where $U_\theta(x) \triangleq \log \sum_{y'} \exp\left[  U_{\theta}(x,y') \right]$.

\paragraph{Model learning.}
Suppose that among the data $\mathcal{D} = \left\lbrace {x}_1, \cdots, {x}_n \right\rbrace $, only a small subset of the observations, for example the first $m$ observations, have class labels, $m \ll n$.
Denote these labeled data as $\mathcal{L} = \left\lbrace({x}_1,{y}_1), \cdots, ({x}_m,{y}_m) \right\rbrace $.
Let $p_{\text{emp}}$ denote the empirical distribution over $\mathcal{D}$.
Then we can formulate the semi-supervised learning as optimizing
\begin{equation}
\label{eq:jem_semi_obj}
\min_{\theta} \left\lbrace KL\left[  p_{\text{emp}}({x}) || p_\theta({x}) \right]
- \alpha_d \sum_{({x},{y}) \sim \mathcal{L}} \log p_\theta({y}|{x}) \right\rbrace 
\end{equation}
which are defined by hybrids of generative and discriminative criteria, similar to \citep{zhu2006semi,larochelle2012learning,Kingma2014SemiSupervisedLW}.
The hyper-parameter $\alpha_d$ controls the relative weight between generative and discriminative criteria.

Once the JEM model and the training objective are established, 
the inclusive-NRF training algorithm (\algref{alg:learning-NRF-IAG})\index{Inclusive-NRF algorithm} developed in \citep{song2018learning} can be applied. Other algorithms for training EBMs, which can achieve proper density estimation such as those recently developed in \citep{zhang2023persistently}, can also be employed.

\section{Upgrading CRFs to Joint random fields (JRFs) for sequential data}
\label{sec:JRF}
% adapt from JRF

\paragraph{Motivation.}
Probabilistic generative modeling is a principled methodology that promisingly can learn from data of various forms (whether labeled or unlabeled) to benefit downstream tasks, which, however, is particularly challenging for sequence data.
Two important sequence tasks are sequence modeling and sequence labeling.

A basic problem of \emph{sequence modeling}\index{Sequence modeling} is to determine the probabilities of sequences. An example is language modeling \citep{chen1999empirical}, which, as described in \secref{sec:ELM}, is a crucial component in many speech and language processing tasks.
For sequences of length $l$, $x^l \triangleq x_1,x_2,...,x_l$, this amounts to calculate $p(l,x^l)$, where we make explicit the role of the length $l$. Ideally, this density modeling can be improved with additional relevant labels. e.g. incorporating part-of-speech (POS)\index{Part-of-speech (POS)} tags for language modeling. 
There are some previous studies in \citep{rosenfeld2001whole,ModelM}.
The difficulty is that the labels (e.g. POS) usually are not available in testing, so a standalone POS tagger is needed to provide hypothesized labels in testing.

The task of \emph{sequence labeling}\index{Sequence labeling}, as described in \secref{sec:crf_labeling},
is, given observation sequence $x^l$, to predict the label sequence $y^l\triangleq y_1,y_2,...,y_l$, with one label for one observation at each position.
Sequence labeling has been widely applied in various tasks, e.g., POS labeling \citep{fromscratch,ling2015finding}, named entity recognition (NER) \citep{huang2015bidirectional,lample2016neural,ma2016end}, and chunking \citep{huang2015bidirectional,sogaard2016deep}.
As introduced in \secref{sec:generativeSSL}, it is desirable for the labeling model to leverage both labeled data (namely pairs of $x^l$ and $y^l$) and unlabeled data (namely $x^l$ without labels), i.e., to conduct semi-supervised learning (SSL).
Pre-training has proved to be effective \citep{fromscratch,bert}, which, however, is task-independent followed by task-dependent fine-tuning. 
Besides pre-training, it is worthwhile to explore task-dependent semi-supervised learning (SSL) in the manner of joint-training, which learns for a task on a mix of labeled and unlabeled data. Self-training is such a method with limited success \citep{scudder1965probability}. 

As introduced in \secref{sec:cond_basics}, conditional random fields (CRFs) \citep{lafferty2001conditional} have been shown to be one of the most successful approaches to sequence labeling.
A CRF is a discriminative model, which directly defines a conditional distribution 
$p(y^l|x^l)$, and thus mainly depends on supervised learning with abundant labeled data.
It is proposed in \citep{jrf} to upgrade CRFs and obtain a joint generative model of $x^l$ and $y^l$, $p(l,x^l,y^l)$, called \emph{joint random fields} (JRFs)\index{Joint random field (JRF)}.
Specifically, the potential function $U(x^l,y^l)$ in the original CRF $p(y^l|x^l)$ is borrowed as the potential function that defines a joint distribution $p(x^l,y^l)$.
This upgrading, operated in the sequential setting, is similar to upgrade EBMs to JEMs in the fixed-dimensional setting \citep{song2018learning,grathwohl2019your}.
%The resulting $p(x^l,y^l)$ defines a random field \cite{KFBook}, also known as energy-based model \cite{Lecun}.
Similar to the fixed-dimensional setting, the conditional distribution of $y^l$ given $x^l$ induced from this joint distribution is exactly the original conditional distribution - the CRF $p(y^l|x^l)$\footnote{So writing the JRF as $p(l,x^l,y^l)$ and the CRF as $p(y^l|x^l)$ is correct, not an abuse of notation.}; and the marginal distribution of $p(l,x^l)$ induced from the joint distribution is a 
trans-dimensional random field (TRF) language model \citep{Wang2017LearningTR,wang2018improved}, as described in \secref{sec:TRF_modeldef}.

This development from CRFs to JRFs benefits both modeling and labeling of sequence data.
For sequence modeling, the marginal likelihood $p(l,x^l)$ can be efficiently calculated by JRFs, without the step of producing hypothesized labels by a standalone POS tagger.
For sequence labeling, JRFs admit not only supervised learning from labeled data by maximizing the conditional likelihood $p(y^l|x^l)$ (which is like the training of a CRF), but also unsupervised learning from unlabeled data by maximizing the marginal likelihood $p(l,x^l)$ (which is like the training of a TRF LM), thereby achieving task-dependent semi-supervised learning.

\paragraph{Model definition.}
We will first briefly review linear-chain CRFs, as described in \secref{sec:linear_chain_crf}, but with different notations, which facilitate the introduction of JRFs.
A linear-chain CRF defines a conditional distribution with parameters $\theta$ for label sequence $y^l$ given observation sequence $x^l$ of length $l$:
\begin{equation}
\label{eq:crf}
p_\theta(y^l|x^l)=\frac{1}{Z_\theta(x^l)}\exp(U_\theta(x^l,y^l))    
\end{equation}
Here the potential function 
\begin{equation}
\label{eq:u-theta}
U_\theta(x^l,y^l)=\sum_{i=1}^l \phi_i(y_i,x^l)+\sum_{i=1}^l\psi_i(y_{i-1},y_i,x^l), 
\end{equation}
is defined as a sum of node potentials and edge potentials, and $Z_\theta(x^l)=\sum_{y^l}\exp(U_\theta(x^l,y^l))$ is the normalizing constant.
$\phi_i(y_i,x^l)$ is the node potential defined at position $i$, which, in recently developed neural CRFs \citep{fromscratch,huang2015bidirectional,lample2016neural,ma2016end} is implemented by using features generated from a neural network (NN) of different network architectures.
$\psi_i(y_{i-1},y_i,x^l)$ is the edge potential defined on the edge connecting $y_{i-1}$ and $y_i$, often implemented as a matrix $A$, $\psi_i(y_{i-1}=j,y_i=k,x^l)=A_{j,k}$, thereby defines a linear-chain CRF.
In linear-chain CRFs, there are efficient algorithms for training (the forward-backward algorithm) and decoding (the Viterbi algorithm).

%\subsection{Model}
%\label{sec:jrf}
%	Note that for CRFs, by formalization we could have $p_\theta(x,y)\propto \exp
%	(U_\theta(x,y))$, and the conditional distribution could be computed as $p_\theta(y|x)=\exp(U_\theta(x,y))/\sum_y \exp(U_\theta(x,y))$, thus the normalization constant of $x$ is defined by $Z_\theta(x)=\sum_y \exp(U_\theta(x,y))$.
%	Due to modeling of conditional distribution, CRFs can only be trained on full labeled sequence data.

Inspired from the idea of jointly modeling fixed-dimensional observations (e.g. images) and labels via JEMs in \citep{song2018learning,grathwohl2019your}, CRFs can be similarly upgraded and a joint distribution over sequential observations and labels can be established, called JRFs \citep{jrf}.
The keypoint is that we can use $U_\theta(x^l,y^l)$ in Eq. (\ref{eq:u-theta}) from the original CRF to define a joint distribution $p_\theta(x^l,y^l)$:
%A JRF constructs a joint distribution over observation sequence $x^l$, label sequence $y^l$ and $l$ which represents sequence length:
\begin{equation}
\label{eq:jrf}
p_\theta(l,x^l,y^l)=\pi_l p_\theta(x^l,y^l;l)=\frac{\pi_l}{Z_{\theta}(l)}\exp(U_\theta(x^l,y^l))
\end{equation}
where
$\pi_{l}$ is the empirical prior probability for length $l$.
Notably, a CRF is a conditional distribution, normalizing over label sequences. In contrast, a JRF is a joint distribution, normalizing over both observation and label sequences, with the normalizing constant for length $l$ defined as
\begin{equation}
\label{eq:JRF_Z_l}
Z_{\theta}(l)=\sum_{x^l,y^l}\exp(U_\theta(x^l,y^l)).
\end{equation}

Interestingly, it can be easily seen that the conditional distribution of $y^l$ given $x^l$ induced from the JRF's joint distribution Eq. (\ref{eq:jrf}) is exactly the original CRF Eq. (\ref{eq:crf}).
%	we could find a hidden CRF inside:
%	\begin{equation}
%	\label{eq:crf in jrf}
%	p_\theta(y^l|x^l)=\frac{\exp(U_\theta(x^l,y^l)}{\sum_{y^l}\exp(U_\theta(x^l,y^l)}
%	\end{equation}
%	which is the same as definition of CRFs according to Eq. \ref{eq:crf}.
%	As a result, the supervised learning for CRFs with labeled sequence data can be applied to learning of JRFs.
Further, by marginalizing out $y^l$, the marginal distribution of $p(l,x^l)$ induced from the joint distribution is:
\begin{displaymath}
\label{eq:trf in jrf}
p_\theta(l,x^l)=\frac{\pi_l}{Z_{\theta}(l)}\sum_{y^l}\exp(U_\theta(x^l,y^l))=\frac{\pi_l}{Z_{\theta}(l)}\exp(U_\theta(x^l))
\end{displaymath}
which acts like a 
trans-dimensional random field (TRF) language model \citep{Wang2017LearningTR,wang2018improved}, with the potential defined by
\begin{displaymath}
U_\theta(x^l)=\log\sum_{y^l}\exp(U_\theta(x^l,y^l)).
\end{displaymath}
Notably this marginal potential $U_\theta(x^l)$ is exactly the normalizing constant $\log Z_{\theta}(x^l)$ \eqref{eq:JRF_Z_l} from the CRF.
It can be calculated via the forward algorithm from the linear-chain potential $U_\theta(x^l,y^l)$.

\begin{figure}
	\centering
	\includegraphics[width=1.0\textwidth]{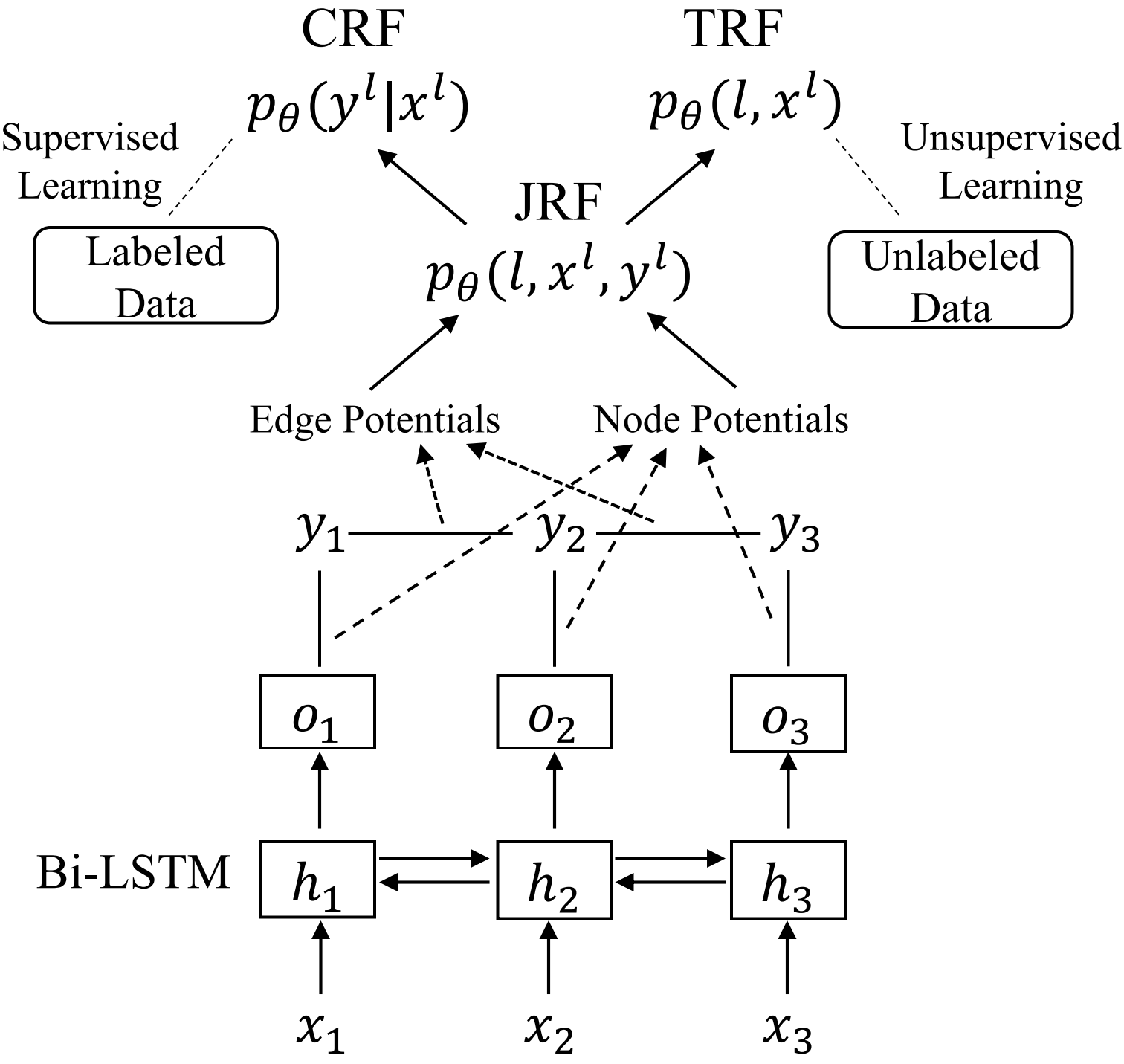}
	%	\vspace{-2.5em}
	\caption{
		Overview of the JRF model.
		The node and edge potentials define a JRF (a joint distribution over $x^l$ and $y^l$).
		Inducing the conditional and the marginal from the joint yields a CRF and a TRF respectively.
		A JRF can be trained from labeled data (acting like a CRF) and also from unlabeled data (acting like a TRF).
		In practice, the node potentials are calculated from the logits $o_i, i=1,\cdots,l$, from the NN, and the edge potential follows a linear-chain definition.
	}
	\label{fig:JRF_overview}
\end{figure}

%\subsection{Learning}
%\label{sec:optimization}
\paragraph{Model learning.}
Given different data resources (labeled or unlabeled), JRF can be trained under different settings (supervised, unsupervised, or semi-supervised) and applied in different downstream tasks (sequence modeling or labeling), as illustrated in Fig. \ref{fig:JRF_overview}.
Note that the embedded CRF and TRF inside a JRF share all parameters $\theta$, which is different from multi-task learning where only bottom-level parameters are shared \citep{argyriou2007multi}.

\emph{Supervised learning of JRFs}
amounts to the training of the embedded CRF with the following supervised objective, given labeled data in the form of empirical distribution $p_\text{lab}(x^l,y^l)$,
\begin{equation}
\label{eq:jrf_sup} 
\mathop {\max }\limits_\theta  J_\text{sup}(\theta)=E_{(x^l,y^l)\sim p_\text{lab}(x^l,y^l)}[\log p_\theta(y^l|x^l)]
\end{equation}
which can be solved by applying minibatch-based stochastic gradient descent (SGD).
At each iteration, a minibatch of sentences and labels is sampled from $p_\text{lab}(x^l,y^l)$, denoted by $\mathcal{D}_\text{lab}$, and the stochastic gradients are:
\begin{displaymath}
\reallywidehat{\frac{\partial J_\text{sup}(\theta)}{\partial \theta}}= \frac{1}{|\mathcal{D}_\text{lab}|}\sum_{(x^l,y^l)\in \mathcal{D}_\text{lab}} \left[\frac{\partial U_\theta(x^l,y^l)}{\partial\theta} - \frac{\partial U_\theta(x^l)}{\partial\theta}\right]
\end{displaymath}

\emph{Unsupervised learning of JRFs} amounts to the training the embedded TRF, by applying the dynamic noise-contrastive estimation (DNCE) algorithm developed in \citep{wang2018improved}.
Given unlabeled data (e.g. sentences)  in the form of empirical distribution $p_\text{unl}(l,x^l)$, DNCE jointly optimizes over a JRF and a noise distribution $p_\phi(l,x^l)$ (generally a LSTM language model) parameterized by $\phi$:
\begin{equation}
\label{eq:jrf-uns}
\left\{ 
\begin{split}
\mathop {\max }\limits_\theta  &{E_{(l,{x^l})\sim \frac{p_\text{unl}(l,x^l)+p_\phi(l,x^l)}{2} }} \left[  \log\frac{{{p_\theta}(l,{x^l})}}{{{p_\theta }(l,{x^l}) + {p_\phi }(l,{x^l})}} \right]  + \\&{E_{(l,{x^l})\sim{p_\phi }(l,{x^l})}}  \left[ \log\frac{{{p_\phi }(l,{x^l})}}{{{p_\theta }(l,{x^l}) + {p_\phi }(l,{x^l})}} \right]  \triangleq J_\text{uns}(\theta)\\
\mathop {\min }\limits_\phi  &KL\left[ {p_\text{unl}}(l,{x^l})||{p_\phi }(l,{x^l})\right] 
%	\mathop {\min }\limits_\phi  &E_{(l,x^l)\sim p_U(l,x^l)}[\log p_\phi(l,x^L)]
\end{split}
\right. 
\end{equation}
Thanks to optimization of DNCE, the annoying normalizing constants $Z_{\theta}(l)$ in JRFs can be jointly estimated along with the parameter estimation.
Specifically, we introduce parameters $\zeta_l$ for $\log Z_\theta(l)$ and $\zeta=(\zeta_1,\zeta_2,...,\zeta_L)$, where $L$ is a pre-defined maximum length.
Hereafter, we denote by $\xi=(\theta,\zeta)$ all the parameters in the JRF and rewrite $p_\theta(\cdot)$ as $p_\xi(\cdot)$.

At each iteration, a minibatch of sentences are sampled from $p_\text{unl}(l,x^l)$, denoted by $\mathcal{D}_\text{unl}$, two minibatches of sentences sampled from $p_\phi(l,x^l)$, denoted by $\mathcal{B}_1, \mathcal{B}_2$ ($|\mathcal{B}_2|=2|\mathcal{B}_1|=2|\mathcal{D}_\text{unl}|$), 
and the stochastic gradients are:
\begin{displaymath}
\left\{ 
\begin{split}
&\reallywidehat{\frac{\partial J_\text{unl}(\xi)}{\partial \xi}} =  \frac{1}{|\mathcal{B}_2|}\sum_{(l,x^l)\in \mathcal{B}_2} \frac{{{p_\xi}(l,{x^l})}}{{{p_\xi }(l,{x^l}) + {p_\phi }(l,{x^l})}} g(l,x^l;\xi)\\
&~~~~~~~+ \frac{1}{|\mathcal{D}_\text{unl}|+|\mathcal{B}_1|}\sum_{(l,x^l)\in \mathcal{D}_\text{unl} \cup \mathcal{B}_1} \frac{{{p_\phi }(l,{x^l})}}{{{p_\xi }(l,{x^l}) + {p_\phi }(l,{x^l})}} g(l,x^l;\xi)\\
&\reallywidehat{ \frac{\partial KL(p_\text{unl}||p_\phi)}{\partial \phi}} =  - \frac{1}{|\mathcal{D}_\text{unl}|}\sum_{(l,x^l)\in \mathcal{D}_\text{unl}} \frac{\partial \log p_\phi(l,x^l)}{\partial \phi}
\end{split}
\right.
\end{displaymath}
where $g(l,x^l;\xi)$ denotes the gradient of $\log p_\xi(l,x^l)$ w.r.t. $\xi=(\theta,\zeta)$, and the two gradient components w.r.t. $\theta$ and $\zeta$ are $\partial U_\theta(x^l)/ \partial \theta$ and 
$- (\delta(l=1),...,\delta(l=L))$ respectively.
%	According to \cite{trf}, the first expectation of objective for $\theta$ in Eq. \ref{eq:trf-obj} can be re-write as $E_{(l,x^l)\sim \frac{1}{2}(p_d(l,x^l)+p_\phi(l,x^l))}[\cdot]$, which means that the original mini-batch (of size $\mathbb{B}$)sentences sampled from $p_d(l,x^l)$ should be replaced by $\mathbb{B}/2$ sentences sampled from $p_d(l,x^l)$ and $\mathbb{B}/2$ sentences sampled from $p_\phi(l,x^l)$.
%	This modification will not break the optimal solution of TRF and help to alleviate overfitting.

\emph{Semi-supervised learning of JRFs} over a mix of labeled and unlabeled data amounts to combining the above supervised and unsupervised training with the following semi-supervised objective:
\begin{equation}
\label{eq:jrf_semi}
\left\{ 
\begin{split}
\mathop {\max }\limits_\xi  &J(\xi)=J_\text{sup}(\xi)+\alpha J_\text{uns}(\xi)\\
\mathop {\min }\limits_\phi  &KL \left[  {p_\text{unl}}(l,{x^l})||{p_\phi }(l,{x^l}) \right] 
\end{split}
\right. 
\end{equation}
where $\alpha$ is the trade-off weight between supervised and unsupervised learning, and $\xi=(\theta,\zeta)$ is defined before.

\paragraph{The performance of the JRF model.}
The benefits of JRFs to sequence modeling and labeling are demonstrated through two sets of experiments in \citep{jrf}.
First, various traditional language models (LMs) such as Kneser-Ney (KN) smoothed n-gram LM \citep{chen1999empirical}, LSTM LM \citep{zaremba2014recurrent} and TRF LM \citep{wang2018improved} are trained on Wall Street Journal (WSJ) portion of Penn Treebank (PTB) English dataset (without using POS tags). JRF LMs are trained by using POS tags.
These models are then used to rescore the 1000-best list generated from the WSJ'92 test set, with similar experimental setup as in \citep{wang2018improved}.
The JRF model is effective in incorporating POS tags and performs the best with the lowest rescoring word error rate (WER).
Second, experiments on three sequence labeling tasks - POS tagging, chunking and NER, with Google one-billion-word dataset \citep{google1b} as the unlabeled data resource, are conducted.
It is found that the JRF based SSL consistently outperforms the CRF baseline and self-training.

\section{JEMs and JRFs for semi-supervised learning}
\label{sec:JRF_semi}
% from semiEBM

Now we have introduced the basics for establishing JEMs and JRFs for fixed-dimensional and sequential data in \secref{sec:JEM} and \secref{sec:JRF}, respectively.
As introduced in \secref{sec:EBM_SSL}, there exist two different methods for EBM based SSL - pre-training and joint-training. 
Pre-training of RBMs once received attention in the early stage of training DNNs \citep{hinton2006a}.
Encouraging SSL results have been shown recently for EBM based joint-training.
In \citep{song2018learning,zhaojoint,jrf}, state-of-the-art SSL results are reported based on EBMs and across different data modalities (images, natural languages, an protein structure prediction and year prediction from the UCI dataset repository) and in different data settings (fix-dimensional and sequence data).

So EBM-based SSL can be applied across different data modalities (fix-dimensional and sequence data), by either of pre-training and joint-training.
Thus, there are four cases.
In this section, we systematically introduce these four cases, as summarized in \tbref{tb:EBM_SSL_application}, where we take image classification and natural language labeling as representative tasks.
We continue with the notations in Definition \ref{def:EBM_NN} of EBMs parameterized by neural networks.

\begin{table}[t]
	\begin{center}
		\caption{Applications of EBMs across different domains: comparison and connection (See text for details).}
		\label{tb:EBM_SSL_application}
		\begin{tabular}{c|c|c}
			\hline
			& Image classification         & Natural language labeling     \\
			\hline \hline
			\multirow{2}{*}{Observation}  & $x\in\mathbb{R}^D$           & $x\in\bigcup_{l}\mathbb{V}^l$ \\
			& continuous, fixed-dimensional & discrete, sequence           \\
			\hline
			Label                         & $y\in\{1,2,\cdots,K\}$       & $y\in \bigcup_{l}\{1,2,\cdots,K\}^l$      \\
			\hline
			Pre-training & $U_\theta(x) = w^T h$ & $U_\theta (x)$ in \eqref{eq:u-pretraining-nlp}                \\
			\hline
			Joint-training                & $U_\theta(x,y) = \Psi_\theta(x)[y]$             & $U_\theta(x,y)$ in \eqref{eq:u-joint-nlp}              \\
			\hline
		\end{tabular}
	\end{center}	
\end{table}

\subsection{Pre-training via EBMs for SSL}
\label{sec:pre-training}

Pre-training via EBMs for SSL consists of two stages. The first stage is pre-training an EBM on unlabeled data.
%\footnote{This is also known as unsupervised pre-training, which is different from supervised pre-training (also known as transfer learning using a pre-trained classifier).
%	It is shown in \cite{oliver2018realistic} that the success of supervised transfer learning heavily	depends on how closely related the two datasets are.}
It is followed by a fine-tuning stage, where we can easily use the pre-trained EBM to initialize a discriminative model and further train over labeled data.

\begin{figure}[t]
	\centering
	\centerline{\includegraphics[width=0.9\linewidth]{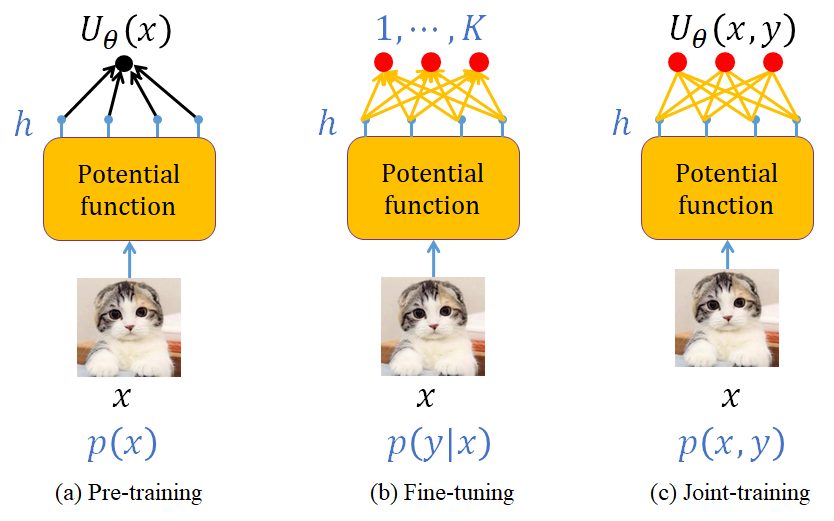}}
	\caption{Illustration of EBM based semi-supervised image classification. (a) Pre-training, (b) Fine-tuning, (c) Joint-training.}
	\label{fig:SSL_image}
\end{figure}

\paragraph{Pre-training of an EBM for semi-supervised image classification} essentially involves estimating $p_\theta(x)$ as defined in Eq. (\ref{eq:unsup-RF}) from unlabeled images.
For the potential function $U_\theta(x)$, we can use a multi-layer feed-forward neural network $\Phi_\theta(x):\mathbb{R}^{D} \rightarrow \mathbb{R}$, which, in the final layer, calculates a scalar via a linear layer, $U_\theta(x) = w^T h$, as shown in \figref{fig:SSL_image}(a).
Here $h \in \mathbb{R}^H$ denotes the activation from the last hidden layer and $w \in \mathbb{R}^H$ the weight vector in the final linear layer. For simplicity, we omit the bias in describing linear layers throughout \secref{sec:JRF_semi}.

In fine-tuning, as shown in \figref{fig:SSL_image}(b), we throw away $w$ and fed $h$ into a new linear output layer, followed by $\text{softmax}(W  h)$, to predict $y$, where $W \in \mathbb{R}^{K \times H}$ denotes the new trainable weight parameters and $y \in \left\lbrace 1,\cdots,K \right\rbrace$ the class label.
{It can be seen that pre-training aims to learn representations that may be useful for multiple downstream tasks, and information about the labels is not utilized until the fine-tuning stage.}

\paragraph{Pre-training of an EBM for semi-supervised natural language labeling (e.g., POS tagging)} is similar to the above procedure of pre-training of an EBM for semi-supervised image classification.
In pre-training, we estimate an EBM-based language model $p_\theta(x)$ from unlabeled text corpus.
Neural networks with different architectures can be used to implement the potential function $\Phi_\theta(x):\mathbb{V}^{l} \rightarrow \mathbb{R}$ given length $l$.
With abuse of notation, here $x=(x_1, \ldots, x_l)$ denotes a token sequence of length $l$, and $x_i \in \mathbb{V}, i=1,\cdots,l$.  
For example, as shown in \figref{fig:SSL_seq_labeling}(a), we can use the bidirectional LSTM based potential function in \citep{wang2018improved} as follows:
\begin{equation}\label{eq:u-pretraining-nlp}
U_\theta(x) = \sum_{i=1}^{l-1} h_{f,i}^T e_{i+1} + \sum_{i=2}^{l} h_{b,i}^T e_{i-1}
\end{equation}
where $e_i, h_{f,i}$ and $h_{b,i}$ are of the same dimensions, denoting the output embedding vector, the last hidden vectors of the forward and backward LSTMs respectively at position $i$.

In fine-tuning, as shown in \figref{fig:SSL_seq_labeling}(b), we add a CRF, as the discriminative model, on top of the extracted representations, $(h_{f,i}, h_{b,i}), i=1,\cdots,l$, to do sequence labeling, i.e., to predict a sequence of labels $y=(y_1, \ldots, y_l)$ with one label for one token at each position, where $y_i \in \left\lbrace 1,\cdots,K \right\rbrace$ denotes the label at position $i$.
Specifically, we concatenate $h_{f,i}$ and $h_{b,i}$, add a linear output layer to define the node potential, and add a matrix $A \in \mathbb{R}^{K \times K}$ to define the edge potential, as in recent neural CRFs \citep{lample2016neural,ma2016end}. The parameters to be fine-tuned are the weights in the linear output layer and the edge potential matrix $A$. 

\begin{figure}[t]
	\centering
	\centerline{\includegraphics[width=1.1\linewidth]{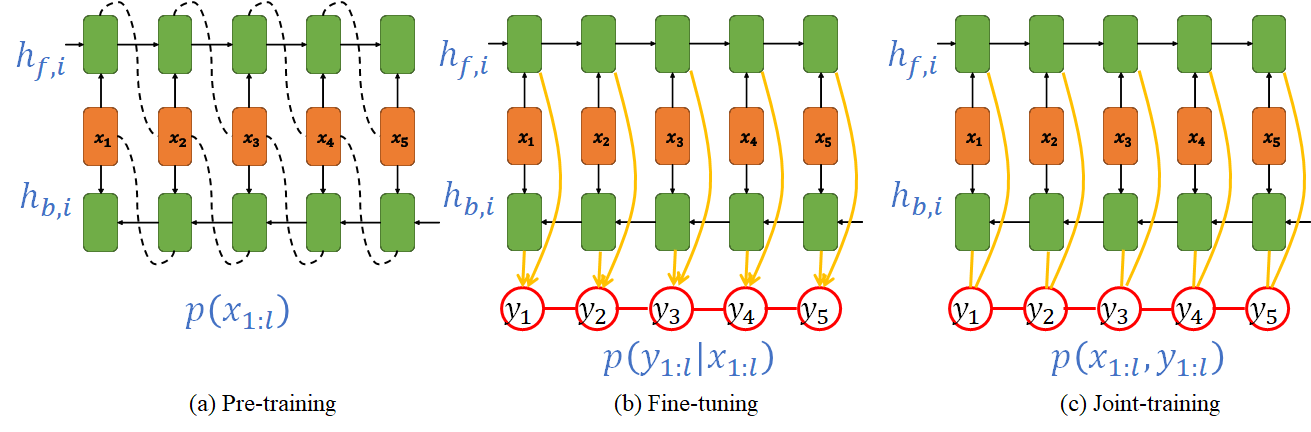}}
	\caption{Illustration of EBM based sequence labeling. (a) Pre-training, (b) Fine-tuning, (c) Joint-training.}
	\label{fig:SSL_seq_labeling}
\end{figure}

\subsection{Joint-training via EBMs for SSL}
\label{sec:joint-training}

The above pre-training via EBMs for SSL considers the modeling of only observations $x$ without labels $y$.
The joint-training refers to the joint modeling of $x$ and $y$:
\begin{equation}\label{eq:joint-RF}
p_{\theta}(x,y)=\frac{1}{Z(\theta)} \exp\left[  U_{\theta}(x,y) \right] 
\end{equation}
Then,  it can be easily seen, as detailed in \secref{sec:JEM}, that the conditional density $p_{\theta}(y|x)$ implied by the joint density Eq. (\ref{eq:joint-RF}) is:
\begin{equation}\label{eq:rf-classifier}
p_{\theta}(y|x) = \frac{p_{\theta}(x,y)}{p_{\theta}(x)}
= \frac{\exp(  U_{\theta}(x,y) )}{\sum_{y'} \exp(  U_{\theta}(x,y') )}
\end{equation}
And the implied marginal density is
\begin{equation}
p_\theta(x) = \frac{1}{Z(\theta)} \exp( U_{\theta}(x) )
\end{equation}
where, with abuse of notation, $U_\theta(x) \triangleq log \sum_y \exp\left[  U_{\theta}(x,y) \right]$.
{Different from pre-training, the unsupervised objective $p_\theta(x)$ depends on the targeted task.}
The key for EBM based joint-training for SSL is to choose suitable $U_\theta(x,y)$ such that both $p_\theta(y|x)$ and $p_\theta(x)$ can be tractably optimized.

\paragraph{Joint-training of an EBM for semi-supervised image classification} considers a neural network $\Psi_\theta(x):\mathbb{R}^{D} \rightarrow \mathbb{R}^K$, which, as shown in \figref{fig:SSL_image}(c), accepts the image $x$ and outputs an vector, whose size is equal to the number of class labels, $K$. 
Then we define $U_\theta(x,y) = \Psi_\theta(x)[y]$, where $[y]$ denotes the $y$-th element of a vector.
With the above potential definition, it can be easily seen that the implied conditional density $p_{\theta}(y|x)$ is exactly a standard $K$-class softmax based classifier, using the $K$ logits calculated by the neural network $\Psi_\theta(x)$ from the input $x$. And we do not need to calculate $Z(\theta)$ for classification. Therefore, we can conduct SSL over a mix of labeled and unlabeled data by maximizing the (weighted) sum of $\log p_\theta(y|x)$ and $\log p_\theta(x)$, where both optimizations are tractable as detailed in \citep{song2018learning}.

\paragraph{Joint-training of an EBM for semi-supervised natural language labeling} is similar to the above procedure of joint-training of an EBM for semi-supervised image classification, 
with $x=(x_1, \ldots, x_l)$ and $y=(y_1, \ldots, y_l)$, $x_i \in \mathbb{V}, y_i \in \left\lbrace 1,\cdots,K \right\rbrace, i=1,\cdots,l$. 
As shown in \figref{fig:SSL_seq_labeling}(c), we consider a neural network $\Psi_\theta(x): \mathbb{V}^{l} \rightarrow \mathbb{R}^{l \times K}$ and define  
\begin{equation}\label{eq:u-joint-nlp}
U_\theta(x,y) = \sum_{i=1}^l \Psi_\theta(x)[i,y_i] + \sum_{i=1}^l A[y_{i-1},y_i]
\end{equation}
where $[\cdot,\cdot]$ denotes the element of a matrix and  $A \in \mathbb{R}^{K \times K}$ models the edge potential for adjacent labels. With the above potential definition, it can be easily seen, as detailed in \secref{sec:JRF}, that the conditional density $p_{\theta}(y|x)$ implied by the joint density Eq.(\ref{eq:joint-RF}) is exactly a CRF with node potentials $\Psi_\theta(x)[i,y_i]$ and edge potentials $A[y_{i-1},y_i]$, and the implied marginal density $p_{\theta}(x)$ is exactly a trans-dimensional random field (TRF) language model \citep{Wang2017LearningTR,wang2017language,BinICASSP2018}. Training of both models are tractable as detailed in \citep{jrf,wang2018improved}.

\subsection{Comparison of joint-training and pre-training}

In \citep{song2021empirical}, a suite of SSL experiments are conducted on standard benchmark datasets in different domains, including the CIFAR-10 and SVHN datasets \citep{song2018learning} for image classification and the POS, chunking and NER datasets \citep{hu2019neural,jrf} for natural language labeling.
It is revealed that joint-training EBMs outperform pre-training EBMs marginally but nearly consistently. Presumably, this is because that the optimization of joint-training is directly related to the targeted task, but pre-training is not aware of the labels for the targeted task.

Full detailed experimental results are referred to \citep{song2021empirical}. Here we present some results for illustration.
In \citep{song2021empirical}, the standard data split for training and testing is used.
When changing the amount of labeled and unlabeled data for training, varying proportions (e.g., 10\%, 100\%) of labels are selected from the original full set of labeled data. 
Hereafter, the amount of labels is thus described in terms of proportions. ``100\% labeled'' means 50K images for CIFAR-10, and 56K, 7.4K, 14K sentences for POS, chunking and NER, respectively.

\begin{table}[t]
	\centering
	\caption{SSL for image classification over CIFAR-10 with 4,000 labels for a full training set of 50K images.
		The upper/lower blocks show the generative/discriminative SSL methods respectively.
		The means and standard deviations are calculated over ten independent runs with randomly sampled labels.
	}
	\label{tab:image-main-result}
	\begin{tabular}{l|c}
\hline
		Methods                                      & error (\%)                               \\
\hline \hline
		CatGAN \citep{springenberg2016unsupervised}   & 19.58$\pm$0.46                           \\
		Ladder network \citep{rasmus2015semi}         & 20.40$\pm$0.47                           \\
		Improved-GAN \citep{imporveGAN}               & 18.63$\pm$2.32                           \\
		BadGAN \citep{badGAN}                         & 14.41$\pm$0.30                           \\
		Sobolev-GAN \citep{sob-gan}                   & 15.77$\pm$0.19                           \\
		{Supervised baseline}           & 25.72$\pm$0.44                           \\
		{Pre-training+fine-tuning EBM}        & 21.40$\pm$0.38                                    \\
		{Joint-training EBM}                  & 15.12$\pm$0.36                           \\
\hline
		\multicolumn{2}{c}{Results below this line cannot be directly compared to those above.} \\
\hline
		VAT small \citep{miyato2018virtual}           & 14.87                                    \\
		Temporal Ensembling \citep{laine2016temporal} & 12.16$\pm$0.31                           \\
		Mean Teacher \citep{tarvainen2017mean}        & 12.31$\pm$0.28                           \\
\hline
	\end{tabular}
\end{table}

\paragraph{SSL for Image Classification.}
In \citep{song2021empirical}, different generative SSL methods are compared over CIFAR-10. 
As in previous works, 4,000 labeled images are randomly sampled for training. The remaining images are treated as unlabeled. 
It can be seen from \tbref{tab:image-main-result} that semi-supervised EBMs, especially the joint-training EBMs, produce strong results on par with state-of-art generative SSL methods\footnote{As discussed in \citep{song2018learning}, Bad-GANs could hardly be classified as a generative SSL method.}. Furthermore, joint-training EBMs outperform pre-training+fine-tuning EBMs by a large margin in this task.
Note that some discriminative SSL methods, as listed in the lower block in \tbref{tab:image-main-result}, also produce superior results but heavily utilize domain-specific data augmentations, and thus are not directly compared to the generative SSL methods.

\paragraph{SSL for Natural Language Labeling.}
In this experiment, different SSL methods are evaluated for natural language labeling over three tasks - POS tagging, chunking and NER.
The following benchmark datasets are used - PTB POS tagging, CoNLL-2000 chunking and CoNLL-2003 English NER, as in \citep{ma2016end,cvt,hu2019neural,jrf}.
Varying proportions of labels are sampled as labeled training data and use the Google one-billion-word dataset \citep{google1b} as the large pool of unlabeled sentences.
A large scale of experiments are conducted, covering the labeling proportions of 2\%, 10\% and 100\% with ``U/L'' (the ratio between the amount of unlabeled and labeled) of 50, 250 and 500 for three tasks, which consist of a total of 27 settings.
The network architectures in \citep{jrf} is used. After some empirical search, hyper-parameters (tuned separately for different methods) are fixed, which are used for all the 27 settings. 

\tbref{tab:nlp-compare} only show the relative numerics, absolute numerics are referred to \citep{song2021empirical}. The main observations are as follows. 
\begin{itemize}
\item The joint-training EBMs outperform the supervised baseline in 25 out of the 27 settings. Since we perform one run for each setting, this indicates 2 outliers.
%\item For a fixed labeling size (as given by the labeling proportion), increasing ``U/L'' enables the joint-training EBMs to perform better, except in one outlier.
\item The effects of increasing the labeling sizes on the improvements of the joint-training EBMs over the supervised baseline with a fixed ``U/L'' are mixed. 
For POS/chunking/NER, the largest improvements are achieved under 2\%/10\%/100\% labeled, respectively.
It seems that the working point where an SSL method brings the largest improvement over the supervised baseline is task dependent.
Suppose that the working point is indicated by the performance of the supervised baseline, then the SSL method brings the largest effect when the performance of the supervised baseline is moderate, i.e., neither too low nor too high.
\item Joint-training EBMs outperform pre-training EBMs in 23 out of the 27 settings marginally but nearly consistently.
A possible explanation is that pre-training is not aware of the labels for the targeted task and is thus weakened for representation learning. In contrast, the marginal likelihood optimized in joint-training is directly related to the targeted task.
\item It seems that the degrees of improvements of the joint-training EBMs over the pre-training EBMs are not much affected when varying the labeling sizes and the ``U/L'' ratios.
\end{itemize}

\begin{table}[t]
	\vspace{-2mm}
	\centering
	\caption{
		Relative improvements by joint-training EBMs compared to the supervised baseline (abbreviated as sup.) and the pre-training+fine-tuning EBMs (abbreviated as pre.) respectively.
		The evaluation metric is accuracy for POS and $F_1$ for chunking and NER.
		``Labeled''	denotes the amount of labels in terms of the proportions w.r.t. the full set of labels. ``U/L'' denotes the ratio between the amount of unlabeled and labeled data.}
	\label{tab:nlp-compare}
	\centering
		\begin{tabular}{c|l|ccc|ccc}
\hline
			\multicolumn{2}{c|}{}    & \multicolumn{3}{c|}{joint over sup.} & \multicolumn{3}{c}{joint over pre.}                                           \\
\hline
			Labeled                  & U/L                                  & POS                                 & Chunking & NER  & POS & Chunking & NER  \\
\hline
			\multirow{3}{*}{$2\%$}   & 50                                   & 7.9                                 & 16.5     & -2.7 & 4.7 & 3.4      & 3.7  \\
			& 250                                  & 12.6                                & 16.6     & 1.5  & 4.2 & 0.9      & 0.1  \\
			& 500                                  & 15.1                                & 20.3     & 4.5  & 4.1 & -0.3     & -1.5 \\
\hline
			\multirow{3}{*}{$10\%$}  & 50                                   & 5.6                                 & 18.0     & 0.9  & 3.8 & 3.0      & 5.0  \\
			& 250                                  & 6.0                                 & 18.3     & -1.2 & 3.8 & 9.4      & -0.7 \\
			& 500                                  & 8.5                                 & 21.8     & 1.0  & 5.2 & 3.7      & -4.1 \\
\hline
			\multirow{3}{*}{$100\%$} & 50                                   & 3.1                                 & 10.3     & 6.5  & 3.5 & 5.3      & 1.1  \\
			& 250                                  & 5.0                                 & 13.6     & 8.3  & 3.5 & 7.4      & 3.6  \\
			& 500                                  & 6.2                                 & 14.0     & 8.4  & 4.3 & 6.4      & 2.5  \\
\hline\hline
	\end{tabular}
\end{table}

\newpage
\section{JRFs for calibrated natural language understanding}
\label{sec:JRF_calibrate}
\subsubsection{Motivation}
In this section, we describe how the joint EBMs (JEMs), as introduced in \secref{sec:JEM}, can be used for calibrated natural language understanding.
\emph{Calibration}\index{Calibration} refers to how well a classification model's confidence (reflected by its output posterior probability) aligns with its actual accuracy.
As deep learning models achieve amazing accuracy in computer vision and natural
language processing (NLP), more research attention has been drawn to the calibration aspect of these models. 
As shown by \cite{pmlr-v70-guo17a}, the high accuracy from deep models does not always lead to better calibration. This motivates an important line of works attempting to achieve a better trade-off between accuracy and calibration.

Most existing calibration methods (see references in \cite{he2021joint}) generally rescale the posterior distribution predicted from the classifier after training. Such post-processing methods require a held-out development set with a decent number of samples to be available. 
In another line of work,  \cite{grathwohl2019your}
shows that one can train a joint EBM together with the standard training of a neural classifier. Although calibration is not explicitly addressed during EBM training, the calibration of the resulting classifier is shown to be greatly improved. 
However, the training framework proposed by \cite{grathwohl2019your} is designed for image classifiers, and it can not be readily applied to discrete text data.
\cite{he2021joint} investigates JEM training during the finetuning of pre-trained text encoders (e.g., BERT or RoBERTa) for natural language understanding (NLU) tasks.

%Calibration refers to how well a classification model’s confidence  aligns with its actual accuracy. However, the high accuracy achieved by  deep learning models does not always lead to better calibration \citep{pmlr-v70-guo17a}. To reach a better calibration, \citep{he2021joint} propose a framework that jointly train an EBM during finetuning a pre-trained text encoders for Natural Language Understanding (NLU) tasks. The resulting models can achieve competitive calibration results comparing to strong baselines with little or no loss in accuracy.

\subsubsection{The JEM model for NLU}

\cite{he2021joint} considers the finetuning of pre-trained text encoder on NLU tasks. Assume samples from the data distribution $p_\text{data}$ are in the form of $(x,y)$ pairs, where $x$ usually refers to a single or a pair of sentences, and $y$ refers to the corresponding label. The number of classes are denoted by $|\mathcal{Y}|$.

Given input $x$, a text encoder model (e.g., BERT or RoBERTa) is firstly used to encode $x$, and the resulting embedding is denoted by $\text{enc}(x)$.
For the target classification task, a classifier $f_\text{CLS}$, which could be a simple linear transform or a multi-layer perception (MLP), will be applied to $\text{enc}(x)$. 
The output logits are denoted as $f_\text{CLS}(\text{enc}(x))$, whose dimension is equal to the number of possible classes $|\mathcal{Y}|$.
The $y$-th logit is denoted by $f_\text{CLS}(\text{enc}(x))[y]$. 
The posterior distribution $p_\theta(y|x)$ is obtained by applying a softmax operation to the logits, where $\theta$ refers to the parameters in the model.

In standard finetuning, the cross-entropy (CE)
loss and gradient based optimizers are used to train
the classifier:
\begin{equation}
\mathcal{L}_{CE} = \mathbb{E}_{(x,y)\sim p_\text{data}}(-\log p_\theta(y|x)).
\end{equation}
In the following, we will introduce how a JEM model can be defined on top of the text encoder for calibrated NLU.

\begin{figure}[t]
	\centering
	\centerline{\includegraphics[width=0.8\linewidth]{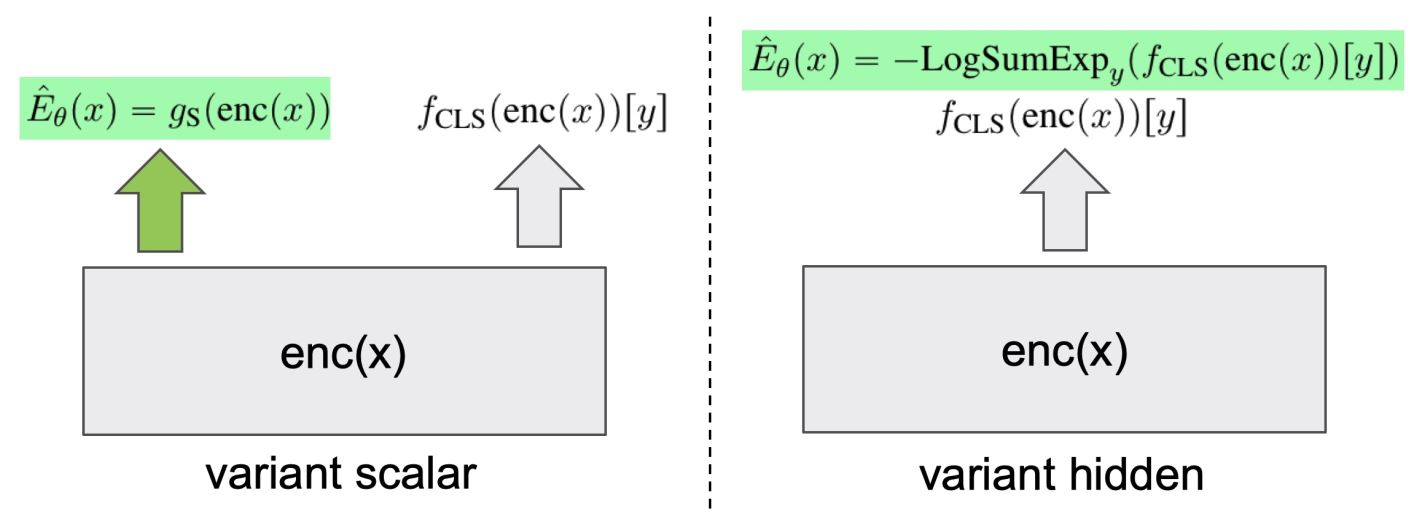}}
	\caption{Comparison of the \emph{scalar} and the \emph{hidden} variants of energy functions. The modules introduced for EBM are shaded in green. \citep{he2021joint}}
	\label{fig:calibrated_JEM}
\end{figure}

\cite{he2021joint} considers a JEM model in a residual form, similar to \secref{sec:residual_ELM} and \secref{sec:residual_ebm}.
The energy function for modeling the marginal distribution of $x$ is defined as follows:
\begin{equation}
E_\theta(x) = -\log p_\text{N}(x) + \hat{E}_\theta(x)
\end{equation}
where $p_\text{N}(x)$ is the base distribution, which will also be used as the noise distribution during NCE training.
\cite{he2021joint} examines three variants of the residual energy function $\hat{E}_\theta(x)$.
\begin{itemize}
	\item \textbf{Variant \emph{scalar}.}~Another linear layer $g_\text{S}$ is introduced to define the energy function, whose output is a scalar:
	\begin{displaymath}
		\hat{E}_\theta(x) = g_\text{S}(\text{enc}(x)).
	\end{displaymath}
	\item \textbf{Variant \emph{hidden}.}~Similar to \citep{grathwohl2019your,song2018learning}, 
	starting from a softmax based classifier, an EBM can be defined with the logits as follows:
	\begin{displaymath}
\hat{E}_\theta(x) = -\text{LogSumExp}_{y=1}^{|\mathcal{Y}|}(f_\text{CLS}(\text{enc}(x))[y]).
\end{displaymath}
Different from the \emph{scalar} variant, here the energy
function directly uses the logits for prediction (visualized in \figref{fig:calibrated_JEM}). Hence the impact on the
model's classification behavior could be larger.
	\item \textbf{Variant \emph{sharp-hidden}.}~The \emph{hidden} variant has a potential weakness that, the correlation between input $x$ and the prediction $y$ is not addressed because the energy is distributed among all the logits.
Motivated by this potential issue, the
following \emph{sharp} variant is proposed, which can be viewed as an approximation to the \emph{hidden} variant, and is found to work well in practice \citep{he2021joint}.
	\begin{displaymath}
\hat{E}_\theta(x) = - \max_y f_\text{CLS}(\text{enc}(x))[y].
\end{displaymath}
\end{itemize}

\subsubsection{NCE training of the JEM model}
Similar to the discussion in \secref{sec:residual_ebm}, NCE is applied to train the residual JEM model.
NCE trains the model to discriminate between real samples from $p_\text{data}$ and
noise samples from a given noise distribution $p_\text{N}$.
The NCE loss is the same as Eq.~(\ref{eq:residual_nce}):
\begin{displaymath}
        \mathcal{L}_\text{NCE}=-\mathop{\mathbb{E}}\limits_{x_+ \sim p_\text{data}} \log\frac{1}{1+\nu \exp(E_\theta(x_+))} - \nu \mathop{\mathbb{E}}\limits_{x_- \sim p_\text{N}} \log\frac{1}{1+\exp(-E_\theta(x_-))/\nu}
\end{displaymath}
where $\nu$ is the ratio of noise samples to real samples.

In \citep{he2021joint}, $\mathcal{L}_{CE}$ and $\mathcal{L}_{NCE}$ are jointly optimized during training, that is:
\begin{displaymath}
    \mathcal{L}_\text{joint}=\mathcal{L}_\text{CE}+\mathcal{L}_\text{NCE}
\end{displaymath}

For constructing the noise distribution $p_\text{N}(x)$, \citep{he2021joint} finetunes the GPT-2 language model \citep{gpt2} with samples from the target training set. 
However during NCE training, the energy model is found to easily discriminate between data samples and noise samples, which makes training ineffective \citep{he2021joint}. 
To alleviate this issue, an objective similar to the masked language model (MLM) loss \citep{bert} is adopted during the finetuning of the noise model.
With a given mask ratio $M$ (e.g., 0.4), \cite{he2021joint} randomly masks part of $x$, and trains the model to complete it:
\begin{displaymath}
\mathcal{L}_\text{MLM}=-\mathop{\mathbb{E}}\limits_{ x \sim p_\text{data},x^m \sim \text{mask}(x, M)} \log p_\text{N}(x|x^m)
\end{displaymath}
During noise sample generation, adopting the same mask ratio $M$, a masked $x^m$ is fed to $p_\text{N}$ ($x$ is from the training set), and the generated sample are used as the noise sample. In this way, the noise distribution is made closer to the data distribution.

%but remarkably, adopts an objective similar to the masked language model objective instead of the standard LM objective to finetune GPT-2. In this way, the noise distribution is finetuned to be closer to the data distribution so that the NCE training can be more effective. 

\subsubsection{Evaluation of the Calibration performance}

To measure calibration performance, expected calibration error (ECE)\index{Expected calibration error (ECE)} metric is used, following \citep{pmlr-v70-guo17a,grathwohl2019your}.
Given an input sample $x$, for each label $y$, we say
that the model predicts that $x$ belongs to label $y$
with confidence $p_\theta(y|x)$. Assuming the test-set
contains $n$ samples, we will have $n \times |\mathcal{Y}|$ predictions.

ECE first partitions all predictions into $B$
equally-spaced bins by its confidence. Following
\citep{grathwohl2019your}, $B=20$ is used, which means the width of each bin is
$0.05$. For example, the first bin contains all predictions that have confidence in the range of $[0,0.05)$.
Then for each bin ECE computes how the average
of confidence is different from its actual accuracy:
\begin{equation}
    ECE=\frac{1}{|\mathcal{Y}|}\sum_{y=1}^{|\mathcal{Y}|}\sum_{b=1}^B \frac{|B_{yb}|}{n} |\text{acc}(B_{yb}) - \text{conf}(B_{yb})|
\end{equation}
where $n$ is the number of test samples. $\text{acc}(B_{yb})$ is the ratio of samples ($x$) whose true label is indeed $y$ in the bin $B_{yb}$, and $\text{conf}(B_{yb})$ is the average confidence in that bin. 

\begin{figure}[t]
	\centering
	\centerline{\includegraphics[width=0.8\linewidth]{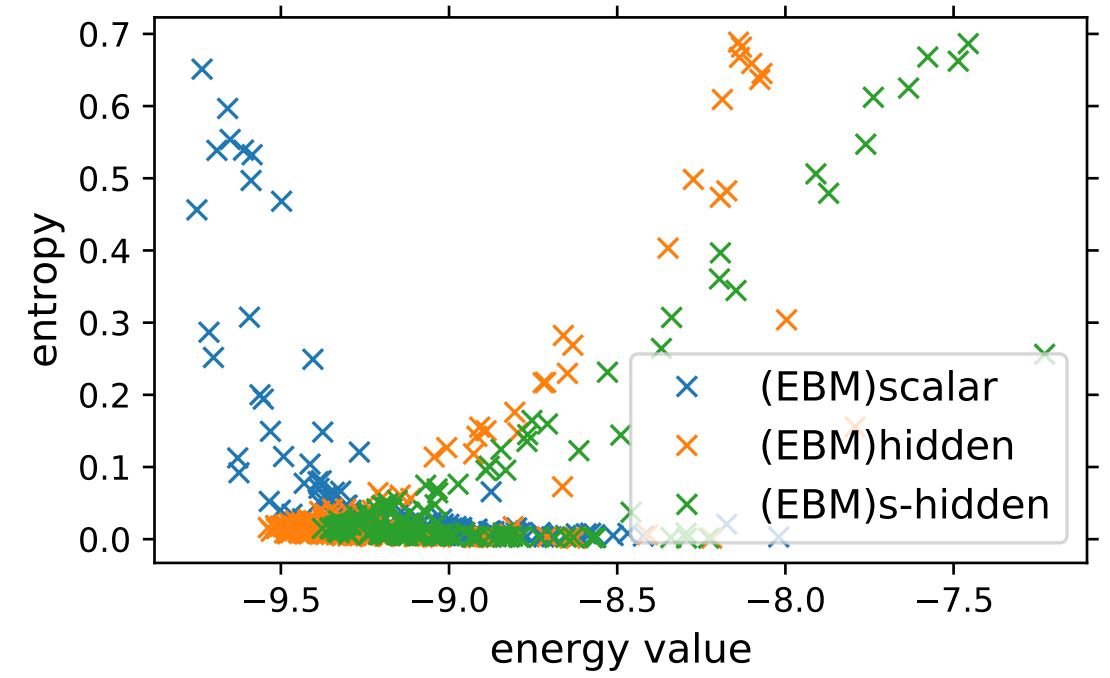}}
	\caption{The entropy of the posterior ($p_\theta(\cdot|x)$) versus energy value $\hat{E}_\theta(x)$ for SST-2 test-set samples. \citep{he2021joint}}
	\label{fig:calibrated_analysis}
\end{figure}

\cite{he2021joint} uses the RoBERTa-base model as the text encoder and finetune it on eight GLUE tasks \citep{wang2018glue}.
It is found that EBM training is able to reach a better trade-off between accuracy and calibration.
In most tasks, all three EBM variants get substantial improvement in ECE with little or no loss in accuracy comparing to the (strong) baseline methods. 

How does the model get better calibration?  \figref{fig:calibrated_analysis} from \cite{he2021joint} give some analysis.
\figref{fig:calibrated_analysis} shows the energy value $\hat{E}_\theta(x)$ versus the entropy\index{Entropy} of the posterior distribution
\begin{displaymath}
H(p_\theta(\cdot|x)) = - \sum_{y=1}^{|\mathcal{Y}|} p_\theta(y|x) \log p_\theta(\cdot|x)
\end{displaymath}
for samples in the SST-2 test set. 
It is shown that models trained with the \emph{hidden} and \emph{sharp-hidden} variants tend to assign more conservative
predictions (reflected by higher entropy) for higher-energy (less likely) samples. 
It is hypothesized that this is due to the strong coupling between the energy function and the classification logits. 
However, this interesting trend (\figref{fig:calibrated_analysis}) is not observed in all datasets (e.g., QNLI).

\chapter{Conclusion}
\label{ch:conclusion}

\section{Summary}
Energy-based models (EBMs) form an important aspect of modern artificial intelligence and signal processing.
Unlike most other probabilistic models which are self-normalized, EBMs specify probability density functions up to an unknown normalizing constant, and thus are also referred to as \emph{non-normalized probabilistic models}\index{Non-normalized probabilistic model}. Such model definition of not imposing normalization enables a great power and flexibility to the modeling process. One is generally free to choose any nonlinear regression function as the energy function, as long as it remains normalizable in principle. Accompanied with such flexibility, we have also shown the advantages of EBMs in naturally overcoming label bias and exposure bias suffered by locally-normalized models (\secref{sec:bias}) and in hybrid generative-discriminative and semi-supervised learning (\secref{ch:jem}).

On the other hand, although the flexibility of EBMs can provide significant modeling advantages, both computation of the exact likelihood and exact sampling from these models are generally intractable, which makes training of EBMs especially difficult and limits their applications.
Moreover, the sequential nature of speech and language also presents special challenges and needs treatment different from processing fix-dimensional data (e.g., images).

We are making progress. This monograph presents a systematic introduction to energy-based models, including both algorithmic progress and applications in speech and language processing, which is organized into four chapters.

In Chapter \ref{ch:basics}, we provide a self-contained introduction to basics for EBMs, including model definitions, learning algorithms, and sampling/generation algorithms.
There are two remarkable features in our introduction.
First, we starts with the framework of probabilistic graphical models (PGMs). The PGM framework enables readers to appreciate EBMs from the perspective of undirected graphical models and easily understand the differences between undirected graphical models and directed graphical models, and between globally-normalized and locally-normalized.
Graphical models provide a natural tool for formulating variations on classical methods, as well as for exploring entirely new models.
Second, our introduction to the stochastic approximation (SA) methodology is very useful for readers to develop new algorithms for learning EBMs, particularly learning with Monte Carlo sampling. The SA methodology is more general than the ordinary stochastic gradient descent (SGD) technique, while SGD is only one instance of SA.

The next three chapters are dedicated to present how to apply EBMs in three different scenarios, i.e., for modeling marginal, conditional and joint distributions, respectively.
As visualized in \figref{fig:outline}, our such organization is comprehensive and distinctive.
A wide range of applications in speech and language processing are covered, including language modeling, representation learning over text, speech recognition, sequence labeling in NLP, text generation, semi-supervised learning, and calibrated natural language understanding.

\section{Future challenges and directions}

EBM based methods represent an important class for the probabilistic approach to many fields. Despite the progress achieved in these years, much more work needs to be carried out.
\begin{itemize}
\item \textbf{Applications of EBMs are still mainly limited by lack of effective and efficient training techniques.}
Training techniques are crucial to problem solving with EBMs, and will remain an active direction for future research.
	EBMs have been especially difficult to train. 
	We mainly introduce maximum likelihood training with MCMC sampling, noise-contrastive estimation (NCE) and briefly covers score matching (SM).
	There has been persistent and ongoing interest in developing effective modeling and training techniques for learning EBMs (see \cite{zhang2023persistently} and its references), but those methods are mostly first studied for fix-dimensional data. There is vast room to improve training techniques for EBMs, especially for \emph{sequential data} such as speech and language.
	
	Moreover, as we have discussed, learning EBMs over \emph{discrete data} are more challenging than over continuous data, due to the combinatorial explosion of the state space. 
	Langevin sampling, a particular MCMC sampling method which utilizes gradients, has been dominantly used in training EBMs over continuous data. A recent progress in	\citep{grathwohl2021oops,qin2022cold} begins to use gradients
of the likelihood function with respect to its discrete inputs to propose updates in a Metropolis-Hastings sampler. More further research is needed.
\item \textbf{More downstream applications of EBMs are worthwhile for exploration.}
Considering the potential advantages of EBMs in modeling flexibility, overcoming label bias and exposure bias, and hybrid generative-discriminative and semi-supervised learning, the results reviewed in this monograph are only preliminary. There are many interesting tasks which could be approached by EBMs.

The applications of EBMs require more skill and experience, apart from using the mainstream deep learning toolkits.
To help the readers to get familiar with the techniques for developing and applying energy-based models, we summarize some open-source toolkits in \appref{sec:toolkit}.
\item \textbf{We look forward to more foundational and interdisciplinary research around EBMs.}
There is an opportunity for developing \emph{physics-based methods} that could address the difficulty of calculating or sampling the partition function of EBMs \citep{huembeli2022physics}. 
There are emergent areas of research at the interfaces of machine learning, quantum computing, many-body physics, and optimization (see references in \cite{huembeli2022physics}).

Many recent \emph{biologically plausible} algorithms utilize the framework of energy-based models \citep{scellier2017equilibrium,millidge2022backpropagation}. Biologically plausible algorithms compute gradients that approximate those computed by back-propagation (BP), and operate in ways that more closely satisfy the constraints imposed by neural circuitry.
We anticipate that progress in neuroscience and EBM based machine learning will benefit from an interplay between both fields.
\end{itemize}

% end of main matter

\begin{acknowledgements}
% The authors are grateful to Ulrike Fischer, who designed the style files, and Neal Parikh, who laid the groundwork for these style files.
Thanks to collaborators and students: Zhiqiang Tan, Bin Wang, Hongyu Xiang, Yunfu Song, Kai Hu,
Keyu An, Huahuan Zheng, Silin Gao, Hong Liu, Junlan Feng, and Yi Huang.

\noindent Thanks for funding supports from 
\begin{itemize}
	\item NSFC (National Science Foundation of China) through No. 60402029, No. 61075020, No. 61473168, and No. 61976122;
	\item Ministry of Education and China Mobile joint funding through No. MCM20170301;
	\item Joint Institute of Tsinghua University - China Mobile Communications Group Co. Ltd.;
	\item Beijing National Research Center for Information Science and Technology;
	\item Tsinghua Initiative through No. 20121088069 and No. 20141081250;
	\item Toshiba Corporation;
	\item Apple Corporation.
\end{itemize}
\end{acknowledgements}

\appendix

\chapter{Notations and definitions}
\vspace*{-1.2in}

\section{Notations}
\label{sec:notations}

\begin{center}
\begin{xltabular}{\textwidth}{l|X}
\hline
Example & Description \\
\hline\hline
$z_{i:j}$ & For any generic sequence $\left\lbrace z_n\right\rbrace $, we shall use $z_{i:j}$ to denote $z_i, z_{i+1}, \cdots\, z_j$.
Similarly, wherever a collection of indices appears in the subscript, we refer to the corresponding collection of indexed variables, e.g., $c_{l,1:H} \triangleq \left\lbrace  c_{l,1}, c_{l,2}, \cdots\, c_{l,H} \right\rbrace$. \\
\hline
$x$ & $x$ generally denotes a random variable, which can either be scalar- or vector-valued, and often denotes the observation variable. For simplicity, we also use the same notation $x$ to denote the values taken by the random variable $x$, e.g., in the argument of its density function, which should be clear from the context.\\ 
\hline
$h$ & The hidden variable. \\
\hline
$y$ & The class label, or the output variable. \\
\hline
$|\mathcal{B}|$ & The cardinality/size of a set $\mathcal{B}$ \\
\hline
$x^T, A^T$ & A superscript $T$ denotes the transpose of a vector $x$ or matrix $A$ \\
\hline
$\Delta^K$ & The $K$-dimensional probability simplex. \\
\hline
$\sum_x f(x)$ & The summation over $x$ is a shorthand, which should be an appropriate combination of summation and integration, depending on the components of $x$ being discrete variables, continuous variables, or a combination of the two.\\
\hline
$p_\text{ora}(\cdot)$ & The (unknown) oracle density, sometimes also known as the data distribution and denoted as $p_\text{data}(\cdot)$.\\
\hline
$p_{\text{emp}}(\cdot)$ & The empirical density. For a training dataset consisting of $n$ independent and identically distributed (IID) data points $\left\lbrace x_1, \cdots, x_N \right\rbrace $, we have 
\[p_{\text{emp}}(x) \triangleq \frac{1}{N} \sum_{i=1}^{N} \delta(x-x_i)\]\\
\hline
$p_{\theta}(\cdot), p(\cdot;\theta)$ & The (target) model density, parameterized by $\theta$. \\
\hline
$q_{\phi}(\cdot), q(\cdot;\phi)$ & The auxiliary density introduced in training, parameterized by $\phi$. \\
\hline
$\text{Uni}[a,b]$ & Uniform distribution for a continuous variable over interval $[a,b]$, or for a discrete variable over integers from $a$ to $b$.\\
\hline
 \hline
\end{xltabular}
\end{center}

\section{Definitions}
\label{sec:definitions}

\begin{center}
\begin{xltabular}{\textwidth}{l|X}
 \hline
 Term & Description \\
 \hline\hline
$\sigma(v)$ & The sigmoid function, $\sigma(v) \triangleq \frac{1}{1+e^{-v}}$, also called the logistic sigmoid function. It is also known as a squashing function, since it maps the whole real line to [0, 1], which is necessary for the output to be interpreted as a probability.\\
\hline
$\text{logit}(\sigma)$ & The logit function, $\text{logit}(\sigma) \triangleq \log(\frac{\sigma}{1-\sigma})$ for $0 < \sigma <1$, also known as the inverse of the sigmoid function. It represents the log of the ratio of probabilities for two classes, also known as the log odds.\\
\hline
$\text{softmax}(z_{1:K})$ & The softmax function, $\text{softmax}(z_{1:K})_k \triangleq \frac{\exp(z_k)}{\sum_{j=1}^K \exp(z_j)}$, which realizes normalization from $\mathbb{R}^K$ to $\Delta^K$ (the $K$-dimensional probability simplex). It is also known as the normalized exponential and can be regarded as a multiclass generalization of the logistic sigmoid.  \\
\hline$\delta(x=a)$ & An indicator function of $x$ which takes the value 1 when $x=a$ and 0 otherwise.\\
\hline$H[q]$ & The entropy is defined as $H[q] \triangleq - \int q log q$. \\
\hline
$KL[p||q]$ & 
The inclusive KL-divergence between two distributions $p(\cdot)$ and $q(\cdot)$ is defined as $KL[p||q] \triangleq \int p log \left( \frac{p}{q} \right) $, which by default is called the KL-divergence, and is sometimes also referred to as the forward KL-divergence, relative entropy. \\
\hline
$KL[q||p]$ & The exclusive KL-divergence is defined as $KL[q||p] \triangleq \int q log \left( \frac{q}{p} \right) $, which is also referred to as the reverse KL-divergence. \\
%\hline
%$JS[p||q]$ & The Jensen-Shannon divergence is defined as
%\begin{displaymath}
%JS[p||q] \triangleq \frac{1}{2}KL\left[p || \frac{p+q}{2} \right] + \frac{1}{2}KL\left[q||\frac{p+q}{2}\right]
%\end{displaymath} \\
%\hline
%$KL_{sym}[p||q]$ & The symmetric KL divergence is defined as
%\begin{displaymath}
%KL_{sym}[p||q] \triangleq \frac{KL(p||q)+KL(q||p)}{2}
%\end{displaymath} \\
\hline
\end{xltabular}
\end{center}

\chapter{Background material}
\vspace*{-1.2in}

\section{Maximum entropy models}
\label{sec:maxent}

\begin{theorem}\label{th:maxent}
When confronted by a probability distribution $p(x)$ about which only a
few facts are known, the maximum entropy principle (\emph{maxent}) offers a
rule for choosing a distribution that satisfies those constraints \citep{cover1999elements,mackay2003information}. According to maxent, one should select the $p(x)$ that maximizes the entropy
\begin{equation}\label{eq:entropy}
    H(p) = - \sum_x p(x) \log p(x)
\end{equation}
subject to the constraints. When there is a reference distribution $q(x)$, one should select the $p(x)$ that minimizes the relative entropy or Kullback-Leibler divergence\footnote{When $q(x)$ is uniform, this is the same as maximizing the entropy.}
\begin{equation}\label{eq:relative_entropy}
    KL(p||q) = \sum_x p(x) \log \frac{p(x)}{q(x)}
\end{equation}
Assuming the constraints assert that the averages of certain functions $f_k(x)$ are known, i.e.,
\begin{equation} \label{eq:maxent_constraints}
    E_{p(x)} \left[ f_k(x) \right] = F_k, k=1,2,\cdots
\end{equation}
Then, it can be shown that by introducing Lagrange multipliers (one for each constraint, including normalization), 
\begin{itemize}
    \item The distribution that
maximizes the entropy has the following form
\begin{equation} \label{eq:maxent}
 p^*(x) = \frac{1}{Z} \exp \left(\sum_k w_k f_k(x) \right)
\end{equation}
\item The distribution that minimizing relative entropy relative to $q(x)$, has the following form
\begin{equation} \label{eq:maxent_relative}
 p^*(x) = \frac{1}{Z} q(x) \exp \left(\sum_k w_k f_k(x) \right)
\end{equation}
\end{itemize}
where $\{w_k\}$ are set such that the constraints \eqref{eq:maxent_constraints} are satisfied, and $Z$ is the normalizing constant. The two forms in \eqref{eq:maxent} and \eqref {eq:maxent_relative} are often collectively referred to as \emph{maximum entropy distributions}.
\end{theorem}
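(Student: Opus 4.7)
The plan is to apply the method of Lagrange multipliers to turn the constrained variational problem into an unconstrained stationarity condition on the space of (not-necessarily-normalized) functions $p(\cdot)$. For the entropy-maximization form \eqref{eq:maxent}, I would introduce a multiplier $\mu$ for the normalization constraint $\sum_x p(x) = 1$ and multipliers $w_k$ for each moment constraint \eqref{eq:maxent_constraints}, and form
\begin{displaymath}
\mathcal{L}(p;\mu,w) = -\sum_x p(x)\log p(x) - \mu\Bigl(\sum_x p(x) - 1\Bigr) + \sum_k w_k\Bigl(\sum_x p(x) f_k(x) - F_k\Bigr).
\end{displaymath}
Differentiating with respect to $p(x)$ at each fixed $x$ and setting the result to zero gives $-\log p(x) - 1 - \mu + \sum_k w_k f_k(x) = 0$, which rearranges to $p^*(x) = \exp(-1-\mu)\exp(\sum_k w_k f_k(x))$. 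Absorbing the $x$-independent prefactor into the normalizing constant $Z$ recovers the form asserted in \eqref{eq:maxent}. The continuous case is identical modulo replacing $\sum_x$ by an integral and invoking the calculus of variations.

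For the relative-entropy case \eqref{eq:maxent_relative}, I would carry out the same procedure but starting from the objective $KL(p\|q) = \sum_x p(x)\log(p(x)/q(x))$. Differentiating the corresponding Lagrangian yields $\log p(x) - \log q(x) + 1 - \mu - \sum_k w_k f_k(x) = 0$, which rearranges to $p^*(x) \propto q(x)\exp(\sum_k w_k f_k(x))$. Fixing the constant of proportionality to $1/Z$ via normalization gives \eqref{eq:maxent_relative}. Minimizing rather than maximizing flips the algebraic sign of the multipliers, but this is purely a relabeling.

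The last step is to argue that these stationary points are global optima rather than saddles. This follows because $-H(p)$ and $KL(p\|q)$ are both strictly convex in $p$ while the feasible set (the intersection of the simplex with the affine moment constraints) is convex, so any stationary point of the Lagrangian is the unique optimizer. I expect the main obstacle to be verifying that the Lagrange multipliers $\{w_k\}$ can actually be chosen so that the constraints \eqref{eq:maxent_constraints} are satisfied at the proposed $p^*$: this is a dual feasibility question. Concretely, one must show that the map $w \mapsto E_{p^*_w}[f_k(x)]$, with $p^*_w$ given by the exponential-family form, reaches the prescribed values $F_k$. This follows from strict convexity of the log-partition function $\log Z(w)$, whose gradient is precisely this moment map; any $F$ lying in the interior of the convex hull of feature vectors is attained by a unique $w$, tying the proof to the classical duality between maximum entropy and maximum likelihood in exponential families.
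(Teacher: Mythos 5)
Your proposal is correct and follows exactly the route the paper indicates: the paper gives no detailed proof beyond the phrase ``by introducing Lagrange multipliers (one for each constraint, including normalization)'' and a citation to standard texts, and your Lagrangian stationarity computation for both the entropy and relative-entropy objectives is precisely that argument. Your additional remarks on convexity guaranteeing global optimality and on dual feasibility of the multipliers (via the moment map being the gradient of $\log Z(w)$) are sound refinements of the same approach rather than a different one.
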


\thref{th:maxent} gives the form of maximum entropy distributions that satisfy certain moment constraints.
In an opposite way, when given that a distribution satisfies the form of \eqref{eq:maxent} or \eqref {eq:maxent_relative}, the following theorem establish the connection between the maximum entropy distribution and the maximum likelihood distribution.

\begin{theorem}\label{th:maxent_ML}
Assume that a variable $x$ comes from a probability distribution of the form in \eqref{eq:maxent} or \eqref {eq:maxent_relative}, where the functions $f_k(x)$ are given, and the parameters $\{w_k\}$ are not known. A dataset $\{x^{(n)}\}$ is supplied. Then, it can be shown that by differentiating the log likelihood, the maximum-likelihood (ML) parameters $w_{\text{ML}}$ satisfy
\begin{equation} \label{eq:model_empirical_equal}
\begin{split}
    E_{p(x)} \left[ f_k(x) \right] &= \frac{1}{N} \sum_n f_k(x^{(n)}), k=1,2,\cdots\\
    &= E_{p_{\text{emp}}(x)} \left[ f_k(x) \right]
\end{split}
\end{equation}
where the left-hand is the model average under the fitted model, the right-hand the empirical average over the training data points, and $p_{\text{emp}}(\cdot)$ denotes the empirical density over the training data points.
\end{theorem}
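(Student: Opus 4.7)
The plan is to directly differentiate the log-likelihood with respect to each $w_k$, invoke the log-partition identity already established earlier in the text (equation \eqref{eq:grad_log_Z_equal_expect}), and set the resulting gradient to zero.

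First, I would write the log-likelihood in closed form. Taking the more general form \eqref{eq:maxent_relative} (which reduces to \eqref{eq:maxent} when $q$ is uniform), the log-likelihood on the data $\{x^{(n)}\}_{n=1}^N$ is
\begin{displaymath}
L(w) \;=\; \sum_{n=1}^N \log q(x^{(n)}) \;+\; \sum_{n=1}^N \sum_k w_k f_k(x^{(n)}) \;-\; N \log Z(w),
\end{displaymath}
where $Z(w) = \sum_x q(x) \exp\!\bigl(\sum_k w_k f_k(x)\bigr)$. The first term does not depend on $w$ and is therefore irrelevant for the stationary condition.

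Next I would compute $\partial L / \partial w_k$. The middle term contributes $\sum_n f_k(x^{(n)})$. For the last term, I would reuse the calculation already carried out in \eqref{eq:grad_log_Z_equal_expect}, which shows that the gradient of the log-partition function equals the model expectation of the corresponding feature, i.e.\ $\partial_{w_k} \log Z(w) = E_{p(x)}[f_k(x)]$. Setting the full gradient to zero and dividing by $N$ yields
\begin{displaymath}
E_{p(x)}[f_k(x)] \;=\; \frac{1}{N}\sum_{n=1}^N f_k(x^{(n)}) \;=\; E_{p_{\text{emp}}(x)}[f_k(x)],
\end{displaymath}
which is exactly the claim of the theorem.

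There is no substantive obstacle: the result is essentially a re-application of the log-partition derivative identity, and the computation is unchanged between \eqref{eq:maxent} and \eqref{eq:maxent_relative} because $q(x)$ enters $Z(w)$ only as a fixed non-negative weight and contributes nothing further to the $w$-derivative. The only mild care required is to observe that moment matching is a \emph{stationary} condition; strict concavity of $L$ in $w$ (its Hessian is the negative feature covariance under $p$, hence negative semi-definite, and negative definite when the features are affinely independent) then ensures that this stationary point, when it exists, is the unique global maximizer and therefore coincides with $w_{\text{ML}}$.
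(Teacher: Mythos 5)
Your derivation is correct and follows exactly the route the paper indicates (differentiating the log-likelihood, using the log-partition derivative identity of \eqref{eq:grad_log_Z_equal_expect}, and setting the gradient to zero); the paper itself only sketches this step, so your write-up simply fills in the standard computation. The added remark on concavity of $L(w)$ guaranteeing that the stationary point is the global maximizer is a correct and welcome refinement beyond the paper's hint.
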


Combining the above two theorems, we can easily see that maximum entropy fitting with $F_k$'s being set as the empirical averages is equivalent to maximum likelihood fitting of a log-linear distribution \citep{mackay2003information,inducingfeatures}.

%\section{Variational learning methods and joint stochastic approximation (JSA)}
%\label{sec:JSA}

\section{Fisher equality}
\label{sec:fisher_eq}

Formally, for any density function $p_\theta(x)$, the partial derivative w.r.t. $\theta$ of the log density function, $\frac{\partial}{\partial \theta} log p_\theta(x)$, is called the ``score''. Under certain regularity conditions, the expectation of the score w.r.t. the density itself is 0. This formula is often referred in presenting Fisher information\footnote{\url{https://en.wikipedia.org/wiki/Fisher_information}}, so we call it \emph{Fisher equality}\index{Fisher equality}, which, is frequently used in this monograph.
\begin{equation}
\label{eq:fisher_eq_1}
E_{p_\theta(x)}\left[  \frac{\partial}{\partial \theta} \log p_\theta(x) \right] = 0.
\end{equation}

Further, based on the above basic Fisher equality, we have the following very useful theorem.
\begin{theorem}
Consider any latent-variable model $p_\theta(x,h)$, which consisting of observation $x$ and latent variable $h$, then we have
\begin{equation}
\label{eq:fisher_eq_2}
\frac{\partial}{\partial \theta} \log p_\theta(x)
=E_{p_\theta(h|x)}\left[ \frac{\partial}{\partial \theta} \log p_\theta(x,h)\right]
\end{equation}
which means that the gradient of the log marginal likelihood is equal to the expected log joint likelihood, where the expectation is taken over the posteriori distribution.
\end{theorem}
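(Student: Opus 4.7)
The plan is to derive this identity as a direct corollary of the basic Fisher equality \eqref{eq:fisher_eq_1}, which is already established earlier in the appendix. The key observation is that the joint log-density decomposes as $\log p_\theta(x,h) = \log p_\theta(x) + \log p_\theta(h|x)$, which, after differentiating with respect to $\theta$ and taking the conditional expectation, will isolate the marginal score on one side and leave a vanishing conditional-score term on the other.

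Concretely, I would proceed as follows. First, differentiate the decomposition above to obtain $\partial_\theta \log p_\theta(x,h) = \partial_\theta \log p_\theta(x) + \partial_\theta \log p_\theta(h|x)$. Second, take the expectation of both sides under $p_\theta(h|x)$; the left-hand side becomes the target quantity $E_{p_\theta(h|x)}[\partial_\theta \log p_\theta(x,h)]$, while on the right-hand side the first term $\partial_\theta \log p_\theta(x)$ is constant in $h$ and therefore survives the expectation unchanged. Third, apply the basic Fisher equality \eqref{eq:fisher_eq_1} to the conditional density $p_\theta(\cdot|x)$ (viewed as a parametric family in $\theta$ over the variable $h$, with $x$ held fixed): this gives $E_{p_\theta(h|x)}[\partial_\theta \log p_\theta(h|x)] = 0$. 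Rearranging yields exactly \eqref{eq:fisher_eq_2}.

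As an alternative (or as a sanity check), one can derive the identity directly from the definition of the marginal by writing $\partial_\theta \log p_\theta(x) = \frac{1}{p_\theta(x)} \partial_\theta \int p_\theta(x,h)\,dh$, interchanging derivative and integral, multiplying and dividing by $p_\theta(x,h)$ inside the integral, and recognizing $p_\theta(x,h)/p_\theta(x) = p_\theta(h|x)$; this converts the marginal score into an expectation of the joint score under the posterior.

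The only nontrivial step is the interchange of differentiation and integration implicit in applying Fisher's equality to the conditional family — i.e., the usual regularity conditions (dominated convergence, sufficient smoothness of $p_\theta$ in $\theta$, and integrability of the score) must hold uniformly in $h$ for the fixed $x$. These are the same standing assumptions under which \eqref{eq:fisher_eq_1} was invoked, so in this setting the step is routine rather than a genuine obstacle; I would simply note the assumption rather than belabor the analytic details.
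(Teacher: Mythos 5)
Your proposal is correct and follows essentially the same route as the paper: both use the decomposition $\log p_\theta(x,h) = \log p_\theta(x) + \log p_\theta(h|x)$, take the expectation under the posterior $p_\theta(h|x)$, and invoke the basic Fisher equality \eqref{eq:fisher_eq_1} applied to the conditional density to annihilate the term $E_{p_\theta(h|x)}\left[\partial_\theta \log p_\theta(h|x)\right]$. Your alternative derivation via interchanging derivative and integral and your remark on the regularity conditions are sensible additions, but the core argument matches the paper's proof.
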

\begin{proof}
\begin{align*}
\label{eq:fisher_eq_2}
\frac{\partial}{\partial \theta} \log p_\theta(x)
&=E_{p_\theta(h|x)}\left[ \frac{\partial}{\partial \theta} \log p_\theta(x)\right]
\\&=E_{p_\theta(h|x)}\left[ \frac{\partial}{\partial \theta} \log p_\theta(x,h) - \frac{\partial}{\partial \theta} \log p_\theta(h|x) \right]\\
&=E_{p_\theta(h|x)}\left[ \frac{\partial}{\partial \theta} \log p_\theta(x,h)\right]
\end{align*}
where in the second line, according to Fisher equality, we have
\begin{displaymath}
E_{p_\theta(h|x)}\left[ \frac{\partial}{\partial \theta} \log p_\theta(h|x) \right] = 0,
\end{displaymath}
and thus we obtain the final line.
For simplicity, \eqref{eq:fisher_eq_2} is also referred to as Fisher equality.
\end{proof}

\chapter{Open-source toolkits related to EBMs}
\vspace*{-1.2in}
\label{sec:toolkit}

\begin{itemize}
\item Trans-dimensional random field (TRF) LMs: \url{https://github.com/thu-spmi/SPMILM}
\item Energy-based cloze models for representation learning over text (Electric): \url{https://github.com/google-research/electra}
\item CRF-based ASR Toolkit (CAT): \url{https://github.com/thu-spmi/CAT}
\item Neural CRF Transducers for Sequence Labeling: \url{https://github.com/thu-spmi/SPMISeq}
\item Controlled text generation from pre-trained language models (mix-and-match): \url{https://github.com/mireshghallah/mixmatch}
\item Learning neural random fields with inclusive auxiliary generators: \url{https://github.com/thu-spmi/Inclusive-NRF}
\item JEMs and JRFs for semi-supervised learning: \url{https://github.com/thu-spmi/semi-EBM}
\end{itemize}

%\chapter{Proofs}
%\label{App:proofs}
%\vspace*{-1.2in}
%The table below shows the journal codes to be used in  \verb+\documentclass+.
%
%\noindent For Example: \verb+\documentclass[ACC,biber]{nowfnt}+
%
%\begin{table}[h]
%\begin{tabular}{lcc}
%{\textbf Journal} & {\textbf <jrnlcode> } & {\textbf Ref. Style }\\
%\hline
%%\hline
%\small
%Annals of Corporate Governance & ACG & Author/Year \\
%\small
%Annals of Science and Technology Policy & ASTP & Author/Year \\
%\small
%FnT Accounting & ACC & Author/Year \\
%\small
%FnT Comm. and Information Theory & CIT & Numeric \\
%\small
%FnT Databases & DBS & Author/Year \\
%\small
%FnT Econometrics & ECO & Author/Year \\
%\small
%FnT Electronic Design Automation & EDA & Author/Year \\
%\small
%FnT Electric Energy Systems & EES & Author/Year \\
%\small
%FnT Entrepreneurship & ENT & Author/Year \\
%\small
%FnT Finance & FIN & Author/Year \\
%\small
%FnT Human-Computer Interaction & HCI & Author/Year \\
%\small
%FnT Information Retrieval & INR & Author/Year \\
%\small
%FnT Information Systems & ISY & Author/Year \\
%\small
%FnT Machine Learning & MAL & Author/Year \\
%\small
%FnT Management & MGT & Author/Year \\
%\small
%FnT Marketing & MKT & Author/Year \\
%\small
%FnT Networking & NET & Numeric \\
%\small
%& & \textit{Continues}
%\end{tabular}
%\end{table}

% \printindex

%BACKMATTER SEE DOCUMENTATION
\backmatter  % references, restarts sample

\printbibliography

\printindex

\end{document}